\newtheorem{assumption}{Assumption}
\Crefname{assumption}{Assumption}{Assumptions}
\newcommand*\prob{\PP}
\newcommand*\dist{d}
\DeclareMathOperator{\ReLU}{ReLU}
\DeclareMathOperator{\Conv}{Conv}
\DeclareMathOperator{\Area}{Area}
\DeclareMathOperator{\supp}{supp}
\newcommand{\pitheta}[1]{{\pi_{#1}}}
\newcommand{\lossQ}{\cL_{\mathrm{critic}}}
\newcommand{\lossQhat}{\hat{\cL}_{\mathrm{critic}}}
\newcommand{\losspi}[1]{\cL_{\mathrm{actor}}}
\newcommand{\losspihat}[1]{\hat{\cL}_{\mathrm{actor}}}
\newcommand{\chisqr}[2]{\chi^2( #1,#2 )}
\begin{document}

\title{Sample Complexity of Neural Policy Mirror Descent for Policy Optimization on Low-Dimensional Manifolds}

\author{\name Zhenghao Xu \email zhenghaoxu@gatech.edu \\ 
        \addr H. Milton Stewart School of Industrial and Systems Engineering\\ 
        Georgia Institute of Technology\\
        Atlanta, GA 30332, USA
        \AND
        \name Xiang Ji \email xiangj@princeton.edu \\ 
        \addr Department of Electrical and Computer Engineering\\ 
        Princeton University\\ 
        Princeton, NJ 08544, USA
        \AND
        \name Minshuo Chen \email minshuochen@princeton.edu \\ 
        \addr Department of Electrical and Computer Engineering\\ 
        Princeton University\\ 
        Princeton, NJ 08544, USA
        \AND
        \name Mengdi Wang \email mengdiw@princeton.edu \\ 
        \addr Department of Electrical and Computer Engineering\\ 
        Princeton University\\ 
        Princeton, NJ 08544, USA
        \AND
        \name Tuo Zhao \email tourzhao@gatech.edu \\
        \addr H. Milton Stewart School of Industrial and Systems Engineering\\ 
        Georgia Institute of Technology\\
        Atlanta, GA 30332, USA
       }

\editor{My editor}

\maketitle

\begin{abstract}

Policy gradient methods equipped with deep neural networks have achieved great success in solving high-dimensional reinforcement learning (RL) problems. However, current analyses cannot explain why they are resistant to the curse of dimensionality. 
In this work, we study the sample complexity of the neural policy mirror descent (NPMD) algorithm with deep convolutional neural networks (CNN). Motivated by the empirical observation that many high-dimensional environments have state spaces possessing low-dimensional structures, such as those taking images as states, we consider the state space to be a $d$-dimensional manifold embedded in the $D$-dimensional Euclidean space with intrinsic dimension $d\ll D$. 
We show that in each iteration of NPMD, both the value function and the policy can be well approximated by CNNs. The approximation errors are controlled by the size of the networks, and the smoothness of the previous networks can be inherited. 
As a result, by properly choosing the network size and hyperparameters, NPMD can find an $\epsilon$-optimal policy with $\Tilde{O}(\epsilon^{-\frac{d}{\alpha}-2})$ samples in expectation, where $\alpha\in(0,1]$ indicates the smoothness of environment. 
Compared to previous work, our result exhibits that NPMD can leverage the low-dimensional structure of state space to escape from the curse of dimensionality, explaining the efficacy of deep policy gradient algorithms. 
\end{abstract}

\begin{keywords}
  deep reinforcement learning, policy optimization, function approximation, convolutional neural network, Riemannian manifold 
\end{keywords}

\section{Introduction}\label{sec:introduction}

Deep Reinforcement Learning (DRL) is a popular approach for solving complex decision-making problems in various domains. 
DRL methods, especially policy-based ones including DDPG \citep{lillicrap2015continuous}, TRPO \citep{schulman2015trust}, and PPO \citep{schulman2017proximal}, are able to handle high-dimensional state space efficiently by leveraging function approximation with neural networks. For instance, in Atari games \citep{brockman2016openai}, the states are images of size $210\times 160$ with RGB color channels, resulting in a continuous state space of dimension $100,800$, to which tabular algorithms such as policy iteration \citep{puterman1994markov} and policy mirror descent (PMD, \citealt{lan2023policy}) are not applicable. Surprisingly, such a high dimension does not seem to significantly impact the efficacy of the aforementioned DRL algorithms. 

Despite the empirical success of these DRL methods in high dimensions, there currently exist no satisfactory results in theory that can explain the reason behind them. 
Most of the existing works about function approximation in RL focus on linear function class \citep{agarwal2021theory,alfano2022linear,yuan2023linear}. They assume the value function and the policy can be well approximated by linear functions of features representing states and actions \citep{jin2020provably}, which is restrictive and requires feature engineering. 
One way to relax such a linearity assumption is to consider the reproducing kernel Hilbert space (RKHS) which allows nonlinear function approximation \citep{agarwal2020pc,yang2020provably} through random features \citep{rahimi2007random}. However, the commonly used reproducing kernels such as the Gaussian radial basis function and randomized ReLU kernel suffer from the curse of dimensionality in sample complexity without additional smoothness assumptions \citep{bach2017breaking,yehudai2019power,hsu2021approximation}. 

Moreover, some researchers study neural network approximation in the neural tangent kernel (NTK) regime, which is equivalent to an RKHS \citep{jacot2018neural,liu2019neural,wang2019neural,cayci2022finite,alfano2023novel}. Consequently, these results inherit the curse of dimensionality from general RKHS.
Some other works investigate neural network approximation from a non-parametric perspective \citep{fan2020theoretical,nguyen-tang2022on}, but they consider value-based methods where only value function is approximated and also suffer from the curse of dimensionality. 
There are alternative lines of work on policy optimization with general function approximation, but they either assume the functions can be well approximated without any further verification \citep{lan2022policy,mondal2023improved}, or require some strong assumptions such as third-time differentiability \citep{yin2022offline}.

One possible explanation for the empirical effectiveness of DRL algorithms is the adaptivity of neural networks to the intrinsic low-dimensional structure of the state space. In the Atari games example, the images share common textures and are rendered using just a small number of internal states, such as the type, position, and angle of each object, thus the intrinsic dimension of the state space is in fact very small compared to the data dimension of 100,800. However, the extent of this adaptivity is yet to be explored in DRL literature.

To bridge this gap between theory and practice, we propose to investigate neural policy optimization within environments possessing low-dimensional state space structures.
Specifically, we consider the infinite-horizon discounted Markov decision process (MDP) with continuous state space $\cS$, finite action space $\cA$, and discount factor $\gamma$. We focus on the sample complexity of the neural policy mirror descent (NPMD) method. NPMD is based on the actor-critic framework \citep{konda1999actor,grondman2012survey} where both the policy (actor) and value function (critic) are approximated by neural networks. It is an implementation of the general PMD scheme \citep{lan2022policy} with neural network approximation. 
The NPMD-type methods including TRPO \citep{schulman2015trust}) and PPO \citep{schulman2017proximal} are widely used in applications like game AI \citep{berner2019dota} and fine-tuning large language models \citep{ziegler2019fine,ouyang2022training}. 
Moreover, instead of working on general Euclidean space, we assume the state space to be a $d$-dimensional manifold embedded in the $D$-dimensional Euclidean space where $d\ll D$. 

\begin{table}[tb]
    \centering
    \begin{tabular}{ | c | c | c | c | c | c | } 
        \hline
        {Algorithm} & {Approximation} & {Regularity} & {Complexity} & {Remark} \\ 
        \hline\hline
        \makecell{NPG (P) \\ \citep{yuan2023linear}} & Linear & Linear (D) & $\tilde{O}(\epsilon^{-2})$ & \makecell{strong \\ realizability}  \\ 
        \hline
        \makecell{NPPO (P) \\ \citep{liu2019neural}} & \makecell{NTK \\ (2-layer ReLU)} & RKHS (D) & $O(\epsilon^{-14})$ & \makecell{realizability} \\ 
        \hline
        \makecell{NNAC (P) \\ \citep{cayci2022finite}} & \makecell{NTK \\ (2-layer ReLU)} & RKHS (D) & $\tilde{O}(\epsilon^{-6})$ & \makecell{realizability} \\ 
        \hline
        \makecell{AMPO (P) \\ \citep{alfano2023novel}} & \makecell{NTK \\ (2-layer ReLU)} & RKHS (D) & $\tilde{O}(\epsilon^{-4})$ \tablefootnote{The results in \citet{liu2019neural,cayci2022finite,alfano2023novel} implicitly suffer from the curse of dimensionality due to hidden constants related to NTK, including the width of the network and the RKHS norm. These constants can have exponential dependence on $D$ for realizability \citep{yehudai2019power}.} & \makecell{realizability}\\ 
        \hline
        \makecell{FQI (V) \\ \citep{fan2020theoretical}} & \makecell{FNN \\ (Deep ReLU)} & H\"{o}lder (I) & $\tilde{O}(\epsilon^{-\frac{D}{\alpha}-2})$ & \makecell{curse of \\ dimension} \\ 
        \hline
        \makecell{FQI (V) \\ \citep{nguyen-tang2022on}} & \makecell{FNN \\ (Deep ReLU)} & Besov (I) & $\tilde{O}(\epsilon^{-\frac{2D}{\alpha}-2})$ & \makecell{curse of \\ dimension} \\ 
        \hline
        \makecell{NPMD (P) \\ (This paper)} & \makecell{CNN \\ (Deep ReLU)} & Lipschitz (I) \tablefootnote{We define Lipschitz continuity for all $\alpha\in(0,1]$, which reduces to the usually defined Lipschitz condition when $\alpha=1$ and reduces to the usually defined H\"{o}lder continuity when $\alpha\in(0, 1)$. } & $\tilde{O}(\epsilon^{-\frac{d}{\alpha}-2})$ & $d\ll D$ \\ 
        \hline
    \end{tabular}
    \caption{Comparison with existing results. The algorithms are classified as value-based (V) or policy-based (P, including actor-critic methods involving policy gradient). 
    The regularity assumptions are algorithm-dependent (D) or algorithm-independent (I).  
    We consider sample complexity for arbitrary $\epsilon>0$, for which some previous works require additional realizability assumptions to eliminate the error floors. 
    }
    \label{tab:comparison}
\end{table}

We summarize our main contributions as follows: 
\begin{enumerate}[label=(\arabic*)]
    \item We first investigate the universal approximation of the convolutional neural network (CNN), a popular architecture for image data, on the $d$-dimensional manifold. We show that under the Lipschitz MDP condition (\cref{asm:lip-MDP}), CNN with sufficient parameters can well approximate both the value function and the policy (\cref{thm:approx-error-Q}, \cref{cor:approx-error-pi}). Compared to previous work on policy-based methods, our analysis decouples the regularity conditions from algorithmic specifications. For example, in \cite{liu2019neural}, the value functions are assumed to be in a network width-dependent set that approximates the NTK-induced RKHS, while our regularity assumptions are not based on the network architecture in advance. In \cite{yuan2023linear}, the approximation error highly depends on the design of the feature map. 

    \item Based on CNN function approximation, we then derive $\Tilde{O}\bigl(|\cA|^{\frac{d}{2\alpha}+2}(1-\gamma)^{-\frac{4d}{\alpha}-10}\epsilon^{-\frac{d}{\alpha}-2}\bigr)$ sample complexity bound for NPMD with CNN approximation to find a policy whose value function is at most $\epsilon$ to the global optimal in expectation (\cref{cor:sample-complexity-NPMD}). 
    Here, $\alpha\in(0,1]$ is the exponent of the Lipschitz condition and $\Tilde{O}(\cdot)$ hides the logarithmic terms and some coefficients related to distribution mismatch and concentrability (see \cref{asm:full-support,asm:concen-coefficient}). Compared to the results in \cite{fan2020theoretical} and \cite{nguyen-tang2022on}, the curse of dimensionality (exponential dependence on $D$) is avoided by exploiting the intrinsic $d$-dimensional structure. 
    To the best of our knowledge, this is the first sample complexity result for policy gradient methods with deep neural network function approximation. 
\end{enumerate}

Some preliminary results for this work have been first presented in our conference paper \cite{ji2022sample}, which focuses on policy evaluation only. We extend the scope to policy optimization with a full characterization of both iteration complexity and sample complexity.

\subsection{Related Work}
Our work is based on previous studies on policy gradient methods with function approximation as well as deep supervised learning on manifolds.

\textbf{Policy gradient methods.} 
The policy gradient method \citep{williams1992simple,sutton1999policy} is first developed under the compatible function approximation framework. The natural policy gradient (NPG) method \citep{kakade2001natural} extends the policy gradient method by incorporating the geometry of the parameter space to improve convergence properties. Trust region policy optimization (TRPO, \citealt{schulman2015trust}) and proximal policy optimization (PPO, \citealt{schulman2017proximal}) are modern variants of policy gradient methods with neural network function approximation that use constraints or penalties to prevent aggressive updates, resulting in more stable and efficient learning. 
These modern methods are often used to handle high-dimensional state spaces and have been shown to achieve state-of-the-art results in a variety of RL domains. 
For example, the PPO algorithm and its variants are used in training some of the most advanced artificial intelligence, such as OpenAI Five \citep{berner2019dota} and GPT-4 \citep{openai2023gpt}. 
From a theoretical perspective, policy gradient methods such as NPG and PPO can be unified under the PMD framework \citep{geist2019theory,shani2020adaptive,lan2023policy}, whose fast linear rate of convergence has been established for the tabular case \citep{cen2022fast,xiao2022convergence,zhan2023policy}.

\textbf{Linear function approximation.} 
The majority of existing research on function approximation considers the linear function class \citep{agarwal2021theory,alfano2022linear,yuan2023linear}, which is the only known option for the compatible function approximation framework by far \citep{sutton1999policy}. However, these linear function approximation methods are restrictive. Only in simple environments, such as linear MDP \citep{jin2020provably}, can high approximation quality be guaranteed, which necessitates carefully designed features. Regrettably, the task of crafting such features is either infeasible or demands substantial effort from domain experts, and any misspecification of features could lead to an exponential gap \citep{du2020good}. 

\textbf{Reproducing kernel approach.}
The reproducing kernel Hilbert space (RKHS) has been adopted to relax the limitation of the linear function class and to enable more expressive nonlinear function approximation \citep{agarwal2020pc,yang2020provably}. To achieve efficient computation, random features are employed \citep{rahimi2007random}. Nevertheless, the RKHS suffers from the curse of dimensionality, which hinders its performance on high-dimensional problems.

\textbf{Neural tangent kernel.}
One approach to investigating the function approximation capabilities of neural networks is through the use of the neural tangent kernel (NTK, \citealt{jacot2018neural,liu2019neural,wang2019neural,cayci2022finite,alfano2023novel}). The NTK approach can be viewed as training a neural network with gradient descent under a specific regime, and as the width of the neural network approaches infinity, it converges to an RKHS. As a consequence, like other RKHS approaches, the NTK approach suffers from the curse of dimensionality, limiting its performance on high-dimensional problems. Additionally, some literature has pointed out that the NTK is susceptible to the kernel degeneracy problem \citep{chen2020deep,huang2020deep}, which may impact its overall learnability.

\textbf{Non-parametric neural network approximation.}
The non-parametric approach has been adopted to study the sample complexity of neural function approximation in RL under mild smoothness assumptions, such as \citet{fan2020theoretical} and \citet{nguyen-tang2022on}. These analyses are mainly focused on value-based methods and do not apply to policy gradient methods due to the lack of smoothness in neural policies.

\textbf{Deep supervised learning on manifolds.}
Parallel to DRL, existing work on deep supervised learning extensively studies the adaptivity of neural networks to the intrinsic low-dimensional data manifold embedded in high-dimensional ambient space, and how this adaptivity helps neural networks escape from the curse of dimensionality. 
In deep supervised learning, it has been shown that the sample complexity's exponential dependence on the ambient dimension $D$ can be replaced by the dependence on the manifold dimension $d$ \citep{chen2019efficient,schmidt2019deep,liu2021besov}. 
These analyses focus on fitting a single target function whose smoothness is predetermined by the nature of the learning task, while in our setting, the target functions include policies whose smoothness can get worse in each iteration.

\subsection{Notation}
For $n\in\NN$, $[n]\coloneqq\{i\mid 1\leq i\leq n\}$. For $a\in\RR$, $\lceil a\rceil$ denotes the smallest integer no less than $a$. For $a,b\in\RR$, $a\vee b\coloneqq\max(a,b)$ and $a\wedge b\coloneqq\min(a,b)$. For a vector, $\norm{\cdot}_p$ denotes the $p$-norm for $1\leq p\leq +\infty$. For a matrix, $\norm{\cdot}_\infty$ denotes the maximum magnitude of entries. For a finite set, $\abs{\cdot}$ denotes its cardinality. For a function $f\colon\cX\to\RR$, $\norm{f}_\infty$ denotes the maximal value of $\abs{f}$ over $\cX$. 
Given distribution $\rho$ on $\cX$, we use $f(\rho)\coloneqq\EE_{x\sim\rho}\bigl[f(x)\bigr]$ to denote the expected value of $f(x)$ where $x\sim\rho$. Given distributions $\mu$ and $\nu$ on $\cX$, the total variation distance is defined as $\dist_\mathrm{TV}(\mu,\nu)\coloneqq\sup_{A\in\Sigma}\abs{\mu(A)-\nu(A)}$, where $\Sigma$ contains all measurable sets on $\cX$. When $\mu$ is absolutely continuous with respect to $\nu$, the Pearson $\chi^2$-divergence is defined as $\chisqr{\mu}{\nu}\coloneqq\EE_{\nu}[(\dv{\mu}{\nu}-1)^2]$, where $\dv{\mu}{\nu}$ denotes the Radon--Nikodym derivative.  

Let $\cA$ be a finite set, we denote $P^{|\cA|}\coloneqq\{(p_a)_{a\in\cA}\mid p_a\in P\}$ as the Cartesian product of $P$'s indexed by $\cA$, $\one\coloneqq(1)_{a\in\cA}\in\RR^{|\cA|}$ as the vector with all entries being $1$, $\Delta_\cA\coloneqq\left\{p\in\RR^{|\cA|} \mid \sum_{a\in\cA}p_a=1,p_a\geq 0 \right\}$ as the probability simplex over $\cA$, and define the inner product $\inner{\cdot}{\cdot}\colon\RR^{|\cA|}\times\RR^{|\cA|}\to\RR$ as $\inner{p}{q}\coloneqq\sum_{a\in\cA}p_a q_a$. 
Let $\pi\colon\cS\to\Delta_\cA$ be a map, we use $h^\pi(s)\coloneqq\inner{\log\pi(s)}{\pi(s)}$ to denote the negative entropy of $\pi$ at $s\in\cS$ where $\log(\cdot)$ is performed entrywise, and denote the Kullback-Leibler (KL) divergence between two distributions $\pi^\prime(s)$ and $\pi(s)$ by $D_{\pi}^{\pi^\prime}(s)\coloneqq \inner{\log\pi^\prime(s)-\log\pi(s)}{\pi^\prime(s)}\geq 0$. 

\subsection{Roadmap}

The rest of this paper is organized as follows: Section 2 briefly introduces some preliminaries; Section 3 presents the neural policy mirror descent algorithm; Section 4 presents the theoretical analysis; Section 5 presents the experimental results to back up our theory; Section 6 discusses our results with the related work and draws a brief conclusion. 

\section{Background}\label{sec:background}
We introduce the problem setting and briefly review the Markov decision process, Riemannian manifold, and convolutional neural networks.

\subsection{Markov Decision Process}
We consider an infinite-horizon discounted Markov decision process (MDP) denoted as $\cM=(\cS,\cA,\cP,c,\gamma)$, where $\cS\subseteq\RR^D$ is a continuous state space in $\RR^D$, $\cA$ is a finite action space, $\cP$ is the transition kernel that describes the next state distribution $s^\prime\sim\cP(\cdot|s, a)$ at state $s\in\cS$ when action $a\in\cA$ is taken, $c\colon\cS\times\cA\to[0, C]$ is a cost function bounded by some constant $C>0$, and $\gamma\in (0,1)$ is a discount factor. 

A \emph{stochastic policy} $\pi\colon\cS\to\Delta_\cA$ describes the behavior of an agent. For any state $s\in\cS$, $\pi(\cdot|s)\in\Delta_\cA$ gives a conditional probability distribution over the action space $\cA$, where $\pi(a|s)$ is the probability of taking action $a$ at state $s$.

Given a policy $\pi$, the expected cost starting from state $s$ is given by the \emph{state value function} 
\begin{align*}
    V^{\pi}(s)=\EE_{\substack{a_t\sim\pi(\cdot | s_t),\\ s_{t+1}\sim\cP(\cdot | s_t,a_t)}}\biggl[\sum_{t=0}^\infty \gamma^t c(s_t,a_t) \biggm\vert s_0=s \biggr].
\end{align*}
The goal of policy optimization is to learn an optimal policy $\pi^\star$ by solving a stochastic optimization problem, where the objective function is the expected value function for a given initial state distribution $\rho$:\footnote{The optimal policy $\pi^\star$ does not depend on the choice of $\rho$.}
\begin{align}\label{eq:policy-optimization}
    V^\star(\rho)\coloneqq V^{\pi^\star}(\rho)=\minimize_{\pi} ~ \EE_{s\sim\rho}\bigl[V^{\pi}(s)\bigr].
\end{align}

A policy $\pi$ is called \emph{$\epsilon$-optimal}, if \[
    V^\pi(\rho)-V^\star(\rho)\leq \epsilon.
\]
In the reinforcement learning setting, the algorithm cannot directly access the transition kernel $\cP$ and the cost function $c$. Instead, the algorithm can only start from an initial state from $\rho$ and interact with the environment for the immediate cost $c_{s,a}=c(s,a)$ and the next state $s^\prime\sim\cP(\cdot|s,a)$. Each interaction is through a sample oracle. The (expected) number of sample oracle calls required to obtain an $\epsilon$-optimal policy is referred to as the \emph{sample complexity} of the algorithm. 
 
The state value function is closely related to the \emph{state-action value function}, which is the expected cost starting from state $s$ and taking action $a$: 
\begin{align*}
    &Q^{\pi}(s,a)=\EE_{\substack{s_{t+1}\sim\cP(\cdot | s_t,a_t),\\ a_{t+1}\sim\pi(\cdot | s_{t+1})}}\biggl[\sum_{t=0}^\infty \gamma^t c(s_t,a_t) \biggm\vert s_0=s,a_0=a \biggr].
\end{align*}
By definition, the value functions are bounded:
\begin{align}\label{eq:V-bound}
    0\leq V^\pi(s)\leq \frac{C}{1-\gamma},
    \quad 0\leq Q^\pi(s,a)\leq \frac{C}{1-\gamma}.
\end{align}
The value functions satisfy the following relations: 
\begin{align}
    &V^{\pi}(s)=\inner{Q^{\pi}(s,\cdot)}{\pi(\cdot|s)}=\EE_{a\sim\pi(\cdot|s)}[Q^\pi(s,a)],\label{eq:V-to-Q}\\
    &Q^{\pi}(s,a)=c(s,a) + \gamma \EE_{s^\prime\sim\cP(\cdot|s,a)}[V^\pi(s^\prime)].\label{eq:Q-to-V}
\end{align}

For the convenience of analysis, we define recursively 
\begin{equation}\label{eq:prob-transition}
    \cP_0^\pi=\rho,\quad \cP_{t+1}^\pi
    =\EE_{s\sim\cP_t^\pi,a\sim\pi(\cdot|s)}[\cP(\cdot|s,a)], 
\end{equation}
and define the \emph{state visitation distribution} and the \emph{state-action visitation distribution} respectively: 
\begin{align}
    &\nu_\rho^\pi=(1-\gamma)\sum_{t=0}^\infty \gamma^t\cP_t^\pi,\label{eq:visitation-distribution-s}\\
    &\overbar{\nu}_\rho^\pi(s,a)=\nu_\rho^\pi(s)\times\pi(a|s), ~\forall s\in\cS, a\in\cA.\label{eq:visitation-distribution-sa}
\end{align}
The prefactor $1-\gamma$ in \eqref{eq:visitation-distribution-s} makes $\nu_\rho^\pi$ be a distribution.
The visitation distributions reflect the frequency of visiting state $s$ or state-action pair $(s, a)$ along the trajectories starting from $s_0\sim\rho$ and taking actions according to policy $\pi$.
It follows immediately from the definition that the state visitation distribution is lower bounded by the initial distribution (in terms of Radon--Nikodym derivative) with factor $1-\gamma$:  
\begin{equation}\label{eq:visitation-bound}
    \dv{\nu_\rho^\pi}{\rho}
    =(1-\gamma)\sum_{t=0}^\infty \gamma^t\dv{\cP_t^\pi}{\rho}
    \geq(1-\gamma)\dv{\cP_0^\pi}{\rho}
    =1-\gamma.
\end{equation}
We can rewrite the value functions using the visitation distributions: 
\begin{align}
    &V^\pi(\rho)
    = \sum_{t=0}^\infty \gamma^t \int_\cS\sum_{a\in\cA} c(s,a)\pi(a|s)\ud \cP_t^\pi(s)
    = \frac{1}{1-\gamma}\EE_{(s,a)\sim\overbar{\nu}_\rho^\pi}[c(s,a)],\label{eq:V-visitation} \\ 
    &Q^\pi(s,a)
    = c(s,a) + \gamma V^\pi(\cP(\cdot|s,a))
    = c(s,a) + \frac{\gamma}{1-\gamma}\EE_{(s^\prime,a^\prime)\sim\overbar{\nu}_{\cP(\cdot|s,a)}^\pi}[c(s^\prime,a^\prime)], \label{eq:Q-visitation}
\end{align}
where \eqref{eq:Q-visitation} is from \eqref{eq:V-visitation} and \eqref{eq:Q-to-V}. 

\subsection{Riemannian Manifold}
We consider the state space $\cS$ to be a $d$-dimensional Riemannian manifold isometrically embedded in $\RR^D$. A \emph{chart} for $\cS$ is a pair $(U,\phi)$ such that $U\subset \cS$ is open and $\phi: U \rightarrow \RR^d$ is a homeomorphism, i.e., $\phi$ is a bijection; its inverse and itself are continuous. Two charts $(U,\phi)$ and $(V,\psi)$ are called \emph{$C^k$ compatible} if and only if \[
    \phi\circ\psi^{-1}\colon \psi(U\cap V)\rightarrow \phi(U\cap V) \quad \text{ and } \quad \psi\circ\phi^{-1}\colon \phi(U\cap V)\rightarrow \psi(U\cap V)
\]
are both $C^k$ functions ($k$ times continuously differentiable). A \emph{$C^k$ atlas} of $\cS$ is a collection of $C^k$ compatible charts $\{(U_i,\phi_i)\}_{i\in I}$ such that $\bigcup_{i\in I} U_i=\cS$. An atlas of $\cS$ contains an open cover of $\cS$ and mappings from each open cover to $\RR^d$.

\begin{definition}[Smooth manifold]
	A manifold $\cS$ is smooth if it has a $C^{\infty}$ atlas.
\end{definition}

We introduce the \emph{reach} \citep{federer1959curvature,niyogi2008finding} of a manifold to characterize the curvature of $\cS$.
\begin{definition}[Reach]
    The medial axis of $\cS$ is defined as $\overbar{\cT}(\cS)$, which is the closure of \[
        \cT(\cS) = \{x \in \RR^D \mid \exists x_1\neq x_2\in\cS, \norm{x- x_1}_2=\norm{x-x_2}_2 = \inf_{y \in \cS} \norm{x - y}_2\}.
    \] 
    The reach $\omega$ of $\cS$ is the minimum distance between $\cS$ and $\overbar{\cT}(\cS)$, that is, \[
        \omega = \inf_{x\in\overbar{\cT}(\cS),y\in\cS} \norm{x-y}_2.
    \] 
\end{definition}
Roughly speaking, reach measures how fast a manifold ``bends''. A manifold with a large reach ``bends'' relatively slowly. On the contrary, a small $\omega$ signifies more complicated local geometric structures, which are possibly hard to fully capture.

\subsection{Convolutional Neural Networks}

We consider one-sided stride-one convolutional neural networks (CNNs) with the rectified linear unit (ReLU) activation function $\ReLU(z)=\max(z,0)$. Specifically, a CNN we consider consists of a padding layer, several convolutional blocks, and finally a fully connected output layer.

Given an input vector $x \in\RR^{D}$, the network first applies a padding operator $P:\RR^{D}\rightarrow \RR^{D\times C}$ for some integer $C\geq 1$ such that
\[
    Z=P(x)=\begin{bmatrix}
    x & 0 &\cdots & 0 
    \end{bmatrix} \in \RR^{D \times C}.
\]
Then the matrix $Z$ is passed through $M$ convolutional blocks. We will denote the input matrix to the $m$-th block as $Z_m$ and its output as $Z_{m+1}$ (so that $Z_1=Z$).

We now define convolution as illustrated in \cref{fig:CNN}. Let $\cW=(\cW_{j,i,l})_{j,i,l}\in \RR^{C^\prime\times I\times C}$ be a filter where $C^\prime$ is the output channel size, $I$ is the filter size and $C$ is the input channel size. For $Z\in \RR^{D\times C}$, the convolution of $Z$ with $\cW$, denoted with $\cW * Z$, results in $Y\in \RR^{D\times C^\prime}$ with
\begin{align*}
	Y_{k,j}=\sum_{i=1}^I \sum_{l=1}^{C} \cW_{j,i,l} Z_{k+i-1,l},
\end{align*}
where we set $Z_{k+i-1,l}=0$ for $k+i-1>D$. 

\begin{figure}[htb!]
    \centering
    \includegraphics[width=0.45\linewidth]{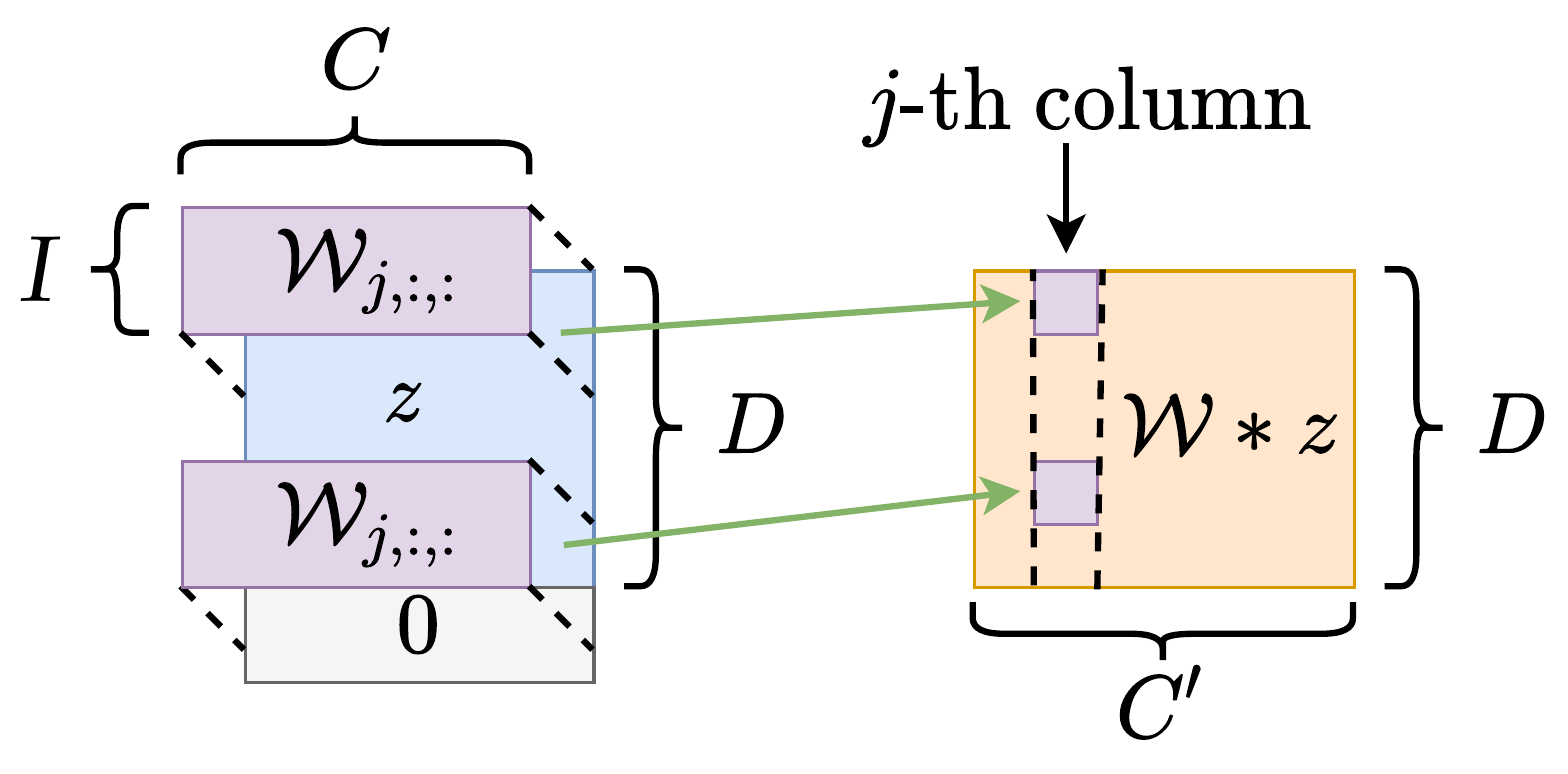}
    \caption{Convolution of $\cW * Z$. $\cW_{j,:,:}$ is a $I\times C$ matrix for the $j$-th output channel.}
    \label{fig:CNN}
\end{figure}

In the $m$-th convolutional block, let
$\cW_m=\{ \cW_m^{(1)},\dots,\cW_m^{(L_m)}\}$ be a collection of filters and $\cB_m=\{\cB_m^{(1)},\dots,\cB_m^{(L_m)}\}$ be a collection of biases of proper sizes.
The $m$-th block maps its input matrix $Z_m\in\RR^{D\times C}$ to $Z_{m+1}\in\RR^{D\times C}$ by 
\begin{align} 
	&Z_{m+1}=\ReLU\Big( \cW_m^{(L_m)}*\cdots *\ReLU\left(\cW_m^{(1)}*Z_m+\cB_m^{(1)}\right)\cdots +\cB_m^{(L_m)}\Big)
	\label{eq:conv}
\end{align}
with $\ReLU$ applied entrywise. For notational simplicity, we denote this series of operations in the $m$-th block with a single operator from $\RR^{D\times C}$ to $\RR^{D\times C}$ with $\Conv_{\cW_m,\cB_m}$, so \eqref{eq:conv} can be abbreviated as
\begin{align*}
	Z_{m+1}=\Conv_{\cW_m,\cB_m}(Z_m).
\end{align*}

Overall, we denote the mapping from input $x$ to the output of the $M$-th convolutional block as
\begin{align}
	G(x)=&\left(\Conv_{\cW_M,\cB_M}\right)\circ\cdots\circ \left(\Conv_{\cW_1,\cB_1}\right)\circ P(x).
	\label{eq:cnnBlock}
\end{align}

Given \eqref{eq:cnnBlock}, a CNN applies an additional fully connected layer to $G$ and outputs
\begin{align*}
	f(x)= W \otimes G(x)+b,
\end{align*}
where $W \in \RR^{D \times C}$ and $b \in \RR$ are a weight matrix and a bias, respectively, and $\otimes$ denotes the sum of entrywise product, that is, $W \otimes G(x) = \sum_{i, j} W_{i, j} [G(x)]_{i, j}$. Thus, we define a class of CNNs of the same architecture as 
\begin{align}\label{eq:nn_class_no_mag}
    &\cF(M,L,J,I,R_1,R_2)=\nonumber\\
    &\big\{ f \mid f(x)=W \otimes G(x)+ b, \|W\|_{\infty} \vee |b| \leq R_2, \text{ where $G(x)$ is in \eqref{eq:cnnBlock} with $M$ blocks.} \nonumber\\
    &\quad\quad \cW_m^{(l)}\in\RR^{C_m^{(l+1)}\times I_m^{(l)}\times C_m^{(l)}}, \cB_m^{(l)}\in\RR^{D\times C_m^{(l+1)}}, \text{ where } C_m^{(l)}\leq J, I_m^{(l)}\leq I, ~\forall l\in[L], m\in[M]; \nonumber\\ 
    &\quad\quad \max_{m,l}\|\cW_m^{(l)}\|_{\infty} \vee \|\cB_m^{(l)}\|_{\infty} \leq R_1\big\}.
\end{align}

\section{Neural Policy Mirror Descent}\label{sec:NPMD}

In this section, we present the neural policy mirror descent (NPMD) algorithm. It is an extension of the policy mirror descent (PMD) method with a \emph{critic network} $Q_w$ parameterized by $w$ to approximate the state-action value function, and an \emph{actor network} $f_\theta$ parameterized by $\theta$ to determine the policy. Both networks belong to a neural network function class $\cF$, which we will specify later in \cref{sec:results}. 

The NPMD algorithm starts from a uniform policy $\pi_0$.
At the $k$-th iteration (indexed from $0$), the policy $\pitheta{k}$ is determined by the actor network $f_{\theta_k}$ along with a hyperparameter $\lambda_k$. The NPMD algorithm first performs a critic update, training the critic network $Q_{w_k}$ to fit the state-action value function of the current policy. 
Then, the NPMD algorithm performs an actor update, indirectly obtaining an improved policy $\pitheta{k+1}$ by updating the actor network to $f_{\theta_{k+1}}$. 

\subsection{Critic Update}\label{sec:NPMD-critic}
For the critic update at the $k$-th iteration, the goal is to approximate the exact state-action value function $Q^{\pitheta{k}}$ with the critic network $Q_{w_k}$. The component of $Q_{w_k}$ corresponding to each action $a\in\cA$ is a neural network $Q_{w_k}(\cdot,a)\in\cF$ parameterized by $w_{k,a}$, which takes $s\in\cS$ as input and outputs a scalar. For simplicity, we let all $\abs{\cA}$ networks share the same architecture and denote $w_k\coloneqq(w_{k,a})_{a\in\cA}\in\cW_k$. We define the critic loss as 
\begin{equation}\label{eq:loss-Q}
    \lossQ(w_k;\pitheta{k})
    =\EE_{s\sim \nu_{\rho}^{\pitheta{k}}}\norm{Q_{w_k}(s,\cdot)-Q^{\pitheta{k}}(s,\cdot)}_2^2,
\end{equation} 
where $Q_{w_k}(s,\cdot)$ and $Q^{\pitheta{k}}(s,\cdot)$ are $|\cA|$-dimensional vectors and $\nu_\rho^{\pitheta{k}}$ is the state visitation distribution defined as \eqref{eq:visitation-distribution-s}.

Directly minimizing the critic loss \eqref{eq:loss-Q} is difficult since $Q^{\pitheta{k}}$ is unknown in advance. 
Instead, we sample $N$ states $\{s_{a,i}\}_{i=1}^N$ independently from the distribution $\nu_\rho^{\pitheta{k}}$ for every action $a\in\cA$ and use the empirical risk on these samples to approximate \eqref{eq:loss-Q}. For notation simplicity, we omit the iteration index $k$ of samples. The empirical risk $\lossQhat$ is defined as 
\begin{align}\label{eq:L2-empirical-Q}
    \lossQhat(w_k; \Xi_{k})=\frac{1}{N}\sum_{a\in\cA}\sum_{i=1}^{N}\abs{Q_{w_k}(s_{a,i},a) - c(s_{a,i},a) - \frac{\gamma}{1-\gamma}c(s^\prime_{a,i}, a^\prime_{a,i})}^2, 
\end{align}
where $N$ is the sample size, each pair $(s^\prime_{a,i}, a^\prime_{a,i})$ is sampled from distribution $\overbar{\nu}_{\cP(\cdot|s_{a,i},a)}^{\pitheta{k}}$,\footnote{We can acquire one sample from this distribution once the sampling algorithm terminates at $s_{a,i}$. Indeed, we can take action $a$ and restart sampling without resetting the environment, so the distribution would be $\overbar{\nu}_{\cP(\cdot|s_{a,i},a)}^{\pitheta{k}}$ as desired.} and $\Xi_{k}$ denotes the collection of samples. 
We let $w_k$ be the solution to the empirical risk minimization (ERM) problem, namely 
\begin{align}\label{eq:ERM-Q}
    w_{k}=\argmin_{w\in\cW_k}\lossQhat(w; \Xi_{k}).
\end{align}

\subsection{Actor Update}\label{sec:NPMD-actor}
For the actor update, the goal is to learn an improved policy. 
If no actor function approximation is considered, an ideal PMD update is given by (see \citealt{lan2023policy}): 
\begin{align}
    &\pi^\star_{k+1}(s)=\argmin_{\pi(\cdot|s)\in\Delta_{\cA}}\inner{Q_{w_k}(s,\cdot)}{\pi(\cdot|s)}+\frac{1}{\eta_k}D_{\pitheta{k}}^\pi(s), ~\forall s\in\cS,\label{eq:PMD-approximate}
\end{align}
where $D_{\pitheta{k}}^\pi(s)$ is the Kullback-Leibler (KL) divergence between $\pi$ and $\pi_k$ and $\eta_k$ is the step size. 
The PMD update \eqref{eq:PMD-approximate} coincides with the KL-penalty version of the PPO algorithm \citep{schulman2017proximal,liu2019neural}.

With neural function approximation, we train a neural policy $\pitheta{k+1}$ to approximate the ideal policy $\pi^\star_{k+1}$.
For any $k\geq 0$, the neural policy $\pi_k$ takes the form 
\begin{align}\label{eq:pitheta}
    \pitheta{k}(a|s)=\frac{\exp(\lambda_k^{-1}f_{\theta_k}(s, a))}{\sum_{a^\prime\in\cA}\exp(\lambda_k^{-1}f_{\theta_k}(s, a^\prime))},
\end{align}
where $\theta_k\coloneqq(\theta_{k,a})_{a\in\cA}$ is the collection of neural network parameters and $\lambda_k>0$ is a temperature parameter (will be discussed later). For any $a\in\cA$, $f_{\theta_k}(\cdot, a)\in\cF$ is a neural network parameterized by $\theta_{k,a}$, which takes $s\in\cS$ as input and outputs a scalar. Again, we let all $\abs{\cA}$ neural networks share the same parameter space and denote $\theta_k\coloneqq(\theta_{k,a})_{a\in\cA}\in\Theta_k$. With definition \eqref{eq:pitheta}, the ideal PMD update \eqref{eq:PMD-approximate} admits a closed-form solution.

\begin{lemma}\label{lem:opt-pi}
    The exact solution of \eqref{eq:PMD-approximate} with neural policy $\pi_k$ defined as \eqref{eq:pitheta} is given by 
    \begin{equation}\label{eq:opt-pi}
        \pi^\star_{k+1}(a|s)=\frac{\exp(g_{k+1}^\star(s,a))}{\sum_{a^\prime\in\cA}\exp(g_{k+1}^\star(s,a^\prime))},
    \end{equation}
    where $g_{k+1}^\star=\lambda_k^{-1}f_{\theta_k}-\eta_k Q_{w_k}$. 
\end{lemma}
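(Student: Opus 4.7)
The plan is to treat the minimization in \eqref{eq:PMD-approximate} as a convex optimization problem over the probability simplex $\Delta_\cA$ at each fixed $s \in \cS$, solve it by the method of Lagrange multipliers (or equivalently write down the KKT conditions), and then plug in the explicit softmax form of $\pitheta{k}$ from \eqref{eq:pitheta} to recognize the result as a softmax of $g_{k+1}^\star$.

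Concretely, I would fix $s \in \cS$ and, abusing notation, write $p_a = \pi(a|s)$, $q_a = \pitheta{k}(a|s)$, and $Q_a = Q_{w_k}(s,a)$. The objective becomes
\begin{equation*}
    F(p) \coloneqq \sum_{a\in\cA} p_a Q_a + \frac{1}{\eta_k}\sum_{a\in\cA} p_a\bigl(\log p_a - \log q_a\bigr),
\end{equation*}
minimized subject to $\sum_a p_a = 1$ and $p_a \geq 0$. Since $p\mapsto p\log p$ is strictly convex with derivative $-\infty$ as $p\to 0^+$, the nonnegativity constraints are inactive at the optimum and the problem reduces to a single equality constraint. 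Introducing a Lagrange multiplier $\mu$ for $\sum_a p_a = 1$ and setting $\partial F/\partial p_a + \mu = 0$ yields
\begin{equation*}
    Q_a + \frac{1}{\eta_k}\bigl(\log p_a - \log q_a + 1\bigr) + \mu = 0,
\end{equation*}
so that $\log p_a = \log q_a - \eta_k Q_a + c$ for a constant $c$ depending only on $s$. Exponentiating and enforcing $\sum_a p_a = 1$ gives
\begin{equation*}
    \pi^\star_{k+1}(a|s) = \frac{\pitheta{k}(a|s)\exp(-\eta_k Q_{w_k}(s,a))}{\sum_{a'\in\cA}\pitheta{k}(a'|s)\exp(-\eta_k Q_{w_k}(s,a'))}.
\end{equation*}

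Finally, I would substitute the softmax expression \eqref{eq:pitheta} for $\pitheta{k}(a|s)$, observing that the normalizing factor $\sum_{a'}\exp(\lambda_k^{-1}f_{\theta_k}(s,a'))$ appears identically in every term of the numerator and denominator and thus cancels. The remaining expression is exactly
\begin{equation*}
    \pi^\star_{k+1}(a|s) = \frac{\exp\bigl(\lambda_k^{-1}f_{\theta_k}(s,a) - \eta_k Q_{w_k}(s,a)\bigr)}{\sum_{a'\in\cA}\exp\bigl(\lambda_k^{-1}f_{\theta_k}(s,a') - \eta_k Q_{w_k}(s,a')\bigr)} = \frac{\exp(g_{k+1}^\star(s,a))}{\sum_{a'\in\cA}\exp(g_{k+1}^\star(s,a'))},
\end{equation*}
which is the claimed closed form. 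Uniqueness of the minimizer follows from strict convexity of the KL term, so no separate argument is needed.

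There is no real obstacle here: the claim is a standard softmax/mirror-descent identity and everything reduces to recognizing that multiplying the reference softmax by $\exp(-\eta_k Q_{w_k})$ shifts the logits additively while the partition function absorbs the cancellation. The only thing to keep track of carefully is ensuring the normalization of $\pitheta{k}$ inside the numerator cancels with its counterpart in the denominator, which is immediate.
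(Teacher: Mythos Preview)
Your proposal is correct and follows essentially the same route as the paper's proof: both fix $s$, invoke convexity and the KKT/Lagrange stationarity condition to obtain $\log \pi^\star_{k+1}(a|s) = \log \pitheta{k}(a|s) - \eta_k Q_{w_k}(s,a) + \text{const}$, and then substitute the softmax form \eqref{eq:pitheta} and use shift-invariance of the softmax to identify $g_{k+1}^\star$. Your version is slightly more explicit about why the nonnegativity constraints are inactive, but otherwise the arguments are the same.
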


The proof of \cref{lem:opt-pi} is given in \cref{proof:opt-pi}. In view of \cref{lem:opt-pi}, approximating $\pi^\star_{k+1}$ with $\pitheta{k+1}$ is equivalent to approximating $g_{k+1}^\star$ with the scaled actor network $\lambda_{k+1}^{-1}f_{\theta_{k+1}}$. 
We define the actor loss to be minimized as 
\begin{align}
    &\losspi{k}(\theta_{k+1};\theta_k,w_k)=\EE_{s\sim \nu_\rho^{\pitheta{k}}}\norm{\lambda_{k+1}^{-1}f_{\theta_{k+1}}(s,\cdot)-\lambda_k^{-1}f_{\theta_k}(s,\cdot)+\eta_k Q_{w_k}(s,\cdot)}_2^2,\label{eq:loss-pi}
\end{align}
where $\lambda_k$ is the current temperature, $\lambda_{k+1}$ is the next temperature, and $\eta_k$ is the step size. For notation simplicity, we omit the hyperparameters $\eta_k$, $\lambda_{k+1}$ and $\lambda_k$ in $\losspi{k}$. 
Similar to the critic update, instead of minimizing \eqref{eq:loss-pi} directly, we minimize the empirical risk:  
\begin{align}
    &\losspihat{k}(\theta_{k+1};\theta_k,w_k,\Xi_{k})\nonumber\\
    =&\frac{1}{N}\sum_{a\in\cA}\sum_{i=1}^{N}\abs{\lambda_{k+1}^{-1}f_{\theta_{k+1}}(s_{a,i},a) - \lambda_k^{-1}f_{\theta_k}(s_{a,i},a) + \eta_k Q_{w_k}(s_{a,i},a)}^2,\label{eq:L2-empirical-pi}
\end{align}
where $\Xi_{k}$ contains the same sampled states $\{s_{a,i}\}_{i=1}^N$ from $\nu_\rho^{\pitheta{k}}$ as used in the critic update.
The improved actor parameter $\theta_{k+1}$ is given by the solution to the ERM problem: 
\begin{align}\label{eq:ERM-pi}
    \theta_{k+1}=\argmin_{\theta\in\Theta_{k+1}}\losspihat{k}(\theta;\theta_k,w_k,\Xi_{k}).
\end{align}
When the sample size $N$ is sufficiently large, we have $\lambda_{k+1}^{-1}f_{\theta_{k+1}}\approx g_{k+1}^\star$ and hence $\pi_{k+1}\approx\pi^\star_{k+1}$.

\begin{algorithm2e}[tb]
    \caption{Neural Policy Mirror Descent}
    \label{alg:NPMD}
    \KwIn{Iteration number $K$, initial distribution $\rho$, sample size per iteration $N$, step size $\eta_k>0$, temperature parameter $\lambda_k>0$, discount factor $\gamma\in(0,1)$, neural network parameter space $\cW, \Theta$}
    Initialize $\theta_0=0$, $\Xi_k=\emptyset, ~\forall k\geq 0$\;
    \For{$k=0$ \KwTo $K-1$}{
        \For{$a\in\cA$}{
            Sample $\{s_{a,i}\}_{i=1}^{N}$ with $s_{a,i}\sim\nu_\rho^{\pitheta{k}}$\;
            Sample $\{(s_{a,i}^\prime,a_{a,i}^\prime)\}_{i=1}^{N}$ with $(s_{a,i}^\prime, a_{a,i}^\prime)\sim\overbar{\nu}_{\cP(\cdot|s_{a,i},a)}^{\pitheta{k}}$\;
            Update $\Xi_{k}\gets \Xi_{k}\cup\{(s_{a,i},s_{a,i}^\prime,a_{a,i}^\prime)\}_{i=1}^{N}$\;
        }
        Update $w_{k}\gets\argmin_{w\in\cW_k}\lossQhat(w; \pitheta{k}, \Xi_{k})$ as \eqref{eq:ERM-Q} \tcp*[r]{Critic update}
        Update $\theta_{k+1}\gets\argmin_{\theta\in\Theta_{k+1}}\losspihat{k}(\theta;\theta_k,w_k,\Xi_{k})$ as \eqref{eq:ERM-pi} \tcp*[r]{Actor update}
    }
    \KwOut{ $\theta_{K}$ as the policy parameter }
\end{algorithm2e}

\begin{remark}
The temperature parameter $\lambda_k$ is introduced mainly for technical reasons. For any infinite-horizon discounted MDP, there always exists a deterministic optimal policy $\pi^\star$ \citep{puterman1994markov}, while the neural policy $\pitheta{k}$ adopted to approximate $\pi^\star$ is fully stochastic in the sense that $\pitheta{k}(a|s)>0$ for any $(s,a)\in\cS\times\cA$. Using the temperature parameter $\lambda_k$ allows us to control the spikiness of $\pitheta{k}$. As $\lambda_k$ approaches zero, $\pitheta{k}$ is prone to the action with the maximal value of $f_{\theta_k}$. This makes stochastic policy $\pitheta{k}$ closer to the deterministic policy $\pi^\star$. 
\end{remark}

\begin{remark}
Our algorithm uses neural policy $\pitheta{k+1}$ to approximate the ideal policy $\pi^\star_{k+1}$. This allows us to keep the most up-to-date policy with only one actor network. If no actor network is used to approximate $\pi^\star_{k+1}$, ideally, we can obtain an implicitly defined policy by iteratively calling \eqref{eq:PMD-approximate}. However, this requires us to keep all the $k+1$ critic networks to compute $\pi^\star_{k+1}$, which is not scalable. 
On the other hand, the critic network, serving solely for training the improved policy $\pitheta{k+1}$, is dispensable. Consequently, we can remove the whole critic part and replace $Q_{w_k}$ in \eqref{eq:L2-empirical-pi} by target values in \eqref{eq:L2-empirical-Q}. This streamlined approach makes policy evaluation implicit and reduces computation overhead. Nevertheless, it cannot simplify function approximation for an improved sample complexity, as the (lack of) smoothness in $f_{\theta_k}$ remains a bottleneck (see \cref{sec:universal-approximation}), and the critic error does not vanish but resides in the form of sample noise. We keep the actor-critic framework for a more elucidated analysis. 
\end{remark}

We summarize NPMD in \cref{alg:NPMD}. Note that \cref{alg:NPMD} requires samples from the visitation distributions. We provide a sampling algorithm in \cref{sec:appendix-algorithm}.

\section{Main Results}\label{sec:results}
In this section, we present our main results on the sample complexity of \cref{alg:NPMD}. 
As mentioned in \cref{sec:introduction}, we focus on RL environments with low-dimensional structures, for which we make the following smooth manifold assumption on the state space.
\begin{assumption}[State space manifold]\label{asm:manifold}
    The state space $\cS$ is a $d$-dimensional compact Riemannian manifold isometrically embedded in $\RR^D$ where $d\ll D$. There exists $B>0$ such that $\norm{x}_\infty\leq B$ for any $x\in\cS$. The surface area of $\cS$ is $\Area(\cS)<\infty$, and the reach of $\cS$ is $\omega>0$.
\end{assumption}

We first derive the iteration complexity with well-approximated value functions and neural policies, then derive the number of samples to meet the requirement for approximation. Combining the results together, we establish the overall sample complexity for \cref{alg:NPMD}. 

\subsection{Iteration Complexity}\label{sec:iteration-complexity}
We make the following assumptions on the initial and visitation distributions for iteration complexity. 

\begin{assumption}[Full support]\label{asm:full-support}
    The initial distribution $\rho$ has full support on $\cS$, that is, for any measurable subset $S\subseteq\cS$, $\rho(S)>0$. 
\end{assumption}

\cref{asm:full-support} requires that every state can be visited when doing sampling. In view of \eqref{eq:visitation-bound}, as long as $\rho$ has full support, the visitation distribution also has full support even if the policy itself is deterministic. 
We measure the distribution mismatch between the optimal visitation distribution $\nu_\rho^{\pi^\star}$ and the initial distribution $\rho$ by a mismatch coefficient denoted as $\kappa$: 
\begin{equation}\label{def:kappa}
    \kappa\coloneqq\norm{\dv{\nu_\rho^{\pi^\star}}{\rho}}_\infty.
\end{equation}
Under \cref{asm:full-support}, the Radon--Nikodym derivative $\dv{\nu_\rho^{\pi^\star}}{\rho}$ is well-defined. We assume $\kappa<\infty$. 
Accordingly, we define the shifted discount factor as 
\begin{equation}\label{def:discount}
    \gamma_\rho\coloneqq 1-(1-\gamma)/\kappa.
\end{equation}

\begin{assumption}[Concentrability]\label{asm:concen-coefficient}
    There exists $C_\nu<\infty$ such that for all $k\geq 0$ iterations of \cref{alg:NPMD},  
    \begin{align*}
        &\chisqr{\nu_\rho^{\pi}}{\nu_\rho^{\pitheta{k}}}+1\leq C_\nu,
    \end{align*}
    where $\pi$ takes $\pitheta{k+1}$ or $\pi^\star$, $\chisqr{\nu_\rho^{\pi}}{\nu_\rho^{\pitheta{k}}}=\EE_{\nu_\rho^{\pitheta{k}}}[(\dv{\nu_\rho^{\pi}}{\nu_\rho^{\pitheta{k}}}-1)^2]$ is the $\chi^2$-divergence.
\end{assumption}

\cref{asm:concen-coefficient} requires the concentrability of the visitation distributions. The distance between visitation distributions is measured by the $\chi^2$-divergence, which is well-defined under \cref{asm:full-support} since the absolute continuity holds for fully supported distributions. This type of concentrability assumption is commonly adopted in the RL literature \citep{agarwal2021theory,yuan2023linear} and is tighter than the absolute density ratio $\norm{\dv{\nu_\rho^{\pi}}{\nu_\rho^{\pitheta{k}}}}_\infty$. \cref{asm:full-support,asm:concen-coefficient} together forms the \emph{optimism} in RL, that is, the initial distribution is not too far away from the optimal visitation distribution in terms of $\chi^2$-divergence, and the policy $\pi_k$ at each iteration is sufficiently exploratory to find out the optimal policy. 

With \cref{asm:full-support,asm:concen-coefficient}, we have the following one-step improvement lemma for \cref{alg:NPMD}. The proof is provided in \cref{proof:onestep-NPMD}.
\begin{lemma}\label{lem:onestep-NPMD}
    Suppose \cref{asm:full-support,asm:concen-coefficient} hold. Then \cref{alg:NPMD} yields 
    \begin{multline*}
        \bigl(V^{\pitheta{k+1}}(\rho) - V^\star(\rho)\bigr) + \frac{1}{\kappa\eta_k}\EE_{s\sim \nu_\rho^{\pi^\star}} \bigl[D_{\pitheta{k+1}}^{\pi^\star}(s)\bigr] \\
        \leq \gamma_\rho\bigl(V^{\pitheta{k}}(\rho) - V^\star(\rho) + \frac{1}{\kappa\gamma_\rho\eta_k}\EE_{s\sim \nu_\rho^{\pi^\star}} \bigl[D_{\pitheta{k}}^{\pi^\star}(s)\bigr] \bigr)\\
         + \frac{4\sqrt{C_\nu}}{\kappa(1-\gamma_\rho)}\Bigl(\sqrt{\lossQ(w_k;\pitheta{k})} + \frac{1}{\eta_k}\sqrt{\losspi{k}(\theta_{k+1};\theta_k,w_k)}\Bigr).
    \end{multline*}
\end{lemma}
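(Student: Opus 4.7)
The strategy is to run the classical PMD analysis with the ideal subproblem solution $\pi^\star_{k+1}$ from \cref{lem:opt-pi} and then inject two approximation errors: the critic error from using $Q_{w_k}$ in place of $Q^{\pitheta{k}}$, and the actor error from using $\pitheta{k+1}$ in place of $\pi^\star_{k+1}$. I start from the three-point identity for the KL-regularized subproblem \eqref{eq:PMD-approximate} with comparator $\pi^\star$: for every $s\in\cS$,
\[
  \eta_k\inner{Q_{w_k}(s,\cdot)}{\pi^\star_{k+1}(\cdot|s) - \pi^\star(\cdot|s)} \leq D_{\pitheta{k}}^{\pi^\star}(s) - D_{\pitheta{k}}^{\pi^\star_{k+1}}(s) - D_{\pi^\star_{k+1}}^{\pi^\star}(s).
\]

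I then replace $(Q_{w_k},\pi^\star_{k+1})$ with $(Q^{\pitheta{k}},\pitheta{k+1})$ in the above inequality. By \cref{lem:opt-pi}, the logit of $\pi^\star_{k+1}$ is $g^\star_{k+1} = \lambda_k^{-1}f_{\theta_k} - \eta_k Q_{w_k}$, so the pointwise logit gap $\norm{\lambda_{k+1}^{-1}f_{\theta_{k+1}}(s,\cdot) - g^\star_{k+1}(s,\cdot)}_2$ is precisely the integrand of $\losspi{k}$ on $\nu_\rho^{\pitheta{k}}$; via standard softmax and log-sum-exp Lipschitz estimates, this gap controls both the KL swaps (such as $|D_{\pitheta{k}}^{\pi^\star_{k+1}}(s) - D_{\pitheta{k}}^{\pitheta{k+1}}(s)|$ and $|D_{\pi^\star_{k+1}}^{\pi^\star}(s) - D_{\pitheta{k+1}}^{\pi^\star}(s)|$) and the inner-product swap. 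The critic substitution produces an inner-product error pointwise dominated by $\norm{Q^{\pitheta{k}}(s,\cdot) - Q_{w_k}(s,\cdot)}_2$, whose squared $L^2(\nu_\rho^{\pitheta{k}})$ norm is exactly $\lossQ(w_k;\pitheta{k})$.

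After taking expectation over $s\sim\nu_\rho^{\pi^\star}$, I use the Kakade--Langford performance difference lemma together with the decomposition $\inner{Q^{\pitheta{k}}}{\pitheta{k+1} - \pi^\star} = \inner{Q^{\pitheta{k}}}{\pitheta{k+1} - \pitheta{k}} + \inner{Q^{\pitheta{k}}}{\pitheta{k} - \pi^\star}$ to translate the left-hand side into a combination of $V^{\pitheta{k+1}}(\rho) - V^\star(\rho)$ and $V^{\pitheta{k}}(\rho) - V^\star(\rho)$; the auxiliary performance-difference residual (arising from the change of measure between $\nu_\rho^{\pi^\star}$ and $\nu_\rho^{\pitheta{k+1}}$) is again absorbed into the error budget via the $\pitheta{k+1}$-branch of \cref{asm:concen-coefficient}. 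Every error integral that appears has the form $\EE_{\nu_\rho^\pi}[|\xi(s)|]$ with $\pi \in \{\pi^\star, \pitheta{k+1}\}$, so Cauchy--Schwarz together with \cref{asm:concen-coefficient} gives
\[
  \EE_{\nu_\rho^\pi}[|\xi(s)|] \leq \sqrt{\chisqr{\nu_\rho^\pi}{\nu_\rho^{\pitheta{k}}}+1}\cdot \sqrt{\EE_{\nu_\rho^{\pitheta{k}}}[\xi(s)^2]} \leq \sqrt{C_\nu}\cdot\bigl(\sqrt{\lossQ(w_k;\pitheta{k})}\text{ or }\sqrt{\losspi{k}(\theta_{k+1};\theta_k,w_k)}\bigr),
\]
which explains the $\sqrt{C_\nu}$ prefactor. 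The contraction factor $\gamma_\rho = 1 - (1-\gamma)/\kappa$ then emerges by combining $\dv{\nu_\rho^{\pi^\star}}{\rho} \leq \kappa$ (by \eqref{def:kappa}) with $\dv{\nu_\rho^{\pitheta{k+1}}}{\rho} \geq 1-\gamma$ (by \eqref{eq:visitation-bound}), so that a fraction $(1-\gamma)/\kappa$ of $V^{\pitheta{k+1}}(\rho) - V^\star(\rho)$ can be moved to the left-hand side, leaving $\gamma_\rho(V^{\pitheta{k}}(\rho) - V^\star(\rho))$ on the right. Matching the KL coefficients on both sides just requires keeping the $(\kappa\eta_k)^{-1}$ prefactor in front of every KL integral throughout.

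The main obstacle is the bookkeeping across the two approximation swaps: the single actor logit deviation $\lambda_{k+1}^{-1}f_{\theta_{k+1}} - g^\star_{k+1}$ must simultaneously dominate several distinct quantities (two KL swaps, an inner-product swap, and the performance-difference residual), and each change of measure must be applied only once per error term to prevent the final coefficient from inflating beyond $O(\sqrt{C_\nu}/(1-\gamma))$; this is what forces both $\sqrt{\lossQ}$ and $\sqrt{\losspi{k}}$ to appear with exactly the announced prefactor $4\sqrt{C_\nu}/(\kappa(1-\gamma_\rho))$.
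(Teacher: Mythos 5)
Your high-level ingredients (the three-point/pushback inequality for the ideal update $\pi^\star_{k+1}$, the performance difference lemma, Cauchy--Schwarz with \cref{asm:concen-coefficient} to convert error integrals into $\sqrt{C_\nu\lossQ}$ and $\sqrt{C_\nu\losspi{k}}$, and the density-ratio bounds \eqref{eq:visitation-bound} and \eqref{def:kappa} to produce $\gamma_\rho$) are the same as the paper's, but there is a genuine gap in how you handle the consecutive-policy branch. The performance difference lemma forces $V^{\pitheta{k+1}}(\rho)-V^{\pitheta{k}}(\rho)$ to be measured under $\nu_\rho^{\pitheta{k+1}}$, while your comparator inequality lives under $\nu_\rho^{\pi^\star}$; you propose to bridge this by an ``auxiliary performance-difference residual \dots absorbed into the error budget via the $\pitheta{k+1}$-branch of \cref{asm:concen-coefficient}.'' That step fails: the residual is an integral of $\inner{Q^{\pitheta{k}}(s,\cdot)}{\pitheta{k+1}(\cdot|s)-\pitheta{k}(\cdot|s)}$ against the signed measure $\nu_\rho^{\pi^\star}-\nu_\rho^{\pitheta{k+1}}$, whose integrand is of size $O\bigl(\frac{C}{1-\gamma}\bigr)$ and is controlled by neither $\lossQ$ nor $\losspi{k}$; concentrability only licenses transferring quantities whose $L^2(\nu_\rho^{\pitheta{k}})$ norm \emph{is} a loss term. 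The paper instead keeps that branch under $\nu_\rho^{\pitheta{k+1}}$, uses \cref{lem:opt-pi} (via \cref{lem:diff-pistar} and \cref{lem:change-measure-err-pi}) to rewrite the inner product as $-\frac{1}{\eta_k}\bigl(D^{\pitheta{k+1}}_{\pitheta{k}}+D^{\pitheta{k}}_{\pitheta{k+1}}\bigr)$ plus an actor-loss error, and only then changes measure using $\dv{\nu_\rho^{\pitheta{k+1}}}{\rho}\geq 1-\gamma$ and $\dv{\nu_\rho^{\pi^\star}}{\rho}\leq\kappa$ --- valid precisely because the transported integrand is sign-definite. This sign-definite transport, together with weighting the $(V^{\pitheta{k}}-V^\star)$ branch by $1-\gamma_\rho$ so that the leftover positive KL term $\frac{1}{\kappa\eta_k}D^{\pitheta{k}}_{\pitheta{k+1}}$ cancels exactly, is what produces $\gamma_\rho$ and the stated constant; your description of ``moving a fraction $(1-\gamma)/\kappa$ of $V^{\pitheta{k+1}}(\rho)-V^\star(\rho)$ to the left'' does not supply this mechanism.

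A second, smaller problem is your claim that individual KL swaps such as $\lvert D_{\pitheta{k}}^{\pi^\star_{k+1}}(s)-D_{\pitheta{k}}^{\pitheta{k+1}}(s)\rvert$ are controlled by the logit gap through ``standard softmax and log-sum-exp Lipschitz estimates.'' Swapping the distribution that the KL integrates over is not bounded by an absolute constant times the logit gap: it brings in $\norm{\log\pitheta{k+1}(\cdot|s)-\log\pitheta{k}(\cdot|s)}_\infty$, which scales like $\eta_k\norm{Q_{w_k}}_\infty$, so after dividing by $\eta_k$ you are left with an $O\bigl(\frac{C}{1-\gamma}\sqrt{C_\nu\losspi{k}}\bigr)$-type term rather than the claimed $\frac{1}{\eta_k}\sqrt{C_\nu\losspi{k}}$, which breaks the lemma's constants under the exponentially growing step sizes. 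The paper avoids this entirely by only ever swapping $\pi^\star_{k+1}\to\pitheta{k+1}$ inside paired differences of the form $\inner{\log\pitheta{k+1}-\log\pi^\star_{k+1}}{\pi^\prime-\pi^{\prime\prime}}$ (as in \cref{lem:change-measure-err-pi}), where the log-normalizers cancel against $\pi^\prime-\pi^{\prime\prime}$ and the error is exactly the actor-loss integrand with the correct $\frac{1}{\eta_k}$ scaling. To repair your argument you would need to reorganize the bookkeeping so that every appearance of the actor approximation error has this paired structure, and so that the $\nu_\rho^{\pitheta{k+1}}$-measured branch is made sign-definite before any change of measure.
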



\cref{lem:onestep-NPMD} demonstrates that the optimality gap in value function decreases at rate $\gamma_\rho$ up to approximation errors introduced by the critic and actor updates. 
When these errors are properly controlled, we can establish iteration complexity for \cref{alg:NPMD}.
\begin{theorem}\label{thm:iteration-complexity}
    Suppose \cref{asm:full-support,asm:concen-coefficient} hold. If $\eta_k=\frac{1-\gamma_\rho}{C\gamma_\rho^{k+1}}$ for all $k\geq 0$, the critic loss and the actor loss satisfy respectively 
    \[
        \EE\bigl[\lossQ(w_k;\pitheta{k})\bigr]\leq C^2\gamma_\rho^{2(k+1)},\quad 
        \EE\bigl[\losspi{k}(\theta_{k+1};\theta_k,w_k)\bigr]\leq (1-\gamma_\rho)^2,
    \]
    then after $k\geq 1$ iterations, the expected optimality gap of $\pitheta{k}$ given by \cref{alg:NPMD} is \[
        \EE \bigl[V^{\pitheta{k}}(\rho) - V^\star(\rho)\bigr]
        \leq \gamma_\rho^{k}\bigl(C_1 + C_2(k+1)\bigr)\cdot\frac{C}{1-\gamma},
    \] where $C_1=1 + \log|\cA|$, $C_2=8\sqrt{C_\nu}$.
    
    Moreover, for any $\epsilon>0$, the number of iterations required for $\EE \bigl[V^{\pitheta{k}}(\rho) - V^\star(\rho)\bigr]\leq\epsilon$ is \[
        \Tilde{O}\left(\log_\frac{1}{\gamma_\rho}\left(\frac{C\left(\sqrt{C_\nu}+\log|\cA|\right)}{\kappa(1-\gamma_\rho)^2\epsilon}\right)\right).
    \]
\end{theorem}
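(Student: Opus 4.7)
The plan is to convert the one-step inequality of Lemma~\ref{lem:onestep-NPMD} into a Lyapunov recursion by choosing a potential function whose weight on the KL term is tuned to the geometric step-size schedule $\eta_k=\frac{1-\gamma_\rho}{C\gamma_\rho^{k+1}}$. Concretely, I would define
\[
    \phi_k \coloneqq V^{\pitheta{k}}(\rho) - V^\star(\rho) + \frac{C\gamma_\rho^{k}}{\kappa(1-\gamma_\rho)}\EE_{s\sim\nu_\rho^{\pi^\star}}\bigl[D_{\pitheta{k}}^{\pi^\star}(s)\bigr].
\]
With the chosen step size one checks $\frac{1}{\kappa\eta_k}=\frac{C\gamma_\rho^{k+1}}{\kappa(1-\gamma_\rho)}$ and $\frac{\gamma_\rho}{\kappa\gamma_\rho\eta_k}=\frac{C\gamma_\rho^{k+1}}{\kappa(1-\gamma_\rho)}$, so the KL coefficients on both sides of Lemma~\ref{lem:onestep-NPMD} match, and the inequality rewrites as $\phi_{k+1}\le \gamma_\rho\phi_k + e_k$, where
\[
    e_k = \frac{4\sqrt{C_\nu}}{\kappa(1-\gamma_\rho)}\Bigl(\sqrt{\lossQ(w_k;\pitheta{k})}+\tfrac{1}{\eta_k}\sqrt{\losspi{k}(\theta_{k+1};\theta_k,w_k)}\Bigr).
\]

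Next I would take expectations and apply Jensen's inequality ($\EE[\sqrt{X}]\le\sqrt{\EE[X]}$, since $\sqrt{\cdot}$ is concave) together with the hypothesized loss bounds to get $\EE[\sqrt{\lossQ}]\le C\gamma_\rho^{k+1}$ and $\tfrac{1}{\eta_k}\EE[\sqrt{\losspi{k}}]\le \tfrac{C\gamma_\rho^{k+1}}{1-\gamma_\rho}(1-\gamma_\rho)=C\gamma_\rho^{k+1}$. Using $\kappa(1-\gamma_\rho)=1-\gamma$, this yields $\EE[e_k]\le \tfrac{8C\sqrt{C_\nu}}{1-\gamma}\,\gamma_\rho^{k+1}$. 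Unrolling the scalar recursion gives
\[
    \EE[\phi_k] \le \gamma_\rho^k\,\EE[\phi_0] + \sum_{j=0}^{k-1}\gamma_\rho^{k-1-j}\cdot\tfrac{8C\sqrt{C_\nu}}{1-\gamma}\gamma_\rho^{j+1} = \gamma_\rho^k\,\EE[\phi_0] + \tfrac{8kC\sqrt{C_\nu}}{1-\gamma}\gamma_\rho^k,
\]
where the sum telescopes because the geometric factors exactly cancel -- this cancellation is the whole point of the potential design.

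To finish I would bound the initial potential. Since $V$ is bounded by $C/(1-\gamma)$ (by \eqref{eq:V-bound}) and the algorithm is initialized with $\theta_0=0$, making $\pitheta{0}$ uniform, one has $D_{\pitheta{0}}^{\pi^\star}(s)\le \log|\cA|$, hence $\phi_0 \le \frac{C}{1-\gamma}+\frac{C\log|\cA|}{\kappa(1-\gamma_\rho)} = \frac{C(1+\log|\cA|)}{1-\gamma}$. Combining and dropping the nonnegative KL term from $\phi_k$ on the left produces the claimed bound $\EE[V^{\pitheta{k}}(\rho)-V^\star(\rho)]\le \gamma_\rho^k(C_1 + C_2(k+1))\tfrac{C}{1-\gamma}$ after using $k\le k+1$. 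The iteration-count statement then follows by inverting this bound: set the right-hand side to $\epsilon$, solve for $k$, and absorb the logarithmic dependence on $k$ inside the $\Tilde{O}$.

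The main obstacle is purely bookkeeping: matching the KL-coefficient of the potential to the step-size schedule so the recursion is a clean contraction rather than something where the error terms accumulate polynomially in $k$ in a way that overwhelms the $\gamma_\rho^k$ contraction; the geometric $\eta_k$ is exactly what makes this work. A secondary subtlety is handling the expectations -- because the losses appear inside square roots, Jensen's inequality is essential, and one needs to ensure the randomness in $w_k,\theta_{k+1}$ is taken with respect to the same sampling process as the outer expectation (which it is, since $\phi_k$ and $e_k$ are both measurable in the samples drawn up to iteration $k$).
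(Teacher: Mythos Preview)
Your proposal is correct and follows essentially the same approach as the paper: define the Lyapunov potential $\phi_k$ with KL weight $\frac{1}{\kappa\gamma_\rho\eta_k}=\frac{C\gamma_\rho^k}{\kappa(1-\gamma_\rho)}$, apply Jensen to pass the expectation inside the square roots, observe that the step-size schedule makes the error term exactly $\gamma_\rho^{k+1}$ so the unrolled sum contributes $k$ identical terms, and bound $\phi_0$ using the uniform initialization. The paper phrases the telescoping as ``divide by $\gamma_\rho^{k+1}$ and sum,'' but the algebra and constants coincide with yours.
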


The proof of \cref{thm:iteration-complexity} is provided in \cref{proof:iteration-complexity}. 
We choose exponentially increasing step size $\eta_k$ to establish (almost) linear convergence rate. To achieve this fast rate, we require the critic loss to be exponentially decreasing and the actor loss to be small for any consecutive temperatures $\lambda_k$ and $\lambda_{k+1}$.
In fact, if we convert the loss into function approximation error (see \cref{sec:appendix-translation-Q,sec:appendix-translation-pi}) and choose $\lambda_k$ to be exponentially decreasing at rate $\gamma_\rho$, then the requirements on critic and actor networks become the same. 
We show in the following subsections that these requirements can be met by properly designing the CNN architecture and using sufficiently many training samples. As long as the conditions are satisfied, \cref{thm:iteration-complexity} guarantees to find an $\epsilon$-optimal policy (in expectation) after $\tilde{O}(\log\frac{1}{\epsilon})$ iterations. 

\begin{remark}
    \cref{thm:iteration-complexity} suggests that we can use smaller networks or fewer samples in the early stage and gradually increase the sizes along iterations. When the tolerance $\epsilon$ is given, the error bound for the last iteration is $\sqrt{\lossQ(w_K;\pitheta{K})} + \frac{1}{\eta_K}\sqrt{\losspi{K}(\theta_{K+1};\theta_K,w_K)}=\tilde{O}(\epsilon)$. 
    Notably, this coincides (up to logarithm factors) with the stipulated approximation error criterion in prior literature \citep{yuan2023linear,alfano2023novel}, with the distinction that their criteria are applied uniformly to all iterations. 
    If we use a constant step size, then the iteration complexity becomes $O(\epsilon^{-1})$ as in \citet{alfano2023novel}, while the error criteria for the last iterate remains $O(\epsilon)$. Consequently, the overall sample complexity becomes much worse as it requires more iterations to converge. 
\end{remark}

\subsection{Function Approximation on Lipschitz MDP with CNN}\label{sec:universal-approximation}
The iteration complexity in \cref{thm:iteration-complexity} is valid if the state-action value function $Q^{\pitheta{k}}$ and the policy $\pi^\star_{k+1}$ are well approximated by the critic and actor networks at each iteration. However, we have not yet specified the CNN architecture that can meet these requirements. In this section, we study the function approximation on \emph{Lipschitz MDP}, which possesses Lipschitz transition kernel $\cP$ and Lipschitz cost function $c$. Here, the Lipschitzness is defined with respect to the \emph{geodesic distance} on the state space $\cS$, which is a $d$-dimensional Riemannian manifold (\cref{asm:manifold}). 
Recall the definition of geodesic distance: 
\begin{definition}[Geodesic distance]\label{def:geodesic}
    The geodesic distance between two points $x,y\in\cS$ is defined as 
    \begin{align}
        \dist_\cS(x,y)\coloneqq\inf_{\Gamma\colon[0,1]\to\cS} ~ &\int_0^1\norm{\Gamma^\prime(t)}_2\ud t \label{eq:geodesic}\\ 
        \text{ s.t. } ~ &\Gamma(0)=x,~\Gamma(1)=y,~\text{$\Gamma$ is piecewise $C^1$.}\nonumber
    \end{align}
\end{definition}

One can show the existence of a solution to the minimization problem \eqref{eq:geodesic} under mild conditions and that $\dist_\cS(\cdot,\cdot)$ is indeed a distance. More references can be found in \cite{do1992riemannian}. With geodesic distance, we define Lipschitz functions on the Riemannian manifold $\cS$.
\begin{definition}[Lipschitz function]\label{def:lip-func}
    Let $L\geq 0$ and $\alpha\in(0,1]$ be constants. A function $f\colon\cS\to\RR$ is called $(L,\alpha)$-Lipschitz if for any $x,y\in\cS$, \[
        \abs{f(x)-f(y)}\leq L\cdot d_\cS^\alpha(x,y).
    \]
\end{definition}
For any fixed $\alpha$, the Lipschitz constant $L$ in \cref{def:lip-func} measures the smoothness of the function. A function is considered smooth if it possesses a small Lipschitz constant, whereas a non-smooth function will exhibit a large Lipschitz constant. Throughout the remainder of this paper, when we mention Lipschitzness, we specifically mean the property of being Lipschitz continuous with a moderate constant. 

\begin{remark}
    The geodesic distance $\dist_\cS$ in \cref{def:lip-func} is a global distance rather than a local one. This makes our definition of Lipschitz functions different from those based on local Euclidean distance and partition of unity as in \cite{chen2019efficient} and \cite{liu2021besov}. 
    The two ways of defining Lipschitzness have some technical differences, but they agree with each other in our setting up to constant factors.
    
    When the atlas $\{(U_i,\phi_i)\}_{i\in I}$ are local projections onto tangent spaces as in \cite{chen2019efficient}, the local Euclidean distance between two points in the same open set $U_i$ is no greater than their Euclidean distance in $\RR^D$, which is further less than their geodesic distance, hence the Lipschitzness defined with local distances implies \cref{def:lip-func}.
    
    On the other hand, when the curvature of manifold $\cS$ is not too large compared to the radius of the open set $U_i$, then \cref{def:lip-func} also implies Lipschitzness in the Euclidean sense on each local coordinate $\phi_i(U_i)\subset [0,1]^d$ (\cref{lem:lip-component}). Here, we adopt the global definition for simplicity. 
\end{remark}

We now formally define the Lipschitz MDP condition, which ensures the Lipschitzness of the state-action value function $Q^{\pi}$ for any policy $\pi$. 
\begin{assumption}[Lipschitz MDP]\label{asm:lip-MDP}
    There exist constants $L_\cP, L_c\geq 0$ and $\alpha\in(0,1]$ such that for any tuple $(s,s^\prime,a)\in\cS\times\cS\times\cA$, the cost function $c(\cdot,a)\colon\cS\to\RR$ is $(L_c,\alpha)$-Lipschitz and the transition kernel $\cP$ satisfies
    \begin{align*}
        &\dist_{\mathrm{TV}}(\cP(\cdot|s,a),\cP(\cdot|s^\prime,a))\leq L_\cP\cdot\dist_\cS^\alpha(s,s^\prime),
    \end{align*}
    where $\dist_{\mathrm{TV}}(\cdot,\cdot)$ is the total variation distance.
\end{assumption}

\cref{asm:lip-MDP} requires that when two states are close to each other, taking the same action would admit similar transition distributions and corresponding costs. 
This assumption holds for many spatially smooth environments, especially those driven by physical simulations such as MuJuCo \citep{todorov2012mujoco} and classic control environments \citep{brockman2016openai}. 
Under \cref{asm:lip-MDP}, we show in \cref{lem:lip-Q} that the state-action value function $Q^{\pi}$ is Lipschitz regardless of the evaluated policy $\pi$. The proof of \cref{lem:lip-Q} is provided in \cref{proof:lip-Q}.
\begin{lemma}\label{lem:lip-Q}
    If \cref{asm:lip-MDP} holds, then for any policy $\pi$ and any action $a\in\cA$, the state-action value function $Q^{\pi}(\cdot,a)\colon\cS\to\RR$ is $(L_Q,\alpha)$-Lipschitz with $L_Q=L_c+\frac{\gamma C}{1-\gamma} L_\cP$ being the Lipschitz constant. 
    That is, for any policy $\pi$ and any tuple $(s,s^\prime,a)\in\cS\times\cS\times\cA$, we have 
    \begin{equation*}
        |Q^{\pi}(s,a)-Q^{\pi}(s^\prime,a)|\leq L_Q\cdot\dist_\cS^\alpha(s,s^\prime).
    \end{equation*}
\end{lemma}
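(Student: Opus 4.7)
The plan is to decompose $Q^\pi(s,a) - Q^\pi(s',a)$ using the Bellman identity \eqref{eq:Q-to-V} into a cost term and a transition term, and bound each of the two pieces separately using the Lipschitz assumptions from \cref{asm:lip-MDP}.

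Concretely, I would first write
\begin{align*}
    Q^\pi(s,a) - Q^\pi(s',a) = \bigl(c(s,a)-c(s',a)\bigr) + \gamma \Bigl(\EE_{x\sim\cP(\cdot\mid s,a)}[V^\pi(x)] - \EE_{x\sim\cP(\cdot\mid s',a)}[V^\pi(x)]\Bigr).
\end{align*}
The first term is controlled directly by \cref{asm:lip-MDP}, yielding the bound $L_c\cdot \dist_\cS^\alpha(s,s')$. For the second term I would invoke the standard variational characterization of total variation: for any bounded measurable $f$ and probability measures $\mu,\nu$,
\begin{align*}
    \Bigl|\int f\,\ud\mu - \int f\,\ud\nu\Bigr| \leq \bigl(\sup f - \inf f\bigr)\cdot \dist_\mathrm{TV}(\mu,\nu).
\end{align*}
Applying this with $f=V^\pi$, $\mu=\cP(\cdot\mid s,a)$, $\nu=\cP(\cdot\mid s',a)$ and using the global value bound $0\leq V^\pi\leq C/(1-\gamma)$ from \eqref{eq:V-bound}, the oscillation of $V^\pi$ is at most $C/(1-\gamma)$. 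Combining with the TV Lipschitz estimate in \cref{asm:lip-MDP} gives
\begin{align*}
    \gamma\Bigl|\EE_{\cP(\cdot\mid s,a)}[V^\pi] - \EE_{\cP(\cdot\mid s',a)}[V^\pi]\Bigr| \leq \frac{\gamma C}{1-\gamma} L_\cP \cdot \dist_\cS^\alpha(s,s').
\end{align*}
Adding the two contributions produces exactly the constant $L_Q = L_c + \frac{\gamma C}{1-\gamma}L_\cP$ claimed in the lemma, which holds uniformly in the policy $\pi$ and the action $a$ because the bounds are policy-agnostic.

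No step looks particularly delicate. The only subtlety to double-check is the convention for $\dist_\mathrm{TV}$ used in the paper (defined in the Notation subsection as $\sup_A|\mu(A)-\nu(A)|$), under which the inequality $|\EE_\mu f - \EE_\nu f|\leq (\sup f-\inf f)\dist_\mathrm{TV}(\mu,\nu)$ is the correct form; with the $L^1$-density convention one would instead get a factor of $2\|f\|_\infty$, but that would yield a worse constant than stated. Assuming the paper's convention, the proof is immediate.
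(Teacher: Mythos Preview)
Your proposal is correct and follows essentially the same route as the paper: decompose via \eqref{eq:Q-to-V}, bound the cost term by \cref{asm:lip-MDP}, and bound the transition term using the total variation characterization together with \eqref{eq:V-bound}. The only cosmetic difference is that the paper explicitly renormalizes $V^\pi$ to a $[-1,1]$-valued function and invokes $\dist_\mathrm{TV}(\mu,\nu)=\tfrac{1}{2}\sup_{\|f\|_\infty\le 1}|\int f\,\ud\mu-\int f\,\ud\nu|$, whereas you state the equivalent oscillation form $|\EE_\mu f-\EE_\nu f|\le(\sup f-\inf f)\,\dist_\mathrm{TV}(\mu,\nu)$ directly; both yield the same constant under the paper's TV convention, as you correctly note.
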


The Lipschitz constant $L_Q=L_c + \frac{\gamma C}{1-\gamma}L_\cP$ scales linearly with the magnitude of the cost function $c$ as $L_c$ does. In view of this property, we define a normalized Lipschitz constant which is invariant to the scaling of the cost: 
\begin{align}\label{def:normalized-lip-Q}
    \overbar{L}_Q\coloneqq(1-\gamma)L_c/C + \gamma L_\cP.
\end{align}
This normalized Lipschitz constant is a convex combination of $L_c/C$ and $L_\cP$, so it will not exceed the larger one for any $\gamma$. 

We make a few remarks on the Lipschitz condition. 
\begin{remark}\label{rmk:lip-MDP}
    \cref{asm:lip-MDP} is a sufficient condition for the Lipschitzness of $Q^\pitheta{k}$, regardless of the smoothness of $\pitheta{k}$. 
    The Lipschitzness of the target function is a minimal requirement for approximation theory and it is almost essential even for simple regression problems. 
    However, even if \cref{asm:lip-MDP} does not hold, $Q^\pitheta{k}$ being Lipschitz is still possible. 
    An extreme example is when state space $\cS=\SSS^1$ is a circle and the transition is a fixed rotation whichever action is taken. In this case, the transition kernel is not Lipschitz since the total variation distance between transitions is always $1$. Meanwhile, $Q^{\pitheta{k}}$ is Lipschitz for any $\pitheta{k}$ provided that $c$ is Lipschitz. 
    Similar arguments can be found in \citet{fan2020theoretical} and \citet{nguyen-tang2022on} for non-smooth MDP having smooth Bellman operator, but they implicitly involve the smoothness of neural policy $\pitheta{k}$. 
\end{remark}

\begin{remark}\label{rmk:smooth-MDP}
    In practice, many environments adhere to the Lipschitz condition, with only an extremely small portion of states being exceptions. 
    For example, in the Box2D Car Racing environment \citep{brockman2016openai}, the cost remains constant for each frame until a tile is reached, for which the agent will be given a huge reward. Even though this type of partially smooth environment does not fulfill the Lipschitz condition on a global scale, it is reasonable to expect the existence of an environment that is globally smooth and satisfies \cref{asm:lip-MDP}. Such a globally smooth environment could serve as a regularization of the original non-smooth environment, which inevitably introduces bias to the problem. When this bias is negligible compared to the extent of smoothness, we can study the smooth approximation under \cref{asm:lip-MDP} without loss of generality.
\end{remark}

With \cref{lem:lip-Q} established, we show that a CNN of the form \eqref{eq:nn_class_no_mag} can uniformly approximate $Q^{\pitheta{k}}(\cdot,a)$ for any $a\in\cA$. The approximation error depends on the specified CNN architecture.

\begin{theorem}[Critic approximation]\label{thm:approx-error-Q}
    Suppose \cref{asm:manifold,asm:lip-MDP} hold. For any integers $I \in [2, D]$ and $\Tilde{M}, \Tilde{J}>0$, we let 
    \begin{align*}
        &M=O(\Tilde{M}), ~ L=O(\log{(\Tilde{M}\Tilde{J})}+D+\log D), ~ J=O(D\Tilde{J}), \\ 
        &R_1=(8ID)^{-1}\Tilde{M}^{-\frac{1}{L}}=O(1), ~ \log R_2=O(\log^2 (\Tilde{M}\Tilde{J}) + D\log {(\Tilde{M}\Tilde{J})}),
    \end{align*}
    where $O(\cdot)$ hides a constant depending on $\log L_Q$, $\log \frac{C}{1-\gamma}$, $d$, $\alpha$, $\omega$, $B$, and the surface area $\Area(\cS)$. 
    Then for any policy $\pi$ and any action $a\in\cA$, there exists a CNN $Q_w(\cdot,a)\in\cF(M,L,J,I,R_1,R_2)$ such that 
    \begin{align*}
        \norm{Q_w(\cdot,a)-Q^\pi(\cdot,a)}_\infty \leq \frac{C}{1-\gamma}(\overbar{L}_Q+1)(\Tilde{M}\Tilde{J})^{-\frac{\alpha}{d}}.
    \end{align*}
    Here, $L_Q$ and $\overbar{L}_Q$ are defined as in \cref{lem:lip-Q} and \eqref{def:normalized-lip-Q}.
\end{theorem}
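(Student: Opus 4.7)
The plan is to use the Lipschitz structure of the value function and the low-dimensional manifold geometry to reduce CNN approximation to piecewise-linear approximation on $d$-dimensional cubes, then realize the resulting construction inside the CNN class \eqref{eq:nn_class_no_mag}. By \cref{lem:lip-Q}, $Q^\pi(\cdot,a)$ is $(L_Q,\alpha)$-Lipschitz and bounded by $C/(1-\gamma)$; normalizing by this scale isolates $\overbar{L}_Q$, so it suffices to approximate a generic function $f\colon\cS\to[0,1]$ that is $(\overbar{L}_Q,\alpha)$-Lipschitz in the geodesic metric, and then rescale by $\tfrac{C}{1-\gamma}$ at the output.

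First I would cover the manifold using \cref{asm:manifold}: the reach bound $\omega>0$ and compactness give a finite atlas $\{(U_i,\phi_i)\}_{i=1}^{C_\cS}$ where each $\phi_i$ is an affine projection onto a $d$-dimensional tangent plane and $C_\cS=\Tilde{O}(D^{d/2}\Area(\cS)/\omega^d)$, together with a Lipschitz partition of unity $\{\rho_i\}$ subordinate to it. Decomposing $f(x)=\sum_i \rho_i(x)\,f(x)$, each summand is supported on $U_i$ and pulls back through $\phi_i^{-1}$ to a Lipschitz function on a compact subset of $\RR^d$; this is the global-to-local translation mentioned in the remark before \cref{lem:lip-Q}, so the local Lipschitz constants are still $O(\overbar{L}_Q)$.

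Next, each local piece is approximated on its $d$-dimensional chart by evaluating on a regular grid of roughly $\tilde{M}\tilde{J}/C_\cS$ knots and interpolating with ReLU-built trapezoidal bumps; basic Lipschitz approximation theory yields $L^\infty$-error of order $\overbar{L}_Q(\tilde{M}\tilde{J})^{-\alpha/d}$. Summing the $C_\cS$ local approximators and absorbing $C_\cS$ into the hidden constants produces a candidate feedforward ReLU network with the target accuracy. The last step is to realize this feedforward network inside the prescribed CNN class: stride-one one-sided convolutions of filter size $I\ge 2$ can simulate affine maps in $O(\lceil\text{fan-in}/(I-1)\rceil)$ layers, so each local approximator fits into a convolutional block of depth $L=O(\log(\tilde{M}\tilde{J})+D+\log D)$ and channel width $O(D\tilde{J})$, with a total of $M=O(\tilde{M})$ blocks. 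The intra-block bound $R_1=(8ID)^{-1}\tilde{M}^{-1/L}$ is calibrated so that a chain of $L$ convolutions keeps activations bounded, while the fully-connected output layer receives the allowance $\log R_2=O(\log^2(\tilde{M}\tilde{J})+D\log(\tilde{M}\tilde{J}))$ needed to aggregate and rescale the $C_\cS$ local outputs and to carry the factor $\tfrac{C}{1-\gamma}$.

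The main obstacle is this final step: faithfully simulating the feedforward Lipschitz approximator with a CNN whose per-weight bound shrinks with depth as $\tilde{M}^{-1/L}$. Each scalar affine map must be distributed across the $L$ layers of a block so that the product of filter entries attains the required magnitude, and the channel tensor has to carry all intermediate activations in parallel while stride-one convolutions shift spatial indices. This careful bookkeeping is the technical core of CNN approximation theory on manifolds; it follows the construction developed for the policy-evaluation setting of the conference version \citep{ji2022sample}, and the present statement extends it with the explicit $(\tilde{M},\tilde{J})$-parameterization that separates the model-size knobs from the resulting approximation error, which is exactly what \cref{thm:iteration-complexity} will later consume to trade accuracy against iteration.
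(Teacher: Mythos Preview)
Your high-level plan matches the paper's: localize via a finite atlas and partition of unity, approximate each local piece by piecewise-linear (first-order B-spline) interpolation on a $d$-cube, then realize everything by CNNs. Two points need correction.

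First, your covering estimate $C_\cS=\tilde O(D^{d/2}\Area(\cS)/\omega^d)$ is wrong in its $D$-dependence. The balls are $D$-dimensional, but they are covering a $d$-dimensional surface; the number of balls needed is controlled by $\Area(\cS)$, the radius (tied to $\omega$), and a $d$-dimensional thickness constant $T_d=O(d\log d)$, with no power of $D$. This matters because the theorem's error bound $\frac{C}{1-\gamma}(\overbar L_Q+1)(\tilde M\tilde J)^{-\alpha/d}$ carries no hidden $D$-factor; only the architecture parameters do.

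Second, and more substantively, you omit the \emph{chart-determination} component. Your local approximator is $\hat{\bar f}_i\circ\phi_i$, but $\phi_i$ is the affine projection onto a tangent plane and is defined on all of $\RR^D$: for $x\in\cS\setminus U_i$ it can happen that $\phi_i(x)$ still lands in $[0,1]^d$, so $\hat{\bar f}_i(\phi_i(x))$ is nonzero even though the true summand $\rho_i(x)f(x)$ vanishes. Summing such terms does not approximate $f$. The paper fixes this by multiplying each local piece by an approximate indicator $\hat{\ind}_\Delta\circ\hat d_i^2$ of $U_i$ and by building a CNN multiplication operator $\hat\times$; both require their own single-block CNN constructions (with sizes $O(\log N+D)$ layers, $O(D)$ channels) and contribute separate error terms ($A_{i,1},A_{i,3}$ in \cref{lem:approx-error-tradeoff}) that must be balanced against the spline error. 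Without these two ingredients your sum of local approximators does not converge to $f$, so this is a genuine gap rather than deferred bookkeeping.
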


We provide a proof overview for \cref{thm:approx-error-Q} in \cref{sec:appendix-approx-overview} and the detailed proof in \cref{proof:approx-error-Q}. Compared to our preliminary work (Theorem 1 in \citealt{ji2022sample}) that deals with a larger class of Besov functions by using cardinal B-spline approximation as a crucial step, we simplify the proof for Lipschitz functions where first-order spline approximation is sufficient. 

To bound the approximation error by $\epsilon$, we require the number of parameters in $Q_w(\cdot,a)$ to be $O(MLJ^2I)=\tilde{O}(D^3I\epsilon^{-\frac{d}{\alpha}})$. 
Note that the exponent over $\epsilon$ is the intrinsic dimension $d$ rather than the data dimension $D$, and the hidden terms in $\tilde{O}(\cdot)$ also have no exponential dependence on $D$. 
This implies that CNN approximation does not suffer from the curse of dimensionality when the data support has a low-dimensional manifold structure.

Next, we consider function approximation for policy $\pi^\star_{k+1}$ in the actor update. 
As mentioned in \cref{sec:NPMD-actor}, our goal is to learn a deterministic optimal policy, which is equivalent to a discrete mapping from $\cS$ to $\cA$. 
Such a mapping is not continuous given the discrete nature of $\cA$, so it is difficult to be approximated directly. 
Instead, we iteratively update a temperature-controlled neural policy $\pitheta{k}$ in the form of \eqref{eq:pitheta} to approximate the deterministic optimal policy $\pi^\star$. 
Although $\pitheta{k}$ is a stochastic policy by construction, it can approximate the deterministic policy that chooses the action $a\in\cA$ with the maximal value of $f_{\theta_k}(s,a)$ by using a sufficiently small temperature $\lambda_k>0$. 
Therefore, as long as the actor network $f_{\theta_k}(s,\cdot)$ is learned to admit a maximizer $a\in\supp(\pi^\star(\cdot|s))$ for any state $s\in\cS$, it can serve as a good approximation of the optimal policy $\pi^\star$. 

To learn such an actor network, we iteratively train a new actor network $f_{\theta_{k+1}}$ based on the current critic and actor networks $Q_{w_k}$ and $f_{\theta_k}$. 
According to \cref{lem:opt-pi}, the target function for the next actor network $f_{\theta_{k+1}}$ is given by 
\begin{align*}
    \lambda_{k+1}g^\star_{k+1}=\lambda_k^{-1}\lambda_{k+1}f_{\theta_k} - \eta_k\lambda_{k+1} Q_{w_k}, 
\end{align*}
which is a weighted sum of the current critic and actor networks. 
This approximation target is not Lipschitz, since $Q_{w_k}$ is just an approximation of the Lipschitz function $Q^{\pitheta{k}}$, not a Lipschitz function in itself. 
Consequently, \cref{thm:approx-error-Q} cannot be directly transferred to actor approximation. 
To address the issue, we introduce the \emph{approximately Lipschitz} condition to describe the smoothness inherited from approximating a Lipschitz function. 

\begin{definition}[Approximate Lipschitzness]\label{def:approx-lip}
    Let $L, \epsilon\geq 0$ and $\alpha\in(0,1]$ be constants. A function $f\colon\cS\to\RR$ is called $(L,\alpha,\epsilon)$-approximately Lipschitz if for any $x,y\in\cS$, \[
        \abs{f(x)-f(y)}\leq L\cdot d_\cS^\alpha(x,y)+2\epsilon.
    \]
    Here, $L$ is called the Lipschitz constant, and $\epsilon$ is called the proximity constant.
    When $\epsilon=0$, $f$ is $(L,\alpha)$-Lipschitz as defined in \cref{def:lip-func}.
\end{definition}

\cref{def:approx-lip} relaxes the Lipschitz condition by allowing a proximity constant $\epsilon$. When $\epsilon=0$, the condition reduces to the Lipschitz continuity, and the Lipschitz constant is exactly the one in \cref{def:lip-func}. When $\epsilon\geq \norm{f}_\infty$, the condition is vacuously true for all $L\geq 0$. When $\epsilon$ is somewhere between $0$ and $\norm{f}_\infty$, a larger $\epsilon$ allows a potentially smaller Lipschitz constant $L$. 
As \cref{lem:approx-lip} shows, the approximate Lipschitzness is a shared property of all uniform approximators of a Lipschitz function. 

\begin{lemma}\label{lem:approx-lip}
    If $\overbar{f}_0\colon\cS\to\RR$ is $(L,\alpha)$-Lipschitz, $f\colon\cS\to\RR$ satisfies $\norm{f-\overbar{f}_0}_\infty\leq\epsilon$ for some $\epsilon>0$, then $f$ is $(L,\alpha,\epsilon)$-approximately Lipschitz.
\end{lemma}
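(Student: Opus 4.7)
The statement is a direct consequence of the triangle inequality, so the proposal is very short. The plan is to bound $|f(x)-f(y)|$ for arbitrary $x,y\in\cS$ by inserting $\pm\overbar{f}_0(x)$ and $\pm\overbar{f}_0(y)$ and then applying the two given hypotheses.

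First, I would write
\[
    f(x)-f(y) = \bigl(f(x)-\overbar{f}_0(x)\bigr) + \bigl(\overbar{f}_0(x)-\overbar{f}_0(y)\bigr) + \bigl(\overbar{f}_0(y)-f(y)\bigr),
\]
and take absolute values, using the triangle inequality to split into three terms. The first and third terms are each bounded by $\epsilon$ by the uniform approximation hypothesis $\|f-\overbar{f}_0\|_\infty\leq \epsilon$, while the middle term is bounded by $L\cdot d_\cS^\alpha(x,y)$ by the assumed $(L,\alpha)$-Lipschitz continuity of $\overbar{f}_0$. Summing these bounds yields $|f(x)-f(y)|\leq L\cdot d_\cS^\alpha(x,y) + 2\epsilon$, which is exactly the condition in \cref{def:approx-lip} required for $f$ to be $(L,\alpha,\epsilon)$-approximately Lipschitz.

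There is no real obstacle here: the $2\epsilon$ constant in \cref{def:approx-lip} is tailored precisely so that this two-sided triangle-inequality bound goes through with the same constant $L$ as the one for $\overbar{f}_0$. The only thing worth noting is that the proof uses nothing about the manifold structure of $\cS$ beyond the fact that $d_\cS$ is a (pseudo-)metric, so the lemma holds verbatim on any metric space. This explains why, later in the paper, approximate Lipschitzness propagates naturally whenever a Lipschitz target is replaced by a uniform approximator, which is precisely what is needed when $Q_{w_k}$ stands in for the Lipschitz $Q^{\pitheta{k}}$ inside the actor target $\lambda_{k+1}g_{k+1}^\star$.
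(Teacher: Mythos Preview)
Your proposal is correct and matches the paper's proof essentially line for line: both insert $\pm\overbar{f}_0(x)$ and $\pm\overbar{f}_0(y)$, apply the triangle inequality, and bound the three pieces using the $(L,\alpha)$-Lipschitzness of $\overbar{f}_0$ and the uniform bound $\norm{f-\overbar{f}_0}_\infty\leq\epsilon$.
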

\begin{proof}
    For any $x,y\in\cS$, 
    \begin{align*}
        \abs{f(x)-f(y)}
        &=\abs{\overbar{f}_0(x)-\overbar{f}_0(y)+f(x)-\overbar{f}_0(x)-f(y)+\overbar{f}_0(y)}\\
        &\leq\abs{\overbar{f}_0(x)-\overbar{f}_0(y)}+\abs{f(x)-\overbar{f}_0(x)}+\abs{-f(y)+\overbar{f}_0(y)}\\
        &\leq L\cdot d_\cS^\alpha(x,y)+2\epsilon.
    \end{align*}
    The first inequality comes from the triangle inequality. The second inequality is from $L$-Lipschitz continuity of $\overbar{f}_0$ and that $\norm{f-\overbar{f}_0}_\infty\leq\epsilon$.
\end{proof}

It follows immediately from \cref{thm:approx-error-Q} and \cref{lem:approx-lip} that there exists an $(L_Q,\epsilon,\epsilon_Q)$-approximately Lipschitz $Q_{w_k}$ that can uniformly approximate $Q^{\pitheta{k}}$ up to $\epsilon_Q$ error. Therefore, we can impose (approximately) Lipschitz restrictions on the CNN function class without damaging its approximation power for the state-action value function. 
To be more precise, for any CNN class $\cF=\cF(M,L,J,I,R_1,R_2)$, we define its Lipschitz-restricted version $\cF_\mathrm{Lip}(A,L_f,\alpha,\epsilon_f)$ as 
\begin{align}\label{eq:nn_class_lip}
    \cF_\mathrm{Lip}(A,L_f,\alpha,\epsilon_f)=
    \{f\in\cF \mid \norm{f}_\infty\leq A, ~ \text{$f$ is $(L_f,\alpha,\epsilon_f)$-approximately Lipschitz}\}.
\end{align}
Moreover, we denote the parameter space of the Lipschitz-restricted critic and actor network classes as $\cW_{\mathrm{Lip}}$ and $\Theta_\mathrm{Lip}$ respectively: 
\begin{align}
    &\cW_{\mathrm{Lip}}(A, L_Q, \alpha, \epsilon_Q)
    =\bigl\{w \mid Q_w(\cdot,a)\in\cF_{\mathrm{Lip}}(A, L_Q, \alpha, \epsilon_Q),~\forall a\in\cA\bigr\},\label{eq:param-lip-w}\\ 
    &\Theta_{\mathrm{Lip}}(A, L_f, \alpha, \epsilon_f)
    =\bigl\{\theta \mid f_\theta(\cdot,a)\in\cF_{\mathrm{Lip}}(A, L_f, \alpha, \epsilon_f),~\forall a\in\cA\bigr\}.\label{eq:param-lip-theta}
\end{align}
Then by setting $\cW_k=\cW_{\mathrm{Lip}}$ and $\Theta_k=\Theta_{\mathrm{Lip}}$ in \cref{alg:NPMD}, we ensure the $k$-th critic network $Q_{w_k}$ and actor network $f_{\theta_k}$ are both approximately Lipschitz, and such a restriction on $\cW_k$ will not affect the approximation power of $Q_{w_k}$ for $Q^{\pitheta{k}}$. 
In addition, the target function $\lambda_{k+1}g^\star_{k+1}$ for the next actor is also approximately Lipschitz since it is a weighted sum of two approximately Lipschitz functions. 
By carefully selecting the temperature parameters to match the configuration of $\eta_k$ in \cref{thm:iteration-complexity}, the approximate Lipschitzness of target functions for actor updates in all iterations can be uniformly controlled.
\begin{lemma}\label{lem:actor-lip}
    For $k\geq 0$, we let $\eta_k=\frac{1-\gamma_\rho}{C\gamma_\rho^{k+1}}$, $\lambda_k=\frac{C\gamma_\rho^k}{1-\gamma_\rho}$, $\cW_k=\cW_{\mathrm{Lip}}(\frac{C}{1-\gamma}, L_Q, \alpha, \epsilon_Q)$ and $\Theta_{k}=\Theta_\mathrm{Lip}(\frac{C}{(1-\gamma_\rho)(1-\gamma)}, \frac{L_Q}{1-\gamma_\rho}, \alpha, \frac{\epsilon_Q}{1-\gamma_\rho})$ with some $\epsilon_Q\geq 0$. Then the target actor $\lambda_{k+1}g^\star_{k+1}$ defined in \cref{lem:opt-pi} is $(\frac{L_Q}{1-\gamma_\rho}, \alpha, \frac{\epsilon_Q}{1-\gamma_\rho})$-approximately Lipschitz and is uniformly bounded by $\frac{C}{(1-\gamma_\rho)(1-\gamma)}$. 
\end{lemma}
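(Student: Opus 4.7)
The plan is to substitute the chosen schedules of $\eta_k$ and $\lambda_k$ into the closed-form expression for $g_{k+1}^\star$ from \cref{lem:opt-pi}, so that the target actor reduces to a simple linear combination of $f_{\theta_k}$ and $Q_{w_k}$ whose coefficients do not blow up. Concretely, since $\lambda_{k+1}/\lambda_k = \gamma_\rho$ and $\lambda_{k+1}\eta_k = \frac{C\gamma_\rho^{k+1}}{1-\gamma_\rho}\cdot\frac{1-\gamma_\rho}{C\gamma_\rho^{k+1}} = 1$, the target collapses to
\[
    \lambda_{k+1}g_{k+1}^\star
    = \frac{\lambda_{k+1}}{\lambda_k}f_{\theta_k} - \lambda_{k+1}\eta_k\,Q_{w_k}
    = \gamma_\rho f_{\theta_k} - Q_{w_k},
\]
which decouples the analysis from $k$.

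Given this reduction, the uniform boundedness follows directly from the triangle inequality: $f_{\theta_k}\in \Theta_{\mathrm{Lip}}$ is bounded by $\frac{C}{(1-\gamma_\rho)(1-\gamma)}$ and $Q_{w_k}\in\cW_{\mathrm{Lip}}$ is bounded by $\frac{C}{1-\gamma}$, so
\[
    \norm{\gamma_\rho f_{\theta_k} - Q_{w_k}}_\infty
    \leq \gamma_\rho\cdot\frac{C}{(1-\gamma_\rho)(1-\gamma)} + \frac{C}{1-\gamma}
    = \frac{C}{(1-\gamma_\rho)(1-\gamma)},
\]
where the last equality uses $\frac{\gamma_\rho}{1-\gamma_\rho}+1 = \frac{1}{1-\gamma_\rho}$.

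For the approximate Lipschitz bound I would apply \cref{def:approx-lip} to each of $f_{\theta_k}$ (with constants $\frac{L_Q}{1-\gamma_\rho},\alpha,\frac{\epsilon_Q}{1-\gamma_\rho}$) and $Q_{w_k}$ (with constants $L_Q,\alpha,\epsilon_Q$), then combine via the triangle inequality:
\begin{align*}
    \abs{\bigl(\gamma_\rho f_{\theta_k}(x)-Q_{w_k}(x)\bigr)-\bigl(\gamma_\rho f_{\theta_k}(y)-Q_{w_k}(y)\bigr)}
    &\leq \gamma_\rho\Bigl(\tfrac{L_Q}{1-\gamma_\rho}d_\cS^\alpha(x,y)+\tfrac{2\epsilon_Q}{1-\gamma_\rho}\Bigr) + L_Q d_\cS^\alpha(x,y) + 2\epsilon_Q\\
    &= \tfrac{L_Q}{1-\gamma_\rho}d_\cS^\alpha(x,y) + \tfrac{2\epsilon_Q}{1-\gamma_\rho},
\end{align*}
using again $\frac{\gamma_\rho}{1-\gamma_\rho}+1=\frac{1}{1-\gamma_\rho}$ on both the Lipschitz and proximity pieces.

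There is essentially no main obstacle here — the lemma is a direct algebraic verification whose only delicate point is recognizing that the hyperparameter schedules were designed precisely so that $\lambda_{k+1}\eta_k=1$ and $\lambda_{k+1}/\lambda_k=\gamma_\rho$, which is exactly what makes the Lipschitz constant and the sup-norm bound close under the one-step actor update. The only thing to double-check is that the constants chosen for $\Theta_{\mathrm{Lip}}$ in the hypothesis match, on the nose, the bounds produced for $\lambda_{k+1}g_{k+1}^\star$, which they do.
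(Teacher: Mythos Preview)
Your proposal is correct and follows essentially the same approach as the paper: reduce $\lambda_{k+1}g^\star_{k+1}$ to $\gamma_\rho f_{\theta_k}-Q_{w_k}$ via the hyperparameter identities $\lambda_{k+1}/\lambda_k=\gamma_\rho$ and $\lambda_{k+1}\eta_k=1$, then combine the approximate Lipschitz and boundedness constraints of $\Theta_k$ and $\cW_k$ using the triangle inequality and the identity $\tfrac{\gamma_\rho}{1-\gamma_\rho}+1=\tfrac{1}{1-\gamma_\rho}$. The paper phrases the combination step as a general closure property of approximate Lipschitz functions under sums and scalar multiples (and explicitly notes the base case $\theta_0=0$), but the content is identical to your direct computation.
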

\begin{proof}
    By \cref{lem:opt-pi} and our choice of $\eta_k$ and $\lambda_k$, the target function of the $k$-th critic update is 
    \begin{align*}
        \lambda_{k+1}g^\star_{k+1}
        &=\lambda_{k+1}\lambda_k^{-1}f_{\theta_k}-\eta_k\lambda_{k+1}Q_{w_k}\\
        &=\gamma_\rho f_{\theta_k}-Q_{w_k}.
    \end{align*}
    We initialize $\theta_0=0$ in \cref{alg:NPMD}, thus $\theta_0\in\Theta_\mathrm{Lip}(0, 0, \alpha, 0)\subseteq\Theta_0$. 
    It is easy to verify that if $f\colon\cS\to\RR$ is $(L_f, \alpha, \epsilon_f)$-approximately Lipschitz and $g\colon\cS\to\RR$ is $(L_g, \alpha, \epsilon_g)$-approximately Lipschitz, $c\in\RR$, then $f+g$ is $(L_f+L_g, \alpha, \epsilon_f + \epsilon_g)$-approximately Lipschitz, and $c\cdot f$ is $(\abs{c}L_f, \alpha, \abs{c}\epsilon_f)$-approximately Lipschitz.
    Combining this fact and our choice of $\cW_k$ and $\Theta_k$, we have that $\lambda_{1}g^\star_{1}$ is $(\frac{L_Q}{1-\gamma_\rho}, \alpha, \frac{\epsilon_Q}{1-\gamma_\rho})$-approximately Lipschitz and is uniformly bounded by $\frac{C}{(1-\gamma_\rho)(1-\gamma)}$.    
\end{proof}

\cref{lem:actor-lip} shows the target actor is approximately Lipschitz in each iteration with $\cW_k$ and $\Theta_k$ inserted. 
It remains to derive the approximation error for actor update with Lipschitz-restricted class $\Theta_{k+1}=\Theta_{\mathrm{Lip}}$. 
We first show in \cref{thm:approx-error-approxlip} that any bounded and approximately Lipschitz function on $\cS$ can be well approximated by a CNN with enough parameters, and this CNN is also bounded and approximately Lipschitz.
\begin{theorem}\label{thm:approx-error-approxlip}
    Suppose \cref{asm:manifold} holds, the target function $f_0\colon\cS\to\RR$ is bounded and $(L_f, \alpha, \epsilon_f)$-approximately Lipschitz. For any integers $I \in [2, D]$ and $\Tilde{M}, \Tilde{J}>0$, we let 
    \begin{align*}
        &M=O(\Tilde{M}), ~ L=O(\log{(\Tilde{M}\Tilde{J})}+D+\log D), ~ J=O(D\Tilde{J}), \\ 
        &R_1=(8ID)^{-1}\Tilde{M}^{-\frac{1}{L}}=O(1), ~ \log R_2=O(\log^2 (\Tilde{M}\Tilde{J}) + D\log {(\Tilde{M}\Tilde{J})}),
    \end{align*}
    where $O(\cdot)$ hides a constant depending on $\log L_f$, $\log \norm{f_0}_\infty$, $\alpha$, $\omega$, $B$, and the surface area $\Area(\cS)$. 
    Then there exists a CNN $f\in\cF(M,L,J,I,R_1,R_2)$ such that 
    \begin{align*}
        \norm{f-f_0}_\infty \leq (L_f+\norm{f_0}_\infty)(\Tilde{M}\Tilde{J})^{-\frac{\alpha}{d}} + 2\epsilon_f.
    \end{align*}
    Moreover, this $f$ can be $(L_f,\alpha,\hat{\epsilon}_f)$-approximately Lipschitz with $\hat{\epsilon}_f=(L_f+\norm{f_0}_\infty)(\Tilde{M}\Tilde{J})^{-\frac{\alpha}{d}}$ and uniformly bounded by $\norm{f_0}_\infty$.
\end{theorem}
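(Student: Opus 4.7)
The plan is to reduce this claim to the Lipschitz case covered by \cref{thm:approx-error-Q} by constructing a truly Lipschitz surrogate of $f_0$ and then invoking the Lipschitz approximation machinery. The key observation is that an approximately Lipschitz function is, by \cref{lem:approx-lip}, exactly what arises from a uniform perturbation of a Lipschitz function, so the natural strategy is to manufacture such a Lipschitz ``parent'' $\overbar{f}_0$ of $f_0$, approximate $\overbar{f}_0$ by a CNN using the same construction underlying \cref{thm:approx-error-Q}, and then add the surrogate error back via triangle inequality.

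Concretely, I would define the infimum-convolution surrogate on the manifold
\[
    \overbar{f}_0(x) \;=\; \inf_{y \in \cS} \bigl\{ f_0(y) + L_f \cdot \dist_\cS^\alpha(x,y) \bigr\},
\]
which is well-defined and $(L_f,\alpha)$-Lipschitz with respect to $\dist_\cS$ by a standard argument (taking an infimum of $L_f$-Lipschitz functions preserves $L_f$-Lipschitzness in $x$). Plugging $y=x$ shows $\overbar{f}_0(x) \le f_0(x)$, while the $(L_f,\alpha,\epsilon_f)$-approximate Lipschitzness of $f_0$ yields $f_0(y)+L_f\dist_\cS^\alpha(x,y) \ge f_0(x)-2\epsilon_f$ for all $y$, hence $\overbar{f}_0(x) \ge f_0(x)-2\epsilon_f$. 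Thus $\|\overbar{f}_0 - f_0\|_\infty \le 2\epsilon_f$ and $\|\overbar{f}_0\|_\infty \le \|f_0\|_\infty$.

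Next, I would invoke the same CNN construction used in the proof of \cref{thm:approx-error-Q} (local chart-based first-order spline approximation realized by the CNN architecture specified in the theorem) applied to the Lipschitz target $\overbar{f}_0$. This yields a CNN $f \in \cF(M,L,J,I,R_1,R_2)$ with
\[
    \|f - \overbar{f}_0\|_\infty \;\le\; (L_f + \|f_0\|_\infty)(\Tilde{M}\Tilde{J})^{-\alpha/d},
\]
using the same parameter scalings (up to the absorption of $\log L_f$ and $\log \|f_0\|_\infty$ into the hidden constant, replacing the $\log L_Q$ and $\log \frac{C}{1-\gamma}$ used in the critic case). The triangle inequality then gives the advertised approximation bound $\|f-f_0\|_\infty \le (L_f+\|f_0\|_\infty)(\Tilde{M}\Tilde{J})^{-\alpha/d} + 2\epsilon_f$. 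To obtain the claimed $(L_f,\alpha,\hat\epsilon_f)$-approximate Lipschitzness of $f$, I apply \cref{lem:approx-lip} with $\overbar{f}_0$ as the Lipschitz reference and $\hat\epsilon_f = (L_f+\|f_0\|_\infty)(\Tilde{M}\Tilde{J})^{-\alpha/d}$; uniform boundedness of $f$ by $\|f_0\|_\infty$ can be enforced at no cost by a final two-sided ReLU clipping layer, which can be absorbed into the last convolutional block without changing the CNN class parameters up to constants (this step only tightens the sup-norm bound and does not increase approximation error since $\overbar{f}_0$ is already bounded by $\|f_0\|_\infty$).

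The main obstacle is that the bulk of the technical work, namely the chart-covering, spline-partition-of-unity, and CNN realization steps that go into approximating a generic bounded $(L,\alpha)$-Lipschitz function on a $d$-dimensional embedded manifold, lies in the proof of \cref{thm:approx-error-Q}; I am effectively relying on the fact that that proof only uses the Lipschitz constant $L_Q$ and the sup-norm bound $\tfrac{C}{1-\gamma}$ of the target, and does not use any RL-specific structure. Once that abstraction is spelled out, the extension to the approximately Lipschitz setting is the routine Moreau-envelope reduction above, and the bookkeeping for the constants in $O(\cdot)$ (replacing $\log L_Q$ by $\log L_f$ and $\log \tfrac{C}{1-\gamma}$ by $\log \|f_0\|_\infty$) is straightforward.
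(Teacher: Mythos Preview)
Your proposal is correct and essentially identical to the paper's proof: the paper also constructs the infimum-convolution surrogate $f_L(x)=\inf_{y\in\cS}\{f_0(y)+L_f\,\dist_\cS^\alpha(x,y)\}$ (stated as a separate \cref{lem:lip-reference}), then reuses the \cref{thm:approx-error-Q} construction on this Lipschitz reference, applies the triangle inequality, adds a two-layer ReLU truncation for the uniform bound, and invokes \cref{lem:approx-lip} for the approximate Lipschitzness of $f$. The only cosmetic difference is that the paper truncates $f_L$ from below at $-\|f_0\|_\infty$ before calling it $\overbar{f}_0$, whereas you observe (correctly, since $L_f\,\dist_\cS^\alpha\ge 0$ gives $f_L\ge\inf f_0\ge -\|f_0\|_\infty$) that this truncation is already vacuous.
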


The proof of \cref{thm:approx-error-approxlip} is provided in \cref{proof:approx-error-approxlip}. 
\cref{thm:approx-error-approxlip} shows the existence of an approximately Lipschitz CNN that is close under $L^\infty$ norm to any approximately Lipschitz target function on $\cS$. 
As a corollary, we obtain the approximation error for the actor update.

\begin{corollary}[Actor approximation]\label{cor:approx-error-pi}
    Suppose \cref{asm:manifold,asm:lip-MDP} hold. For any integers $I \in [2, D]$ and $\Tilde{M}, \Tilde{J}>0$, we let 
    \begin{align*}
        &M=O(\Tilde{M}), ~ L=O(\log{(\Tilde{M}\Tilde{J})}+D+\log D), ~ J=O(D\Tilde{J}), \\ 
        &R_1=(8ID)^{-1}\Tilde{M}^{-\frac{1}{L}}=O(1), ~ \log R_2=O(\log^2 (\Tilde{M}\Tilde{J}) + D\log {(\Tilde{M}\Tilde{J})}),
    \end{align*}
    where $O(\cdot)$ hides a constant depending on $\log L_Q$, $\log \frac{C}{1-\gamma}$, $d$, $\alpha$, $\omega$, $B$, and the surface area $\Area(\cS)$. 
    If $\eta_k$, $\lambda_k$, $\cW_k$ and $\Theta_k$ are as specified in \cref{lem:actor-lip} for all $k\geq 0$, $\epsilon_Q=\frac{C}{1-\gamma}(\overbar{L}_Q + 1)(\tilde{M}\tilde{J})^{-\frac{\alpha}{d}}$, then for any $w_k\in\cW_k$ and $\theta_k\in\Theta_k$, there exists $\theta\in\Theta_{k+1}$ such that 
    \begin{align*}
        \norm{f_{\theta}(\cdot,a) - \lambda_{k+1}g^\star_{k+1}(\cdot,a)}_\infty
        \leq \frac{3\epsilon_Q}{1-\gamma_\rho}.
    \end{align*}
    Here, $g^\star_{k+1}$, $L_Q$ and $\overbar{L}_Q$ are defined as in \cref{lem:opt-pi,lem:lip-Q} and \eqref{def:normalized-lip-Q}.
\end{corollary}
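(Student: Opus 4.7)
The plan is to combine Lemma \ref{lem:actor-lip} (which identifies the regularity of the target function $\lambda_{k+1}g^\star_{k+1}$) with the universal approximation result in Theorem \ref{thm:approx-error-approxlip}, then carefully verify that the approximating CNN can be made to lie in the Lipschitz-restricted parameter space $\Theta_{k+1}$.

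First, I would fix an arbitrary action $a\in\cA$ and set $f_0 \coloneqq \lambda_{k+1} g^\star_{k+1}(\cdot, a)$ as the target on the manifold $\cS$. By Lemma \ref{lem:actor-lip}, with the stipulated choices of $\eta_k$, $\lambda_k$, $\cW_k$, $\Theta_k$, the function $f_0$ is $\bigl(\tfrac{L_Q}{1-\gamma_\rho},\alpha,\tfrac{\epsilon_Q}{1-\gamma_\rho}\bigr)$-approximately Lipschitz and uniformly bounded by $\tfrac{C}{(1-\gamma_\rho)(1-\gamma)}$. Crucially, the lemma applies to any $w_k\in\cW_k$ and $\theta_k\in\Theta_k$, so these regularity properties are uniform over the choice of the previous iterates.

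Next, I would invoke Theorem \ref{thm:approx-error-approxlip} with $L_f=\tfrac{L_Q}{1-\gamma_\rho}$, $\epsilon_f=\tfrac{\epsilon_Q}{1-\gamma_\rho}$, and $\|f_0\|_\infty\leq \tfrac{C}{(1-\gamma_\rho)(1-\gamma)}$. Since the constant hidden in $O(\cdot)$ depends only on $\log L_Q$, $\log\tfrac{C}{1-\gamma}$, $d$, $\alpha$, $\omega$, $B$, and $\Area(\cS)$, and the extra factors $\tfrac{1}{1-\gamma_\rho}$ only contribute logarithmically, the same network architecture $\cF(M,L,J,I,R_1,R_2)$ suffices. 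The theorem then produces a CNN $f_\theta(\cdot, a)$ that is $(L_f,\alpha,\hat{\epsilon}_f)$-approximately Lipschitz, bounded by $\|f_0\|_\infty$, and satisfies
\begin{equation*}
    \|f_\theta(\cdot,a) - f_0\|_\infty \leq (L_f + \|f_0\|_\infty)(\tilde{M}\tilde{J})^{-\alpha/d} + 2\epsilon_f,
\end{equation*}
with $\hat{\epsilon}_f = (L_f+\|f_0\|_\infty)(\tilde{M}\tilde{J})^{-\alpha/d}$.

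The key algebraic step, which I expect to be the main (but routine) obstacle, is to reconcile the constants. Using the identity $L_Q = \tfrac{C}{1-\gamma}\overbar{L}_Q$ from \eqref{def:normalized-lip-Q}, I compute
\begin{equation*}
    L_f + \|f_0\|_\infty \leq \frac{1}{1-\gamma_\rho}\Bigl(L_Q + \frac{C}{1-\gamma}\Bigr) = \frac{1}{1-\gamma_\rho}\cdot\frac{C}{1-\gamma}(\overbar{L}_Q+1),
\end{equation*}
so that $(L_f+\|f_0\|_\infty)(\tilde{M}\tilde{J})^{-\alpha/d} \leq \tfrac{\epsilon_Q}{1-\gamma_\rho}$ by the definition of $\epsilon_Q$ in the statement. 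This simultaneously shows $\hat{\epsilon}_f \leq \tfrac{\epsilon_Q}{1-\gamma_\rho}$, so $f_\theta(\cdot,a)$ satisfies the $\bigl(\tfrac{L_Q}{1-\gamma_\rho},\alpha,\tfrac{\epsilon_Q}{1-\gamma_\rho}\bigr)$-approximate Lipschitzness required by $\Theta_{k+1}=\Theta_\mathrm{Lip}(\tfrac{C}{(1-\gamma_\rho)(1-\gamma)},\tfrac{L_Q}{1-\gamma_\rho},\alpha,\tfrac{\epsilon_Q}{1-\gamma_\rho})$, and also meets the $L^\infty$ bound via $\|f_\theta(\cdot,a)\|_\infty\leq\|f_0\|_\infty$. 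Combining these gives $\theta\in\Theta_{k+1}$ and
\begin{equation*}
    \|f_\theta(\cdot,a) - \lambda_{k+1}g^\star_{k+1}(\cdot,a)\|_\infty \leq \frac{\epsilon_Q}{1-\gamma_\rho} + 2\cdot\frac{\epsilon_Q}{1-\gamma_\rho} = \frac{3\epsilon_Q}{1-\gamma_\rho}.
\end{equation*}
Since $a\in\cA$ was arbitrary and the collection $\theta=(\theta_{\cdot,a})_{a\in\cA}$ can be assembled action by action (all component networks share the same architecture by construction of $\Theta_{k+1}$), this yields the desired existence of $\theta\in\Theta_{k+1}$.
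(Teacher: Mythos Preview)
Your proposal is correct and matches the paper's intended argument: the corollary is stated as an immediate consequence of \cref{thm:approx-error-approxlip} combined with \cref{lem:actor-lip}, and you have filled in exactly those details, including the algebraic identification $(L_f+\|f_0\|_\infty)(\tilde{M}\tilde{J})^{-\alpha/d}\leq \epsilon_Q/(1-\gamma_\rho)$ via $L_Q=\tfrac{C}{1-\gamma}\overbar{L}_Q$, the verification that the approximating CNN lies in $\Theta_{k+1}$, and the action-by-action assembly. Your remark that the extra $1/(1-\gamma_\rho)$ factors enter only logarithmically in the architecture constants is the right way to reconcile the $O(\cdot)$ dependencies, though you might note this point is handled somewhat loosely in the paper's own statement as well.
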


We note that the requirement for proximity constant $\epsilon_Q$ in \cref{cor:approx-error-pi} is the same as the approximation error in \cref{thm:approx-error-Q}. This alignment maintains the consistency of the Lipschitz constraints imposed on $\cW_k$, $\Theta_k$, and $\Theta_{k+1}$. In this case, the actor approximation error is comparable to the critic approximation error, and they both depend on the CNN architecture. Therefore, a large CNN class $\cF=\cF(M,L,J,I,R_1,R_2)$ guarantees the existence of good approximations to both the state-action value function and the policy.

\subsection{Sample Complexity}\label{sec:sample-complexity}
We have demonstrated that CNN approximation can be applied to both the state-action value function and the policy. To make sure that the solutions to the ERM subproblems \eqref{eq:ERM-Q} and \eqref{eq:ERM-pi} indeed provide good approximations for $Q^{\pitheta{k}}$ and $\pi^\star_{k+1}$, the number of samples must be sufficient. 
In this section, we derive the sample complexity for \cref{alg:NPMD}. To be more precise, we consider the expected number of oracle accesses to the transition kernel $\cP$ and the cost function $c$ for \cref{alg:NPMD} to find a policy $\pi_K$ that satisfies $\EE \bigl[V^{\pitheta{K}}(\rho) - V^\star(\rho)\bigr]\leq\epsilon$. 

We keep using the same notation for CNN class $\cF=\cF(M,L,J,I,R_1,R_2)$ and its Lipschitz-restricted version $\cF_\mathrm{Lip}$ as defined in \eqref{eq:nn_class_lip}, as well as the parameter spaces $\cW_{\mathrm{Lip}}$ and $\Theta_{\mathrm{Lip}}$ as denoted in \eqref{eq:param-lip-w} and \eqref{eq:param-lip-theta}.
We show the following lemma that characterizes the number of samples $N$ sufficient for accurate critic update at the $k$-th iteration. 
\begin{theorem}[Critic sample size]\label{thm:sample-complexity-Q}
    Suppose \cref{asm:lip-MDP,asm:manifold} hold. For $k\geq 0$, we let $\eta_k=\frac{1-\gamma_\rho}{C\gamma_\rho^{k+1}}$ and $\cW_k=\cW_{\mathrm{Lip}}(\frac{C}{1-\gamma}, L_Q, \alpha, \epsilon_Q)$ with 
    \begin{align*}
        &M=O(N^\frac{d}{d+2\alpha}), ~ L=O(\log N+D+\log D), ~ J=O(D), ~ I\in[2,D], ~ R_1=O(1), \\
        &\log R_2=O(\log^2 N + D\log N), ~ \epsilon_Q=(L_Q^2+C^2/(1-\gamma)^2)D^{\frac{3\alpha}{2\alpha + d}}N^{-\frac{\alpha}{2\alpha+d}}.
    \end{align*}
    If we take sample size $N=\Tilde{O}(\frac{\sqrt{|\cA|}}{1-\gamma}\gamma_\rho^{-(K+1)})^{\frac{d}{\alpha}+2}$, then $\EE\bigl[\lossQ(w_k;\pitheta{k})\bigr]\leq C^2\gamma_\rho^{2(k+1)}$ holds for all $k\leq K$ in \cref{alg:NPMD}.

    Moreover, if \cref{asm:concen-coefficient,asm:full-support} hold, then for any $\epsilon>0$, it suffices to let 
    \begin{align*}
        N=\Tilde{O}\left(\frac{\kappa C\left(\sqrt{C_\nu}+\log|\cA|\right)\sqrt{|\cA|}}{(1-\gamma)^3\epsilon}\right)^{\frac{d}{\alpha}+2}
    \end{align*}
    so that $\EE\bigl[\lossQ(w_k;\pitheta{k})\bigr]\leq C^2\gamma_\rho^{2(k+1)}$ for all $k\leq K$, where $K$ is the iteration number given in \cref{thm:iteration-complexity} that guarantees $\EE \bigl[V^{\pitheta{k}}(\rho) - V^\star(\rho)\bigr]\leq\epsilon$.     
    Here, $O(\cdot)$ and $\Tilde{O}(\cdot)$ hide the constant depending on $D^{\frac{6\alpha}{2\alpha+d}}$, $\overbar{L}_Q$, $\log L_Q$, $\log\frac{C}{1-\gamma}$, $d$, $\alpha$, $\omega$, $B$, and the surface area $\Area(\cS)$.

\end{theorem}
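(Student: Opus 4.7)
The plan is to bound $\EE[\lossQ(w_k;\pitheta{k})]$ via the standard approximation/estimation decomposition for bounded nonparametric least-squares ERM. By \eqref{eq:Q-visitation}, the sampled target $y_{a,i}\coloneqq c(s_{a,i},a)+\tfrac{\gamma}{1-\gamma}c(s'_{a,i},a'_{a,i})$ satisfies $\EE[y_{a,i}\mid s_{a,i}]=Q^{\pitheta{k}}(s_{a,i},a)$ and $\abs{y_{a,i}}\le\tfrac{C}{1-\gamma}$ by the cost bound, so \eqref{eq:ERM-Q} reduces to a bounded regression problem with zero-mean noise that decouples across actions, since the $|\cA|$ critic networks $Q_w(\cdot,a)$ share no parameters.

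For the approximation (bias) term, I would invoke \cref{thm:approx-error-Q} with $\Tilde{M}\sim N^{d/(d+2\alpha)}$ and $\Tilde{J}=O(1)$ so that the resulting architecture matches the stated choices of $M$, $L$, $J$, $I$, $R_1$, $R_2$. This produces some $w^\star$ with $\norm{Q_{w^\star}(\cdot,a)-Q^{\pitheta{k}}(\cdot,a)}_\infty = O((\Tilde{M}\Tilde{J})^{-\alpha/d})$; by \cref{lem:approx-lip} this approximator is $(L_Q,\alpha,\epsilon_Q)$-approximately Lipschitz for the prescribed $\epsilon_Q$ matched to its $L^\infty$ error, and hence lies in $\cW_k$. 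Squaring and integrating against $\nu_\rho^{\pitheta{k}}$ gives per-action approximation error $\Tilde{O}(N^{-2\alpha/(d+2\alpha)})$. For the estimation (variance) term, I would establish a uniform concentration bound over $\cW_{\mathrm{Lip}}$ via an $L^\infty$ covering-number argument for $\cF(M,L,J,I,R_1,R_2)$: each CNN can be unfolded into an equivalent feedforward ReLU network with $O(MLJ^2 I)$ effective weights of magnitude at most $R_2$, giving $\log\cN_\infty(\delta,\cF)\lesssim MLJ^2 I\log(R_2/\delta)$; a standard bounded-regression ERM guarantee then yields a per-action estimation error at most $\tfrac{C^2}{(1-\gamma)^2}\cdot\tfrac{MLJ^2 I\log(R_2 N)}{N} = \Tilde{O}(N^{-2\alpha/(d+2\alpha)})$ under the stated scaling.

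Summing both contributions over the $|\cA|$ actions yields $\EE[\lossQ(w_k;\pitheta{k})] = \Tilde{O}(|\cA|\, N^{-2\alpha/(d+2\alpha)})$, uniformly in $k$. Forcing this quantity below $C^2\gamma_\rho^{2(k+1)}$ for all $k\le K$ is tightest at $k=K$ and rearranges to $N = \Tilde{O}\bigl(\bigl(\tfrac{\sqrt{|\cA|}}{1-\gamma}\gamma_\rho^{-(K+1)}\bigr)^{d/\alpha+2}\bigr)$. For the second assertion, substituting the iteration count from \cref{thm:iteration-complexity}, for which $\gamma_\rho^{-(K+1)} = \Tilde{O}\bigl(\kappa C(\sqrt{C_\nu}+\log|\cA|)/((1-\gamma)^2\epsilon)\bigr)$ using $1-\gamma_\rho=(1-\gamma)/\kappa$, produces the stated $\epsilon$-dependent sample size.

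The main obstacle is keeping the hidden constants in the covering-number and estimation-error bounds polynomial (rather than exponential) in the ambient dimension $D$, so that they can be safely absorbed into $\Tilde{O}(\cdot)$. This is achievable precisely because the manifold-adaptive CNN architecture of \cref{thm:approx-error-Q} keeps every parameter polynomial in $D$; moreover, the Lipschitz restriction defining $\cW_{\mathrm{Lip}}$ does not inflate the covering number of $\cF$, since $\cW_{\mathrm{Lip}}\subseteq\cW$. A secondary subtlety is that the balance $\Tilde{M}\sim N^{d/(d+2\alpha)}$ must be chosen to equate the approximation and estimation rates, yielding the standard nonparametric exponent $2\alpha/(d+2\alpha)$ with the intrinsic $d$ replacing the ambient $D$.
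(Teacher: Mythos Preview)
Your proposal is correct and follows essentially the same route as the paper: reduce to $|\cA|$ independent bounded-noise regressions via \eqref{eq:Q-visitation}, balance the approximation error from \cref{thm:approx-error-Q} against a covering-number-based generalization term with $\tilde{M}\sim N^{d/(d+2\alpha)}$, sum over actions, and then solve at the tightest iterate $k=K$ before substituting the iteration count from \cref{thm:iteration-complexity}. The only cosmetic difference is that the paper packages the bias--variance tradeoff into a standalone statistical lemma (\cref{lem:estimation-error-approxlip}, using sub-Gaussian noise from \cref{lem:noise}) and bounds the CNN covering number directly via a parameter-perturbation argument (\cref{lem:covering}) rather than by unfolding to a feedforward network; both routes yield the same $\tilde{O}(N^{-2\alpha/(d+2\alpha)})$ rate with constants polynomial in $D$.
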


The proof of \cref{thm:sample-complexity-Q} is provided in \cref{proof:sample-complexity-Q}. As shown in \cref{thm:sample-complexity-Q}, when the iteration number $k$ increases, the required number of samples $N$ grows exponentially at rate $\tilde{O}(\gamma^{-\frac{d}{\alpha}-2})$. By \cref{thm:iteration-complexity}, the total number of iterations is $\tilde{O}(\log\frac{1}{\epsilon})$, so the growing procedure will not continue for too long. As a result, the number of samples for the last iteration is $\tilde{O}(\epsilon^{-\frac{d}{\alpha}-2})$, and the overall sample complexity for critic updates is in the same order up to logarithm terms. Moreover, the exponent over $\epsilon$ is again the intrinsic dimension $d$ instead of the data dimension $D$, which implies avoidance from the curse of dimensionality.

We now turn to derive a similar bound for actor updates. 
As shown in \cref{lem:actor-lip}, with adequately chosen temperature parameters, we can restrict the actor network class with constant parameters for all iterations, and the resulting target actor will have the same approximate Lipschitzness guarantee as the subsequent actor network. Hence we have the following \cref{thm:sample-complexity-pi} characterizing the sufficient sample size for accurate actor updates.
\begin{theorem}[Actor sample size]\label{thm:sample-complexity-pi}
    Suppose \cref{asm:lip-MDP,asm:manifold} hold. 
    For $k\geq 0$, we let $\eta_k=\frac{1-\gamma_\rho}{C\gamma_\rho^{k+1}}$, $\lambda_k=\frac{C\gamma_\rho^{k}}{1-\gamma_\rho}$, $\cW_k=\cW_{\mathrm{Lip}}(\frac{C}{1-\gamma}, L_Q, \alpha, \epsilon_Q)$ and $\Theta_{k}=\Theta_\mathrm{Lip}(\frac{C}{(1-\gamma_\rho)(1-\gamma)}, \frac{L_Q}{1-\gamma_\rho}, \alpha, \frac{\epsilon_Q}{1-\gamma_\rho})$ with 
    \begin{align*}
        &M=O(N^\frac{d}{d+2\alpha}), ~ L=O(\log N+D+\log D), ~ J=O(D), ~ I\in[2,D], ~ R_1=O(1), \\
        &\log R_2=O(\log^2 N + D\log N), ~ \epsilon_Q=(L_Q^2+C^2/(1-\gamma)^2)D^{\frac{3\alpha}{2\alpha + d}}N^{-\frac{\alpha}{2\alpha+d}},
    \end{align*}
    If we take sample size $N=\Tilde{O}\Bigl(\frac{\sqrt{|\cA|}\gamma_\rho^{-(K+1)}}{(1-\gamma_\rho)(1-\gamma)}\Bigr)^{\frac{d}{\alpha}+2}$, then $\EE\bigl[\losspi{k}(\theta_{k+1};\theta_k,w_k)\bigr]\leq (1-\gamma_\rho)^2$ holds for all $k\leq K$ in \cref{alg:NPMD}.

    Moreover, if \cref{asm:full-support,asm:concen-coefficient}, then for any $\epsilon>0$, it suffices to let 
    \begin{align*}
        N=\Tilde{O}\left(\frac{\kappa^2C\left(\sqrt{C_\nu}+\log|\cA|\right)\sqrt{|\cA|}}{(1-\gamma)^4\epsilon}\right)^{\frac{d}{\alpha}+2}
    \end{align*}
    so that $\EE\bigl[\losspi{k}(\theta_{k+1};\theta_k,w_k)\bigr]\leq (1-\gamma_\rho)^2$ for all $k\leq K$, where $K$ is the iteration number given in \cref{thm:iteration-complexity} that guarantees $\EE \bigl[V^{\pitheta{K}}(\rho) - V^\star(\rho)\bigr]\leq\epsilon$. 
    Here, $O(\cdot)$ and $\Tilde{O}(\cdot)$ hide the constant depending on $D^{\frac{6\alpha}{2\alpha+d}}$, $\overbar{L}_Q$, $\log L_Q$, $\log\frac{C}{1-\gamma}$, $d$, $\alpha$, $\omega$, $B$, and the surface area $\Area(\cS)$.
\end{theorem}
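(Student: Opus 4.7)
The plan is to follow the same decomposition strategy used for Theorem \ref{thm:sample-complexity-Q}, splitting $\EE[\losspi{k}(\theta_{k+1};\theta_k,w_k)]$ at the ERM solution into an approximation error (furnished by Corollary \ref{cor:approx-error-pi}) and an estimation error (controlled via a uniform-convergence bound over the Lipschitz-restricted CNN class $\cF_{\mathrm{Lip}}$). A key simplification relative to the critic case is that the ``labels'' $\lambda_k^{-1}f_{\theta_k}(s,a) - \eta_k Q_{w_k}(s,a)$ appearing in \eqref{eq:L2-empirical-pi} are deterministic functions of the state: conditional on $(\theta_k,w_k)$, the ERM \eqref{eq:ERM-pi} is a noiseless $L^2$ regression of the scaled target $\lambda_{k+1}g^\star_{k+1}$ from inputs $s_{a,i}\sim\nu_\rho^{\pitheta{k}}$. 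I would absorb the $\lambda_{k+1}^{-1}$ factor into $f_\theta$ so that the regression target is exactly $\lambda_{k+1}g^\star_{k+1}$, which by Lemma \ref{lem:actor-lip} is uniformly bounded by $C/((1-\gamma_\rho)(1-\gamma))$ and $(L_Q/(1-\gamma_\rho),\alpha,\epsilon_Q/(1-\gamma_\rho))$-approximately Lipschitz.

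For the approximation step, Corollary \ref{cor:approx-error-pi} supplies $\theta^\star\in\Theta_{k+1}$ with $\norm{f_{\theta^\star}(\cdot,a)-\lambda_{k+1}g^\star_{k+1}(\cdot,a)}_\infty\le 3\epsilon_Q/(1-\gamma_\rho)$. Scaling by $\lambda_{k+1}^{-1}=(1-\gamma_\rho)/(C\gamma_\rho^{k+1})$ and squaring gives
\[
    \losspi{k}(\theta^\star;\theta_k,w_k)\le \frac{9\abs{\cA}\epsilon_Q^2}{C^2\gamma_\rho^{2(k+1)}}.
\]
For the estimation step, I would invoke the same pseudo-dimension / covering-number machinery underlying the proof of Theorem \ref{thm:sample-complexity-Q}: since $\cF_{\mathrm{Lip}}\subseteq\cF$, the CNN covering bounds from the critic analysis transfer verbatim and yield an ERM inequality of the form
\[
    \EE\bigl[\losspi{k}(\theta_{k+1};\theta_k,w_k)\bigr]\le 2\losspi{k}(\theta^\star;\theta_k,w_k) + \tilde O\Bigl(\tfrac{A^2\cdot\mathrm{Pdim}(\cF)}{N}\Bigr),
\]
where $A^2=C^2/((1-\gamma_\rho)(1-\gamma))^2$ is the squared sup-norm bound on the target and $\mathrm{Pdim}(\cF)=\tilde O(MLJ^2I)$. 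Plugging in $M=O(N^{d/(d+2\alpha)})$, $L=O(\log N+D)$, $J=O(D)$ and balancing against the approximation error pins down $\epsilon_Q=(L_Q^2+C^2/(1-\gamma)^2)D^{3\alpha/(2\alpha+d)}N^{-\alpha/(2\alpha+d)}$ as in the statement.

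Next, I would impose $\EE[\losspi{k}]\le (1-\gamma_\rho)^2$, whose binding constraint is at $k=K$. Combining the two terms gives $\epsilon_Q \lesssim C\gamma_\rho^{K+1}(1-\gamma_\rho)/\sqrt{\abs{\cA}}$, and solving for $N$ produces $N=\tilde O\bigl(\sqrt{\abs{\cA}}\gamma_\rho^{-(K+1)}/((1-\gamma_\rho)(1-\gamma))\bigr)^{d/\alpha+2}$, which is exactly the first claimed bound. The second (fully explicit) bound follows by substituting the iteration count $K$ from Theorem \ref{thm:iteration-complexity}, so that $\gamma_\rho^{-(K+1)}=\tilde O\bigl(C(\sqrt{C_\nu}+\log\abs{\cA})/(\kappa(1-\gamma_\rho)^2\epsilon)\bigr)$, and then using $1/(1-\gamma_\rho)=\kappa/(1-\gamma)$; the extra factor of $\kappa/(1-\gamma)$ relative to the critic bound is precisely what the $1/(1-\gamma_\rho)$ blow-up in Lemma \ref{lem:actor-lip} and Corollary \ref{cor:approx-error-pi} forces on the actor side.

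The main obstacle I expect is bookkeeping rather than a fundamentally new tool: one must verify that the witness $\theta^\star$ from Corollary \ref{cor:approx-error-pi} lies in the same Lipschitz-restricted class $\Theta_{k+1}=\Theta_\mathrm{Lip}$ over which the ERM is performed (so that approximation error is legitimately realized inside the search space), and that the generalization bound from the critic analysis remains tight after restricting $\cF$ to $\cF_{\mathrm{Lip}}$ (upper bounds on covering numbers descend to subsets, so this direction is immediate, but one must be careful that the proximity constant $\epsilon_Q/(1-\gamma_\rho)$ used in defining $\Theta_{k+1}$ matches the scale of the target produced by Lemma \ref{lem:actor-lip}). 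Once this consistency is checked, the rest is algebra mirroring the critic argument.
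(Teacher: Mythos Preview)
Your proposal is correct and matches the paper's approach in all essential respects. The paper simply packages the approximation-plus-estimation argument into the single statistical lemma \cref{lem:estimation-error-approxlip} (applied with $\sigma=0$ since, as you observe, the actor regression is noiseless) rather than invoking \cref{cor:approx-error-pi} and a separate generalization bound explicitly; the target, the Lipschitz parameters from \cref{lem:actor-lip}, the resulting $N^{-2\alpha/(2\alpha+d)}$ rate, and the final algebra solving for $N$ are identical to what you outline.
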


The proof of \cref{thm:sample-complexity-pi} is provided in \cref{proof:sample-complexity-pi}, which is similar to the proof of \cref{thm:sample-complexity-Q}. 
Compared to \cref{thm:sample-complexity-Q}, \cref{thm:sample-complexity-pi} requires a $\tilde{O}((1-\gamma_\rho)^{-\frac{d}{\alpha}-2})$ times larger sample size because the target actor in each iteration has a worse approximate Lipschitzness than the target actor. Nevertheless, we can align the sample size to the larger one for actor updates so that both the actor and the critic will be accurate.
Combining the results together, we establish the overall sample complexity for \cref{alg:NPMD}.
\begin{theorem}[Total sample complexity]\label{cor:sample-complexity-NPMD}
    Suppose \cref{asm:lip-MDP,asm:manifold,asm:full-support,asm:concen-coefficient} hold. 
    If for $k\geq 0$, $\eta_k=\frac{1-\gamma_\rho}{C\gamma_\rho^{k+1}}$, $\lambda_k=\frac{C\gamma_\rho^{k}}{1-\gamma_\rho}$, $\cW_k=\cW_{\mathrm{Lip}}(\frac{C}{1-\gamma}, L_Q, \alpha, \epsilon_Q)$ and $\Theta_{k}=\Theta_\mathrm{Lip}(\frac{C}{(1-\gamma_\rho)(1-\gamma)}, \frac{L_Q}{1-\gamma_\rho}, \alpha, \frac{\epsilon_Q}{1-\gamma_\rho})$ with 
    \begin{align*}
        &M=O(N^\frac{d}{d+2\alpha}), ~ L=O(\log N+D+\log D), ~ J=O(D), ~ I\in[2,D], ~ R_1=O(1), \\
        &\log R_2=O(\log^2 N + D\log N), ~ \epsilon_Q^{(k)}=(L_Q^2+C^2/(1-\gamma)^2)D^{\frac{3\alpha}{2\alpha + d}}N^{-\frac{\alpha}{2\alpha+d}}.
    \end{align*}
    Then for $\epsilon>0$, it suffices to set $N=\Tilde{O}\Bigl(\frac{\kappa^2C\left(\sqrt{C_\nu}+\log|\cA|\right)\sqrt{|\cA|}}{(1-\gamma)^4\epsilon}\Bigr)^{\frac{d}{\alpha}+2}$, and the expected number of sample oracle calls for \cref{alg:NPMD} to find a $\pi_K$ satisfying $\EE \bigl[V^{\pitheta{K}}(\rho) - V^\star(\rho)\bigr]\leq\epsilon$ is
    \begin{align*}
        \tilde{O}\Bigl(\kappa^{\frac{2d}{\alpha}+5}C_\nu^{\frac{d}{2\alpha}+1}|\cA|^{\frac{d}{2\alpha}+2} (1-\gamma)^{-\frac{4d}{\alpha}-10}C^{\frac{d}{\alpha}+2}\epsilon^{-\frac{d}{\alpha}-2}\Bigr).
    \end{align*}
    Here, $O(\cdot)$ and $\Tilde{O}(\cdot)$ hide the constant depending on $D^{\frac{6\alpha}{2\alpha+d}}$, $\overbar{L}_Q$, $\log L_Q$, $\log \frac{C}{1-\gamma}$, $d$, $\alpha$, $\omega$, $B$, and the surface area $\Area(\cS)$. 
\end{theorem}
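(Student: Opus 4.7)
The plan is to combine three previously established ingredients: the iteration complexity from Theorem \ref{thm:iteration-complexity}, the per-iteration sample requirements from Theorems \ref{thm:sample-complexity-Q} and \ref{thm:sample-complexity-pi}, and the expected cost per call to the visitation-distribution sampler described in the appendix. The hyperparameter choices $\eta_k=\frac{1-\gamma_\rho}{C\gamma_\rho^{k+1}}$, $\lambda_k=\frac{C\gamma_\rho^k}{1-\gamma_\rho}$, and the Lipschitz-restricted classes $\cW_k,\Theta_k$ are precisely those needed to invoke both the iteration bound and the sample-size bounds, and the CNN architecture is chosen large enough so that \cref{thm:approx-error-Q,cor:approx-error-pi} can simultaneously realize the required critic and actor accuracies (this is why the proximity constant $\epsilon_Q=\tilde O(N^{-\alpha/(2\alpha+d)})$ appears in the definition of both $\cW_k$ and $\Theta_k$).

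First I would translate the iteration bound into an explicit dependence on $\kappa$ and $1-\gamma$. Theorem \ref{thm:iteration-complexity} gives $K=\tilde O(\log_{1/\gamma_\rho}(\cdot/\epsilon))$; using $\log(1/\gamma_\rho)\ge 1-\gamma_\rho=(1-\gamma)/\kappa$, this becomes $K=\tilde O\bigl(\tfrac{\kappa}{1-\gamma}\bigr)$, where the logarithmic factor in $1/\epsilon$ is absorbed into $\tilde O(\cdot)$. Next I would verify that the actor sample-size requirement of \cref{thm:sample-complexity-pi} dominates the critic requirement of \cref{thm:sample-complexity-Q}, since it carries an extra $(1-\gamma_\rho)^{-d/\alpha-2}=(\kappa/(1-\gamma))^{d/\alpha+2}$ factor. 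Setting
\[
    N=\tilde O\!\left(\frac{\kappa^2 C\left(\sqrt{C_\nu}+\log|\cA|\right)\sqrt{|\cA|}}{(1-\gamma)^4\epsilon}\right)^{\!\frac{d}{\alpha}+2}
\]
therefore simultaneously meets both bounds for every $k\le K$, by the conditional guarantees in those two theorems, combined with the initial conditions on $\theta_0$ and $w_0$ and \cref{lem:actor-lip}.

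Finally I would multiply through the per-iteration sample oracle cost. Inspecting \cref{alg:NPMD}, each iteration draws $|\cA|\cdot N$ state samples from $\nu_\rho^{\pi_{\theta_k}}$ and a follow-up sample from $\overbar\nu_{\cP(\cdot\mid s,a)}^{\pi_{\theta_k}}$ for each; by the appendix sampler, producing one sample from a visitation distribution with discount $\gamma$ has expected oracle cost $O(1/(1-\gamma))$. So the per-iteration oracle cost is $O(|\cA|N/(1-\gamma))$, and the total is
\[
    K\cdot\frac{|\cA|N}{1-\gamma}=\tilde O\!\left(\frac{\kappa}{(1-\gamma)^2}\,|\cA|\,N\right),
\]
into which I substitute the expression for $N$; routine accounting of powers ($\kappa^{2(d/\alpha+2)}\cdot\kappa=\kappa^{2d/\alpha+5}$, $(1-\gamma)^{-4(d/\alpha+2)-2}=(1-\gamma)^{-4d/\alpha-10}$, $|\cA|\cdot|\cA|^{(d/\alpha+2)/2}=|\cA|^{d/2\alpha+2}$, and $(\sqrt{C_\nu}+\log|\cA|)^{d/\alpha+2}$ yields the leading $C_\nu^{d/2\alpha+1}$ with the remaining $\log|\cA|$ powers absorbed into $\tilde O$) produces the claimed bound. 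The main bookkeeping hazard, and the step I would double-check most carefully, is the propagation of the $(1-\gamma_\rho)$ factors: they enter through the step size $\eta_k$, through the Lipschitz/proximity parameters of $\Theta_k$ via \cref{lem:actor-lip}, and once more through the iteration count $K$, and it is precisely the product of these three sources that produces the exponent $-4d/\alpha-10$ in $(1-\gamma)$ and the exponent $2d/\alpha+5$ in $\kappa$ rather than the naive $2d/\alpha+4$ and $-4d/\alpha-8$ one obtains from $N$ alone.
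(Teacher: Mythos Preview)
Your proposal is correct and follows essentially the same route as the paper: invoke \cref{thm:iteration-complexity} for $K$, take $N$ from \cref{thm:sample-complexity-pi} (which dominates \cref{thm:sample-complexity-Q}), multiply by $|\cA|/(1-\gamma)$ for the per-iteration oracle cost via \cref{lem:onestep-sample}, and convert $1/\log(1/\gamma_\rho)$ to $\kappa/(1-\gamma)$. The paper's proof is a bit terser---it writes the total directly as $O(KN|\cA|/(1-\gamma))$ and then bounds it---but your more explicit power accounting of $\kappa$, $(1-\gamma)$, $|\cA|$, and $C_\nu$ is exactly what underlies their final expression.
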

\begin{proof}
    Let $K$ be the iteration number in \cref{thm:iteration-complexity}. By \cref{thm:sample-complexity-Q,thm:sample-complexity-pi}, our specification of $N$ ensures that $\EE\bigl[\lossQ(w_k;\pitheta{k})\bigr]\leq C^2\gamma_\rho^{2(k+1)}$ and $\EE\bigl[\losspi{k}(\theta_{k+1};\theta_k,w_k)\bigr]\leq (1-\gamma_\rho)^2$ for all $k\leq K$. Note that we have $|\cA|$ actions in total, and by \cref{lem:onestep-sample}, each sample requires $O(\frac{1}{1-\gamma})$ oracle calls. As a result, the overall sample complexity is 
    \begin{align*}
        O\Bigl(\frac{KN|\cA|}{1-\gamma}\Bigr)
        &=\tilde{O}\Bigl(\frac{1}{\log\frac{1}{\gamma_\rho}}\kappa^{\frac{2d}{\alpha}+4}C_\nu^{\frac{d}{2\alpha}+1}|\cA|^{\frac{d}{2\alpha}+2}C^{\frac{d}{\alpha}+2} (1-\gamma)^{-\frac{4d}{\alpha}-9}\epsilon^{-\frac{d}{\alpha}-2}\Bigr)\\
        &\leq\tilde{O}\Bigl(\kappa^{\frac{2d}{\alpha}+5}C_\nu^{\frac{d}{2\alpha}+1}|\cA|^{\frac{d}{2\alpha}+2}C^{\frac{d}{\alpha}+2} (1-\gamma)^{-\frac{4d}{\alpha}-10}\epsilon^{-\frac{d}{\alpha}-2}\Bigr),
    \end{align*}
    where the inequality uses $\frac{1}{\log\frac{1}{\gamma_\rho}}\leq\frac{1}{1-\gamma_\rho}=\frac{\kappa}{1-\gamma}$.
\end{proof}

\cref{cor:sample-complexity-NPMD} characterizes the expected number of oracle access to the environment for finding an $\epsilon$-optimal (in the sense of expected value function) policy $\pi_K$. The resulting sample complexity $\tilde{O}\Bigl(\kappa^{\frac{2d}{\alpha}+5}C_\nu^{\frac{d}{2\alpha}+1}|\cA|^{\frac{d}{2\alpha}+2} (1-\gamma)^{-\frac{4d}{\alpha}-10}C^{\frac{d}{\alpha}+2}\epsilon^{-\frac{d}{\alpha}-2}\Bigr)$ has no exponential dependence on the data dimension $D$. In our assumption, $d\ll D$, thus the sample complexity does not suffer from the curse of dimensionality.

\section{Numerical Experiments}\label{sec:experiment}

In this section, we present numerical experiments for NPMD to illustrate that its sample complexity does not necessarily grow exponentially with the ambient dimension $D$. We perform experiments on the CartPole environment \citep{barto1983neuronlike} with visual display. The action space contains $2$ discrete actions. The states are images of the cart and pole rendered from $4$ internal factors indicating the status of the objects. Therefore, the intrinsic dimension $d=4$, while the ambient dimension $D$ scales with the image resolution. We consider three resolutions: low ($3\times 20\times 75$), high ($3\times 40\times 150$), and super high ($3\times 60\times 225$). 

We set the discount factor $\gamma=0.98$. We use $2048$, $4096$, and $8192$ samples for low and high resolutions in each NPMD iteration. For further comparison, we use $8192$ samples per iteration in the super high-resolution setting. 
For the CNN architecture, we set the pairs of kernel size and stride in the first layer as $(7,3)$, $(5,2)$, and $(3,1)$ for low, high, and super high resolutions respectively. The other CNN layers all share kernel size $3$ and stride $1$. 
We run $200$ epochs of SGD to solve the ERM subproblems with batch size $256$ and learning rate $0.001$. 
To evaluate the obtained policy, we generate $32$ independent trajectories using the policy and compute the average of their total rewards until termination or truncation after hitting the maximal reward limit of $200$. We repeat the experiments 5 times with different random seeds, and the results are shown in \cref{fig:result,tab:result-comparison}.

\begin{figure}[htb!]
     \centering
     \begin{subfigure}[b]{0.49\textwidth}
         \centering
         \includegraphics[width=\textwidth]{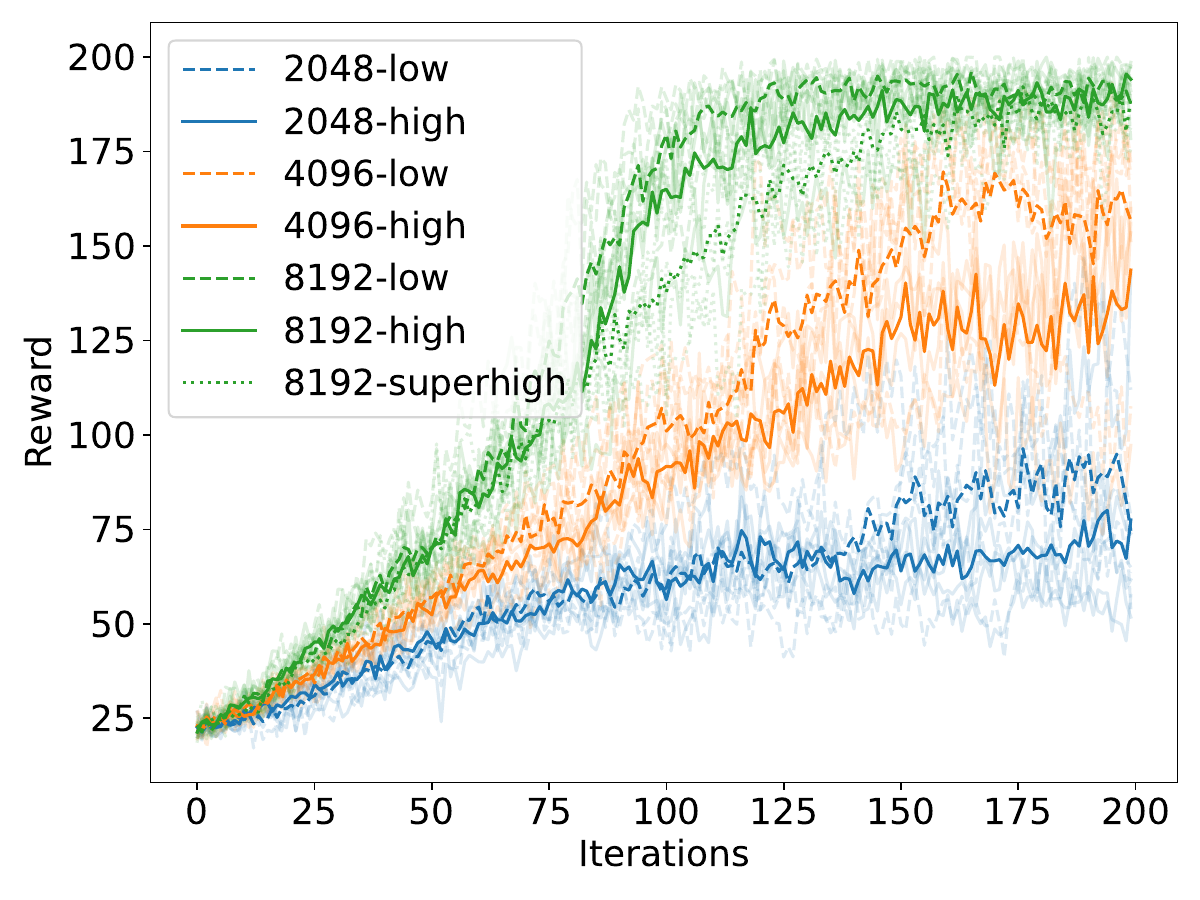}
         \caption{}
         \label{fig:result-iter}
     \end{subfigure}
     \hfill
     \begin{subfigure}[b]{0.49\textwidth}
         \centering
         \includegraphics[width=\textwidth]{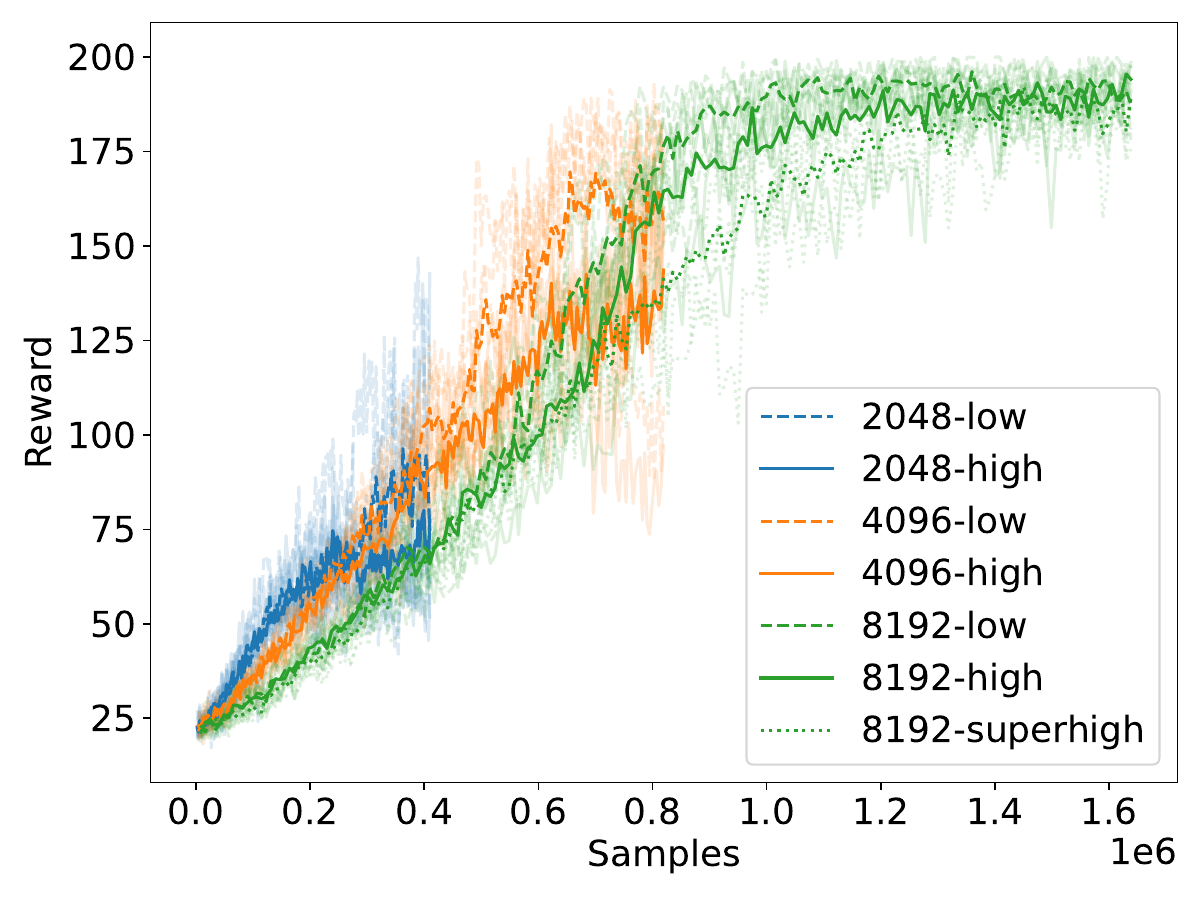}
         \caption{}
         \label{fig:result-sample}
     \end{subfigure}
    \caption{Evaluated rewards in different setups. The opaque lines are the average of 5 runs. }
    \label{fig:result}
\end{figure}

\begin{table}[htb!]
\begin{tabular}{|c|c|c|ccccc|c|}
\hline
\makecell{Sample \\ size $N\cdot|\cA|$}   & Metric               & Resolution & \multicolumn{5}{c|}{Runs (low to high)}                                                                                   & Range      \\ \hline
\multirow{9}{*}{8192} & \multirow{3}{*}{Max}  & Low        & \multicolumn{1}{c|}{199.8} & \multicolumn{1}{c|}{200.0} & \multicolumn{1}{c|}{200.0} & \multicolumn{1}{c|}{200.0} & 200.0 & $199.9\pm 0.1$  \\ \cline{3-9} 
                      &                       & High       & \multicolumn{1}{c|}{198.2} & \multicolumn{1}{c|}{198.3} & \multicolumn{1}{c|}{199.2} & \multicolumn{1}{c|}{199.8} & 200.0 & $199.1\pm 0.9$  \\ \cline{3-9} 
                      &                       & Super high & \multicolumn{1}{c|}{195.2} & \multicolumn{1}{c|}{195.9} & \multicolumn{1}{c|}{196.6} & \multicolumn{1}{c|}{197.2} & 198.2 & $196.7\pm 1.5$  \\ \cline{2-9} 
                      & \multirow{3}{*}{EMA}  & Low        & \multicolumn{1}{c|}{186.1} & \multicolumn{1}{c|}{190.0} & \multicolumn{1}{c|}{190.4} & \multicolumn{1}{c|}{193.1} & 193.4 & $189.8\pm 3.7$  \\ \cline{3-9} 
                      &                       & High       & \multicolumn{1}{c|}{185.2} & \multicolumn{1}{c|}{186.2} & \multicolumn{1}{c|}{190.6} & \multicolumn{1}{c|}{193.7} & 194.2 & $189.7\pm 4.5$  \\ \cline{3-9} 
                      &                       & Super high & \multicolumn{1}{c|}{181.1} & \multicolumn{1}{c|}{181.7} & \multicolumn{1}{c|}{186.4} & \multicolumn{1}{c|}{186.8} & 190.4 & $185.8\pm 4.7$  \\ \cline{2-9} 
                      & \multirow{3}{*}{Last} & Low        & \multicolumn{1}{c|}{178.0} & \multicolumn{1}{c|}{181.3} & \multicolumn{1}{c|}{187.7} & \multicolumn{1}{c|}{193.0} & 198.1 & $188.1\pm 10.1$ \\ \cline{3-9} 
                      &                       & High       & \multicolumn{1}{c|}{188.7} & \multicolumn{1}{c|}{191.1} & \multicolumn{1}{c|}{194.4} & \multicolumn{1}{c|}{197.6} & 198.6 & $193.7\pm 5.0$  \\ \cline{3-9} 
                      &                       & Super high & \multicolumn{1}{c|}{175.7} & \multicolumn{1}{c|}{187.7} & \multicolumn{1}{c|}{190.5} & \multicolumn{1}{c|}{194.0} & 197.5 & $186.6\pm 10.9$ \\ \hline
\end{tabular}
    \caption{Comparison of 5 runs in different resolutions with per iteration sample size $8192$. The evaluation metrics include the maximal reward along the trajectory (Max), the exponential moving average of history rewards (EMA), and the last iterate reward (Last). The EMA is computed through $EMA_{k}=0.9* EMA_{k-1}+0.1* R_k$, where $R_k$ is the reward at the $k$-th iteration. }
    \label{tab:result-comparison}
\end{table}

The numerical results show that NPMD exhibits comparable performance across varying image resolutions, where the intrinsic structure of the CartPole environment remains the same. While the ambient dimension $D$ still affects the performance, its impact is not as much as an exponential function would do: Suppose the sample complexity depends exponentially on $D$, then moving from low to high or from high to super high resolution will increase $D$ by factor $4$, resulting in a quartic growth in sample complexity. However, the numerical results suggest that much fewer samples are required to get the same near-optimal performance (see \cref{fig:result-iter,tab:result-comparison}). Therefore, the sample complexity does not exhibit exponential dependence on $D$. 
We also observe that training becomes more stable as the per iteration sample size increases, and the average performance becomes better as the total sample size increases (see \cref{fig:result-sample}). These empirical results confirm our upper bound in \cref{cor:sample-complexity-NPMD}, which does not depend exponentially on $D$. 




\section{Conclusion and Discussion}\label{sec:conclusion}
We have derived the overall sample complexity for NPMD (\cref{alg:NPMD}) on Lipschitz MDP with intrinsically low-dimensional state space $\cS$. 
Our result gives a concrete characterization of the expected number of samples required for an $\epsilon$-optimal policy under mild regularity assumptions and shows no curse of dimensionality. We make a few remarks about this result.

\textbf{Tightness in $\epsilon$.}
The sample complexity $\tilde{O}(\epsilon^{-\frac{d}{\alpha}-2})$ can be interpreted as two parts. The first part $\tilde{O}(\epsilon^{-2})$ comes from iterations of the NPMD algorithm and is optimal up to logarithm terms. It matches the complexity of PMD in tabular case \citep{lan2023policy} and NPG on linear MDP \citep{yuan2023linear}. The second part $\tilde{O}(\epsilon^{-\frac{d}{\alpha}})$ comes from function approximation on $d$-dimensional state space manifold. Intuitively, it matches the number of states $|\cS_{\mathrm{dis}}|$ appeared in the complexity of tabular PMD if we discretize the continuous state space into a finite set of points $\cS_{\mathrm{dis}}\subset\cS$.\footnote{We note that directly performing discretization is difficult when $\cS$ has a complicated geometric structure. } This part scales with the intrinsic dimension $d$ of $\cS$ and can be interpreted as the result of neural networks adapting to the low-dimensional state space geometry. It is yet to be examined whether the overall complexity is tight in $\epsilon$.

\textbf{Dependence on the cost function scale.}
The sample complexity reasonably depends on $C^{-1}\epsilon$, which can be viewed as the relative error. Therefore, as long as $\epsilon$ scales with the cost function, the term $C^{\frac{d}{\alpha}+2}\epsilon^{-\frac{d}{\alpha}-2}$ in the complexity bound remains the same. Although the scaling of the cost function can still affect the hidden constant depending on $\log L_Q=\log{(L_c+\frac{\gamma C}{1-\gamma}L_\cP)}$ and $\log\frac{C}{1-\gamma}$, it will not have a major impact since the dominating term is the normalized Lipschitz constant $\overbar{L}_Q$, which is invariant to the scaling of the cost function. 

\textbf{Distribution mismatch and concentrability.} 
The distribution mismatch coefficient $\kappa$ in \eqref{def:kappa} and concentrability coefficient $C_\nu$ in \cref{asm:concen-coefficient} have been widely used in the analysis of PMD-type methods in both the tabular setting \citep{xiao2022convergence} and linear function approximation \citep{agarwal2021theory,alfano2022linear,yuan2023linear}. The mismatch coefficient $\kappa$ occurs when the initial distribution $\rho$ is different from the optimal visitation distribution $\nu_\rho^{\pi^\star}$. When \cref{asm:full-support} does not hold, $\kappa$ can possibly be infinity, leading to a vacuous complexity bound. This mismatch coefficient seems unavoidable for analysis involving the performance difference lemma (\cref{lem:performance-difference}) even in the tabular setting \citep{xiao2022convergence} and can affect the iteration complexity through $\gamma_\rho$ as shown in \cref{thm:iteration-complexity}. Recently, \citet{johnson2023optimal} have established an alternative analysis that avoids the use of the performance difference lemma. Their result removes the mismatch coefficient for tabular PMD by using an adaptive step size. However, their sophisticated step size rule does not apply to our analysis, as it will make the smoothness of target actors uncontrollable. 

The concentrability coefficient $C_\nu$ comes from the change of error measures. 
In the $k$-th iteration of NPMD, we cannot directly sample states from $\nu_\rho^{\pitheta{k+1}}$ or $\nu_\rho^{\pi^\star}$ because the corresponding policies $\pitheta{k+1}$ and $\pi^\star$ are not available yet. 
Instead, we sample states from $\nu_\rho^{\pitheta{k}}$ and solve the ERM subproblems with these samples. As a result, the actor and critic errors are naturally measured in $\nu_\rho^{\pitheta{k}}$, and $C_\nu$ comes in when measuring the errors in $\nu_\rho^{\pitheta{k+1}}$ and $\nu_\rho^{\pi^\star}$, which are related to the performance difference between consecutive policies. 
The concentrability coefficient is unavoidable when we have function approximation errors measured in the $L^2$ norm. To remove $C_\nu$, one has to consider the exact PMD case where there is no function approximation at all, or to derive $L^\infty$ error bounds for critic and actor updates, both are intractable for continuous state space. 

As suggested in \citet{yuan2023linear} for finite state space, both coefficients can be potentially improved if we decouple the sampling distribution in \cref{alg:NPMD} and the evaluation distribution in the policy optimization problem \eqref{eq:policy-optimization}. Indeed, one can replace $\rho$ in \eqref{eq:policy-optimization} by another distribution $\rho^\prime$, and replace the initial distribution in \cref{alg:sampler} by $\varrho$. In this case, we can choose $\rho^\prime$ with full support to satisfy \cref{asm:full-support} and $\kappa$ becomes $\kappa^\prime=\Bigl\|{\dv{\nu_{\rho^\prime}^{\pi^\star}}{\rho^\prime}}\Bigr\|_\infty$. The optimality gap can be translated as 
\begin{align*}
    V^{\pitheta{k}}(\rho)-V^\star(\rho)\leq \norm{\dv{\rho}{\rho^\prime}}_\infty(V^{\pitheta{k}}(\rho^\prime)-V^\star(\rho^\prime)).
\end{align*}
Similarly, \cref{asm:concen-coefficient} can be replaced by 
\begin{align*}
    \chisqr{\nu_{\rho^\prime}^{\pi}}{\nu_\varrho^{\pitheta{k}}}+1\leq C_\nu^\prime
\end{align*}
for some $C_\nu^\prime$. However, we note that the pathological behavior of distribution mismatch and concentrability in the continuous state space is not always removable as it requires the Radon--Nikodym derivatives to exist, and it is difficult to get $\kappa^\prime<\kappa$ and $C_\nu^\prime<C_\nu$ for a better complexity. To the best of our knowledge, it remains an open problem to study function approximation policy gradient methods in continuous state space without \cref{asm:full-support,asm:concen-coefficient}.

\textbf{Dependence on the action space.}
The sample complexity depends on ${|\cA|}^{\frac{d}{2\alpha}+2}$, which comes from two parts. One is from the aggregation of $|\cA|$ networks, and the other part ${|\cA|}^{\frac{d}{2\alpha}+1}$ is from function approximation, as $|\cA|$ appears in the approximation error after critic/actor loss translation (see \cref{sec:appendix-translation-Q,sec:appendix-translation-pi}). 
We notice that some concurrent works do not have such dependence on $|\cA|$, e.g. \citet{yuan2023linear} and \citet{alfano2023novel}. 
Their results are based on the \emph{state-action} concentrability condition (see e.g. Assumption 6 in \citealt{yuan2023linear}), which requires bounded $\chi^2$-divergence between state-action visitation distributions, rather than just state visitation distributions. 
Consequently, it is generally stronger than our \cref{asm:concen-coefficient}. 

Furthermore, these works directly make realizability assumptions on target functions and do not consider the approximation bias, which is the main focus of our work and is the cause of the polynomial dependence on the size of the action space. To avoid the $|\cA|$ factor in error translation, it is necessary to consider fitting a single target function (either the value function or the target actor) defined over the product space $\cS\times\cA$, rather than fitting $|\cA|$ functions on $\cS$ separately. This requires the target function to be smooth in both states and actions. However, environments with discrete action space often exhibit non-smoothness in actions, since many of these actions have opposite functionalities (e.g. move left/right). Therefore, it is difficult to derive guarantees for function approximation even if we assume state-action concentrability. 
Nevertheless, we conjecture that the second part, ${|\cA|}^{\frac{d}{2\alpha}+1}$, might be removable, and we leave it for future investigation. 

\textbf{Computational concerns.} In each iteration of \cref{alg:NPMD}, there are two ERM subproblems with approximately Lipschitz constraints, which are assumed to be solvable. However, solving such problems is non-trivial due to the non-convex objectives and the approximately Lipschitz constraints. In practice, one can use gradient descent (GD) and its variants to minimize the objectives, but little is known about their theoretical convergence behavior. Recently there has been some work studying the GD dynamics in the NTK or mean-field regime \citep{jacot2018neural,song2018mean}, but the gap remains as they cannot fully explain the convergence behavior of GD-like methods on deep models, and their results cannot adapt to the low-dimensional manifold structure. 

Meanwhile, there is no existing result on optimization with approximate Lipschitzness constraints. One can apply Lipschitz regularization methods, such as spectral regularization \citep{yoshida2017spectral,gogianu2021spectral}, gradient regularization \citep{gulrajani2017improved}, projected gradient descent for Lipschitz constant constraint \citep{gouk2021regularisation}, and adversarial training \citep{miyato2018virtual}. Most of these techniques are heuristic, so they cannot exactly control the Lipschitzness of networks. Nevertheless, they could result in approximately Lipschitz networks as it is a relaxed condition. In addition, recent studies have discovered that GD itself has some algorithmic regularization effect, which implicitly controls the Lipschitzness of the learned networks \citep{mulayoff2021implicit}. We leave the study of approximately Lipschitz-constrained optimization of neural networks for future work.

\textbf{Overparameterization.} In modern deep learning practice, there exists a propensity to employ overparameterized models that have more parameters than the number of data. 
Our current analysis is based on the classical bias-variance trade-off argument and cannot handle the overparameterization case. 
Recently, \citet{zhang2022deep} have established deep non-parametric regression results that apply to overparameterized models, but their work does not exploit the low-dimensional structure. It is an interesting future direction to examine if their work can be extended to the manifold setting and fit into our analysis. We expect it to close the theory-practice gap in DRL further. 

\textbf{Comparison with value-based methods.} The sample complexity result for NPMD matches the bound for the value-based FQI method in \citet{fan2020theoretical} when $d=D$, while our result is significantly better when the intrinsic dimension $d\ll D$. 
This shows that policy-based methods can achieve as good performance as value-based methods in theory. 
From a technical perspective, value-based methods only approximate smooth value functions (under the Bellman closedness assumption in \citealt{fan2020theoretical,nguyen-tang2022on}). On the other hand, policy-based methods require repetitively approximating new policies, whose Lipschitz constant will accumulate. We address the issue by introducing the notion of approximate Lipschitzness, imposing approximately Lipschcitz constraints on the neural networks, and establishing approximation theory for them. Our analysis framework can be applied to more general scenarios where there are iterative refittings of neural networks. 


\textbf{Beyond Lipschitz MDP.} In this paper, we work on Lipschitz MDP (\cref{asm:lip-MDP}). In practice, the MDP can be either smoother or not as smooth as Lipschitz MDP. For the former case, one can consider H\"{o}lder smooth MDP with higher exponent, namely $\alpha>1$, and expect there is a better sample complexity. 
If we only consider the policy evaluation and value-based algorithms, where the target value function is smooth, then this is possible as suggested by the results from deep supervised learning \citep{chen2022nonparametric}. However, for policy-based methods, it is unclear whether neural networks that uniformly approximate H\"{o}lder functions can have smoothness beyond approximate Lipschitzness, given that networks with ReLU activation are not differentiable. It is a future direction to examine the sample complexity of policy-based methods in smoother MDPs. 
For the latter case, one can consider extending the Lipschitz condition to the more general Sobolev or Besov conditions to deal with spatial inhomogeneity in smoothness. Also, as mentioned in \cref{rmk:lip-MDP,rmk:smooth-MDP}, one can use a smooth approximation of the non-smooth MDP as a surrogate in this case.  


\acks{The authors thank Yan Li for the discussion in the early stage of this work. }




\appendix


\section{Algorithms}\label{sec:appendix-algorithm}
This section presents the missing sampling algorithm (\cref{alg:sampler}) for \cref{alg:NPMD}, along with some auxiliary results related to \cref{alg:NPMD,alg:sampler}. 

\begin{algorithm2e}[htb]
    \caption{Sample $s\sim \nu_\rho^\pi$ or $(s,a)\sim\overbar{\nu}_\rho^{\pi}$}
    \label{alg:sampler}
    \KwIn{Distribution $\rho$, policy $\pi$, factor $\gamma\in(0,1)$}
    Initialize $\mathtt{flag} = \mathtt{True}$, $t=0$, $s_0\sim\rho$, $a_0\sim\pi(\cdot|s_0)$\;
    \While{$\mathtt{flag}$ is $\mathtt{True}$}{
        Sample $p\sim\mathrm{Unif}([0,1])$\;
        \uIf{$p\leq\gamma$}{
            Sample $s_{t+1}\sim\cP(\cdot|s_t,a_t)$\;
            Sample $a_{t+1}\sim\pi(\cdot|s_{t+1})$\;
            $t\gets t+1$\;
        }
        \Else{
            $\mathtt{flag} = \mathtt{False}$\;
        }
    }
    \KwOut{$s_t$ as $s$ or $(s_t,a_t)$ as $(s,a)$}
\end{algorithm2e}


\subsection{Sample Complexity of Algorithm \ref{alg:sampler}}
We compute the expected number of sample oracle calls of \cref{alg:sampler} to get $s\sim \nu_\rho^\pi$ or $(s,a)\sim\overbar{\nu}_\rho^{\pi}$.
\begin{lemma}\label{lem:onestep-sample}
    \cref{alg:sampler} returns $(s,a)\sim\overbar{\nu}_\rho^{\pi}$, and the expected number of sample oracle calls for each pair of $(s,a)$ is $\frac{1}{1-\gamma}$.
\end{lemma}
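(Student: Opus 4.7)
The proof has two independent pieces: identifying the distribution of the output and computing the expected number of oracle calls. The plan is to recognize the loop's termination as a geometric random variable and then marginalize against the definition of the visitation distribution in \eqref{eq:visitation-distribution-s}.

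First I would observe that the while loop's termination time $T$ is geometrically distributed on $\{0,1,2,\ldots\}$: at each iteration, independently of the past, the algorithm continues with probability $\gamma$ (when $p\leq\gamma$) and exits with probability $1-\gamma$, so $\Pr(T=t)=(1-\gamma)\gamma^t$. Conditional on $T=t$, the output is $(s_t,a_t)$, where $s_0,s_1,\ldots,s_t$ evolves along the Markov chain induced by $\pi$ starting from $s_0\sim\rho$, and $a_t\sim\pi(\cdot|s_t)$. By the recursive definition \eqref{eq:prob-transition}, the marginal of $s_t$ is precisely $\cP_t^\pi$. Marginalizing over $T$ gives
\[
    \sum_{t=0}^\infty (1-\gamma)\gamma^t \cP_t^\pi(s)\pi(a|s) = \nu_\rho^\pi(s)\,\pi(a|s) = \overbar{\nu}_\rho^\pi(s,a),
\]
where the first equality uses \eqref{eq:visitation-distribution-s} and the second uses \eqref{eq:visitation-distribution-sa}. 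Hence the output has the claimed distribution.

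Second, for the oracle-call count, I would note that each environment query is either the initial draw $s_0\sim\rho$ (one call at the outset) or a transition $s_{t+1}\sim\cP(\cdot|s_t,a_t)$ that occurs exactly once per successful continuation of the loop; draws from $\pi(\cdot|\cdot)$ and $\mathrm{Unif}([0,1])$ are internal randomness and do not count. Therefore the total number of oracle calls equals $1+T$, and
\[
    \EE[1+T] = 1+\sum_{t=0}^\infty t(1-\gamma)\gamma^t = 1+\frac{\gamma}{1-\gamma}=\frac{1}{1-\gamma}.
\]

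There is essentially no technical obstacle — the proof is a routine verification resting only on the geometric distribution of $T$ and the recursion \eqref{eq:prob-transition}. The one point to be explicit about is the accounting convention: the initial sample $s_0\sim\rho$ must be counted as an oracle call in order for the stated bound $\tfrac{1}{1-\gamma}$ to match; without it, one would only obtain $\EE[T]=\tfrac{\gamma}{1-\gamma}$. Once this convention is fixed, the two displays above constitute the complete argument.
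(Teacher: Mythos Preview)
Your proof is correct and follows essentially the same route as the paper: both identify the termination time $T$ as geometric with $\Pr(T=t)=(1-\gamma)\gamma^t$, marginalize to obtain $\nu_\rho^\pi$ (then multiply by $\pi(a|s)$ for $\overbar{\nu}_\rho^\pi$), and compute $\EE[T+1]=\tfrac{1}{1-\gamma}$. Your explicit discussion of the oracle-counting convention is a bit more careful than the paper's, but the arguments are otherwise identical.
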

\begin{proof}
    Let $T$ be the terminating time (trajectory length) of \cref{alg:sampler}, which has probability 
    \begin{align*}
        \prob(T=t)=\gamma^t(1-\gamma).
    \end{align*}
    The probability distribution of the output $s_t$ is then given by 
    \begin{align*}
        \cP_\mathrm{out}
        =\sum_{t=0}^\infty \cP_t^\pi\cdot\prob(T=t)
        =\sum_{t=0}^\infty \cP_t^\pi\cdot\gamma^t(1-\gamma), 
    \end{align*}
    which is exactly $\nu_\rho^\pi$ according to \eqref{eq:visitation-distribution-s}. Since $\prob(a_t=a|s_t)=\pi(a|s_t)$, we have $(s,a)\sim\overbar{\nu}_\rho^{\pi}$.
    The expected number of sample oracle calls is $\EE[T+1]$, which is
    \begin{align*}
        \EE[T+1]
        =\sum_{t=0}^\infty (t+1) \gamma^t(1-\gamma)
        =\frac{1}{1-\gamma}.
    \end{align*}
    The expected trajectory length $\frac{1}{1-\gamma}$ is also called the \emph{effective horizon}.
\end{proof}

\subsection{Critic Loss Translation}\label{sec:appendix-translation-Q}
The critic loss can be translated to the mean squared error (MSE) of a regression problem.  
To see this, recall the state-action value function of the form \eqref{eq:Q-visitation}: 
\[
    Q^{\pi}(s,a)
    =c(s,a) + \frac{\gamma}{1-\gamma}\EE_{(s^\prime,a^\prime)\sim\overbar{\nu}_{\cP(\cdot|s,a)}^\pi}\bigl[c(s^\prime,a^\prime)\bigr].
\]
For any fixed $a\in\cA$ and $k\geq 0$, let $X\sim \nu_\rho^{\pitheta{k}}$ be a random variable on $\cS$ and $Z=(S,A)$ be another random variable with conditional distribution $p_{Z|X}=\overbar{\nu}_{\cP(\cdot|X,a)}^{\pitheta{k}}$. 
With this notation, reformulating \eqref{eq:Q-visitation} gives 
\begin{align}\label{eq:expectation}
    c(X, a) + \frac{\gamma}{1-\gamma}c(S, A)=Q^\pitheta{k}(X,a)+\zeta,
\end{align}
where 
\begin{equation}\label{eq:noise}
    \zeta=\frac{\gamma}{1-\gamma}\Bigl(c(S, A) - \EE_{(s^\prime,a^\prime)\sim\overbar{\nu}_{\cP(\cdot|X,a)}^{\pitheta{k}}}\bigl[c(s^\prime,a^\prime)\bigr]\Bigr)
\end{equation}
is a random variable. We can verify several properties of the noise term $\zeta$.

\begin{lemma}\label{lem:noise}
    The noise term $\zeta$ defined in \eqref{eq:noise} is a zero-mean sub-Gaussian random variable with variance proxy $\sigma^2 =\frac{\gamma^2C^2}{4(1-\gamma)^2}$ and is uncorrelated with $X$. 
\end{lemma}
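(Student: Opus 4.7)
The plan is to treat the three claims in the lemma separately, exploiting the tower property of conditional expectation together with the fact that $c$ is bounded in $[0,C]$.

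First, I would verify the zero-mean property by direct computation. Since the conditional distribution of $Z=(S,A)$ given $X$ is exactly $\overbar{\nu}_{\cP(\cdot|X,a)}^{\pitheta{k}}$, the subtracted constant $\EE_{(s',a')\sim\overbar{\nu}_{\cP(\cdot|X,a)}^{\pitheta{k}}}[c(s',a')]$ is precisely $\EE[c(S,A)\mid X]$. Therefore $\EE[\zeta\mid X]=0$ almost surely, and taking outer expectation gives $\EE[\zeta]=0$. The same identity also yields the uncorrelatedness claim: for any bounded measurable $g$, the tower property gives $\EE[\zeta\, g(X)]=\EE[g(X)\EE[\zeta\mid X]]=0$, so in particular $\mathrm{Cov}(\zeta,X)=0$ coordinate-wise.

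Second, I would establish sub-Gaussianity via Hoeffding's lemma applied conditionally on $X$. Because $c\colon\cS\times\cA\to[0,C]$, the random variable $c(S,A)$ lies in $[0,C]$ given $X$, and hence its centered version $c(S,A)-\EE[c(S,A)\mid X]$ is a bounded conditionally zero-mean random variable with range at most $C$. Hoeffding's lemma then yields the conditional moment generating function bound
\[
\EE\bigl[\exp\bigl(t\,(c(S,A)-\EE[c(S,A)\mid X])\bigr)\bigm\vert X\bigr]\leq \exp\!\left(\tfrac{t^2 C^2}{8}\right)
\]
for every $t\in\RR$. Multiplying the centered quantity by the scalar $\gamma/(1-\gamma)$ rescales the bound to $\exp\!\bigl(t^2\sigma^2/2\bigr)$ with $\sigma^2=\frac{\gamma^2 C^2}{4(1-\gamma)^2}$, giving the desired sub-Gaussian MGF estimate conditionally. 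Taking outer expectation over $X$ preserves the bound and yields unconditional sub-Gaussianity with the stated variance proxy.

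There is essentially no substantial obstacle; the only care needed is to keep the conditional versus unconditional viewpoints straight, in particular to observe that Hoeffding's conditional bound is uniform in $X$ so it survives the outer expectation, and that the vanishing conditional mean is exactly the algebraic identity already built into the definition of $\zeta$. The proof is therefore essentially a two-line application of Hoeffding's lemma sandwiched between the tower-property computation for the mean and the covariance.
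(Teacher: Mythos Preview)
Your proposal is correct and essentially matches the paper's argument: both use the tower property to get $\EE[\zeta\mid X]=0$ (hence zero mean and uncorrelatedness), and both invoke Hoeffding's lemma on the bounded range of $c(S,A)\in[0,C]$, scaled by $\gamma/(1-\gamma)$, to obtain the stated variance proxy. The only cosmetic difference is that the paper writes the bounds on $\zeta$ directly and appeals to the bounded-range sub-Gaussian fact in one line, whereas you spell out the conditional MGF bound and then integrate out $X$; the content is identical.
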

\begin{proof}
    By our construction, the randomness of $\zeta$ comes from the randomness of $X$ and $Z=(S,A)$. 
    When $X=x$ is fixed, $\overbar{\nu}_{\cP(\cdot|x,a)}^{\pitheta{k}}$ is a fixed distribution, so we have 
    \begin{align*}
        \EE[\zeta\mid X=x]
        &=\frac{\gamma}{1-\gamma}\EE_{Z\sim p_{Z|X}}\Bigl[c(S, A) - \EE_{(s^\prime,a^\prime)\sim\overbar{\nu}_{\cP(\cdot|x,a)}^{\pitheta{k}}}[c(s^\prime,a^\prime)]\mid X=x\Bigr]\\
        &=\frac{\gamma}{1-\gamma}\Bigl(\EE_{(S,A)\sim\overbar{\nu}_{\cP(\cdot|x,a)}^{\pitheta{k}}}[c(S, A)] - \EE_{(s^\prime,a^\prime)\sim\overbar{\nu}_{\cP(\cdot|x,a)}^{\pitheta{k}}}[c(s^\prime,a^\prime)]\Bigr)\\
        &=0.
    \end{align*} 
    Therefore, $\EE[\zeta\mid X]=0$, 
    \begin{align*}
        \EE[\zeta]=\EE_{X}[\EE[\zeta\mid X]]=\EE_X[0]=0, 
    \end{align*}
    $\zeta$ is uncorrelated with $X$. 
    
    By \eqref{eq:V-bound} we know that given any realization of $X$, \[
        \frac{-\gamma\mu}{1-\gamma}
        \leq \zeta 
        \leq \frac{\gamma(C-\mu)}{1-\gamma},
    \] where $\mu=\EE_{Z|X}\bigl[c(S,A)\bigr]$, thus $\zeta$ is sub-Gaussian with variance proxy $\sigma^2=\frac{\gamma^2C^2}{4(1-\gamma)^2}$.
\end{proof}

Note that the target values $\{c(s_{a,i},a)+\frac{\gamma}{1-\gamma}c(s^\prime_{a,i}, a^\prime_{a,i})\}_{i=1}^N$ in the empirical risk \eqref{eq:L2-empirical-Q} are i.i.d. copies of the left-hand side of \eqref{eq:expectation}, and the right-hand side of \eqref{eq:expectation} is a function plus a noise term. Therefore, the ERM subproblem \eqref{eq:ERM-Q} can be viewed as $|\cA|$ independent regression problems, each corresponding to an action $a\in\cA$. 
It follows immediately from the definition of the critic loss \eqref{eq:loss-Q} that 
\begin{align}\label{eq:loss-Q-translation}
    \EE\sbr{\lossQ(w_k;\pitheta{k})}
    &=\EE\sbr{\EE_{s\sim \nu_\rho^{\pitheta{k}}}\norm{Q_{w_k}(s,a)-Q^{\pitheta{k}}(s,a)}_2^2}\nonumber\\
    &=\EE\biggl[{\EE_{s\sim \nu_\rho^{\pitheta{k}}}\sum_{a\in\cA}\abs{Q_{w_k}(s,a)-Q^{\pitheta{k}}(s,a)}^2}\biggr]\nonumber\\
    &\leq|\cA|\max_{a\in\cA}\EE\sbr{\EE_X\left[\abs{Q_{w_k}(X,a)-Q^\pitheta{k}(X,a)}^2\right]},
\end{align}
where the outer expectation is taken with respect to the estimated $w_k$, which depends on the samples used for ERM. 
As a result, the critic loss of $Q_{w_k}$ can be upper bounded as long as we can bound the MSE of the regression problem for every $a\in\cA$. We provide statistical results for the regression problem in \cref{sec:appendix-nonparametric}. 

\subsection{Actor Loss Translation}\label{sec:appendix-translation-pi}
The actor loss can also be translated. 
We already know the exact solution to \eqref{eq:PMD-approximate} is given by $g^\star_{k+1}$ defined in \cref{lem:opt-pi}. The actor loss is thus translated into the error between the estimated function $f_{\theta_{k+1}}$ and the ground truth function $\lambda_{k+1}g^\star_{k+1}$:
\begin{align}
    &\EE\sbr{\losspi{k}(\theta_{k+1};\theta_k,w_k)}\nonumber\\
    =&\EE\sbr{\EE_{s\sim \nu_\rho^{\pitheta{k}}}\norm{\lambda_{k+1}^{-1}f_{\theta_{k+1}}(s,\cdot)-\lambda_k^{-1}f_\theta(s,\cdot)+\eta_k Q_{w_k}(s,\cdot)}_2^2}\nonumber\\
    =&\EE\biggl[{\EE_{s\sim \nu_\rho^{\pitheta{k}}}\sum_{a\in\cA}\abs{\lambda_{k+1}^{-1}f_{\theta_{k+1}}(s,a)-\lambda_k^{-1}f_\theta(s,a)+\eta_k Q_{w_k}(s,a)}^2}\biggr]\nonumber\\
    \leq &\frac{|\cA|}{\lambda_{k+1}^2}\max_{a\in\cA}\EE\sbr{\EE_{s\sim \nu_\rho^{\pitheta{k}}}\abs{f_{\theta_{k+1}}(s,a)-\frac{\lambda_{k+1}}{\lambda_k}f_\theta(s,a)+\lambda_{k+1}\eta_k Q_{w_k}(s,a)}^2},\label{eq:loss-pi-translation}
\end{align}
where the outer expectation is taken with respect to $\theta_{k+1}$, which depends on the samples used for empirical risk minimization.
Once we derive an upper bound for the MSE of $f_{\theta_{k+1}}$, an upper bound for the actor update loss follows immediately.

\section{Proofs in Section \ref{sec:NPMD}}
\subsection{Proof of Lemma \ref{lem:opt-pi}}\label{proof:opt-pi}
\begin{proof}
    For any $s\in\cS$, \eqref{eq:PMD-approximate} is a convex problem. Its Karush-Kuhn-Tucker condition yields 
    \[
        Q_{w_k}(s,\cdot) -\frac{1}{\eta_k}\log\pitheta{k}(s,\cdot) + \frac{1}{\eta_k}\log\pi^\star_{k+1}(s,\cdot) + \mu_s^\star\one =0
    \]
    for some $\mu_s^\star\in\RR$, where we denote $\pi^\star_{k+1}$ as the solution. This means for any $s\in\cS$, 
    \[
        \pi^\star_{k+1}(a|s)\propto \exp(\log\pitheta{k}(a|s)-\eta_k Q_{w_k}(s,a)).
    \]
    By definition \eqref{eq:pitheta}, we have \[
        \log\pitheta{k}(a|s)
        =\lambda_k^{-1}f_{\theta_k}(s,a)-\log\sum_{a^\prime\in\cA}\exp(\lambda_k^{-1}f_{\theta_k}(s,a^\prime)).
    \]
    Since $\pi^\star_{k+1}(\cdot|s)$ is shift-invariant with $g_{k+1}^\star(s,\cdot)$, that is, \[
        \pi^\star_{k+1}(a|s)
        =\frac{\exp(g_{k+1}^\star(s,a))\exp(p(s))}{\sum_{a^\prime\in\cA}\exp(g_{k+1}^\star(s,a^\prime))\exp(p(s))}
        =\frac{\exp(g_{k+1}^\star(s,a) + p(s))}{\sum_{a^\prime\in\cA}\exp(g_{k+1}^\star(s,a^\prime) + p(s))}
    \]
    for any $p(s)$ independent of $a$, choosing $g_{k+1}^\star=\lambda_k^{-1}f_{\theta_k}-\eta_k Q_{w_k}$ suffices.
\end{proof}

\section{Proofs for Iteration Complexity}
In this section, we present the missing proofs in \cref{sec:iteration-complexity} and auxiliary lemmas for them.

\subsection{Supporting Lemmas for Iteration Complexity}
We first recall the performance difference lemma for the value function, which measures the difference between value functions under different policies.
\begin{lemma}[Performance difference lemma]\label{lem:performance-difference}
    For any pair of policies $\pi$ and $\pi^\prime$ and any state $s$, we have
    \begin{align*}
        V^{\pi^\prime}(\rho)-V^\pi(\rho)=\frac{1}{1-\gamma}\EE_{s\sim \nu_\rho^{\pi^\prime}}\left[\inner{Q^\pi(s,\cdot)}{\pi^\prime(\cdot|s)-\pi(\cdot|s)}\right].
    \end{align*}
\end{lemma}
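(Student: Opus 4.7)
The plan is to prove this classical identity (Kakade--Langford) by a telescoping argument on the value function along trajectories generated by $\pi'$, then repackage the resulting sum using the state visitation distribution $\nu_\rho^{\pi'}$ defined in \eqref{eq:visitation-distribution-s}.

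First, I would fix an initial state $s_0$ and expand $V^{\pi'}(s_0)$ by its definition as an expectation over trajectories $\{(s_t, a_t)\}_{t\ge 0}$ with $s_{t+1}\sim\cP(\cdot\mid s_t,a_t)$ and $a_t\sim\pi'(\cdot\mid s_t)$. Next, I would subtract $V^\pi(s_0)$ by inserting the telescoping identity
\[
    V^\pi(s_0) = \EE_{\tau\sim\pi'}\Bigl[\sum_{t=0}^\infty \bigl(\gamma^t V^\pi(s_t)-\gamma^{t+1}V^\pi(s_{t+1})\bigr)\Bigm| s_0\Bigr],
\]
which holds because the partial sums telescope to $V^\pi(s_0)-\lim_t \gamma^t V^\pi(s_t)$ and the tail vanishes thanks to the uniform bound \eqref{eq:V-bound}. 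Combining these expressions gives
\[
    V^{\pi'}(s_0)-V^\pi(s_0)
    =\EE_{\tau\sim\pi'}\Bigl[\sum_{t=0}^\infty \gamma^t\bigl(c(s_t,a_t)+\gamma V^\pi(s_{t+1})-V^\pi(s_t)\bigr)\Bigm| s_0\Bigr].
\]

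Second, I would use the Bellman-type identity \eqref{eq:Q-to-V} to see that, conditioned on $(s_t,a_t)$, the inner expectation of $c(s_t,a_t)+\gamma V^\pi(s_{t+1})$ equals $Q^\pi(s_t,a_t)$. Then, conditioning on $s_t$ and averaging $a_t\sim\pi'(\cdot\mid s_t)$ via \eqref{eq:V-to-Q} (which expresses $V^\pi(s_t)=\inner{Q^\pi(s_t,\cdot)}{\pi(\cdot\mid s_t)}$), the per-step term becomes exactly $\inner{Q^\pi(s_t,\cdot)}{\pi'(\cdot\mid s_t)-\pi(\cdot\mid s_t)}$. This yields
\[
    V^{\pi'}(s_0)-V^\pi(s_0)=\sum_{t=0}^\infty \gamma^t\,\EE_{s_t\sim\cP_t^{\pi'},\,s_0\text{ fixed}}\bigl[\inner{Q^\pi(s_t,\cdot)}{\pi'(\cdot\mid s_t)-\pi(\cdot\mid s_t)}\bigr].
\]

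Finally, I would take the outer expectation over $s_0\sim\rho$ and apply the recursive definition \eqref{eq:prob-transition} so that the weighted sum $\sum_{t\ge 0}\gamma^t\cP_t^{\pi'}$ assembles into $\frac{1}{1-\gamma}\nu_\rho^{\pi'}$ by \eqref{eq:visitation-distribution-s}, producing the claimed formula. The only technical subtlety is the tail term $\gamma^t V^\pi(s_t)\to 0$ needed to justify the telescoping; this is immediate from \eqref{eq:V-bound}, so there is no real obstacle. Exchange of expectation and infinite sum is justified by absolute convergence since $|c|\le C$ and $\sum \gamma^t<\infty$.
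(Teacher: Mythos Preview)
Your proposal is correct and follows the standard Kakade--Langford telescoping route, which is essentially the same argument as the paper's proof: the paper performs a one-step Bellman expansion of $V^{\pi'}(\cP_t^{\pi'})-V^\pi(\cP_t^{\pi'})$ at the distribution level and unrolls the resulting recursion, whereas you telescope $V^\pi$ along a $\pi'$-trajectory before taking expectations, but both derivations land on $\sum_{t\ge 0}\gamma^t\,\EE_{s\sim\cP_t^{\pi'}}[\inner{Q^\pi(s,\cdot)}{\pi'(\cdot|s)-\pi(\cdot|s)}]$ and then invoke \eqref{eq:visitation-distribution-s}. The justifications you flag (tail $\gamma^t V^\pi(s_t)\to 0$ via \eqref{eq:V-bound}, exchange of sum and expectation via absolute convergence) are exactly what is needed and match the paper's use of boundedness.
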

\begin{proof}
    The performance difference between $\pi^\prime$ and $\pi$ is given by 
    \begin{align*}
        &V^{\pi^\prime}(\rho)-V^\pi(\rho)\\
        =& V^{\pi^\prime}(\cP_0^{\pi^\prime})-V^\pi(\cP_0^{\pi^\prime})\\
        =&\EE_{s\sim\cP_0^{\pi^\prime}}\sum_{a\in\cA}\left(c(s,a)\left(\pi^\prime(a|s)-\pi(a|s)\right)+\gamma\int_\cS\left(V^{\pi^\prime}(s^\prime)\pi^\prime(a|s) - V^\pi(s^\prime)\pi(a|s)\right)\ud\cP(s^\prime|s,a)\right)\\
        =&\EE_{s\sim\cP_0^{\pi^\prime}}\sum_{a\in\cA}\left(c(s,a) + \gamma\int_\cS V^\pi(s^\prime)\ud\cP(s^\prime|s,a)\right)\left(\pi^\prime(a|s)-\pi(a|s)\right) \\ 
        &\quad + \gamma\EE_{s\sim\cP_0^{\pi^\prime}}\sum_{a\in\cA}\pi^\prime(a|s)\int_\cS\left(V^{\pi^\prime}(s^\prime) - V^\pi(s^\prime)\right)\ud\cP(s^\prime|s,a)\\
        =&\EE_{s\sim\cP_0^{\pi^\prime}}\inner{Q^\pi(s,\cdot)}{\pi^\prime(\cdot|s)-\pi(\cdot|s)} + \gamma\EE_{s\sim\cP_0^\pi,a\sim\pi^\prime(\cdot|s)}\int_\cS\left(V^{\pi^\prime}(s^\prime) - V^\pi(s^\prime)\right)\ud\cP(s^\prime|s,a)\\
        =&\EE_{s\sim\cP_0^{\pi^\prime}}\inner{Q^\pi(s,\cdot)}{\pi^\prime(\cdot|s)-\pi(\cdot|s)} + \gamma\left(V^{\pi^\prime}(\cP_1^{\pi^\prime}) - V^\pi(\cP_1^{\pi^\prime})\right)\\
        =&\sum_{t=0}^\infty \gamma^t\EE_{s\sim\cP_t^{\pi^\prime}}\inner{Q^\pi(s,\cdot)}{\pi^\prime(\cdot|s)-\pi(\cdot|s)}.
    \end{align*}
    The first equality uses $\cP_0^{\pi^\prime}\equiv\rho$. The second equality is from \eqref{eq:V-to-Q} and \eqref{eq:Q-to-V}. The fourth equality is from the opposite direction of \eqref{eq:Q-to-V}. The last two lines are from the recursion \eqref{eq:prob-transition} and the sequence converges because the value functions are bounded by \eqref{eq:V-bound}. Plugging in the definition of visitation distribution completes the proof. 
\end{proof}

Our NPMD algorithm uses KL divergence and softmax neural policies. In this case, we have the following lemma that can replace the approximated state-action value function $Q_{w_k}$ that appeared in the inner product by the difference of log-policies. 
\begin{lemma}\label{lem:diff-pistar}
    For any pair of policies $\pi$ and $\pi^\prime$ and any state $s$, we have 
    \begin{align*}
        \inner{Q_{w_k}(s,\cdot)}{\pi(\cdot|s)-\pi^\prime(\cdot|s)}
        &=-\frac{1}{\eta_k}\inner{\log\pi^\star_{k+1}(s,\cdot)-\log\pitheta{k}(s,\cdot)}{\pi(\cdot|s)-\pi^\prime(\cdot|s)}\\
        &=-\frac{1}{\eta_k}\inner{\nabla h^{\pi^\star_{k+1}}(s,\cdot)-\nabla h^{\pitheta{k}}(s,\cdot)}{\pi(\cdot|s)-\pi^\prime(\cdot|s)},
    \end{align*}
    where $h^\pi(s)\coloneqq\inner{\log\pi(s,\cdot)}{\pi(\cdot|s)}$ is the negative entropy of $\pi$ at $s$ and $\nabla h^\pi(s,\cdot)\in\RR^{|\cA|}$ is the gradient taken with respect to the actions, $\pi^\star_{k+1}$ is the exact solution of \eqref{eq:PMD-approximate}. 
\end{lemma}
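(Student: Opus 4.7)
The plan is to exploit the closed form for $\pi^\star_{k+1}$ from Lemma \ref{lem:opt-pi}, which writes $\log\pi^\star_{k+1}(\cdot|s)$ as $g^\star_{k+1}(s,\cdot)$ up to an additive normalizer that is constant in $a$, and then observe that any vector in $\RR^{|\cA|}$ proportional to $\one$ is $\inner{\cdot}{\cdot}$-orthogonal to $\pi(\cdot|s)-\pi^\prime(\cdot|s)$, because both $\pi(\cdot|s)$ and $\pi^\prime(\cdot|s)$ lie in the simplex $\Delta_\cA$ and hence $\inner{\one}{\pi(\cdot|s)-\pi^\prime(\cdot|s)} = \sum_a(\pi(a|s)-\pi^\prime(a|s)) = 0$.

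Concretely, first I would apply Lemma \ref{lem:opt-pi} to write $\log\pi^\star_{k+1}(a|s) = g^\star_{k+1}(s,a) - \log Z^\star_s$ with $Z^\star_s = \sum_{a^\prime}\exp(g^\star_{k+1}(s,a^\prime))$, and analogously use the definition \eqref{eq:pitheta} of $\pitheta{k}$ to obtain $\log\pitheta{k}(a|s) = \lambda_k^{-1}f_{\theta_k}(s,a) - \log Z^k_s$ with $Z^k_s = \sum_{a^\prime}\exp(\lambda_k^{-1}f_{\theta_k}(s,a^\prime))$. Substituting $g^\star_{k+1} = \lambda_k^{-1}f_{\theta_k} - \eta_k Q_{w_k}$ and subtracting yields
\[
\log\pi^\star_{k+1}(s,\cdot) - \log\pitheta{k}(s,\cdot) = -\eta_k Q_{w_k}(s,\cdot) + c_s\,\one,
\]
where $c_s = \log Z^k_s - \log Z^\star_s$ depends only on $s$, not on the action coordinate. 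Taking the inner product with $\pi(\cdot|s)-\pi^\prime(\cdot|s)$ kills the $c_s\one$ term by simplex orthogonality, and dividing by $-\eta_k$ delivers the first equality.

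For the second equality, I would differentiate $h^\pi(s) = \sum_{a}\pi(a|s)\log\pi(a|s)$ with respect to the coordinates of $\pi(\cdot|s)\in\RR^{|\cA|}$, obtaining $\nabla h^\pi(s,\cdot) = \log\pi(\cdot|s) + \one$. Hence $\nabla h^{\pi^\star_{k+1}}(s,\cdot) - \nabla h^{\pitheta{k}}(s,\cdot)$ differs from $\log\pi^\star_{k+1}(s,\cdot) - \log\pitheta{k}(s,\cdot)$ only by a multiple of $\one$, which once again is annihilated by the inner product against $\pi(\cdot|s)-\pi^\prime(\cdot|s)$. No substantive obstacle is anticipated; the whole argument is a short algebraic manipulation, and the only point requiring care is the bookkeeping of action-independent partition-function constants and entropy-gradient offsets, which are systematically absorbed by the simplex orthogonality.
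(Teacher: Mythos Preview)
Your proposal is correct and follows essentially the same approach as the paper: invoke Lemma~\ref{lem:opt-pi} to express $\log\pi^\star_{k+1}-\log\pitheta{k}$ as $-\eta_k Q_{w_k}$ plus an action-independent constant, then use the simplex identity $\inner{\one}{\pi(\cdot|s)-\pi^\prime(\cdot|s)}=0$ to eliminate that constant; for the second equality you compute $\nabla h^\pi=\log\pi+\one$, which is exactly what the paper's terse reference to ``the definition of the KL divergence'' amounts to. One minor remark: since the $+\one$ terms in $\nabla h^{\pi^\star_{k+1}}$ and $\nabla h^{\pitheta{k}}$ cancel exactly, the second equality is in fact immediate without a further appeal to simplex orthogonality.
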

\begin{proof}
    For any policies $\pi$ and $\pi^\prime$ and any state $s$ we have $\inner{\one}{\pi(\cdot|s)-\pi^\prime(\cdot|s)}=0$. By using \cref{lem:opt-pi} and the definition of the KL divergence, we obtain the result.
\end{proof}

The next lemma bounds the expected error (measured in $\nu_\rho^\pitheta{k+1}$ or $\nu_\rho^{\pi^\star}$) of replacing $Q^\pitheta{k}$ with $Q_{w_k}$ by the critic loss. 
\begin{lemma}\label{lem:change-measure-err-Q}
    Suppose \cref{asm:full-support,asm:concen-coefficient} hold. If $\pi$ is $\pitheta{k+1}$ or $\pi^\star$, then for any pair of policies $\pi^\prime$ and $\pi^{\prime\prime}$, we have 
    \begin{align*}
        \abs{\EE_{s\sim\nu_\rho^\pi}\left[\inner{Q^\pitheta{k}(s,\cdot)-Q_{w_k}(s,\cdot)}{\pi^\prime(\cdot|s)-\pi^{\prime\prime}(\cdot|s)}\right]}\leq 2\sqrt{C_\nu\lossQ(w_k;\pitheta{k})}.
    \end{align*}
\end{lemma}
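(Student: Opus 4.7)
The plan is to apply Cauchy--Schwarz twice: once to separate the $Q$-difference from the policy difference pointwise in $s$, and once to convert the expectation under $\nu_\rho^\pi$ into an expectation under $\nu_\rho^{\pitheta{k}}$ (which is exactly the measure appearing in the critic loss $\lossQ$).

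First, for each fixed $s \in \cS$, I would apply the Cauchy--Schwarz inequality to the inner product over $\cA$:
\[
    \bigl|\inner{Q^{\pitheta{k}}(s,\cdot)-Q_{w_k}(s,\cdot)}{\pi'(\cdot|s)-\pi''(\cdot|s)}\bigr|
    \leq \norm{Q^{\pitheta{k}}(s,\cdot)-Q_{w_k}(s,\cdot)}_2\,\norm{\pi'(\cdot|s)-\pi''(\cdot|s)}_2.
\]
Since $\pi'(\cdot|s),\pi''(\cdot|s)\in\Delta_\cA$, the difference has $\ell_1$-norm at most $2$, hence $\norm{\pi'(\cdot|s)-\pi''(\cdot|s)}_2 \leq 2$. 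Taking absolute values inside the outer expectation and applying this pointwise bound yields
\[
    \bigl|\EE_{s\sim\nu_\rho^\pi}[\inner{Q^{\pitheta{k}}(s,\cdot)-Q_{w_k}(s,\cdot)}{\pi'(\cdot|s)-\pi''(\cdot|s)}]\bigr|
    \leq 2\,\EE_{s\sim\nu_\rho^\pi}\bigl[\norm{Q^{\pitheta{k}}(s,\cdot)-Q_{w_k}(s,\cdot)}_2\bigr].
\]

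Next I would perform the change of measure to $\nu_\rho^{\pitheta{k}}$. Under \cref{asm:full-support}, the Radon--Nikodym derivative $\dv{\nu_\rho^\pi}{\nu_\rho^{\pitheta{k}}}$ is well-defined for $\pi \in \{\pitheta{k+1},\pi^\star\}$, so
\[
    \EE_{s\sim\nu_\rho^\pi}\bigl[\norm{Q^{\pitheta{k}}(s,\cdot)-Q_{w_k}(s,\cdot)}_2\bigr]
    = \EE_{s\sim\nu_\rho^{\pitheta{k}}}\Bigl[\dv{\nu_\rho^\pi}{\nu_\rho^{\pitheta{k}}}(s)\cdot\norm{Q^{\pitheta{k}}(s,\cdot)-Q_{w_k}(s,\cdot)}_2\Bigr].
\]
Applying Cauchy--Schwarz to the latter expectation splits the two factors, giving
\[
    \leq \sqrt{\EE_{\nu_\rho^{\pitheta{k}}}\bigl[\bigl(\dv{\nu_\rho^\pi}{\nu_\rho^{\pitheta{k}}}\bigr)^2\bigr]}\cdot
    \sqrt{\EE_{s\sim\nu_\rho^{\pitheta{k}}}\bigl[\norm{Q^{\pitheta{k}}(s,\cdot)-Q_{w_k}(s,\cdot)}_2^2\bigr]}.
\]
The second factor is precisely $\sqrt{\lossQ(w_k;\pitheta{k})}$ by the definition in \eqref{eq:loss-Q}. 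The first factor equals $\sqrt{\chisqr{\nu_\rho^\pi}{\nu_\rho^{\pitheta{k}}}+1}$ by the identity $\EE_\nu[(\dv{\mu}{\nu})^2] = \chisqr{\mu}{\nu} + 1$, and \cref{asm:concen-coefficient} bounds this by $\sqrt{C_\nu}$ for both choices of $\pi$.

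Combining everything yields the desired inequality $2\sqrt{C_\nu\,\lossQ(w_k;\pitheta{k})}$. There is no genuine obstacle here; the argument is essentially a textbook change-of-measure via $\chi^2$-divergence, and the only choice to be made is which factors to pair under each Cauchy--Schwarz application so that the two emerging quantities match exactly the critic loss and the $\chi^2$-concentrability bound.
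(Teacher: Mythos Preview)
Your proof is correct and follows essentially the same two-step approach as the paper: bound the inner product pointwise, then perform the change of measure via Cauchy--Schwarz and the $\chi^2$-divergence identity. The only cosmetic difference is that the paper applies H\"older's inequality in the first step (pairing $\|\cdot\|_\infty$ with $\|\cdot\|_1$) and then implicitly uses $\|\cdot\|_\infty \leq \|\cdot\|_2$ to match the critic loss, whereas you go straight to $\|\cdot\|_2$ via Cauchy--Schwarz.
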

\begin{proof}
    We have 
    \begin{align*}
        &\abs{\EE_{s\sim\nu_\rho^\pi}\left[\inner{Q^\pitheta{k}(s,\cdot)-Q_{w_k}(s,\cdot)}{\pi^\prime(\cdot|s)-\pi^{\prime\prime}(\cdot|s)}\right]}\\
        \leq &\EE_{s\sim\nu_\rho^\pi}\abs{\inner{Q^\pitheta{k}(s,\cdot)-Q_{w_k}(s,\cdot)}{\pi^\prime(\cdot|s)-\pi^{\prime\prime}(\cdot|s)}}\\
        \leq &\EE_{s\sim\nu_\rho^\pi}\norm{Q^\pitheta{k}(s,\cdot)-Q_{w_k}(s,\cdot)}_\infty\norm{\pi^\prime(\cdot|s)-\pi^{\prime\prime}(\cdot|s)}_1\\
        \leq &2\EE_{s\sim\nu_\rho^\pi}\norm{Q^\pitheta{k}(s,\cdot)-Q_{w_k}(s,\cdot)}_\infty, 
    \end{align*}
    where the third line is by H\"{o}lder's inequality. 
    By \cref{asm:full-support} and \eqref{eq:visitation-bound}, the visitation distributions have full support, hence $\dv{\nu_\rho^{\pi}(s)}{\nu_\rho^{\pitheta{k}}(s)}$ exists. By using Cauchy--Schwarz inequality and \cref{asm:concen-coefficient}, we can replace the error measure $\nu_\rho^\pi$ by $\nu_\rho^{\pitheta{k}}$: 
    \begin{align*}
        &\EE_{s\sim \nu_\rho^{\pi}}\norm{Q^\pitheta{k}(s,\cdot)-Q_{w_k}(s,\cdot)}_\infty\\
        \leq& \sqrt{\EE_{s\sim \nu_\rho^{\pi}}\left[\dv{\nu_\rho^{\pi}(s)}{\nu_\rho^{\pitheta{k}}(s)}\right]\EE_{s\sim \nu_\rho^{\pi}}\left[\dv{\nu_\rho^{\pitheta{k}}(s)}{\nu_\rho^{\pi}(s)}\norm{Q^\pitheta{k}(s,\cdot)-Q_{w_k}(s,\cdot)}_\infty^2\right]}\\
        =& \sqrt{\left(\chisqr{\nu_\rho^{\pi}}{\nu_\rho^{\pitheta{k}}}+1\right)\bigl(\EE_{s\sim \nu_\rho^{\pitheta{k}}}\norm{Q^\pitheta{k}(s,\cdot)-Q_{w_k}(s,\cdot)}_\infty^2\bigr)}\\
        \leq& \sqrt{C_\nu \lossQ(w_k;\pitheta{k})}.
    \end{align*}
    Plugging this back, we obtain the result.
\end{proof}

Analogously, the next lemma bounds the expected error (measured in $\nu_\rho^\pitheta{k+1}$ or $\nu_\rho^{\pi^\star}$) of replacing $\pi^\star_{k+1}$ with $\pitheta{k+1}$ by the actor loss. 
\begin{lemma}\label{lem:change-measure-err-pi}
    Suppose \cref{asm:full-support,asm:concen-coefficient} hold. If $\pi$ is $\pitheta{k+1}$ or $\pi^\star$, then for any pair of policies $\pi^\prime$ and $\pi^{\prime\prime}$, we have 
    \begin{align*}
        \abs{\EE_{s\sim\nu_\rho^\pi}\left[\bigl(D^{\pi^\prime}_{\pi^\star_{k+1}}(s)-D^{\pi^{\prime\prime}}_{\pi^\star_{k+1}}(s)\bigr)-\bigl(D^{\pi^\prime}_{\pitheta{k+1}}(s)-D^{\pi^{\prime\prime}}_{\pitheta{k+1}}(s)\bigr)\right]}\leq 2\sqrt{C_\nu\losspi{k}(\theta_{k+1};\theta_k,w_k)},
    \end{align*}
    where $D^{\pi}_{\pi^\prime}(s)\coloneqq\inner{\log\pi(s,\cdot)-\log\pi^\prime(s,\cdot)}{\pi(\cdot|s)}$ is the KL divergence between $\pi$ and $\pi^\prime$ at $s$.
\end{lemma}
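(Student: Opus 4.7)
The plan is to mimic the argument for \cref{lem:change-measure-err-Q}: first rewrite the integrand as an inner product against $\pi^\prime(\cdot|s)-\pi^{\prime\prime}(\cdot|s)$ whose vector part is exactly the residual appearing in the actor loss \eqref{eq:loss-pi}, then apply Cauchy--Schwarz and change the error measure to $\nu_\rho^{\pitheta{k}}$ using \cref{asm:concen-coefficient}.

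First I will expand the bracketed quantity. By the definition of $D^{\pi}_{\pi^\prime}$, for any fixed reference policy $\pi_{\mathrm{ref}}$ we have
\begin{align*}
D^{\pi^\prime}_{\pi_{\mathrm{ref}}}(s)-D^{\pi^{\prime\prime}}_{\pi_{\mathrm{ref}}}(s)
&=\inner{\log\pi^\prime(s,\cdot)}{\pi^\prime(\cdot|s)}-\inner{\log\pi^{\prime\prime}(s,\cdot)}{\pi^{\prime\prime}(\cdot|s)}\\
&\qquad-\inner{\log\pi_{\mathrm{ref}}(s,\cdot)}{\pi^\prime(\cdot|s)-\pi^{\prime\prime}(\cdot|s)}.
\end{align*}
Taking the difference of this identity for $\pi_{\mathrm{ref}}=\pi^\star_{k+1}$ and $\pi_{\mathrm{ref}}=\pitheta{k+1}$, the entropy terms cancel and only the reference-dependent piece remains, leaving
\[
    \bigl(D^{\pi^\prime}_{\pi^\star_{k+1}}-D^{\pi^{\prime\prime}}_{\pi^\star_{k+1}}\bigr)(s)-\bigl(D^{\pi^\prime}_{\pitheta{k+1}}-D^{\pi^{\prime\prime}}_{\pitheta{k+1}}\bigr)(s)
    =\inner{\log\pitheta{k+1}(s,\cdot)-\log\pi^\star_{k+1}(s,\cdot)}{\pi^\prime(\cdot|s)-\pi^{\prime\prime}(\cdot|s)}.
\]

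Next I will substitute the explicit forms of $\pitheta{k+1}$ and $\pi^\star_{k+1}$. By \eqref{eq:pitheta} and \cref{lem:opt-pi},
\[
    \log\pitheta{k+1}(s,a)-\log\pi^\star_{k+1}(s,a)
    =\lambda_{k+1}^{-1}f_{\theta_{k+1}}(s,a)-\lambda_k^{-1}f_{\theta_k}(s,a)+\eta_k Q_{w_k}(s,a)+c(s),
\]
where $c(s)$ is a constant in $a$ coming from the log-partition functions. Since $\pi^\prime(\cdot|s)$ and $\pi^{\prime\prime}(\cdot|s)$ both lie in $\Delta_\cA$, we have $\inner{\one}{\pi^\prime(\cdot|s)-\pi^{\prime\prime}(\cdot|s)}=0$, so the $c(s)$ term drops. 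Writing $\Delta(s,\cdot)\coloneqq\lambda_{k+1}^{-1}f_{\theta_{k+1}}(s,\cdot)-\lambda_k^{-1}f_{\theta_k}(s,\cdot)+\eta_k Q_{w_k}(s,\cdot)$, we arrive at
\[
    \text{integrand}=\inner{\Delta(s,\cdot)}{\pi^\prime(\cdot|s)-\pi^{\prime\prime}(\cdot|s)}.
\]

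Finally I will bound the expectation. By Cauchy--Schwarz in $\RR^{|\cA|}$ and $\norm{\pi^\prime(\cdot|s)-\pi^{\prime\prime}(\cdot|s)}_2\leq\norm{\pi^\prime(\cdot|s)-\pi^{\prime\prime}(\cdot|s)}_1\leq 2$, the integrand is at most $2\norm{\Delta(s,\cdot)}_2$. \Cref{asm:full-support} together with \eqref{eq:visitation-bound} ensures $\dv{\nu_\rho^\pi}{\nu_\rho^{\pitheta{k}}}$ is well defined, so Cauchy--Schwarz in expectation gives
\[
    \EE_{s\sim\nu_\rho^\pi}\norm{\Delta(s,\cdot)}_2
    \leq\sqrt{\chisqr{\nu_\rho^\pi}{\nu_\rho^{\pitheta{k}}}+1}\cdot\sqrt{\EE_{s\sim\nu_\rho^{\pitheta{k}}}\norm{\Delta(s,\cdot)}_2^2}
    \leq\sqrt{C_\nu\,\losspi{k}(\theta_{k+1};\theta_k,w_k)},
\]
where the last step uses \cref{asm:concen-coefficient} and the definition \eqref{eq:loss-pi}. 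Multiplying by the factor of $2$ yields the claim. The step that looks the most delicate is the algebraic cancellation that identifies the bracketed KL differences with $\Delta(s,\cdot)$ tested against $\pi^\prime-\pi^{\prime\prime}$; after that, the change-of-measure bound is an exact parallel of \cref{lem:change-measure-err-Q}.
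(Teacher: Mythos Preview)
Your proof is correct and follows essentially the same approach as the paper: you perform the same algebraic identification of the KL-difference with $\inner{\Delta(s,\cdot)}{\pi^\prime(\cdot|s)-\pi^{\prime\prime}(\cdot|s)}$ and the same Cauchy--Schwarz change of measure to $\nu_\rho^{\pitheta{k}}$. The only cosmetic difference is that the paper bounds the inner product via H\"older ($\|\Delta(s,\cdot)\|_\infty\cdot\|\pi^\prime-\pi^{\prime\prime}\|_1$) and then uses $\|\cdot\|_\infty\leq\|\cdot\|_2$ to reach the actor loss, whereas you go directly through Cauchy--Schwarz in $\RR^{|\cA|}$ with $\|\pi^\prime-\pi^{\prime\prime}\|_2\leq\|\pi^\prime-\pi^{\prime\prime}\|_1\leq 2$; both routes give exactly the same bound.
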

\begin{proof}
    By \cref{lem:opt-pi} and the definition of the KL divergence, we have 
    \begin{align*}
        &\bigl(D^{\pi^\prime}_{\pi^\star_{k+1}}(s)-D^{\pi^{\prime\prime}}_{\pi^\star_{k+1}}(s)\bigr)-\bigl(D^{\pi^\prime}_{\pitheta{k+1}}(s)-D^{\pi^{\prime\prime}}_{\pitheta{k+1}}(s)\bigr)\\
        =&\inner{\log\pitheta{k+1}(s,\cdot)-\log\pi^\star_{k+1}(s,\cdot)}{\pi^\prime(\cdot|s)-\pi^{\prime\prime}(\cdot|s)}\\
        =&\inner{\lambda_{k+1}^{-1}f_{\theta_{k+1}}(s,\cdot)-\lambda_{k}^{-1}f_{\theta_{k}}(s,\cdot) + \eta_k Q_{w_k}(s,\cdot)}{\pi^\prime(\cdot|s)-\pi^{\prime\prime}(\cdot|s)}.
    \end{align*}
    Similar to the proof of \cref{lem:change-measure-err-Q}, by H\"{o}lder's inequality, we have 
    \begin{align*}
        &\abs{\EE_{s\sim\nu_\rho^\pi}\left[\bigl(D^{\pi^\prime}_{\pi^\star_{k+1}}(s)-D^{\pi^{\prime\prime}}_{\pi^\star_{k+1}}(s)\bigr)-\bigl(D^{\pi^\prime}_{\pitheta{k+1}}(s)-D^{\pi^{\prime\prime}}_{\pitheta{k+1}}(s)\bigr)\right]}\\
        \leq &\EE_{s\sim\nu_\rho^\pi}\abs{\inner{\lambda_{k+1}^{-1}f_{\theta_{k+1}}(s,\cdot)-\lambda_{k}^{-1}f_{\theta_{k}}(s,\cdot) + \eta_k Q_{w_k}(s,\cdot)}{\pi^\prime(\cdot|s)-\pi^{\prime\prime}(\cdot|s)}}\\
        \leq &\EE_{s\sim\nu_\rho^\pi}\norm{\lambda_{k+1}^{-1}f_{\theta_{k+1}}(s,\cdot)-\lambda_{k}^{-1}f_{\theta_{k}}(s,\cdot) + \eta_k Q_{w_k}(s,\cdot)}_\infty\norm{\pi^\prime(\cdot|s)-\pi^{\prime\prime}(\cdot|s)}_1\\
        \leq &2\EE_{s\sim\nu_\rho^\pi}\norm{\lambda_{k+1}^{-1}f_{\theta_{k+1}}(s,\cdot)-\lambda_{k}^{-1}f_{\theta_{k}}(s,\cdot) + \eta_k Q_{w_k}(s,\cdot)}_\infty.
    \end{align*}
    By \cref{asm:full-support,asm:concen-coefficient} and the Cauchy--Schwarz inequality we have 
    \begin{align*}
        &\EE_{s\sim \nu_\rho^{\pi}}\norm{\lambda_{k+1}^{-1}f_{\theta_{k+1}}(s,\cdot)-\lambda_{k}^{-1}f_{\theta_{k}}(s,\cdot) + \eta_k Q_{w_k}(s,\cdot)}_\infty\\
        \leq& \sqrt{\EE_{s\sim \nu_\rho^{\pi}}\left[\dv{\nu_\rho^{\pi}(s)}{\nu_\rho^{\pitheta{k}}(s)}\right]\EE_{s\sim \nu_\rho^{\pi}}\left[\dv{\nu_\rho^{\pitheta{k}}(s)}{\nu_\rho^{\pi}(s)}\norm{\lambda_{k+1}^{-1}f_{\theta_{k+1}}(s,\cdot)-\lambda_{k}^{-1}f_{\theta_{k}}(s,\cdot) + \eta_k Q_{w_k}(s,\cdot)}_\infty^2\right]}\\
        =& \sqrt{\left(\chisqr{\nu_\rho^{\pi}}{\nu_\rho^{\pitheta{k}}}+1\right)\bigl(\EE_{s\sim \nu_\rho^{\pitheta{k}}}\norm{\lambda_{k+1}^{-1}f_{\theta_{k+1}}(s,\cdot)-\lambda_{k}^{-1}f_{\theta_{k}}(s,\cdot) + \eta_k Q_{w_k}(s,\cdot)}_\infty^2\bigr)}\\
        \leq& \sqrt{C_\nu \losspi{k}(\theta_{k+1};\theta_k,w_k)},
    \end{align*}
    and the result follows immediately.
\end{proof}

Using the above auxiliary lemmas, we can now prove \cref{lem:onestep-NPMD}

\subsection{Proof of Lemma \ref{lem:onestep-NPMD}}\label{proof:onestep-NPMD}
\begin{proof}
    By \cref{lem:performance-difference} we have the following relations: 
    \begin{align}
        &V^\pitheta{k}(\rho)-V^\star(\rho)=\frac{1}{1-\gamma}\EE_{s\sim\nu_\rho^{\pi^\star}}\left[\inner{Q^\pitheta{k}(s,\cdot)}{\pitheta{k}(\cdot|s)-\pi^\star(\cdot|s)}\right],\label{eq:perf-diff-1}\\
        &V^\pitheta{k+1}(\rho)-V^\pitheta{k}(\rho)=\frac{1}{1-\gamma}\EE_{s\sim\nu_\rho^{\pitheta{k+1}}}\left[\inner{Q^\pitheta{k}(s,\cdot)}{\pitheta{k+1}(\cdot|s)-\pitheta{k}(\cdot|s)}\right].\label{eq:perf-diff-2}
    \end{align}
    Applying \cref{lem:change-measure-err-Q} to \eqref{eq:perf-diff-1} and \eqref{eq:perf-diff-2} gives 
    \begin{align}
        \begin{split}
            V^\pitheta{k}(\rho)-V^\star(\rho)
            \leq\frac{1}{1-\gamma}\EE_{s\sim\nu_\rho^{\pi^\star}}\left[\inner{Q_{w_k}(s,\cdot)}{\pitheta{k}(\cdot|s)-\pi^\star(\cdot|s)}\right] \\
            \quad + \frac{2}{1-\gamma}\sqrt{C_\nu \lossQ(w_k;\pitheta{k})},\label{eq:perf-bound-1}\\
        \end{split}\\
        \begin{split}
            V^\pitheta{k+1}(\rho)-V^\pitheta{k}(\rho)
            \leq\frac{1}{1-\gamma}\EE_{s\sim\nu_\rho^{\pitheta{k+1}}}\left[\inner{Q_{w_k}(s,\cdot)}{\pitheta{k+1}(\cdot|s)-\pitheta{k}(\cdot|s)}\right] \\ 
            \quad + \frac{2}{1-\gamma}\sqrt{C_\nu \lossQ(w_k;\pitheta{k})}.\label{eq:perf-bound-2}
        \end{split}
    \end{align}

    Applying \cref{lem:diff-pistar,lem:change-measure-err-pi} to \eqref{eq:perf-bound-1}, we have the following: 
    \begin{align}
        &\EE_{s\sim\nu_\rho^{\pi^\star}}\left[\inner{Q_{w_k}(s,\cdot)}{\pitheta{k}(\cdot|s)-\pi^\star(\cdot|s)}\right]\nonumber\\
        =&-\frac{1}{\eta_k}\EE_{s\sim\nu_\rho^{\pi^\star}}\left[\inner{\nabla h^{\pi^\star_{k+1}}(s,\cdot)-\nabla h^{\pitheta{k}}(s,\cdot)}{\pitheta{k}(\cdot|s)-\pi^\star(\cdot|s)}\right]\nonumber\\
        =&-\frac{1}{\eta_k}\EE_{s\sim\nu_\rho^{\pi^\star}}\left[D^{\pi^\star}_{\pi^\star_{k+1}}(s) - D^{\pi^\star}_{\pitheta{k}}(s) - D^{\pitheta{k}}_{\pi^\star_{k+1}}(s)\right]\nonumber\\
        \leq &-\frac{1}{\eta_k}\EE_{s\sim\nu_\rho^{\pi^\star}}\left[D^{\pi^\star}_{\pitheta{k+1}}(s) - D^{\pi^\star}_{\pitheta{k}}(s) - D^{\pitheta{k}}_{\pitheta{k+1}}(s)\right] + \frac{2}{\eta_k}\sqrt{C_\nu \losspi{k}(\theta_{k+1};\theta_k,w_k)},\label{eq:perf-bound-3}
    \end{align}
    where the second equality uses the three-point identity of KL-divergence. 
    Similarly, applying \cref{lem:diff-pistar,lem:change-measure-err-pi} to \eqref{eq:perf-bound-2}, we get 
    \begin{align*}
        &\EE_{s\sim\nu_\rho^{\pitheta{k+1}}}\left[\inner{Q_{w_k}(s,\cdot)}{\pitheta{k+1}(\cdot|s)-\pitheta{k}(\cdot|s)}\right]\\
        \leq&\frac{1}{\eta_k}\EE_{s\sim\nu_\rho^{\pitheta{k+1}}}\left[ - D^{\pitheta{k+1}}_{\pitheta{k}}(s) - D^{\pitheta{k}}_{\pitheta{k+1}}(s)\right] + \frac{2}{\eta_k}\sqrt{C_\nu \losspi{k}(\theta_{k+1};\theta_k,w_k)}.
    \end{align*}
    Note that $- D^{\pitheta{k+1}}_{\pitheta{k}}(s) - D^{\pitheta{k}}_{\pitheta{k+1}}(s)\leq 0$. Hence by \eqref{eq:visitation-bound} and \eqref{def:kappa} we get 
    \begin{align}
        &\EE_{s\sim\nu_\rho^{\pitheta{k+1}}}\left[\inner{Q_{w_k}(s,\cdot)}{\pitheta{k+1}(\cdot|s)-\pitheta{k}(\cdot|s)}\right]\nonumber\\
        \leq& - \frac{1}{\eta_k}\EE_{s\sim\rho}\left[\dv{\nu_\rho^{\pitheta{k+1}}(s)}{\rho(s)} \left( D^{\pitheta{k+1}}_{\pitheta{k}}(s) + D^{\pitheta{k}}_{\pitheta{k+1}}(s)\right)\right] + \frac{2}{\eta_k}\sqrt{C_\nu \losspi{k}(\theta_{k+1};\theta_k,w_k)}\nonumber\\
        \leq& - \frac{1-\gamma}{\eta_k}\EE_{s\sim\nu_\rho^{\pi^\star}}\left[\dv{\rho(s)}{\nu_\rho^{\pi^\star}(s)} \left( D^{\pitheta{k+1}}_{\pitheta{k}}(s) + D^{\pitheta{k}}_{\pitheta{k+1}}(s)\right)\right] + \frac{2}{\eta_k}\sqrt{C_\nu \losspi{k}(\theta_{k+1};\theta_k,w_k)}\nonumber\\
        \leq& - \frac{1-\gamma}{\kappa\eta_k}\EE_{s\sim\nu_\rho^{\pi^\star}}\left[D^{\pitheta{k+1}}_{\pitheta{k}}(s) + D^{\pitheta{k}}_{\pitheta{k+1}}(s)\right] + \frac{2}{\eta_k}\sqrt{C_\nu \losspi{k}(\theta_{k+1};\theta_k,w_k)}.\label{eq:perf-bound-4}
    \end{align}
    
    We first multiply \eqref{eq:perf-bound-1} by $1-\gamma_\rho=\frac{1-\gamma}{\kappa}$ and sum it up with \eqref{eq:perf-bound-2}, and then apply \eqref{eq:perf-bound-3} and \eqref{eq:perf-bound-4}. Rearranging terms gives us 
    \begin{align*}
        &V^\pitheta{k+1}(\rho)-V^\star(\rho) + \frac{1}{\kappa\eta_k}\EE_{s\sim\nu_\rho^{\pi^\star}}\left[D^{\pi^\star}_{\pitheta{k+1}}(s)\right] \\
        \leq& \gamma_\rho \left(V^\pitheta{k}(\rho)-V^\star(\rho)\right) + \frac{1}{\kappa\eta_k}\EE_{s\sim\nu_\rho^{\pi^\star}}\left[ D^{\pi^\star}_{\pitheta{k}}(s) - D^{\pitheta{k+1}}_{\pitheta{k}}(s)\right] \\
        &\quad + \frac{2(2-\gamma_\rho)}{(1-\gamma_\rho)\kappa}\sqrt{C_\nu \lossQ(w_k;\pitheta{k})} + \frac{2(2-\gamma_\rho)}{(1-\gamma_\rho)\kappa\eta_k}\sqrt{C_\nu \losspi{k}(\theta_{k+1};\theta_k,w_k)} \\
        \leq& \gamma_\rho \left(V^\pitheta{k}(\rho)-V^\star(\rho) + \frac{1}{\kappa\gamma_\rho\eta_k}\EE_{s\sim\nu_\rho^{\pi^\star}}\left[ D^{\pi^\star}_{\pitheta{k}}(s)\right]\right) \\
        &\quad + \frac{4\sqrt{C_\nu}}{(1-\gamma_\rho)\kappa}\left(\sqrt{\lossQ(w_k;\pitheta{k})} + \frac{1}{\eta_k}\sqrt{\losspi{k}(\theta_{k+1};\theta_k,w_k)}\right),
    \end{align*}
    where the last inequality is obtained by dropping some non-positive terms. 
\end{proof}

\subsection{Proof of Theorem \ref{thm:iteration-complexity}}\label{proof:iteration-complexity}
\begin{proof}
    For $k=0$, we have $V^{\pitheta{0}}(\rho)-V^\star(\rho)\leq\frac{C}{1-\gamma}$ from \eqref{eq:V-bound}. Since $\theta_0=0$, we have $f_{\theta_0}(s,a)$ be constant for any $s$ and $a$, thus $D_{\pitheta{0}}^{\pi^\star}(s)=\log|\cA|$. 
    
    By \cref{lem:onestep-NPMD} and concavity of the square root function, taking the expectation with respect to the samples $\Xi_k^Q$ and $\Xi_k^\Pi$ conditioned on previous samples yields 
    \begin{align*}
        &\EE \bigl[V^{\pitheta{k+1}}(\rho) - V^\star(\rho)\bigr]\\
        \leq &\EE \Bigl(V^{\pitheta{k+1}}(\rho) - V^\star(\rho) + \smallfrac{1}{\kappa\gamma_\rho\eta_{k+1}}\EE_{s\sim \nu_\rho^{\pi^\star}} \bigl[D_{\pitheta{k+1}}^{\pi^\star}(s)\bigr]\Bigr) \\
        = &\EE \Bigl(V^{\pitheta{k+1}}(\rho) - V^\star(\rho) + \smallfrac{1}{\kappa\eta_k}\EE_{s\sim \nu_\rho^{\pi^\star}} \bigl[D_{\pitheta{k+1}}^{\pi^\star}(s)\bigr]\Bigr) \\
        \leq &\gamma_\rho\Bigl(V^{\pitheta{k}}(\rho) - V^\star(\rho) + \smallfrac{1}{\kappa\gamma_\rho\eta_k}\EE_{s\sim \nu_\rho^{\pi^\star}} \bigl[D_{\pitheta{k}}^{\pi^\star}(s)\bigr]\Bigr) \\ 
        &\quad +\smallfrac{4\sqrt{C_\nu}}{\kappa(1-\gamma_\rho)}\Bigl(\sqrt{\EE\bigl[\lossQ(w_k;\pitheta{k})\bigr]} + \smallfrac{1}{\eta_k}\sqrt{\EE\bigl[\losspi{k}(\theta_{k+1};\theta_k,w_k)\bigr]}\Bigr) \\
        \leq &\gamma_\rho\Bigl(V^{\pitheta{k}}(\rho) - V^\star(\rho) + \smallfrac{1}{\kappa\gamma_\rho\eta_k}\EE_{s\sim \nu_\rho^{\pi^\star}} \bigl[D_{\pitheta{k}}^{\pi^\star}(s)\bigr]\Bigr)+ \smallfrac{8\sqrt{C_\nu}C}{\kappa(1-\gamma_\rho)}\gamma_\rho^{k+1}.
    \end{align*}
    Dividing both sides by $\gamma_\rho^{k+1}$, telescoping from $0$ to $k$ and rearranging terms, we obtain 
    \begin{align*}
        &\EE \bigl[V^{\pitheta{k+1}}(\rho) - V^\star(\rho)\bigr]\\
        \leq &\frac{C}{1-\gamma}\cdot \gamma_\rho^{k+1}\Bigl(\bigl(1+\log|\cA|\bigr) + 8\sqrt{C_\nu}(k+1)\Bigr),
    \end{align*}
    where the expectation is taken with respect to all samples from zeroth to $k$-th iteration. 

    Let $C_1=1 + \log|\cA|$, $C_2=8\sqrt{C_\nu}$. For any $\epsilon>0$, it suffices to choose $k=\left\lceil a\left(\log\frac{b}{\epsilon}+\log\log\frac{b}{\epsilon}\right)\right\rceil$ with $a=\frac{1}{\log\frac{1}{\gamma_\rho}}$ and $b=\frac{2C(C_1+C_2)}{(1-\gamma)\log\frac{1}{\gamma_\rho}}$, since 
    \begin{align*}
        \EE \bigl[V^{\pitheta{k}}(\rho) - V^\star(\rho)\bigr]
        &\leq (C_1+C_2)k\gamma_\rho^{k} \cdot\frac{C}{1-\gamma}\\
        &\leq \frac{(C_1+C_2)a\left(\log\frac{b}{\epsilon}+\log\log\frac{b}{\epsilon}\right)}{\frac{b}{\epsilon}\log\frac{b}{\epsilon}}\cdot\frac{C}{1-\gamma}\\
        &\leq \frac{2(C_1+C_2)a\log\frac{b}{\epsilon}}{b\log\frac{b}{\epsilon}}\cdot\frac{C\epsilon}{1-\gamma} = \epsilon.
    \end{align*}
    In view of $\log x\geq 1-\frac{1}{x}$, we obtain 
    \begin{align*}
        k
        &\leq \log_\frac{1}{\gamma_\rho}\left(\frac{2C(C_1+C_2)}{(1-\gamma)\epsilon\log\frac{1}{\gamma_\rho}} \log(\frac{2C(C_1+C_2)}{(1-\gamma)\epsilon\log\frac{1}{\gamma_\rho}})\right)+1\\ 
        &\leq \log_\frac{1}{\gamma_\rho}\left(\frac{2C(C_1+C_2)}{(1-\gamma_\rho)(1-\gamma)\epsilon} \log(\frac{2C(C_1+C_2)}{(1-\gamma_\rho)(1-\gamma)\epsilon})\right)+1\\ 
        &=\Tilde{O}\left(\log_\frac{1}{\gamma_\rho}\left(\frac{C\left(\sqrt{C_\nu}+\log|\cA|\right)}{\kappa(1-\gamma_\rho)^2\epsilon}\right)\right), 
    \end{align*}
    where $\Tilde{O}(\cdot)$ hides the logarithm terms.
\end{proof}

\section{Proofs for Sample Complexity}
In this section, we provide missing proofs for \cref{thm:sample-complexity-Q,thm:sample-complexity-pi}. 
The proofs are based on the statistical recovery result (\cref{lem:estimation-error-approxlip}) for CNN in \cref{sec:appendix-nonparametric}. 

\subsection{Proof of Theorem {\ref{thm:sample-complexity-Q}}}\label{proof:sample-complexity-Q}
\begin{proof}
    From \eqref{eq:V-bound} and \cref{lem:lip-Q} we know $Q^{\pitheta{k}}(\cdot,a)$ is $\frac{C}{1-\gamma}$-bounded and $(L_Q, \alpha)$-Lipschitz on $\cS$ for all $a\in\cA$. From \eqref{eq:loss-Q-translation} in \cref{sec:appendix-translation-Q} we have  
    \begin{align*}
        \EE\sbr{\lossQ(w_k;\pitheta{k})}
        &\leq|\cA|\max_{a\in\cA}\EE\sbr{\EE_{s\sim \nu_\rho^{\pitheta{k}}}\abs{Q_{w_k}(s,a)-Q^{\pitheta{k}}(s,a)}^2},
    \end{align*}
    Thus to ensure $\EE\bigl[\lossQ(w_k;\pitheta{k})\bigr]\leq C^2\gamma_\rho^{2(k+1)}$, it suffices to use a sample size $N$ such that for all $a\in\cA$, 
    \begin{equation}\label{eq:estimation-errorbound-Q}
        \EE_{\Xi_k}\EE_{s\sim \nu_\rho^{\pitheta{k}}}\abs{Q_{w_k}(s,a)-Q^{\pitheta{k}}(s,a)}^2\leq\frac{C^2\gamma_\rho^{2(k+1)}}{|\cA|}.
    \end{equation}
    
    We set the following for $\cW_k=\cW_{\mathrm{Lip}}(\frac{C}{1-\gamma}, L_Q, \alpha, \epsilon_Q)$: 
    \begin{align*}
        &M=O(N^\frac{d}{d+2\alpha}), ~ L=O(\log N+D+\log D), ~ J=O(D), ~ I\in[2,D], ~ R_1=O(1), \\
        &\log R_2=O(\log^2 N + D\log N), ~ \epsilon_Q=(L_Q^2+\smallfrac{C^2}{(1-\gamma)^2})D^{\frac{3\alpha}{2\alpha + d}}N^{-\frac{\alpha}{2\alpha+d}},
    \end{align*}
    where $O(\cdot)$ hides some constant depending on $\log L_Q$, $\log\frac{C}{1-\gamma}$, $d$, $\alpha$, $\omega$, $B$, and the surface area $\Area(\cS)$. Then by \cref{lem:noise} in \cref{sec:appendix-translation-Q} and \cref{lem:estimation-error-approxlip} in \cref{sec:appendix-nonparametric}, the following bound on the regression error holds: 
    \begin{align}
        &\EE_{\Xi_k}\EE_{s\sim \nu_\rho^{\pitheta{k}}}\abs{Q_{w_k}(s,a)-Q^{\pitheta{k}}(s,a)}^2\nonumber\\
        \leq &C^\prime\left((\overbar{L}_Q + 1)^2\frac{C^2}{(1-\gamma)^2} + \sigma^2\right) N^{-\frac{2\alpha}{2\alpha+d}}\log^6 N,\label{eq:estimation-error-Q}
    \end{align}
    where $\overbar{L}_Q$ is defined as in \eqref{def:normalized-lip-Q}, $\sigma^2=\frac{\gamma^2 C^2}{4(1-\gamma)^2}\leq\frac{C^2}{4(1-\gamma)^2}$ is the variance proxy derived in \cref{lem:noise} and $C^\prime$ is a constant depending on $D^{\frac{6\alpha}{2\alpha+d}}$, $\log L_Q$, $\log\frac{C}{1-\gamma}$, $d$, $\alpha$, $\omega$, $B$, and the surface area $\Area(\cS)$.

    By choosing $N=\left(\frac{1}{\delta}\log^{3}\frac{1}{\delta}\right)^{\frac{d}{\alpha}+2}$ where $\delta=\sqrt{\frac{1}{4096 ((\overbar{L}_Q + 1)^2 + \frac{1}{4})C^\prime|\cA|(\frac{d}{\alpha}+2)^6}}(1-\gamma)\gamma_\rho^{k+1}$, we have 
    \begin{align*}
        N^{-\frac{2\alpha}{2\alpha+d}}\log^6 N
        &=\frac{\left((\frac{d}{\alpha}+2)\log\left(\frac{1}{\delta}\log^{3}\frac{1}{\delta}\right)\right)^6}{\frac{1}{\delta^2}\log^6\frac{1}{\delta}}\\
        &\leq\frac{(\frac{d}{\alpha}+2)^6\left(4\log\frac{1}{\delta}\right)^6}{\frac{1}{\delta^2}\log^6\frac{1}{\delta}}\\
        &=\frac{(1-\gamma)^2\gamma_\rho^{2(k+1)}}{((\overbar{L}_Q + 1)^2 + \frac{1}{4})C^\prime|\cA|}.
    \end{align*}
    Plugging the result into \eqref{eq:estimation-error-Q}, we obtain \eqref{eq:estimation-errorbound-Q}.
    
    Denoting $C_3=\sqrt{4096 ((\overbar{L}_Q + 1)^2 + \frac{1}{4})C^\prime|\cA|(\frac{d}{\alpha}+2)^6}=O(\sqrt{|\cA|})$, we have the sample size $N=\Tilde{O}\left(\frac{C_3}{1-\gamma}\gamma_\rho^{-(k+1)}\right)^{\frac{d}{\alpha}+2}=\Tilde{O}\left(\frac{\sqrt{|\cA|}}{1-\gamma}\gamma_\rho^{-(k+1)}\right)^{\frac{d}{\alpha}+2}$.
    When \cref{asm:concen-coefficient,asm:full-support} hold, we know from \cref{thm:iteration-complexity} that the total iteration number $K$ satisfies
    \begin{align*}
        \gamma_\rho^{-K}=\Tilde{O}\left(\frac{C\left(\sqrt{C_\nu}+\log|\cA|\right)}{\kappa(1-\gamma_\rho)^2\epsilon}\right).
    \end{align*}
    Plugging $K$ into our choice of $N$ yields the result.
\end{proof}

\subsection{Proof of Theorem {\ref{thm:sample-complexity-pi}}}\label{proof:sample-complexity-pi}
\begin{proof}
    By \eqref{eq:loss-pi-translation}, it suffices to specify the architecture $\cF$ and restrictions so that for all $a\in\cA$ and all $k\leq K$, 
    \begin{align}
        &\EE_{\Xi_{k}}\EE_{s\sim\nu_\rho^{\pitheta{k}}}\abs{f_{\theta_{k+1}}(s,a)-\gamma_\rho f_{\theta_k}(s,a)+Q_{w_k}(s,a)}^2
        \leq \frac{C^2\gamma_\rho^{2(k+1)}}{|\cA|}\label{eq:estimation-errorbound-pi-1}.
    \end{align}
    By \cref{lem:actor-lip}, our choice of $\eta_k$, $\lambda_k$, $\cW_k$ and $\Theta_k$ ensures that $\lambda_{k+1}g^\star_{k+1}(\cdot,a)=\gamma_\rho f_{\theta_k}(\cdot,a)-Q_{w_k}(\cdot,a)$ is $(\frac{L_Q}{1-\gamma_\rho }, \alpha, \frac{\epsilon_Q}{1-\gamma_\rho })$-approximately Lipschitz and is uniformly bounded by $\frac{C}{(1-\gamma_\rho )(1-\gamma)}$. 
    
    We set the following for the underlying CNN architecture $\cF$ and the restricted parameter spaces $\cW_k$ and $\Theta_k$: 
    \begin{align*}
        &M=O(N^\frac{d}{d+2\alpha}), ~ L=O(\log N+D+\log D), ~ J=O(D), ~ I\in[2,D], ~ R_1=O(1), \\
        &\log R_2=O(\log^2 N + D\log N), ~ \epsilon_Q=(L_Q^2+\smallfrac{C^2}{(1-\gamma)^2})D^{\frac{3\alpha}{2\alpha + d}}N^{-\frac{\alpha}{2\alpha+d}},
    \end{align*}
    where $O(\cdot)$ hides some constant depending on $\log L_Q$, $\log\frac{C}{1-\gamma}$, $d$, $\alpha$, $\omega$, $B$, and the surface area $\Area(\cS)$. 
    Then by \cref{lem:noise} in \cref{sec:appendix-translation-Q} and \cref{lem:estimation-error-approxlip} in \cref{sec:appendix-nonparametric}, the following bound on the regression error holds: 
    \begin{align}
        &\EE_{\Xi_k}\EE_{s\sim \nu_\rho^{\pitheta{k}}}\abs{f_{\theta_{k+1}}(s,a)-\gamma_\rho f_{\theta_k}(s,a)+Q_{w_k}(s,a)}^2\nonumber\\
        \leq &C^\prime\frac{C^2(\overbar{L}_Q + 1)^2}{(1-\gamma_\rho)^2(1-\gamma)^2} N^{-\frac{2\alpha}{2\alpha+d}}\log^6 N,\label{eq:estimation-error-pi}
    \end{align}
    where $\overbar{L}_Q$ is defined as in \eqref{def:normalized-lip-Q} and $C^\prime$ is a constant depending on $D^{\frac{6\alpha}{2\alpha+d}}$, $\log L_Q$, $\log\frac{C}{1-\gamma}$, $d$, $\alpha$, $\omega$, $B$, and the surface area $\Area(\cS)$.
    
    We let $N=\left(\frac{1}{\delta}\log^{3}\frac{1}{\delta}\right)^{\frac{d}{\alpha}+2}$ with $\delta=\sqrt{\frac{1}{C^{\prime\prime}|\cA|(\frac{d}{\alpha}+2)^6}}(1-\gamma)(1-\gamma_\rho )\gamma_\rho^{K+1}$, where $C^{\prime\prime}$ is some constant depending on $C^\prime$ and $\overbar{L}_Q$ so that the right-hand side of \eqref{eq:estimation-error-pi} becomes $\frac{C^2\gamma_\rho^{2(K+1)}}{|\cA|}$, then we have \eqref{eq:estimation-errorbound-pi-1} satisfied.
    Denoting $C_4=\sqrt{C^{\prime\prime}|\cA|(\frac{d}{\alpha}+2)^6}=O(\sqrt{|\cA|})$, we can write $N=\Tilde{O}\left(\frac{C_4}{(1-\gamma_\rho )(1-\gamma)}\gamma_\rho^{-(K+1)}\right)^{\frac{d}{\alpha}+2}$. 
    
    For $\epsilon>0$, the total iteration number $K$ satisfies
    \begin{align*}
        \gamma_\rho^{-K}=\Tilde{O}\left(\frac{C\left(\sqrt{C_\nu}+\log|\cA|\right)}{\kappa(1-\gamma_\rho)^2\epsilon}\right).
    \end{align*}
    Plugging $K$ into our choice of $N$ yields the result.
\end{proof}

\section{Approximation Theory for CNN}\label{sec:appendix-approx}
In this section, we introduce the approximation theory for CNN. We first consider the case when target function $f_0=Q^\pi(\cdot,a)$ is $(L_Q,\alpha)$-Lipschitz for any $a\in\cA$ (\cref{thm:approx-error-Q}), and then proceed to the case when $f_0$ is a general $(L_f,\alpha,\epsilon_f)$-approximately Lipschitz function (\cref{thm:approx-error-approxlip}). \cref{cor:approx-error-pi} follows immediately from \cref{thm:approx-error-approxlip}.

Let us define a class of single-block CNNs in the form of 
\begin{equation}\label{eq:SCNN}
    f(x)=W\cdot \Conv_{\cW,\cB}(P(x))
\end{equation}
as 
\begin{align}\label{eq:nn_class_single_no_mag}
    \cF^{\mathrm{SCNN}}(L,J,I,R_1,R_2)=& \bigl\{f \mid f(x) \text{ in the form \eqref{eq:SCNN} with $L$ layers; the filter size } \nonumber\\
    &\quad\quad \text{ is bounded by $I$; the number of channels is bounded by $J$; }\nonumber\\ 
    &\quad\quad \max_{l}\|\cW^{(l)}\|_{\infty} \vee \|\cB^{(l)}\|_{\infty} \leq R_1, ~ \norm{W}_\infty\leq R_2 \bigr\}.
\end{align}
We will use this class of single-block CNNs as the building blocks of our final CNN approximation for the ground truth Lipschitz function.

\subsection{Proof of Theorem {\ref{thm:approx-error-Q}} Overview}\label{sec:appendix-approx-overview}
\cref{thm:approx-error-Q} establishes the relation between network architecture and approximation error for $f_0=Q^\pi$. We prove \cref{thm:approx-error-Q} for $(L_f,\alpha)$-Lipschitz $f_0$ in the following steps:

\paragraph{Step 1: Decompose $f_0$ as a sum of locally supported functions over the manifold.}

Since manifold $\cS$ is assumed compact (\cref{asm:manifold}), we can cover it with a finite set of $D$-dimensional open Euclidean balls $\{B_\beta(\cbb_i)\}_{i=1}^{C_\cS}$, where $\cbb_i$ denotes the center of the $i$-th ball and $\beta$ is the radius. We choose $\beta<\frac{\omega}{4}$ and define $U_i=B_\beta(\cbb_i)\cap\cS$. Note that each $U_i$ is diffeomorphic to an open subset of $\RR^d$ \citep[Lemma 5.4]{niyogi2008finding}. Moreover, the set $\{U_i\}_{i=1}^{C_\cS}$ forms an open cover for $\cS$. There exists a carefully designed open cover with cardinality $C_\cS\leq\left\lceil\frac{\Area(\cS)}{\beta^d}T_d\right\rceil$, where $\Area(\cS)$ denotes the surface area of $\cS$ and $T_d$ denotes the thickness of $U_i$'s, that is, the average number of $U_i$'s that contain a given point on $\cS$. It has been shown that $T_d=O(d\log d)$ \citep{conway1988sphere}.

Moreover, for each $U_i$, we can define a linear transformation 
\begin{equation}\label{eq:linear-map}
    \phi_i(x)=a_i V_i^\top (x-\cbb_i) + b_i,
\end{equation}
where $a_i\in (0,1]$ is a scaling factor and $b_i\in\RR^d$ is the translation vector, both of which are chosen to ensure $\phi(U_i)\subset [0,1]^d$, and the columns of $V_i\in\RR^{D\times d}$ form an orthonormal basis for the tangent space $T_{\cbb_i}(\cS)$ at $\cbb_i$. Overall, the atlas $\{(\phi_i, U_i)\}_{i=1}^{C_\cS}$ transforms each local neighborhood on the manifold to a $d$-dimensional cube.

Thus, we can decompose $f_0$ using this atlas as 
\begin{equation}
    f_0 = \sum_{i=1}^{C_\cS} f_i, \quad \text{ with } \quad f_i=f_0\times \rho_i,
\end{equation}
because there exists such a $C^\infty$ partition of unity $\{\rho_i\}_{i=1}^{C_\cS}$ with $\supp(\rho_i)\subset U_i$ \citep[Proposition 1]{liu2021besov}. Since each $f_i$ is only supported on a subset of $U_i$, we can further write 
\begin{equation}\label{eq:f0-decompose}
    f_0 = \sum_{i=1}^{C_\cS} (f_i\circ \phi_i^{-1})\circ \phi_i\times \ind_{U_i},
\end{equation}
where $\ind_{U_i}$ is the indicator function of $U_i$.

Lastly, we extend $f_i\circ \phi_i^{-1}$ to the entire cube $[0,1]^d$ with $0$:
\begin{equation}\label{eq:def-component}
    \overbar{f}_i(x)=\begin{cases}
        f_i\circ\phi_i^{-1}(x), & x\in\supp(f_i\circ\phi_i^{-1}),\\
        0, & x\in [0,1]^d\setminus\supp(f_i\circ\phi_i^{-1}).
    \end{cases}
\end{equation}
By \cref{lem:lip-component} in \cref{sec:appendix-approx-lemmas}, $\overbar{f}_i$ is a Lipschitz function with Lipschitz constant at most $L_i\coloneqq C_i(L_f+\norm{f_0}_\infty)$, where $C_i$ is a constant depending on $\alpha,\omega, \phi_i$ and $\rho_i$. This extended function will be approximated with first-order B-splines in the next step. 

\paragraph{Step 2: Approximate each local function with first-order B-splines.}
Since each local function $\overbar{f}_i$ is Lipschitz on $d$-dimensional unit cube, a weighted sum of first-order B-splines can approximate it. The number of splines depends exponentially on the intrinsic dimension $d$, rather than the ambient dimension $D$.
To be more precise, we partition the unit cube into $N=2^{pd}$ small cubes with side lengths $2^{-p}$, where $p\in\NN$ is positive. We denote $J(p)=\{0, 1, \dots, 2^p-1\}^d$ as a vector index set. The first-order B-spline $M_{p,j}$ with shift vector $j\in J(p)$ is defined as 
\begin{equation}
    M_{p,j}(x)=\prod_{k=1}^d\psi(2^p x_k-j_k),
\end{equation}
where $\psi\colon [0,2]\to\RR$ is a sawtooth function:
\begin{align*}
    \psi(x)=\begin{cases}
        x, & 0\leq x\leq 1,\\
        2-x, & 1< x\leq 2,\\
        0, & \text{otherwise.}
    \end{cases}
\end{align*}
Each B-spline $M_{p,j}$ is supported on the small cube $B_j=\{x\in\RR^d \mid j_k\leq x_k\leq j_k+2, ~\forall k\in[d]\}$. Then by \cref{lem:spline-local}, there exists a function $\tilde{f}_i$ in the form 
\begin{align*}
    \tilde{f}_i=\sum_{j\in J(p)}c_{i,j}M_{p,j},
\end{align*}
such that 
\begin{equation}\label{eq:approx-error-spline}
    \norm{\tilde{f}_i - \overbar{f}_i}_\infty\leq 2L_i dN^{-\alpha/d}.
\end{equation}

By \eqref{eq:f0-decompose} and \eqref{eq:approx-error-spline}, we now have a sum of first-order B-splines 
\begin{equation}\label{eq:f0-decompose-spline}
    \tilde{f}
    \coloneqq \sum_{i=1}^{C_\cS}\tilde{f}_i\circ\phi_i\times\ind_{U_i}
    = \sum_{i=1}^{C_\cS}\sum_{j\in J(p)}c_{i,j}M_{p,j}\circ\phi_i\times\ind_{U_i},
\end{equation}
which can approximate the target Lipschitz function $f_0$ with error 
\begin{equation}
    \norm{\tilde{f} - f_0}_\infty\leq 2C_\cS d \max_{i=1,\dots,C_\cS}C_i(L_f+\norm{f_0}_\infty) N^{-\alpha/d}.
\end{equation}

\paragraph{Step 3: Approximate each first-order B-spline with a composition of CNNs.}

We now turn to approximate $\tilde{f}$ defined in \eqref{eq:f0-decompose-spline} with a composition of CNNs. We first approximate some building blocks with single-block CNNs defined as in \eqref{eq:nn_class_single_no_mag} and then ensemble them together. The building blocks include the multiplication operator $\times$, chart mappings $\{\phi_i\}_{i=1}^{C_\cS}$, indicator functions $\{\ind_{U_i}\}_{i=1}^{C_\cS}$, and first-order B-splines $\{M_{p,j}\}_{j\in J(p)}$.

The multiplication operator $\times$ can be approximated by a single-block CNN $\hat{\times}$ with at most $\eta$ error in the $L^\infty$ sense (\cref{prop:multiplication}), which needs $O(\log\frac{1}{\eta})$ layers and $6$ channels. All weight parameters are bounded by $(c_0^2\vee 1)$, where $c_0$ is the uniform upper bound of the input functions to be multiplied.

The chart mapping $\phi_i$, according to \eqref{eq:linear-map}, is a linear transformation. Thus, it can be expressed with a single-layer perceptron $\hat{\phi}_i$, which can be equivalently expressed by a CNN. 

The indicator function $\ind_{U_i}$ is equal to $1$ if $d_i^2(x)=\norm{x-\cbb_i}_2^2\leq\beta^2$ and equal to $0$ otherwise. By this definition, we can write $\ind_{U_i}$ as a composition of a univariate indicator $\ind_{[0,\beta^2]}$ and the distance function $d_i^2$:
\begin{equation}
    {\ind_{U_i}(x)} = {\ind_{[0,\beta^2]}}\circ {d_i^2(x)}.
\end{equation}
Given $\theta\in(0,1)$ and $\Delta\geq 8DB^2\theta$, it turns out that $\ind_{[0,\beta^2]}$ and $d_i^2$ can be approximated with two single-block CNNs $\hat{\ind}_\Delta$ and $\hat{d}_i^2$ respectively (\cref{prop:SCNN-chart}) such that 
\begin{equation}
    \norm{\hat{d}_i^2 - d_i^2}_\infty\leq 4B^2D\theta
\end{equation}
and 
\begin{equation*}
    {\hat{\ind}_\Delta}\circ{\hat{d}_i^2}(x)=\begin{cases}
        1, & \text{if $x \in U_i, d_i^2(x)\leq\beta^2-\Delta$,}\\
        0, & \text{if $x \notin U_i$,}\\
        \text{between $0$ and $1$}, & \text{otherwise.}
    \end{cases}
\end{equation*}
The architecture and size of $\hat{\ind}_\Delta$ and $\hat{d}_i^2$ are characterized in \cref{prop:SCNN-chart} as functions of $\theta$ and $\Delta$.

The first-order B-spline $M_{p,j}$ can be approximated by a single-block CNN $\hat{M}_{p,j}$ up to arbitrarily chosen $\epsilon_1$ error (\cref{prop:SCNN-spline}). We can find a proper $\epsilon_1$ a set of single-block CNNs $\{\hat{f}_{i,j}\}_{j\in J(p)}$ such that the error matches \eqref{eq:approx-error-spline}: 
\begin{equation}
    \norm{\sum_{j\in J(p)}\hat{f}^{\mathrm{SCNN}}_{i,j} - \tilde{f}_i}_\infty\leq 2L_idN^{-\alpha/d}.
\end{equation}
The architecture and size of $\hat{f}^{\mathrm{SCNN}}_{i,j}$ are characterized in \cref{prop:SCNN-local} as functions of $N$.

Putting the above results together, we can develop a composition of single-block CNNs, which can be further expressed by a single-block CNN (\cref{lem:SCNN-comp}):
\begin{equation}\label{eq:SCNN-local}
    \hat{g}^{\mathrm{SCNN}}_{i,j}=\hat{\times}\left(\hat{f}^{\mathrm{SCNN}}_{i,j}\circ\hat{\phi}_i, {\hat{\ind}_\Delta}\circ{\hat{d}_i^2}\right).
\end{equation}
Details are provided in \cref{proof:approx-error-Q}.

\paragraph{Step 4: Express the sum of CNN compositions with a CNN.}
Finally, we can assemble everything into $\hat{f}$ as 
\begin{equation}
    \hat{f}=\sum_{i=1}^{C_\cS}\sum_{j\in J(p)}\hat{g}^{\mathrm{SCNN}}_{i,j},
\end{equation}
which serves as an approximation of $f_0$. By choosing the appropriate network size in \cref{lem:approx-error-tradeoff}, we can ensure that 
\[
    \norm{\hat{f}-f_0}_\infty\leq c_0(L_f+\norm{f_0}_\infty)N^{-\alpha/d}
\]
for some constant $c_0$ depending on $d$, $\alpha$, $\omega$, $B$, and the surface area $\Area(\cS)$.

By \cref{lem:SCNN-sum-mod}, for $\tilde{M},\tilde{J}>0$, we can write this sum of $C_\cS\cdot\abs{J(p)}= C_\cS\cdot N$ single-block CNNs as a sum of $\tilde{M}$ single-block CNNs with the same architecture, whose channel number upper bound $J$ depends on $\tilde{J}$. This allows \cref{thm:approx-error-Q} to be more flexible with network architecture. By \cref{lem:SCNN-sum-CNN}, this sum of $\tilde{M}$ single-block CNNs can be further expressed as one CNN in the CNN class \eqref{eq:nn_class_no_mag}. Finally, $N$ (or equivalently, $p$) will be chosen appropriately as a function of network architecture parameters, and the approximation theory of CNN is proven by plugging in $f_0=Q^\pi$, $L_f=L_Q$ in \cref{lem:lip-Q}.

In the following, we provide the proof details for \cref{thm:approx-error-Q}. 

\subsection{Proof of Theorem {\ref{thm:approx-error-Q}}}\label{proof:approx-error-Q}
\begin{proof}
We start from the decomposition of the approximation error of $\hat{f}$, which is based on the decomposition of the approximation error of $\hat{g}^{\mathrm{SCNN}}_{i,j}$ in \eqref{eq:SCNN-local}.

\begin{lemma}\label{lem:approx-error-tradeoff}
    Let $\eta>0$ be the approximation error of the multiplication operator $\hat{\times}(\cdot,\cdot)$ as defined in Step 3 of \cref{sec:appendix-approx-overview} and \cref{prop:multiplication}, $\Delta$ and $\theta$ be defined as in Step 3 of \cref{sec:appendix-approx-overview} and \cref{prop:SCNN-chart}. Assume $N=2^{pd}$ is chosen according to \cref{prop:SCNN-local}. For any $i=1,\dots,C_\cS$, we have $\norm{\hat{f}-f_0}_\infty\leq\sum_{i=1}^{C_\cS}(A_{i,1}+A_{i,2}+A_{i,3})$ with 
    \begin{align*}
        &A_{i,1}=\sum_{j\in J(p)}\norm{\hat{\times}\left(\hat{f}^{\mathrm{SCNN}}_{i,j}\circ\hat{\phi}_i, {\hat{\ind}_\Delta}\circ{\hat{d}_i^2}\right) - \hat{f}^{\mathrm{SCNN}}_{i,j}\circ\hat{\phi}_i\times ({\hat{\ind}_\Delta}\circ{\hat{d}_i^2})}_\infty\leq N\eta,\\
        &A_{i,2}=\norm{\left(\sum_{j\in J(p)}\left(\hat{f}^{\mathrm{SCNN}}_{i,j}\circ\hat{\phi}_i\right)\right)\times ({\hat{\ind}_\Delta}\circ{\hat{d}_i^2}) - f_i\times ({\hat{\ind}_\Delta}\circ{\hat{d}_i^2})}_\infty\leq 4L_idN^{-\alpha/d},\\
        &A_{i,3}=\norm{f_i\times({\hat{\ind}_\Delta}\circ{\hat{d}_i^2}) - f_i\times{\ind_{U_i}}}_\infty\leq\frac{C^{\prime}(\pi+1)}{\beta(1-\beta/\omega)}\Delta
    \end{align*}
    for some constant $C^{\prime}$ depending on $\rho_i$ and $\phi_i$. Furthermore, for any $\epsilon\in(0,1)$, setting 
    \begin{equation}\label{eq:parameter-tradeoff}
        \eta=L_i d N^{-1-\alpha/d}, ~ \Delta=\frac{L_id\beta(1-\beta/\omega)N^{-\alpha/d}}{C^{\prime}(\pi+1)}, ~ \theta=\frac{\Delta}{16B^2D}
    \end{equation}
    yields 
    \begin{align*}
        \norm{\hat{f}-f_0}_\infty\leq C^{\prime\prime}(L_f+\norm{f_0}_\infty)N^{-\frac{\alpha}{d}},
    \end{align*}
    where $C^{\prime\prime}$ is a constant depending on $d$, $\alpha$, $\omega$, $\rho_i$ and $\phi_i$. 
    The choice in \eqref{eq:parameter-tradeoff} satisfies the condition $\Delta>8B^2D\theta$ in \cref{prop:SCNN-chart}.
\end{lemma}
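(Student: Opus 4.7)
The plan is to first establish the three-term decomposition by a telescoping triangle inequality chart-by-chart, and then verify each of the claimed bounds using the building-block approximations summarized in Step 3 of the overview. Writing $\hat{f}=\sum_{i,j}\hat{\times}\bigl(\hat{f}^{\mathrm{SCNN}}_{i,j}\circ\hat{\phi}_i,\hat{\ind}_\Delta\circ\hat{d}_i^2\bigr)$ and recalling that $f_0=\sum_i f_i\times \ind_{U_i}$ (with $f_i=f_0\rho_i$ and $(f_i\circ\phi_i^{-1})\circ\phi_i=f_i$ on $U_i$), I would insert the two intermediate terms $\sum_j \hat{f}^{\mathrm{SCNN}}_{i,j}\circ\hat{\phi}_i\times(\hat{\ind}_\Delta\circ\hat{d}_i^2)$ and $f_i\times(\hat{\ind}_\Delta\circ\hat{d}_i^2)$ for each $i$. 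The triangle inequality then produces exactly the three pieces $A_{i,1}, A_{i,2}, A_{i,3}$, and a final sum over $i\leq C_\cS$.

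The bound on $A_{i,1}$ is immediate: each of the $|J(p)|=N$ multiplication-approximations errs by at most $\eta$ in $L^\infty$ (\cref{prop:multiplication}), and summing in $j$ gives $A_{i,1}\leq N\eta$. For $A_{i,2}$, since $\hat{\ind}_\Delta\circ\hat{d}_i^2\in[0,1]$ it suffices to bound $\bigl\|\sum_j \hat{f}^{\mathrm{SCNN}}_{i,j}\circ\hat{\phi}_i - f_i\bigr\|_\infty$, which splits into the spline error $\|\tilde{f}_i-\overbar{f}_i\|_\infty\leq 2L_i d N^{-\alpha/d}$ from \eqref{eq:approx-error-spline}, plus the CNN-to-spline error, which is designed in Step 3 to be of the same order; the two combined give the stated $4L_i d N^{-\alpha/d}$. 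For $A_{i,3}$, the functions $\hat{\ind}_\Delta\circ\hat{d}_i^2$ and $\ind_{U_i}$ coincide both on the shrunken cap $\{d_i^2\leq \beta^2-\Delta\}\cap\cS$ and outside $U_i$, and the error in $\hat{d}_i^2$ is absorbed by the hypothesis $\Delta\geq 8B^2D\theta$ (\cref{prop:SCNN-chart}). They can differ only on the annulus $\{\beta^2-\Delta< d_i^2\leq \beta^2\}\cap\cS$; since $\rho_i$ is compactly supported in $U_i$ and boundary-vanishing with Lipschitz constant of order $1/\beta$, and the geodesic thickness of this annulus is controlled by $\Delta/[\beta(1-\beta/\omega)]$ via the reach-based comparison between Euclidean and geodesic distance, one obtains $|f_i(x)|\leq C^\prime(\pi+1)\Delta/[\beta(1-\beta/\omega)]$ throughout the annulus.

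The main technical obstacle is the $A_{i,3}$ estimate: it requires a quantitative comparison of Euclidean and geodesic distance in the transition annulus using the reach $\omega$ (which produces the factor $1-\beta/\omega$), together with careful control of the boundary behavior of $\rho_i$ and of the extremal geometry of caps on $\cS$ (which produces $\pi+1$). With the three pieces in hand, substituting the prescribed $\eta = L_i d N^{-1-\alpha/d}$, $\Delta=L_i d\beta(1-\beta/\omega)N^{-\alpha/d}/[C^\prime(\pi+1)]$, and $\theta=\Delta/(16B^2D)$ makes $A_{i,1}, A_{i,2}, A_{i,3}$ all of order $L_i d N^{-\alpha/d}$, and the condition $\Delta\geq 8B^2D\theta$ is trivially satisfied. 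Summing across $i\leq C_\cS$ and invoking \cref{lem:lip-component} to bound $L_i\leq C_i(L_f+\|f_0\|_\infty)$ absorbs the chart-dependent factors into a single constant $C^{\prime\prime}$ depending only on $d,\alpha,\omega,\rho_i,\phi_i$, yielding the claimed bound.
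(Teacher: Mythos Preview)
Your proposal is correct and follows essentially the same route as the paper: a chart-by-chart telescoping triangle inequality, then bounding $A_{i,1}$ via \cref{prop:multiplication}, $A_{i,2}$ via \cref{prop:SCNN-local} (using that $\hat{\ind}_\Delta\circ\hat{d}_i^2\in[0,1]$ and $\hat{\phi}_i=\phi_i$ exactly), and $A_{i,3}$ via the annulus argument, before substituting the parameter choices and invoking \cref{lem:lip-component}. The only difference is presentational: the paper's own proof is terser and, for $A_{i,3}$, defers the geodesic/Euclidean comparison and the boundary-vanishing of $\rho_i$ to Lemma~4 of \citet{chen2022nonparametric}, whereas you sketch that geometric argument explicitly.
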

\begin{proof}{\bf{of \cref{lem:approx-error-tradeoff} }}
    As in \cref{prop:multiplication}, $A_{i,1}$ measures the approximation error from $\hat{\times}$: 
    \[
        A_{i,1}=\sum_{j\in J(p)}\norm{\hat{\times}\left(\hat{f}^{\mathrm{SCNN}}_{i,j}\circ\hat{\phi}_i, {\hat{\ind}_\Delta}\circ{\hat{d}_i^2}\right) - \hat{f}^{\mathrm{SCNN}}_{i,j}\circ\hat{\phi}_i\times ({\hat{\ind}_\Delta}\circ{\hat{d}_i^2})}_\infty\leq N\eta.
    \]

    The term $A_{i,2}$ measures the error from CNN approximation of local Lipschitz functions. As in \cref{prop:SCNN-local}, $A_{i,2}\leq 4L_idN^{-\alpha/d}$.

    The term $A_{i,3}$ measures the error from the CNN approximation of the chart determination function. The bound of $A_{i,3}$ can be derived using \cref{prop:SCNN-chart} and the proof of Lemma 4 in \citep{chen2022nonparametric}, since $\overbar{f}_i$ is a Lipschitz function on $[0,1]^d$.

    Finally, by \cref{lem:lip-component}, we have $L_i=O(L_f+\norm{f}_\infty)$, and the proof is complete.
\end{proof}

In order to attain the error desired in \cref{lem:approx-error-tradeoff}, we need each network in \eqref{eq:SCNN-local} with appropriate size. The network size of the components can be analyzed as follows:

\begin{itemize}
    \item ${\hat{\ind}_i}$: The chart determination network ${\hat{\ind}_i}\coloneqq \hat{d}_i^2\circ{\hat{\ind}_\Delta}$ is the composition of $\hat{d}_i^2$ and ${\hat{\ind}_\Delta}$. By \cref{prop:SCNN-chart}, $\hat{d}_i^2$ is a single-block CNN with $O(\log\frac{1}{\theta}+D)=O(\frac{\alpha}{d}\log N + D + \log D)$ layers and $6D$ channels; ${\ind}_\Delta$ is a single-block CNN with $O(\log{(\beta^2/\Delta)})=O(\frac{\alpha}{d}\log N)$ layers and $2$ channels. In both subnetworks, all parameters are bounded by $O(1)$. By \cref{lem:SCNN-comp}, the chart determination network ${\hat{\ind}_i}$ is a single-block CNN with $O(\frac{\alpha}{d}\log N + D + \log D)$ layers, $6D+2$ channels and all weight parameters bounded by $O(1)$.

    \item $\hat{\times}$: By \cref{prop:multiplication}, the multiplication network is a single-block CNN with $O(\log\frac{1}{\eta})=O((1+\frac{\alpha}{d})\log N)$ layers and $O(1)$ channels. By construction of $\hat{f}_{i,j}^{\mathrm{SCNN}}$ and $\hat{\ind}_\Delta$, all weight parameters are bounded by $(\norm{f}_\infty^2\vee 1)$.

    \item $\hat{\phi}_i$: The projection $\phi_i$ is a linear mapping, so it can be expressed with a single-layer perceptron. By Lemma 8 in \citet{liu2021besov}, this single-layer perceptron can be expressed with a single-block CNN with $D+2$ layers and width $d$. All parameters are of order $O(1)$.

    \item $\hat{f}_{i,j}^{\mathrm{SCNN}}$: By \cref{prop:SCNN-local}, each $\hat{f}_{i,j}^{\mathrm{SCNN}}$ is a single-block CNN with $O(\log N)$ layers and $80d$ channels. All weight parameters are in the order of $O(N^{\frac{1}{d}})$.
    
\end{itemize}

Next, we show that the composition $\hat{\times}\left(\hat{f}^{\mathrm{SCNN}}_{i,j}\circ\hat{\phi}_i, {\hat{\ind}_\Delta}\circ{\hat{d}_i^2}\right)$ can be simply expressed as a single-block CNN.
By \cref{lem:SCNN-comp}, there exists a single-block CNN $g_{i,j}$ with $O(\log N+D)$ layers and $80d$ channels realizing $\hat{f}^{\mathrm{SCNN}}_{i,j}\circ{\hat{\phi}_i}$. All parameters in $g_{i,j}$ are in the order of $O(N^{\frac{1}{d}})$. Moreover,  recall that the chart determination network $\hat{\ind}_i$ is a single-block CNN with $O(\log N + D + \log D)$ layers and $6D+2$ channels, whose parameters are of $O(1)$. By Lemma 14 in \citet{liu2021besov}, one can construct a convolutional block, denoted by $\overbar{g}_{i,j}$, such that 
\begin{align}
    \overbar{g}_{i,j}(x)=\begin{bmatrix}
        (g_{i,j}(x))_+ & (g_{i,j}(x))_- & (\widehat{\ind}_i(x))_+ & (\widehat{\ind}_i(x))_-\\
        \star & \star & \star &\star
    \end{bmatrix}.
    \label{eq:g-stack}
\end{align}
Here $\overbar{g}_{i,j}$ has $80d+6D+2$ channels.

Since the input of $\hat{\times}$ is $\begin{bmatrix}
    g_{i,j}\\
    \widehat{\ind}_i
\end{bmatrix}$, by Lemma 15 in \citet{liu2021besov}, there exists a CNN $\mathring{g}_{i,j}$ which takes \eqref{eq:g-stack} as the input and outputs $\hat{\times}(g_{i,j},\widehat{\ind}_i)$. 

Since $\overbar{g}_{i,j}$ only contains convolutional layers, the composition $\mathring{g}_{i,j}\circ\overbar{g}_{i,j}$, denoted by $\widehat{g}^{\rm SCNN}_{i,j}$, is a single-block CNN and for any $x\in\cS$, $\widehat{g}^{\rm SCNN}_{i,j}(x)=\hat{\times}\left(\hat{f}^{\mathrm{SCNN}}_{i,j}\circ{\hat{\phi}_i}(x), {\hat{\ind}_\Delta}\circ{\hat{d}_i^2}(x)\right)$. We have $\widehat{g}^{\rm SCNN}_{i,j}\in \cF^{\rm SCNN}(L,J,I,R,R)$ with
\begin{align*}
    L=O(\log N+D+\log D), ~ J=160d+12D+O(1), ~ R=O(N^{\frac{1}{d}}),
\end{align*}
and $I$ can be any integer in $[2,D]$.
Therefore, we have shown that $\widehat{g}^{\rm SCNN}_{i,j}$ is a single-block CNN that expresses the composition \eqref{eq:SCNN-local}, as we desired.

Furthermore, recall that $\hat{f}$ can be written as a sum of $C_\cS N$ such single-block CNNs. By \cref{lem:SCNN-sum-mod}, for any $\tilde{M}$ and $\tilde{J}$ satisfying $\tilde{M}\tilde{J}=O(C_\cS N)$ and $\tilde{J}\geq 160d+12D+O(1)$, there exists a CNN architecture $\cF^{\mathrm{SCNN}}(L,J,I,R,R)$ that gives rise to a set of single-block CNNs $\{\hat{g}_i\}_{i=1}^{\tilde{M}}\subset\cF^{\mathrm{SCNN}}(L,J,I,R,R)$ with 
\begin{align}\label{eq:sum-SCNN}
    \hat{f}=\sum_{i=1}^{\tilde{M}}\hat{g}_i
\end{align}
and 
\begin{align*}
    L=O(\log N + D + \log D), ~ J=O(D), ~ R=O(N^{\frac{1}{d}}).
\end{align*}

By \cref{lem:SCNN-norm-mod}, we slightly adjust the CNN architecture by re-balancing the weight parameters of the convolutional blocks and that of the final fully connected layer. In particular, we rescale all parameters in convolutional layers of $\hat{g}_i$ to be no larger than $1$. This procedure preserves the approximation power of the CNN while reducing the covering number (see \cref{sec:appendix-covering}) of the CNN class. We set $\lambda=c^{\prime}N^{\frac{1}{d}}(8ID)\tilde{M}^{\frac{1}{L}}$, where $c^\prime$ is a constant such that $R\leq c^{\prime}N^{\frac{1}{d}}$. With this $\lambda$, we have $\hat{f}_i\in\cF^{\mathrm{SCNN}}(L,J,I,R_1,R_2)$ with 
\begin{align}
    & L=O(\log N + D + \log D), ~ J=O(D), ~ R_1=O((8ID)^{-1}\tilde{M}^{-\frac{1}{L}})=O(1),\nonumber\\ 
    & \log R_2=O(\log\tilde{M}+\log^2N+D\log N)\label{eq:network-size}
\end{align}
such that $\hat{g}_i\equiv \hat{f}_i$.

Finally, we prove that it suffices to use one CNN to realize the sum of single-block CNNs in \eqref{eq:sum-SCNN}. By \cref{lem:SCNN-sum-CNN}, there exists a CNN that can express the sum of $\tilde{M}$ single-block CNNs with architecture $\cF(M,L,J,I,R_1,R_2)$, where 
\begin{align*}
    & M=O(\tilde{M}), ~ L=O(\log N + D + \log D), ~ J=O(D\tilde{J}), \\
    & R_1=O((8ID)^{-1}\tilde{M}^{-\frac{1}{L}})=O(1), ~ \log R_2=O(\log\tilde{M}+\log^2N+D\log N).
\end{align*}
Here, $\tilde{M},\tilde{J}$ satisfy 
\[
    \tilde{M}\tilde{J}=O(N),
\]
which is a requirement inherited from \cref{lem:SCNN-sum-mod}. This CNN is our final approximation of $f_0$.

Applying this relation $N=O(\tilde{M}\tilde{J})$ to \eqref{eq:network-size} gives 
\[
    \norm{\hat{f} - f_0}\leq (L_f+\norm{f_0}_\infty)(\tilde{M}\tilde{J})^{-\frac{\alpha}{d}}
\]
and the network size 
\begin{align*}
    & M=O(\tilde{M}), ~ L=O(\log {(\tilde{M}\tilde{J})} + D + \log D), ~ J=O(D\tilde{J}), \\
    & R_1=O((8ID)^{-1}\tilde{M}^{-\frac{1}{L}})=O(1), ~ \log R_2=O(\log\tilde{M}+\log^2{(\tilde{M}\tilde{J})}+D\log {(\tilde{M}\tilde{J})}).
\end{align*}

By \eqref{eq:V-bound} and \cref{lem:lip-Q}, we have $L_f=L_Q=\frac{C}{1-\gamma}\overbar{L}_Q$ and $\norm{f_0}_\infty=\frac{C}{1-\gamma}$, which completes the proof of \cref{thm:approx-error-Q}.
\end{proof}

\subsection{Proof of Lemma {\ref{lem:lip-Q}}}\label{proof:lip-Q}
\begin{proof}
    Let us recall a useful characterization for total variation distance between probability measures $\mu$, $\nu$ on any measurable space $\cX$: 
    \begin{align}\label{eq:dtv}
        \dist_\mathrm{TV}(\mu,\nu)
        &=\frac{1}{2}\sup_{f\colon\cX\to[-1,1]}\abs{\int_\cX f\ud \mu - \int_\cX f\ud \nu}.
    \end{align}
    Define \[
        f(s)=\frac{2(1-\gamma)}{C}\cdot V^\pi(s)-1.
    \]
    By \eqref{eq:V-bound} we have $-1\leq f(s)\leq 1$ for any $s\in\cS$. 
    In view of \eqref{eq:Q-to-V}, \eqref{eq:dtv} and that $\cM$ is $(L_\cP,L_c)$-Lipschitz, we have
    \begin{align*}
        \abs{Q^\pi(s,a)-Q^\pi(s^\prime,a)}
        &\leq \abs{c(s,a)-c(s^\prime,a)} + \gamma\abs{\int_{\cS}V^\pi(s^{\prime\prime}) \ud \bigl(\cP(s^{\prime\prime}|s,a)-\cP(s^{\prime\prime}|s^\prime,a)\bigr)}\\
        &= \abs{c(s,a)-c(s^\prime,a)} + \frac{\gamma C}{2(1-\gamma)}\abs{\int_{\cS}f(s^{\prime\prime}) \ud \bigl(\cP(s^{\prime\prime}|s,a)-\cP(s^{\prime\prime}|s^\prime,a)\bigr)}\\
        &\leq \abs{c(s,a)-c(s^\prime,a)} + \frac{\gamma C}{(1-\gamma)}\dist_\mathrm{TV}(\cP(\cdot|s,a), \cP(\cdot|s^\prime,a))\\
        &\leq L_c\cdot\dist_\cS^\alpha(s,s^\prime) + \frac{\gamma C}{1-\gamma} L_\cP\cdot\dist_\cS^\alpha(s,s^\prime)\\
        &= L_Q\cdot\dist_\cS^\alpha(s,s^\prime).
    \end{align*}
    The first line is from \eqref{eq:Q-to-V} and the triangle inequality. The second line is from the definition of $f$ and that $\cP(\cdot|s, a)$ is a distribution. The third line is from \eqref{eq:dtv}, and the fourth line is from the Lipschitz assumption.
\end{proof}

\subsection{Proof of Theorem {\ref{thm:approx-error-approxlip}}}\label{proof:approx-error-approxlip}
\begin{proof}
We first show in \cref{lem:lip-reference} that for any approximately Lipschitz function $f_0$, there exists a Lipschitz ``reference function'' that is not far from $f_0$ in the $L^\infty$ sense and has the same Lipschitz constant. 

\begin{lemma}\label{lem:lip-reference}
    Suppose \cref{asm:manifold} holds. If a function $f_0\colon\cS\to\RR$ is $(L,\alpha,\epsilon)$-approximately Lipschitz, then there exists an $(L,\alpha)$-Lipschitz function $\overbar{f}_0$ such that $\norm{\overbar{f}_0}_\infty\leq\norm{f_0}_\infty$ and $\norm{f_0-\overbar{f}_0}_\infty\leq 2\epsilon$.
\end{lemma}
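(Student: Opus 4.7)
The plan is to construct $\overbar{f}_0$ via an infimal-convolution (McShane--Whitney type) extension tailored to the H\"older exponent $\alpha$, followed by a clipping step to enforce the uniform bound. Concretely, I would define
\[
    g(x) \coloneqq \inf_{y \in \cS} \bigl\{ f_0(y) + L \cdot \dist_\cS^\alpha(x,y) \bigr\}, \qquad \overbar{f}_0(x) \coloneqq \max\bigl(-\norm{f_0}_\infty,\, \min(\norm{f_0}_\infty, g(x))\bigr),
\]
and then verify the three required properties in turn. Compactness of $\cS$ (\cref{asm:manifold}) together with boundedness of $f_0$ ensures $g$ is well defined and finite.

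The first step is to show $g$ is $(L,\alpha)$-Lipschitz. For $x_1, x_2 \in \cS$ and any $\delta>0$, choose a near-minimizer $y^\star$ with $g(x_2) \geq f_0(y^\star) + L\dist_\cS^\alpha(x_2, y^\star) - \delta$. By the triangle inequality for $\dist_\cS$ and the elementary subadditivity $(a+b)^\alpha \leq a^\alpha + b^\alpha$ valid for $\alpha \in (0,1]$,
\[
    g(x_1) \leq f_0(y^\star) + L\dist_\cS^\alpha(x_1, y^\star) \leq f_0(y^\star) + L\dist_\cS^\alpha(x_2,y^\star) + L\dist_\cS^\alpha(x_1,x_2).
\]
Combining with the near-minimizer bound and letting $\delta\to 0$ yields $g(x_1) - g(x_2) \leq L\dist_\cS^\alpha(x_1,x_2)$, and symmetry gives the Lipschitz estimate. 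The second step is sandwiching $g$ around $f_0$: taking $y=x$ in the infimum gives $g \leq f_0$, while the approximate Lipschitz hypothesis, applied to every $y$, gives $f_0(x) - 2\epsilon \leq f_0(y) + L\dist_\cS^\alpha(x,y)$ for all $y$, hence $f_0 - 2\epsilon \leq g \leq f_0$ pointwise on $\cS$.

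The final step handles the clipping. The map $t\mapsto \max(-M,\min(M,t))$ with $M=\norm{f_0}_\infty$ is $1$-Lipschitz, so $\overbar{f}_0 = \mathrm{clip}\circ g$ remains $(L,\alpha)$-Lipschitz, and the bound $\norm{\overbar{f}_0}_\infty\leq\norm{f_0}_\infty$ is immediate. For the proximity bound, observe that the upper truncation at $M$ is inactive since $g\leq f_0\leq M$. For points with $g(x)\in[-M,M]$ we simply have $|f_0(x)-\overbar{f}_0(x)| = |f_0(x) - g(x)| \leq 2\epsilon$; for points where $g(x) < -M$ we use $\overbar{f}_0(x) = -M$ together with $f_0(x) \leq g(x) + 2\epsilon$, so
\[
    |f_0(x) - \overbar{f}_0(x)| = f_0(x) + M \leq g(x) + 2\epsilon + M \leq 2\epsilon,
\]
completing the uniform bound.

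I do not anticipate a hard obstacle in this proof; the only delicate ingredient is the subadditivity $(a+b)^\alpha \leq a^\alpha + b^\alpha$, which is exactly what lets the Lipschitz-weighted infimum produce a genuinely $(L,\alpha)$-Lipschitz envelope and is the reason the construction is stated for $\alpha\in(0,1]$. Everything else is a standard envelope/clipping calculation that respects the geodesic distance $\dist_\cS$ used in \cref{def:lip-func,def:approx-lip}.
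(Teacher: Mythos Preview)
Your proposal is correct and essentially identical to the paper's own proof: the paper also defines the envelope $f_L(x)=\inf_{y\in\cS}\{f_0(y)+L\,d_\cS^\alpha(y,x)\}$, establishes the same sandwich $f_0-2\epsilon\le f_L\le f_0$, proves $(L,\alpha)$-Lipschitzness via the triangle inequality and the subadditivity $(a+b)^\alpha\le a^\alpha+b^\alpha$, and then truncates below at $-\norm{f_0}_\infty$. The only cosmetic difference is that you clip on both sides and then observe the upper clip is inactive, whereas the paper only truncates the negative part; the resulting $\overbar{f}_0$ and the verification are the same.
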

\begin{proof}{\bf{of \cref{lem:lip-reference} }}
    Define an envelope function $f_L(x)\coloneqq\inf_{y\in\cS}\left\{f_0(y)+L\cdot d_\cS^\alpha(y,x)\right\}$. It follows immediately from the $(L,\alpha,\epsilon)$-approximate Lipschitzness of $f$ that for any $x\in\cS$, 
    \begin{align*}
        &f_L(x)
        \leq f_0(x)+L\cdot d_\cS^\alpha(x,x)=f_0(x),\\
        &f_L(x)
        \geq\inf_{y\in\cS}\left\{f_0(x)-L\cdot d_\cS^\alpha(y,x)-2\epsilon+L\cdot d_\cS^\alpha(y,x)\right\}
        = f_0(x)-2\epsilon,
    \end{align*}
    hence $\norm{f_L-f_0}_\infty\leq 2\epsilon$. Furthermore, for any $x,y\in\cS$, 
    \begin{align*}
        f_L(x)-f_L(y)
        &=\inf_{z\in\cS}\left\{f_0(z)+L\cdot d_\cS^\alpha(z,x)\right\}-\inf_{z\in\cS}\left\{f_0(z)+L\cdot d_\cS^\alpha(z,y)\right\}\\
        &=\inf_{z\in\cS}\left\{f_0(z)+L\cdot d_\cS^\alpha(z,x)\right\}+\sup_{z\in\cS}\left\{-f_0(z)-L\cdot d_\cS^\alpha(z,y)\right\}\\
        &=\sup_{z\in\cS}\left\{\inf_{z^\prime\in\cS}\left\{f_0(z^\prime)+L\cdot d_\cS^\alpha(z^\prime,x)\right\}-f_0(z)-L\cdot d_\cS^\alpha(z,y)\right\}\\
        &\leq\sup_{z\in\cS}\left\{f_0(z)+L\cdot d_\cS^\alpha(z,x)-f_0(z)-L\cdot d_\cS^\alpha(z,y)\right\}\\
        &\leq\sup_{z\in\cS}\left\{L\cdot (d_\cS(z,y) + d_\cS(y,x))^\alpha-L\cdot d_\cS^\alpha(z,y)\right\}\\
        &\leq\sup_{z\in\cS}\left\{L\cdot (d_\cS^\alpha(z,y) + d_\cS^\alpha(y,x))-L\cdot d_\cS^\alpha(z,y)\right\}\\
        &\leq L\cdot d_\cS^\alpha(x,y).
    \end{align*}
    The first inequality is from that $\inf_{z^\prime\in\cS}\left\{f_0(z^\prime)+L\cdot d_\cS^\alpha(z^\prime,x)\right\}\leq f_0(z)+L\cdot d_\cS^\alpha(z,x)$ for any $z\in\cS$. The second inequality is from the triangle inequality of $\dist_\cS(\cdot,\cdot)$ and the third inequality is from the subadditivity of $h(x)=x^\alpha$ when $x\geq 0$ and $\alpha\in(0,1]$. Similarly we have $f_L(y)-f_L(x)\leq L\cdot d_\cS^\alpha(y,x)$ and hence $f_L$ is $L$-Lipschitz. By truncating the negative part of $f_L$ by $-\norm{f_0}_\infty$, we obtain $\overbar{f}_0$ such that \[
        \overbar{f}_0(x)=\begin{cases}
            f_L(x), & f_L(x)\geq -\norm{f_0}_\infty,\\
            -\norm{f_0}_\infty, & f_L(x)< -\norm{f_0}_\infty.
        \end{cases}
    \] We can easily verify that $\overbar{f}_0$ is $L$-Lipschitz, $\norm{\overbar{f}_0}_\infty\leq\norm{f_0}_\infty$ and $\norm{f_0-\overbar{f}_0}_\infty\leq 2\epsilon$. 
\end{proof}

By \cref{lem:lip-reference}, there exists an $(L_f,\alpha)$-Lipschitz function $\overbar{f}_0$ such that $\norm{\overbar{f}_0}_\infty\leq\norm{f_0}_\infty$ and 
\[
    \norm{\overbar{f}_0-f_0}_\infty\leq 2\epsilon_f.
\]
Therefore, similar to \cref{thm:approx-error-Q}, for any integers $I \in [2, D]$, $\Tilde{M}, \Tilde{J}>0$, 
\begin{align*}
    &M=O(\Tilde{M}), ~ L=O(\log{(\Tilde{M}\Tilde{J})}+D+\log D), ~ J=O(D\Tilde{J}), \\ 
    &\log R_2=O(\log^2 (\Tilde{M}\Tilde{J}) + D\log {(\Tilde{M}\Tilde{J})}), ~ R_1=(8ID)^{-1}\Tilde{M}^{-\frac{1}{L}}=O(1),
\end{align*}
there exists a CNN $f\in\cF(M,L,J,I,R_1,R_2)$ such that 
\begin{align*}
    \norm{f-f_0}_\infty 
    \leq \norm{f-\overbar{f}_0}_\infty + \norm{\overbar{f}_0-f_0}_\infty 
    \leq (L_f+\norm{f_0}_\infty)(\Tilde{M}\Tilde{J})^{-\frac{\alpha}{d}} + 2\epsilon_f.
\end{align*}
where $O(\cdot)$ hides a constant depending on $\log L_f$, $\log \norm{f_0}_\infty$, $d$, $\alpha$, $\omega$, $B$, and the surface area $\Area(\cS)$. 

The rest of the proof is to show that $f$ is uniformly bounded by $\norm{f_0}_\infty$ and is $(L_f, \alpha, \hat{\epsilon}_f)$-approximately Lipschitz with $\hat{\epsilon}_f=(L_f+\norm{f_0}_\infty)(\Tilde{M}\Tilde{J})^{-\frac{\alpha}{d}}$. 
To show the uniform upper bound, we can apply a truncation layer to the components of $\hat{f}$ so that every output will not exceed the range $[-\norm{f_0}_\infty, \norm{f_0}_\infty]$. This can be realized by adding a two-layer ReLU network $g\colon\RR\to\RR$, \[
    g(x)=\ReLU(2\norm{f_0}_\infty - \ReLU(\norm{f_0}_\infty-x))-\norm{f_0}_\infty.
\] 
By Theorem 1 in \citet{oono2019approximation}, such a ReLU network can be expressed by a CNN $\hat{g}$ with constant parameters. By \cref{lem:SCNN-comp}, applying this CNN to the output of $f$ results in a new CNN with the same order of size. In this case, we simply replace $\hat{f}$ with this new CNN. 
By \cref{lem:lip-reference}, $\norm{\overbar{f}_0}_\infty\leq\norm{f_0}_\infty$, so the truncation layer would not affect the approximation error, and we complete the proof for the uniform upper bound.

By \cref{lem:approx-lip}, we conclude that $f$ is $(L_f,\alpha,\hat{\epsilon}_f)$-approximately Lipschitz with $\hat{\epsilon}_f=(L_f+\norm{f_0}_\infty)(\tilde{M}\tilde{J})^{\frac{\alpha}{d}}$.
\end{proof}

\subsection{Supporting Lemmas for CNN Approximation}\label{sec:appendix-approx-lemmas}
In this section, we provide some auxiliary lemmas for CNN approximation. 
\cref{lem:lip-component} shows that each local function $\overbar{f}_i$ defined in \eqref{eq:def-component} is Lipschitz on the low-dimensional Euclidean unit cube $[0,1]^d$. The Lipschitz constant $L_i$ is controlled by the $L^\infty$ norm and the Lipschitz constant of the original function $L_f$
\begin{lemma}\label{lem:lip-component}
    Let $\overbar{f}_i$ be defined as in \eqref{eq:def-component}. Then each function $\overbar{f}_i$ is uniformly bounded by $\norm{f}_\infty$ and is $(L_i,\alpha)$-Lipschitz on $[0, 1]^d$ with $L_i=O(L_f+\norm{f}_\infty)$, where $O(\cdot)$ hides some constant depending on $\alpha,\omega$, $\phi_i$ and $\rho_i$.
\end{lemma}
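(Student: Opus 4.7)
The plan is to handle the uniform bound directly and then prove the $(L_i,\alpha)$-H\"{o}lder condition in two stages: first bound the smoothness of $f_i=f_0\rho_i$ intrinsically on $\cS$ with respect to the geodesic distance, then transfer this bound to $[0,1]^d$ using that $\phi_i$ is a bi-Lipschitz chart, and finally extend across the zero-region by a continuity/segment argument. The uniform bound is immediate, since partition-of-unity functions satisfy $0\leq\rho_i\leq 1$, so $|\overbar{f}_i(u)|\leq|f_0(\phi_i^{-1}(u))|\leq\norm{f_0}_\infty$ whenever $\overbar{f}_i(u)$ is nonzero, and $\overbar{f}_i$ vanishes outside its support.

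For the intrinsic H\"{o}lder bound on $\cS$, I would apply the product decomposition
\begin{align*}
|f_i(x)-f_i(y)|\leq \rho_i(x)|f_0(x)-f_0(y)| + |f_0(y)||\rho_i(x)-\rho_i(y)|.
\end{align*}
The first summand is at most $L_f\, d_\cS^\alpha(x,y)$ by hypothesis. For the second, since $\rho_i$ is $C^\infty$ with compact support on the compact manifold $\cS$, its Riemannian gradient is bounded, so $\rho_i$ is $L_{\rho_i}$-Lipschitz in geodesic distance. The intrinsic diameter $\Diam(\cS)$ is finite, and $d_\cS(x,y)\leq\Diam(\cS)^{1-\alpha}d_\cS^\alpha(x,y)$ upgrades this into an $(L_{\rho_i}\Diam(\cS)^{1-\alpha},\alpha)$-H\"{o}lder bound. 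Combining, $f_i$ is $(L_{f_i},\alpha)$-Lipschitz on $\cS$ with $L_{f_i}\leq L_f+\norm{f_0}_\infty L_{\rho_i}\Diam(\cS)^{1-\alpha}$.

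Next, I would transfer this bound to $\phi_i(U_i)\subset[0,1]^d$. The chart \eqref{eq:linear-map} gives $\phi_i(x)=a_iV_i^\top(x-\cbb_i)+b_i$, and two reach-based manifold-geometry estimates are needed. Since $U_i\subset B_\beta(\cbb_i)\cap\cS$ with $\beta<\omega/4$, results from \citet{niyogi2008finding} imply (i) the orthogonal projection $V_i^\top$ is a bi-Lipschitz diffeomorphism from $U_i$ onto its image with $\norm{x-y}_2\leq C_\omega'\norm{V_i^\top(x-y)}_2$, and (ii) the geodesic distance is comparable to the ambient Euclidean distance, $d_\cS(x,y)\leq C_\omega\norm{x-y}_2$. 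Composing gives
\begin{equation*}
d_\cS(\phi_i^{-1}(u),\phi_i^{-1}(v))\leq (C_\omega C_\omega'/a_i)\norm{u-v}_2
\end{equation*}
on $\phi_i(U_i)$, and hence $|f_i\circ\phi_i^{-1}(u)-f_i\circ\phi_i^{-1}(v)|\leq L_{f_i}(C_\omega C_\omega'/a_i)^\alpha\norm{u-v}_2^\alpha$.

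To finish, I would extend the bound to all of $[0,1]^d$. Since $\supp(\rho_i)$ is compactly contained in $U_i$, the set $S_i:=\supp(f_i\circ\phi_i^{-1})$ is compactly contained in $\phi_i(U_i)$ and $\overbar{f}_i$ is continuous on $[0,1]^d$ with $\overbar{f}_i\equiv 0$ on $\partial S_i$. If $u,v\in S_i$ the previous inequality applies directly; if $u\in S_i$ and $v\notin S_i$, setting $t_0=\sup\{t\in[0,1]:(1-t)u+tv\in S_i\}$ and $w=(1-t_0)u+t_0v$ gives $w\in\partial S_i\subset\phi_i(U_i)$, $\overbar{f}_i(w)=0$, and $\norm{u-w}_2\leq\norm{u-v}_2$, so the $u$-to-$w$ bound yields the required inequality; if both points lie outside $S_i$ the difference vanishes. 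The main obstacle I anticipate is the third paragraph: extracting explicit bi-Lipschitz constants for the tangent-plane projection solely in terms of $\omega$ and $\beta<\omega/4$ requires careful use of reach-based estimates rather than a direct computation, but once these are in hand the rest is bookkeeping and the final constant $C_i$ depends only on $\alpha$, $\omega$, $\phi_i$ (through $a_i$), and $\rho_i$ (through $L_{\rho_i}$ and $\Diam(\cS)$), as claimed.
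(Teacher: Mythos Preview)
Your proof is correct and follows essentially the same approach as the paper: both use the product decomposition $|f_0\rho_i(x)-f_0\rho_i(y)|\leq \rho_i(x)|f_0(x)-f_0(y)|+|f_0(y)||\rho_i(x)-\rho_i(y)|$, the reach-based comparability $d_\cS(x,y)\leq 2\norm{x-y}_2$ from \citet{niyogi2008finding}, and the bi-Lipschitz property of the tangent-plane chart. Two small remarks: your segment argument for the mixed case $u\in S_i$, $v\notin S_i$ is actually more careful than the paper, which simply asserts that the extension by zero is ``trivial''; and your constant picks up a $\Diam(\cS)^{1-\alpha}$ factor where the paper uses only $(\omega/2)^{1-\alpha}$, but since you only need the H\"{o}lder bound on $U_i$ (where $d_\cS(x,y)<\omega$ already), you can replace $\Diam(\cS)$ by $\omega$ and match the stated dependence exactly.
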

\begin{proof}
    We only need to show the Lipschitz continuity on the support of $f_i\circ\phi_i^{-1}$. Otherwise, the Lipschitz condition holds trivially since $f_i\circ\phi_i^{-1}$ is bounded and extended to the whole unit cube with $0$. 
    
    Suppose $x, y\in\supp(f_i\circ\phi_i^{-1})$ are two points in the support, then there exist $u,v\in \supp(\rho_i)\subset U_i$ such that $u=\phi_i^{-1}(x), v=\phi_i^{-1}(y)$. 
    By definition of the chart $(U_i,\phi_i)$, we have $\norm{u-v}_2\leq 2\beta<\frac{\omega}{2}$ and that 
    \[
        \norm{x-y}_2
        =\norm{a_i V_i^\top(u-v)}_2
        \leq\norm{u-v}_2
        <\frac{\omega}{2},
    \]
    since $a_i\leq 1$ and $V_i$ is orthonormal. According to Proposition 6.3 in \cite{niyogi2008finding}, the geodesic distance between $u$ and $v$ is upper bounded by the Euclidean distance in $\RR^D$ up to a constant factor:
    \[
        \dist_\cS(u,v)\leq \omega-\omega\sqrt{1-\frac{2\norm{u-v}_2}{\omega}}\leq 2\norm{u-v}_2.
    \]
    By Lemma 2 in \cite{chen2022nonparametric}, $\phi_i$ is a diffeomorphism as long as the Euclidean ball radius satisfies $\beta\leq\frac{\omega}{4}$. By our construction, $\beta<\frac{\omega}{4}$, thus $\phi_i^{-1}$ is differentiable with its Jacobian bounded. Therefore, there exists a constant $L_{i,1}$ such that 
    \begin{align*}
        L_{i,1}\coloneqq \sup_{z\in [0,1]^d}\norm{\nabla\phi_i^{-1}(z)}_{\mathrm{op}}<+\infty.
    \end{align*}
    where $\nabla \phi_i^{-1}(z)$ is the Jacobian of $\phi_i^{-1}$ at $z$ and $\norm{\cdot}_{\mathrm{op}}$ denotes the operator norm. 
    Also notice that $\rho_i$ is $C^\infty$, thus we conclude that there exist another constant $L_{i,2}$ such that 
    \begin{align*}
        &L_{i,2}\coloneqq \sup_{z\in [0,1]^d}\norm{\nabla(\rho_i\circ\phi_i^{-1})(z)}_2<+\infty.
    \end{align*}
    Combine the results together, we have 
    \begin{align*}
        &\abs{f_i\circ\phi_i^{-1}(x)-f_i\circ\phi_i^{-1}(y)} \\ 
        =& \abs{f_i(u) - f_i(v)} \\ 
        =& \abs{f(u)\rho_i(u) - f(v)\rho_i(v)} \\ 
        \leq& \abs{f(u)\rho_i(u) - f(u)\rho_i(v)} + \abs{f(u)\rho_i(v) - f(v)\rho_i(v)} \\ 
        \leq& \norm{f}_\infty\abs{\rho_i\circ\phi_i^{-1}(x) - \rho_i\circ\phi_i^{-1}(y)} + \abs{f(u) - f(v)} \\ 
        \leq& \norm{f}_\infty\sup_{z\in [0,1]^d}\norm{\nabla(\rho_i\circ\phi_i^{-1})(z)}_2\norm{x-y}_2 + L_f\cdot\dist_\cS^\alpha(u,v) \\ 
        \leq& \norm{f}_\infty \left(\frac{\omega}{2}\right)^{1-\alpha} L_{i,2}\cdot\norm{x-y}_2^\alpha + L_f 2^\alpha L_{i,1}^\alpha\cdot\norm{x-y}_2^\alpha \\ 
        \leq & C_i(\norm{f}_\infty + L_f)\norm{x-y}_2^\alpha
    \end{align*}
    where $C_i=\max(2^\alpha L_{i,1}^\alpha, (\frac{\omega}{2})^{1-\alpha} L_{i,2})$ is a constant depending on $\alpha,\omega$, $\phi_i$ and $\rho_i$. Denote $L_i\coloneqq C_i(\norm{f}_\infty + L_f)$, we conclude that $f_i\circ\phi_i^{-1}$ is $(L_i,\alpha)$-Lipschitz.
\end{proof}

\cref{lem:spline-local} further shows that Lipschitz functions on the unit cube $[0,1]^d$ can be arbitrarily approximated by first-order B-splines. The approximation error is $O(N^{-\alpha/d})$. 
\begin{lemma}\label{lem:spline-local}
    Let $f$ be an $(L,\alpha)$-Lipschitz function on the unit cube $[0,1]^d$ and take nonzero value only in the interior of the cube. For any $p\in\NN$, $p\geq 1$, $N=2^{pd}$, there exists a function $\tilde{f}_N$ in the form 
    \begin{align*}
        \tilde{f}_N=\sum_{j\in J(p)}c_{j}M_{p,j}
    \end{align*}
    such that 
    \begin{align*}
        \norm{\tilde{f}_N - f}_\infty\leq 2L dN^{-\alpha/d},
    \end{align*}
    where $\max_{j\in J(p)}c_j=\norm{f}_\infty$.
\end{lemma}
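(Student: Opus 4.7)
The plan is to define the coefficients by sampling $f$ at the peaks of the tent functions and then to exploit the partition-of-unity property of the first-order B-splines together with the Lipschitz modulus of $f$. Concretely, for each $j \in J(p)$ let $y_j = (j+\mathbf{1})/2^p \in [0,1]^d$ be the unique point at which $M_{p,j}$ attains its maximum value $1$, and set $c_j = f(y_j)$. Then $|c_j|\le\norm{f}_\infty$ for every $j$, giving the coefficient bound in the statement.

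Next I would record the key partition-of-unity fact: for $t\in[1/2^p,1]$ the univariate identity $\sum_{j=0}^{2^p-1}\psi(2^p t - j)=1$ holds, since exactly two consecutive tents overlap and sum to $1$; outside this range (i.e.\ $t\in[0,1/2^p]$) the sum equals $2^p t\in[0,1]$. Taking products across the $d$ coordinates, one gets $\sum_{j\in J(p)} M_{p,j}(x)=1$ on $[1/2^p,1]^d$ and $\sum_{j\in J(p)} M_{p,j}(x)\le 1$ elsewhere in $[0,1]^d$. I would then decompose the error as
\begin{equation*}
\tilde f_N(x)-f(x) \;=\; \sum_{j\in J(p)} \bigl(f(y_j)-f(x)\bigr)\,M_{p,j}(x) \;+\; f(x)\Bigl(\sum_{j\in J(p)} M_{p,j}(x)-1\Bigr).
\end{equation*}

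For the first term, the support of $M_{p,j}$ sits inside a cube of side $2/2^p$ centered near $y_j$, so whenever $M_{p,j}(x)\ne 0$ one has $\norm{y_j-x}_2\le \sqrt{d}\,2^{-p}$, and the Lipschitz bound gives $|f(y_j)-f(x)|\le L d^{\alpha/2}2^{-p\alpha}$. Summing against the $M_{p,j}(x)$'s and using $\sum_j M_{p,j}(x)\le 1$ yields a contribution of at most $L d^{\alpha/2}2^{-p\alpha}$. For the second term, the prefactor $|\sum_j M_{p,j}(x)-1|$ vanishes on $[1/2^p,1]^d$ and is at most $1$ on the boundary strip of width $1/2^p$; but on that strip the hypothesis that $f$ vanishes on the boundary of $[0,1]^d$ combined with Lipschitz continuity gives $|f(x)|\le L\,\mathrm{dist}(x,\partial[0,1]^d)^\alpha\le L\,2^{-p\alpha}$, so this term contributes at most $L\,2^{-p\alpha}$. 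Adding the two contributions and using $d^{\alpha/2}+1\le 2d$ together with $2^{-p\alpha}=N^{-\alpha/d}$ yields $\norm{\tilde f_N-f}_\infty\le 2Ld\,N^{-\alpha/d}$.

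The main obstacle is the boundary strip $[0,1]^d\setminus[1/2^p,1]^d$, on which the asymmetric index set $J(p)=\{0,\dots,2^p-1\}^d$ prevents the tents from forming an exact partition of unity; handling it cleanly is what motivates the hypothesis that $f$ be nonzero only in the interior and also accounts for the extra factor of $2$ in the claimed error bound. Everything else is a direct computation once the coefficients and the partition-of-unity decomposition are in place.
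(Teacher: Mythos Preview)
Your proof is correct but follows a different route from the paper. Both arguments choose the same coefficients $c_j=f(y_j)$ with $y_j=(j+\mathbf{1})/2^p$, but the paper argues by first observing that $\tilde f_N$ agrees with $f$ at every grid point, then for general $x$ compares to the nearest grid point $y$ via a coordinate-by-coordinate path and bounds each step using a claimed coordinate-wise $(L,\alpha)$-Lipschitz property of the piecewise-multilinear interpolant $\tilde f_N$. Your partition-of-unity decomposition
\[
\tilde f_N(x)-f(x)=\sum_{j}\bigl(f(y_j)-f(x)\bigr)M_{p,j}(x)+f(x)\Bigl(\sum_j M_{p,j}(x)-1\Bigr)
\]
is the standard approximation-theory argument for first-order spline interpolation and is arguably cleaner: it avoids having to justify that $\tilde f_N$ inherits the Lipschitz modulus along coordinate lines (a step the paper asserts but does not prove). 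Both proofs exploit the hypothesis $f|_{\partial[0,1]^d}=0$ to deal with the one-sided deficiency of $J(p)$---the paper uses it to make $\tilde f_N$ match $f$ at boundary grid points, you use it to bound $|f|$ on the strip where $\sum_j M_{p,j}<1$---and both land on the same constant $2Ld$.
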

\begin{proof}
    For any $p\in\NN$, $p\geq 1$, the index set $J(p)=\{0, 1, \dots, 2^p-2\}^d$ as defined in Step 2 of \cref{sec:appendix-approx-overview}. We denote $G(p)=\{2^{-k}y\mid y\in \NN^d, 0\leq y_k\leq 2^p, \forall k\in[d]\}$ as the set of all grid points. Let 
    \begin{align*}    
        \tilde{f}_N=\sum_{j\in J(p)}c_{j}M_{p,j},
    \end{align*}
    where $c_{j}=f(2^{-p}(j_1+1), \dots, 2^{-p}(j_d+1))$ for all $j\in J(p)$. 
    By definition of the first-order B-spline and that $f$ takes nonzero value only in the interior of the cube, we have $\tilde{f}_N(x)=0$ if $x_k\in\{0, 1\}$ for some $k\in[d]$ and thus $\tilde{f}_N(x)=f(x)$ for all $x\in G(p)$. Moreover, $\tilde{f}_N$ is a coordinate-wise $(L,\alpha)$-Lipschitz linear function.
    
    For any point $x\in[0, 1]^d$, there exists a grid point $y\in G(p)$ such that $\norm{x-y}_\infty\leq 2^{-p-1}$. Define a sequence of points $\{y^{(t)}\}_{t=0}^d$ as 
    \begin{align*}
        y^{(t)}_k=\begin{cases}
            y_k, & k\leq t,\\
            x_k, & k>t.
        \end{cases}
    \end{align*}
    We have $y^{(0)}=x$ and $y^{(d)}=y$. We can move the point $x$ to $y$ by changing one coordinate at a time following the sequence $\{y^{(t)}\}$. Thus we have 
    \begin{align*}
        \abs{\tilde{f}_N(x)-f(x)}
        =&\abs{\tilde{f}_N(y)-f(y) + f(y) - f(x) + \sum_{t=0}^{d-1}\left(\tilde{f}_N(y^{(t)})-\tilde{f}_N(y^{(t+1)})\right)}\\
        \stackrel{(a)}{\leq} & \abs{f(y) - f(x)} + \sum_{t=0}^{d-1}\abs{\tilde{f}_N(y^{(t)})-\tilde{f}_N(y^{(t+1)})}\\
        \stackrel{(b)}{\leq} & L\norm{x-y}_2^\alpha + L \sum_{t=0}^{d-1}\norm{y^{(t)}-y^{(t+1)}}_2^\alpha\\
        \stackrel{(c)}{\leq} & 2^{1-(p+1)\alpha}Ld,
    \end{align*}
    where (a) uses $\tilde{f}_N(y)=f(y)$ and the triangle inequality, (b) uses the Lipschitz continuity, and (c) is from the upper bound for norms. Since $x$ is arbitrarily chosen from the unit cube, we have 
    \[
        \norm{\tilde{f}_N-f}_\infty\leq 2^{1-(p+1)\alpha}L d.
    \]
    Plugging in $N=2^{pd}$ yields the result.
\end{proof}

\cref{prop:SCNN-spline} is a special case of Lemma 10 in \citet{liu2021besov} for first-order splines (first-order cardinal B-splines). It shows that a single-block CNN can approximate each first-order B-spline to arbitrary accuracy. 
\begin{proposition}[{\citet[Lemma 10]{liu2021besov}}]\label{prop:SCNN-spline}
    Let $p\in \NN$ and $j\in\NN^d$. There exists a constant $C$ depending only on $d$ such that for any $\epsilon\in(0, 1)$ and any $2\leq I\leq d$, there exists a single-block CNN $\hat{M}_{p,j}\in\cF^{\mathrm{SCNN}}(L,J,I,R,R)$ with $L=3+2\lceil\log_2(\frac{3}{C\epsilon})+5\rceil\lceil\log_2 d\rceil$, $J=80d$ and $R=4\vee 2^p$ that satisfies \[
        \norm{M_{p,j} - \hat{M}_{p,j}}_{L^\infty([0,1]^d)}\leq \epsilon,
    \]
    and $\hat{M}_{p,j}(x)=0$ for all $x\notin B_{p,j}\coloneqq\{x\in\RR^d \mid 2^{-p}j_k\leq x_k\leq 2^{-p}(j_k+2)\}$.
\end{proposition}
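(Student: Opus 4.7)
The plan is to construct $\hat{M}_{p,j}$ as a composition of two stages: (i) an exact realization of the $d$ univariate sawtooth factors $u_k := \psi(2^p x_k - j_k)$, and (ii) an approximate computation of the $d$-fold product $\prod_{k=1}^d u_k$, and then verify the composition fits into a single convolutional block with the claimed dimensions, weight bound, and support property.

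First, I would observe that $\psi(z) = \ReLU(z) - 2\ReLU(z-1) + \ReLU(z-2)$, so each $u_k$ is an affine--ReLU--affine map of the single coordinate $x_k$. Using the padding layer $P(x) = [x, 0, \dots, 0]$ and a filter of size $I \ge 2$, one convolution followed by ReLU produces the three numbers $\ReLU(2^p x_k - j_k - \ell)$ for $\ell \in \{0,1,2\}$ in separate channels, one per coordinate $k$; a second layer combines them linearly with coefficients $(1,-2,1)$ to deliver $u_k \in [0,1]$ in the $k$-th of $d$ channels. All weights here are bounded by $\max(2^p, 2) \le R = 4 \vee 2^p$, and outside the strip $x_k \in [2^{-p}j_k, 2^{-p}(j_k+2)]$ the entry $u_k$ is exactly zero.

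Second, I would use a Yarotsky-style multiplication gadget to compute $\prod_k u_k$ through a balanced binary tree of pairwise products. The building block is a ReLU subnetwork $\mathrm{mult}_\delta \colon [0,1]^2 \to [0,1]$, of depth $O(\log(1/\delta))$, approximating $ab$ via $ab = \tfrac{1}{2}\bigl((a+b)^2 - a^2 - b^2\bigr)$ and the iterated-sawtooth approximation of $x \mapsto x^2$, designed to satisfy $\mathrm{mult}_\delta(a,0) = \mathrm{mult}_\delta(0,b) = 0$ exactly. Applying $\mathrm{mult}_\delta$ along a binary tree of depth $\lceil \log_2 d \rceil$ and propagating the pointwise error inductively (intermediate values remain in $[0,1]$, so each multiplication adds at most $\delta$ while Lipschitz-stability multiplies previous errors by constants of order $1$) yields total error of order $d\delta$. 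Choosing $\delta$ of order $\epsilon / d$ and absorbing the $d$-dependent constant into $C = C(d)$ produces the stated depth $L = 3 + 2\lceil \log_2(3/(C\epsilon)) + 5 \rceil \lceil \log_2 d \rceil$, and a width of $80d$ channels is enough to carry the $d$ univariate factors together with the partial products and identity skips required by the squaring construction.

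Third, I would check the remaining bookkeeping: all weights in the multiplication stage are dimensionless constants from the squaring identity and are bounded by $R$; and the zero-preservation property propagates through the tree, so whenever $x \notin B_{p,j}$ some $u_k$ vanishes and hence $\hat{M}_{p,j}(x) = 0$, matching the support claim. The main obstacle is the second step, since one needs a multiplication network that is simultaneously $O(\log(1/\delta))$-deep, \emph{exactly} zero-preserving (so that the support property does not require post-hoc thresholding), and expressible in the one-sided stride-one CNN format used throughout the paper. This is exactly the content of Lemma 10 of \citet{liu2021besov}, whose construction I would invoke rather than re-derive.
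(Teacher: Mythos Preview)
The paper does not give its own proof of this proposition; it is simply quoted as a special case of Lemma~10 in \citet{liu2021besov} and used as a black box. Your proposal is therefore aligned with the paper: you sketch the two-stage construction (exact ReLU realization of the sawtooth factors, followed by a Yarotsky-style binary-tree product with zero-preservation) and then, like the paper, defer the full verification to the cited lemma. The sketch is accurate and the bookkeeping on depth, width, weight magnitude, and support is correct, so there is nothing to add beyond what the paper already does.
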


As a result, we show in \cref{prop:SCNN-local} that CNNs can approximate local functions.
\begin{proposition}\label{prop:SCNN-local}
    Let $\overbar{f}_i$ be defined as in \eqref{eq:def-component}. 
    For $N=2^{pd}$ and any $2\leq I\leq D$, there exists a set of single-block CNNs $\{\hat{f}_{i,j}^\mathrm{SCNN}\}_{j\in J(p)}$ such that \[
        \norm{\sum_{j\in J(p)}\hat{f}_{i,j}^{\mathrm{SCNN}}-\overbar{f}_i}_{L^\infty([0,1]^d)}\leq 4L_idN^{-\alpha/d}.
    \]
    Each single-block CNN $\hat{f}_{i,j}^{\mathrm{SCNN}}$ is in $\cF^{\mathrm{SCNN}}(L,J,I,R,R)$ with \[
        L=O(\log N), ~ J=80d, ~ R=O(N^{\frac{1}{d}}),
    \]
    where $O(\cdot)$ hides some constant depending on $d$ and $\alpha$.
\end{proposition}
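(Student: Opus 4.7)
}

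The plan is to combine the two pieces already in hand: the first-order B-spline approximation of Lipschitz functions on the unit cube (Lemma \ref{lem:spline-local}) and the single-block CNN realization of each spline (Proposition \ref{prop:SCNN-spline}). First I would apply Lemma \ref{lem:spline-local} to the local function $\overbar{f}_i$, which by Lemma \ref{lem:lip-component} is $(L_i,\alpha)$-Lipschitz on $[0,1]^d$ and vanishes near the boundary of the cube. This yields coefficients $\{c_{i,j}\}_{j\in J(p)}$ with $\max_j |c_{i,j}| \leq \|\overbar{f}_i\|_\infty$ such that the spline sum $\tilde{f}_i = \sum_{j\in J(p)} c_{i,j} M_{p,j}$ satisfies $\|\tilde{f}_i - \overbar{f}_i\|_\infty \leq 2L_i d N^{-\alpha/d}$.

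Next, for a tolerance $\epsilon_1$ to be fixed below, I would invoke Proposition \ref{prop:SCNN-spline} to get, for every $j\in J(p)$, a single-block CNN $\hat{M}_{p,j} \in \cF^{\mathrm{SCNN}}(L,J,I,R,R)$ with $\|\hat{M}_{p,j} - M_{p,j}\|_{L^\infty([0,1]^d)}\leq \epsilon_1$ and, crucially, $\hat{M}_{p,j}(x) = 0$ outside $B_{p,j}$. I then define $\hat{f}_{i,j}^{\mathrm{SCNN}} \coloneqq c_{i,j}\, \hat{M}_{p,j}$. Because a single-block CNN has the form $W \cdot \Conv_{\cW,\cB}(P(x))$, the scalar $c_{i,j}$ is absorbed into the outer weight matrix $W$, so $\hat{f}_{i,j}^{\mathrm{SCNN}}$ lies in the same class $\cF^{\mathrm{SCNN}}$ with the outer-layer magnitude bound multiplied by $|c_{i,j}| \leq \|\overbar{f}_i\|_\infty \leq L_i$ (up to constants from Lemma \ref{lem:lip-component}).

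To bound the aggregate error, I would split via the triangle inequality,
\begin{equation*}
\Big\|\sum_{j\in J(p)} \hat{f}_{i,j}^{\mathrm{SCNN}} - \overbar{f}_i\Big\|_\infty
\;\leq\; \Big\|\sum_{j\in J(p)} c_{i,j}(\hat{M}_{p,j} - M_{p,j})\Big\|_\infty + \|\tilde{f}_i - \overbar{f}_i\|_\infty.
\end{equation*}
For the first term I would use the fact that each $x\in[0,1]^d$ lies in at most $2^d$ of the supports $B_{p,j}$ (and, by $\hat{M}_{p,j}(x)=0$ outside $B_{p,j}$, the perturbation at $x$ involves only those indices), giving a pointwise bound of $2^d \max_j |c_{i,j}|\, \epsilon_1$. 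Choosing $\epsilon_1 = 2^{1-d} L_i d\, \|\overbar{f}_i\|_\infty^{-1} N^{-\alpha/d}$ makes this piece $\leq 2L_i d N^{-\alpha/d}$, and summing with the spline error yields the claimed $4 L_i d N^{-\alpha/d}$.

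Finally I would read off the network size from Proposition \ref{prop:SCNN-spline} with this $\epsilon_1$: the depth becomes $L = 3 + 2\lceil \log_2(3/(C\epsilon_1)) + 5\rceil \lceil \log_2 d\rceil = O(\log N)$ (constants depending on $d,\alpha$, and absorbing $\log L_i$ and $\log\|\overbar f_i\|_\infty$ into the hidden constants), the width is $J = 80d$, and the weight bound is $R = 4 \vee 2^p \vee |c_{i,j}| R = O(N^{1/d})$, since $N = 2^{pd}$. The main technical care point is the scaling step: I expect the one non-trivial issue to be verifying that absorbing $c_{i,j}$ into the outer weight matrix $W$ of $\hat{M}_{p,j}$ keeps $\hat{f}_{i,j}^{\mathrm{SCNN}}$ inside the prescribed single-block class with $R = O(N^{1/d})$ rather than introducing an extra $\|\overbar{f}_i\|_\infty$ factor that would break the stated size; this is handled by noting that $\|\overbar{f}_i\|_\infty$ is absorbed into the hidden constant, consistent with the conventions of Theorem \ref{thm:approx-error-Q}.
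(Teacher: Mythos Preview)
Your proposal is correct and follows the same two-step structure as the paper: spline approximation via Lemma~\ref{lem:spline-local}, then CNN realization of each spline via Proposition~\ref{prop:SCNN-spline}, with $\hat{f}_{i,j}^{\mathrm{SCNN}} = c_{i,j}\hat{M}_{p,j}$. The one substantive difference is how you bound the aggregate spline-to-CNN error. You exploit the locality guarantee $\hat{M}_{p,j}(x)=0$ outside $B_{p,j}$ to argue that at most $2^d$ terms contribute at any point, yielding $2^d\max_j|c_{i,j}|\,\epsilon_1$ and allowing the larger tolerance $\epsilon_1 = O(N^{-\alpha/d})$. The paper instead takes the crude triangle-inequality bound $\sum_j |c_{i,j}|\,\epsilon_1 \leq |J(p)|\,\|\overbar{f}_i\|_\infty\,\epsilon_1 = N L_i \epsilon_1$ (using $\|\overbar{f}_i\|_\infty\leq L_i$ from Lemma~\ref{lem:lip-component}) and compensates by picking the smaller $\epsilon_1 = 2d\,N^{-1-\alpha/d}$.

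Both routes land on $4L_i d N^{-\alpha/d}$ and $L=O(\log N)$, so neither buys anything asymptotically. The paper's cruder bound has one small advantage you should be aware of: its $\epsilon_1$ depends only on $d,\alpha,N$, so the depth constant in $L=O(\log N)$ genuinely hides only $d,\alpha$ as the statement claims. Your $\epsilon_1$ carries a factor $L_i/\|\overbar{f}_i\|_\infty$, so $\log(1/\epsilon_1)$ picks up $\log(\|\overbar{f}_i\|_\infty/(L_i d))$; you would need to argue this ratio is controlled by constants depending only on $d,\alpha$ (it is, via the structure of $L_i$ in Lemma~\ref{lem:lip-component}, but strictly speaking the bound also involves $\omega,\phi_i,\rho_i$). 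If you want to match the statement's dependency claim exactly, the paper's choice of $\epsilon_1$ is cleaner.
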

\begin{proof}
    By \cref{lem:spline-local} and the $(L_i,\alpha)$-Lipschitzness of $\overbar{f}_i$, for $p\geq 1$, $N=2^{pd}$, there exists a function $\tilde{f}_i$ in the form 
    \begin{align*}
        \tilde{f}_i=\sum_{j\in J(p)}c_{i,j}M_{p,j}
    \end{align*}
    such that 
    \begin{align*}
        \norm{\tilde{f}_i - \overbar{f}_i}_\infty\leq 2L_i dN^{-\alpha/d}.
    \end{align*}
    By \cref{prop:SCNN-spline}, there exists a collection of single-block CNNs $\{\hat{M}_{p,j}\}_{j\in J(p)}$ that approximates the first-order B-splines $\{M_{p,j}\}_{j\in J(p)}$. Suppose $\norm{\hat{M}_{p,j} - M_{p,j}}_{L^\infty}\leq\epsilon_1$ for all $j\in J(p)$ and some $\epsilon_1\in(0, 1)$, we have 
    \begin{align*}
        \norm{\sum_{j\in J(p)}c_{i,j}\hat{M}_{p,j} - \overbar{f}_i}_\infty
        &\leq \abs{J(p)}\norm{\overbar{f}_i}_\infty\epsilon_1 + 2L_i d N^{-\alpha/d}\\
        &\leq N L_i\epsilon_1 + 2L_i d N^{-\alpha/d},
    \end{align*}
    where the second inequality is from $\norm{\overbar{f}_i}_\infty=\norm{f}_\infty\leq L_i$ in \cref{lem:lip-component}.
    By letting $\epsilon_1=2d N^{-\frac{d+\alpha}{d}}$, we obtain
    \begin{align*}
        \norm{\sum_{j\in J(p)}c_{i,j}\hat{M}_{p,j} - \overbar{f}_i}_\infty
        &\leq 4L_idN^{-\alpha/d}.
    \end{align*}
    According to \cref{prop:SCNN-spline}, for any $j\in J(p)$, the single-block CNN $\hat{M}_{p,j}\in\cF^{\mathrm{SCNN}}(L,J,I,R,R)$ with 
    \begin{align*}
        &L=3+2\left\lceil \frac{d+\alpha}{d}\log_2\frac{3^{\frac{d}{d+\alpha}}N}{2C_0 d}\right\rceil\lceil\log_2 d\rceil, ~ J=80d, \\
        &2\leq I\leq d, ~ R=4\vee N^{\frac{1}{d}}.
    \end{align*}
    By letting $\hat{f}_{i,j}^{\mathrm{SCNN}}=c_{i,j}\hat{M}_{p,j}$ we prove the proposition. 
\end{proof}

The rest is to show that the multiplication operator and the indicator function can be approximated by CNNs. \cref{prop:multiplication} shows that CNN can approximate the multiplication of scalars. 
\begin{proposition}\label{prop:multiplication}
    Let $\times$ be the scalar multiplication operator. For any $\eta\in(0, 1)$, there exists a single-block CNN $\hat{\times}$ such that \[
        \norm{a\times b - \hat{\times}(a,b)}_{\infty}\leq \eta,
    \]
    where $a, b$ are functions uniformly bounded by $c_0$.
    The approximated single-block CNN $\hat{\times}$ is in $\cF^{\mathrm{SCNN}}(L,J,I,R,R)$ with $L=O(\log\frac{1}{\eta})+D$ layers, $J=24$ channels and any filter size $I$ such that $2\leq I\leq D$. All parameters are bounded by $R=(c_0^2 \lor 1)$. Furthermore, the weight matrix in the fully connected layer of $\hat{\times}$ has nonzero entries only in the first row.
\end{proposition}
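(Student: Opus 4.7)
The plan is to reduce scalar multiplication to squaring via the polarization identity, use Yarotsky's sawtooth construction to approximate the square function by a ReLU feedforward network of depth $O(\log (1/\eta))$, and then convert this feedforward network into a single-block CNN using the known equivalence between fully connected and convolutional architectures (e.g., Theorem 1 of \citet{oono2019approximation} or Lemma 8 of \citet{liu2021besov}). The polarization identity
\begin{align*}
a\times b = \tfrac{1}{2}\bigl((a+b)^2 - a^2 - b^2\bigr)
\end{align*}
reduces the bivariate multiplication problem to three univariate squaring problems applied to inputs of magnitude at most $2c_0$. After rescaling by $1/(2c_0)$ so that each argument lies in $[-1,1]$, it suffices to approximate $x\mapsto x^2$ on $[0,1]$ (using $x^2=(|x|)^2$ with $|x|=\ReLU(x)+\ReLU(-x)$) to accuracy $\eta/(6c_0^2)$; multiplying by $(2c_0)^2$ then scales the error back and explains the bound $R=(c_0^2\vee 1)$ on the output weights.

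Next, I would invoke Yarotsky's classical construction: define the sawtooth $g(x)=2\ReLU(x)-4\ReLU(x-1/2)$ and its iterates $g_s=g\circ\cdots\circ g$ ($s$ times); then $f_k(x) = x - \sum_{s=1}^{k} 2^{-2s} g_s(x)$ satisfies $\|f_k-x^2\|_{L^\infty([0,1])} \leq 2^{-2k-2}$. Choosing $k=\lceil\tfrac{1}{2}\log_2(1/\eta)\rceil+O(1)$ yields the required accuracy with a ReLU feedforward network of depth $O(\log(1/\eta))$ and constant width. Running three such square approximators in parallel (one on $a+b$, one on $a$, one on $b$) and taking the linear combination in the final layer produces an FNN realizing $\hat\times(a,b)$; the parallelization, together with positive/negative-part decompositions at the input, can be arranged within $24$ parallel channels.

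To obtain the single-block CNN structure with filter size $I\in[2,D]$, I would apply the FNN-to-CNN conversion lemma: any $M$-layer FNN with width $J_0$ acting on a $D$-dimensional input can be simulated by a one-sided stride-one CNN with filter size $I\in[2,D]$, the same width per channel, and $M+O(D)$ layers (the extra $O(D)$ layers come from propagating the relevant input coordinates across the spatial dimension via convolutions of filter size $\geq 2$). This accounts for the additive $D$ term in the stated depth $L=O(\log(1/\eta))+D$. The final fully connected layer, which reads the entry corresponding to the output of the Yarotsky combiner, needs only one nonzero row (extracting the scalar output from one spatial-channel location), giving the claimed structure of the output weight matrix.

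The main obstacle will be the bookkeeping: verifying that the rescaling of inputs from $[-c_0,c_0]$ to $[-1,1]$ and the reverse rescaling of outputs keep all convolutional weights and biases bounded by $c_0^2\vee 1$ rather than blowing up with $c_0$, and that the positive/negative part decompositions together with the three parallel squaring subnetworks actually fit within $24$ channels. A secondary technicality is confirming that the sawtooth iterates can be implemented with filter size as small as $2$ without increasing the channel count; this follows from standard padding and identity-propagation tricks, but needs to be written out carefully to certify the exact channel and depth constants.
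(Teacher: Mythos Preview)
Your proposal is correct and follows essentially the same approach as the paper. The paper's proof is terser: it directly invokes Proposition~3 of \citet{yarotsky2017error} (which already packages the polarization identity and sawtooth construction into a width-$6$ ReLU FNN approximating multiplication on $[-c_0,c_0]^2$ with parameters bounded by $c_0^2$) and then applies Lemma~8 of \citet{liu2021besov} for the FNN-to-CNN conversion, yielding the additive $D$ in depth and the channel count $24$; you simply unpack the internals of Yarotsky's result rather than citing it as a black box.
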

\begin{proof}
    By Proposition 3 in \citet{yarotsky2017error}, there exists a feed-forward ReLU network that can approximate the multiplication operator between values with magnitude bounded by $c_0$ with $\eta$ error. Such a feed-forward network has $O(\log\frac{1}{\eta})$ layers, each layer has its width bounded by $6$, and all parameters are bounded by $c_0^2$. Therefore, such a feed-forward neural network is sufficient to approximate $\times$ with $\eta$ error in $L^\infty$-norm, since the function $a, b$ are uniformly bounded by $c_0$.

    Furthermore, by Lemma 8 in \citet{liu2021besov}, we can express the aforementioned feed-forward network with a single-block CNN in $\cF^{\mathrm{SCNN}}(L,J,I,R,R)$, where $L, J, I, R$ are specified in the statement of the proposition.
\end{proof}

The indicator function $\ind_{U_i}$ can be written as the composition of the indicator function of the closed interval $[0, \beta^2]$ and the squared Euclidean distance function $d_i\colon \cS\to\RR_+$ to the ball center $\cbb_i$: 
\begin{equation}\label{eq:indicator-decompose}
    \ind_{U_i}(x)={\ind_{[0,\beta^2]}}\circ {d_i(x)},
\end{equation}
where $d_i(x)=\norm{x-\cbb_i}_2^2$. As \cref{prop:SCNN-chart} shows, these components can be approximated by CNNs.

\begin{proposition}[{\citet[Lemma 9]{liu2021besov}}]\label{prop:SCNN-chart}
    Let $d_i$ and $\ind_{[0,\beta^2]}$ be defined as in \eqref{eq:indicator-decompose}. For any $\theta\in(0, 1)$ and $\Delta\geq 8B^2D\theta$, there exists a single-block CNN $\hat{d}_i$ approximating $d_i$ such that \[
        \norm{\hat{d}_i - d_i}_{\infty}\leq 4B^2D\theta, 
    \] and a single-block CNN $\hat{\ind}_\Delta$ approximating $\ind_{[0,\beta^2]}$ with 
    \begin{equation*}
        \hat{\ind}_\Delta(x)
        = \begin{cases}
            1, & \text{ if } x\leq (1-2^{-k})(\beta^2 - 4B^2D\theta),\\
            0, & \text{ if } x\geq \beta^2 - 4B^2D\theta,\\
            2^k((\beta^2-4B^2D\theta)^{-1}x - 1), & \text{ otherwise, }
        \end{cases}
    \end{equation*}
    for $x\in\cS$. The single-block CNN for $\hat{d}_i$ has $O(\log(1/\theta)+D)$ layers, $6D$ channels and all weight parameters are bounded by $4B^2$. The single-block CNN $\hat{\ind}_\Delta$ has $\left\lceil \log(\beta^2/\Delta)\right\rceil$ layers, $2$ channels and all weight parameters are bounded by $\max\left(2, \abs{\beta^2 - 4B^2D\theta}\right)$. 

    As a result, for any $x\in\cS$, $\hat{\ind}_\Delta \circ \hat{d}_i(x)$ gives an approximation of $\ind_{U_i}$ satisfying 
    \begin{equation*}
        \hat{\ind}_\Delta \circ \hat{d}_i(x)
        = \begin{cases}
            1, & \text{ if } x\in U_i \text{ and } d_i(x) \leq \beta^2-\Delta,\\
            0, & \text{ if } x\notin U_i,\\
            \text{between $0$ and $1$}, & \text{ otherwise. }
        \end{cases}
    \end{equation*}
\end{proposition}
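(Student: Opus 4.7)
The plan is to build single-block CNNs for the squared distance and for the one-dimensional indicator separately, then verify that their composition has the claimed piecewise behavior.

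First I would construct $\hat{d}_i$ approximating $d_i(x) = \sum_{k=1}^D (x_k - c_{i,k})^2$. Each affine map $x_k \mapsto x_k - c_{i,k}$ is a single linear operation, and each univariate square on $[-2B,2B]$ can be approximated to accuracy $4B^2 \theta$ by the ReLU construction of \citet[Proposition~3]{yarotsky2017error}, which uses $O(\log(1/\theta))$ layers, constant width, and weights bounded by $4B^2$. Summing the $D$ parallel approximators gives a feed-forward network with additive error $\leq 4B^2 D \theta$ and $O(D)$ width. Using Lemma~8 of \citet{liu2021besov} to re-encode this FNN as a single-block stride-one CNN yields $\hat{d}_i$; the extra $D$ layers and the $6D$ channel count in the statement arise from the padding and shift operations needed to align all $D$ coordinates within the CNN template.

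Next I would construct $\hat{\ind}_\Delta$ as the three-piece ramp indicated in the statement, whose transition slope is $2^k/(\beta^2 - 4B^2 D \theta)$. To realize this slope without letting any single weight exceed $\max(2,\, \beta^2 - 4B^2 D \theta)$, I would iterate a ``doubling'' gadget $z \mapsto \ReLU(2z) - \ReLU(2z - 1)$ (with appropriate affine rescalings) $k$ times, so that the effective slope grows geometrically while the depth is $k = \lceil \log_2(\beta^2/\Delta)\rceil$ and the width stays at $2$. Lemma~8 of \citet{liu2021besov} again converts this FNN into a two-channel single-block CNN.

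Finally I would verify the composition. If $x \in U_i$ with $d_i(x) \leq \beta^2 - \Delta$, the $L^\infty$ bound of Step~1 together with the hypothesis $\Delta \geq 8 B^2 D \theta$ give
\begin{equation*}
\hat{d}_i(x) \leq d_i(x) + 4B^2 D \theta \leq \beta^2 - \Delta + 4B^2 D \theta \leq \beta^2 - \Delta/2.
\end{equation*}
With $k = \lceil \log_2(\beta^2/\Delta)\rceil$ one checks that $\beta^2 - \Delta/2 \leq (1-2^{-k})(\beta^2 - 4B^2 D\theta)$, so $\hat{\ind}_\Delta \circ \hat{d}_i(x) = 1$. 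Conversely, if $x \notin U_i$ then $d_i(x) > \beta^2$, hence $\hat{d}_i(x) > \beta^2 - 4B^2 D \theta$ and the composition equals $0$. Remaining points take intermediate values in $[0,1]$ by the monotone piecewise-linear form of $\hat{\ind}_\Delta$. The main obstacle is the bookkeeping in Step~2: pinning down an FNN whose depth is exactly $\lceil \log_2(\beta^2/\Delta)\rceil$, whose individual weights do not blow up, and which produces the precise three-piece profile is what forces the particular ramp definition in the statement; everything else follows from standard Yarotsky-style constructions and from the tight compatibility condition $\Delta \geq 8 B^2 D \theta$.
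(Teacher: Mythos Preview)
The paper does not prove this proposition at all: it is quoted verbatim from \citet[Lemma~9]{liu2021besov} and used as a black box. Your sketch therefore cannot be compared line-by-line against anything in the present paper, but it is a faithful reconstruction of the construction behind that cited lemma: Yarotsky's square approximator stacked $D$ times for $\hat{d}_i$, a depth-$k$ slope-doubling ramp for $\hat{\ind}_\Delta$, and Lemma~8 of \citet{liu2021besov} to rewrite both FNNs as single-block CNNs. That is exactly the route Liu et al.\ take, so at the level of the construction your plan is correct and matches the source.

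There is, however, a genuine error in your final verification. You weaken $\hat{d}_i(x)\le \beta^2-\Delta+4B^2D\theta$ to $\hat{d}_i(x)\le \beta^2-\Delta/2$ and then assert that $\beta^2-\Delta/2\le (1-2^{-k})(\beta^2-4B^2D\theta)$. That last inequality is false in general: take $\beta^2=1$, $\Delta=0.1$, $4B^2D\theta=0.01$, so $k=\lceil\log_2 10\rceil=4$; the left side is $0.95$ while the right side is $(15/16)(0.99)\approx 0.928$. The loosening to $\beta^2-\Delta/2$ throws away exactly the margin you need. You should instead compare $\beta^2-\Delta+4B^2D\theta$ directly to the threshold and track the constants carefully; even then the inequality is delicate near the boundary $\Delta=8B^2D\theta$, and getting it to close is precisely why the original lemma fixes both $k$ and the ramp endpoints the way it does. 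So the bookkeeping you flag as ``the main obstacle'' in Step~2 is real, but the obstacle also reappears in Step~3, and your current argument there does not go through.
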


\subsection{Supporting Lemmas for CNN Architecture}
In this section, we introduce several lemmas for (single-block) CNN architecture. 
\cref{lem:SCNN-comp} states that the composition of two single-block CNNs can be expressed as one single-block CNN with augmented architecture.
\begin{lemma}[{\citet[Lemma 13]{liu2021besov}}]\label{lem:SCNN-comp}
    Let $\cF_1=\cF^{\mathrm{SCNN}}(L_1,J_1,I_1,R_1,R_1)$ be a CNN architecture from $\RR^D\to\RR$ and $\cF_2=\cF^{\mathrm{SCNN}}(L_2,J_2,I_2,R_2,R_2)$ be a CNN architecture from $\RR\to\RR$. Assume the weight matrix in the fully connected layer of $\cF_1$ and $\cF_2$ has nonzero entries only in the first row. Then there exists a CNN architecture $\cF_3=\cF^{\mathrm{SCNN}}(L,J,I,R,R)$ from $\RR^D\to\RR$ with \[
        L=L_1+L_2, ~ J=\max(J_1,J_2), ~ I=\max(I_1, I_2), ~ R=\max(R_1, R_2)
    \]
    such that for any $f_1\in\cF_1$ and $f_2\in\cF_2$, there exists $f\in\cF_3$ such that $f(x)=f_2\circ f_1(x)$. Furthermore, the weight matrix in the fully connected layer of $\cF_3$ has nonzero entries only in the first row.
\end{lemma}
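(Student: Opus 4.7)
My plan is to construct $f_3 \in \cF_3$ with $f_3 \equiv f_2 \circ f_1$ explicitly by concatenating $f_1$'s convolutional block with an embedded version of $f_2$'s convolutional block, folding the junction into a single convolutional layer so that the total depth is exactly $L_1 + L_2$. Two structural observations drive the construction. First, by the first-row assumption on $W_1$, the scalar $f_1(x) = W_1 \otimes G_1(x) + b_1$ is an affine functional of only the first row of $G_1(x)$, namely $f_1(x) = \sum_l (W_1)_{1,l}\,[G_1(x)]_{1,l} + b_1$. Second, since $f_2\colon \RR\to\RR$ has spatial dimension $1$, zero padding makes each of its convolutional layers effectively use only the first spatial slice $\cW^{(l)}_{:,1,:}$ of its filter; thus $f_2$ is, as a mapping, a scalar-input feed-forward ReLU network whose operations can be embedded at a single spatial location of a width-$D$ CNN via filter-size-$1$ convolutions.

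With these in hand, I would build $f_3$'s convolutional block in three phases. \emph{Phase A} (layers $1$ through $L_1$) copies $f_1$'s conv block verbatim, producing $G_1(x) \in \RR^{D \times C_1}$. \emph{Phase B} (layer $L_1+1$, the \emph{fusion layer}) is a filter-size-$1$ convolution with filter $\cW^{\mathrm{fus}}_{j,1,l} = v_j (W_1)_{1,l}$ and bias at position $1$ equal to $v_j b_1 + \beta_j$, where $v_j$ and $\beta_j$ are the effective per-channel weight and bias of $f_2$'s first convolutional layer (which, owing to $f_2$'s unit spatial dimension, reduce to scalar factors). A direct computation shows that this layer's output at position $(1,j)$ equals $\ReLU(v_j f_1(x) + \beta_j)$, which is exactly $f_2$'s first-layer activation on $y = f_1(x)$. \emph{Phase C} (layers $L_1+2$ through $L_1+L_2$) embeds $f_2$'s remaining $L_2-1$ conv layers by keeping the slice $\cW^{(l)}_{:,1,:}$ and zeroing out $\cW^{(l)}_{:,i,:}$ for $i \geq 2$; this filter localization guarantees that position $1$'s evolution depends only on position $1$'s prior values, and in particular is not contaminated by the ``garbage'' values produced at positions $k \geq 2$ by the fusion layer. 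Finally, the fully connected readout of $f_3$ uses $W \in \RR^{D\times C_2}$ with $W_{1,:} = (W_2)_{1,:}$, $W_{k,:} = 0$ for $k \geq 2$, and bias $b_2$; this preserves the first-row-only property and extracts $f_2(f_1(x))$ from position $1$. An induction on the layer index in Phase C then confirms $f_3 \equiv f_2 \circ f_1$, and the architectural accounting gives depth $L_1 + 1 + (L_2 - 1) = L_1+L_2$, channel count at most $\max(J_1, J_2)$, and filter size at most $\max(I_1, I_2)$, all matching the claim.

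The main technical obstacle I anticipate is controlling the magnitudes of the fusion layer's parameters: its filter entries $v_j (W_1)_{1,l}$ are a priori bounded only by $R_1 \cdot R_2$, whereas the lemma advertises the uniform bound $R = \max(R_1, R_2)$. Reaching this tighter bound likely requires a rescaling argument exploiting the positive homogeneity of ReLU: multiply the fusion layer's filter and bias by a factor $\lambda \leq 1$ chosen so that its entries fall below $R$, then propagate the compensating factor $1/\lambda$ through subsequent ReLU layers (rescaling their biases accordingly) and finally absorb it into the fully connected readout, whose weight bound is also $R$. The algebraic identity $f_3 \equiv f_2 \circ f_1$ is essentially immediate from the construction, so this magnitude redistribution is the only genuinely technical point beyond routine bookkeeping.
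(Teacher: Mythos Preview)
The paper does not supply its own proof of this lemma; it is quoted verbatim from \citet[Lemma~13]{liu2021besov} and used as a black box. So there is no in-paper argument to compare against. That said, your construction is the natural one and is correct at the level of the functional identity: Phase~A reproduces $G_1(x)$, the fusion layer correctly computes $\ReLU(v_j f_1(x)+\beta_j)$ at spatial position $1$ (your algebra $v_j(f_1(x)-b_1)+v_jb_1+\beta_j$ is right, using the first-row assumption on $W_1$), and the filter-size-$1$ embedding of $f_2$'s remaining layers isolates position $1$ from the garbage at positions $k\geq 2$. The depth, channel, and filter-size counts all come out as stated.

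The obstacle you flag is real, and your proposed fix does not close it in general. Pushing a factor $\lambda$ through the ReLU chain and absorbing $1/\lambda$ in the final fully connected weight requires simultaneously $\lambda R_1R_2\leq R$ (fusion filter) and $R_2/\lambda\leq R$ (readout weight), i.e.\ $R_2/R\leq\lambda\leq R/(R_1R_2)$. When $R=\max(R_1,R_2)$ this interval is nonempty only if $\min(R_1,R_2)\cdot R_2\leq R$, which fails for instance when $R_1,R_2>1$ are comparable. Spreading the rescaling across the $L_2-1$ Phase-C layers does not help either, since those layers' filters are already at their budget $R_2$. In the paper's actual invocations of this lemma (composing with $\hat\phi_i$, $\hat{\ind}_\Delta$, $\hat\times$, all of which have $O(1)$ parameter bounds), one of $R_1,R_2$ is $O(1)$ and the product bound $R_1R_2$ is already $O(\max(R_1,R_2))$ up to constants absorbed into the architecture's $O(\cdot)$ factors, so the issue is harmless there; but as a standalone statement with the exact bound $R=\max(R_1,R_2)$, either an additional hypothesis or a more delicate redistribution (or a slightly looser bound) is needed.
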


\cref{lem:SCNN-sum-mod} states that the sum of $n_0$ single-block CNNs with the same architecture can be expressed as the sum of $n_1$ single-block CNNs with modified width.
\begin{lemma}[{\citet[Lemma 7]{liu2022benefits}}]\label{lem:SCNN-sum-mod}
    Let $\{f_i\}_{i=1}^{n_0}$ be a set of single-block CNNs with architecture $\cF^{\mathrm{SCNN}}(L_0,J_0,I_0,R_0,R_0)$. For any integers $n$ and $\tilde{J}$ satisfying $1\leq n\leq n_0$, $n\tilde{J}=O(n_0J_0)$ and $\tilde{J}\geq J_0$, there exists an architecture $\cF^{\mathrm{SCNN}}(L,J,I,R,R)$ that gives a set of single-block CNNs $\{g_i\}_{i=1}^n$ such that \[
        \sum_{i=1}^n g_i(x)=\sum_{i=1}^{n_0}f_i(x).
    \]
    Such an architecture has \[
        L=O(L_0), ~ J=O(\tilde{J}), ~ I=I_0, ~ R=R_0.
    \]
    Furthermore, the fully connected layer of $f$ has nonzero elements only in the first row.
\end{lemma}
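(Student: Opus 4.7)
The plan is to construct $g_1, \ldots, g_n$ by first partitioning the $n_0$ source CNNs into $n$ groups of size $k := \lceil n_0/n \rceil$ and then packing each group into a single single-block CNN via parallel composition along disjoint channel blocks. The hypothesis $n\tilde{J} = O(n_0 J_0)$ together with $\tilde{J} \geq J_0$ guarantees that $\tilde{J} = \Omega(k J_0)$, so a channel count of $kJ_0$ fits inside the target width $O(\tilde{J})$. Since ReLU is applied entrywise and the convolutions here are stride-one and one-sided, independent computations in disjoint channel slots do not interact, which is exactly what makes the parallel-composition trick work for this architecture.

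For a single group $G_s = \{f_{i_1}, \ldots, f_{i_k}\}$, I would build $g_s$ with $L_0$ layers, $C := kJ_0$ channels, filter size $I_0$, and weight bound $R_0$, as follows. The padding layer produces $P(x) \in \RR^{D \times kJ_0}$, placing $x$ in channel $1$ and zeros elsewhere. Layer $1$ of $g_s$ uses a filter whose $j$-th block of $J_0$ output channels applies the first-layer filter of $f_{i_j}$ to input channel $1$ while putting zeros against all other input channels; this produces the $k$ independent layer-$1$ activations of the $f_{i_j}$'s in disjoint channel blocks. Layers $2, \ldots, L_0$ use block-diagonal filters and block-structured biases, so the $j$-th output block depends only on the $j$-th input block via $f_{i_j}$'s original weights. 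After $L_0$ layers we have $G_s(x) \in \RR^{D \times kJ_0}$ whose $j$-th block equals $\Conv_{\cW_{i_j}, \cB_{i_j}}(P(x))$. The fully connected layer places the original $W_{i_j}$ into the $j$-th column-block of $W$ and sets $b = \sum_j b_{i_j}$, yielding $W \otimes G_s(x) + b = \sum_{j} f_{i_j}(x)$. Summing $g_s$ over $s$ then recovers $\sum_{i=1}^{n_0} f_i(x)$.

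Parameter accounting is immediate: depth is $L_0$, width is $kJ_0 = O(\tilde{J})$, filter size is $I_0$, and every new weight is either a copy of an original weight or zero, so the magnitude bound $R_0$ is preserved. Moreover, if each $W_i$ of the source CNNs has nonzero entries only in its first row, then the block-assembled $W$ inherits this structure, preserving the ``first-row only'' property required for downstream use in \cref{lem:SCNN-comp}. The main obstacle I anticipate is purely bookkeeping: one must verify that the block-diagonal construction, propagated through the one-sided zero-padded convolutions, keeps the $k$ computations non-interacting at every layer and every spatial index, which requires carefully checking the receptive-field boundary behavior (in particular, that writing zeros into cross-block filter slots truly decouples the blocks under the specific convolution convention \eqref{eq:conv}). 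This verification is routine given the stride-one structure, but it is the only nontrivial check in the argument.
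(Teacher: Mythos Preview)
The paper does not give its own proof of this lemma; it is quoted verbatim from \citet[Lemma 7]{liu2022benefits}. Your parallel-composition-along-channel-blocks construction is exactly the standard argument for such results and is the approach used in that reference, so in substance your proposal is correct and matches what the paper relies on.

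Two small points. First, your sentence ``$n\tilde{J} = O(n_0 J_0)$ together with $\tilde{J} \geq J_0$ guarantees $\tilde{J} = \Omega(kJ_0)$'' reads the big-$O$ in the wrong direction: an upper bound on $n\tilde{J}$ cannot by itself give a lower bound on $\tilde{J}$. The intended hypothesis (and the way the lemma is applied in \cref{proof:approx-error-Q}, where one writes both $\tilde{M}\tilde{J}=O(N)$ and $N=O(\tilde{M}\tilde{J})$) is that $n\tilde{J}$ and $n_0 J_0$ are of the same order, i.e.\ $n\tilde{J}=\Theta(n_0 J_0)$; under that reading your width accounting $kJ_0 = O(\tilde{J})$ is correct. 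Second, the single-block CNN class \eqref{eq:SCNN} has no bias in the fully connected layer, so your line $b = \sum_j b_{i_j}$ is superfluous (and would otherwise threaten the bound $R=R_0$); simply drop it.
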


\cref{lem:SCNN-norm-mod} implies that one can slightly adjust the CNN architecture by re-balancing the weight parameter boundary of the convolutional blocks and that of the final fully connected layer. 
While re-balancing the weight would not affect the approximation power of the CNN, it will change the covering number of the CNN class, which is conducive to a different variance. 
\begin{lemma}[{\citet[Lemma 16]{liu2021besov}}]\label{lem:SCNN-norm-mod}
    Let $\lambda\geq 1$. For any $g\in\cF^{\mathrm{SCNN}}(L,J,I,R_1,R_2)$, there exists $f\in\cF^{\mathrm{SCNN}}(L,J,I,\lambda^{-1}R_1,\lambda^L R_2)$ such that $g(x)\equiv f(x)$.
\end{lemma}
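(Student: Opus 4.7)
The statement is a standard weight-rescaling identity that exploits the positive homogeneity of ReLU: for any scalar $c \geq 0$ and any tensor $Z$, $\ReLU(cZ) = c\cdot\ReLU(Z)$ entrywise. The plan is to push a factor of $\lambda^{-1}$ through each of the $L$ convolutional layers of the single-block CNN and absorb the accumulated factor $\lambda^{-L}$ into the final fully connected layer. Concretely, write $g(x) = W \cdot Z_{L+1} + b$, where $Z_1 = P(x)$ and $Z_{l+1} = \ReLU(\cW^{(l)} * Z_l + \cB^{(l)})$ for $l = 1, \ldots, L$. I would define $f$ by the reparameterization
\begin{equation*}
\tilde{\cW}^{(l)} = \lambda^{-1}\cW^{(l)}, \qquad \tilde{\cB}^{(l)} = \lambda^{-l}\cB^{(l)}, \qquad \tilde{W} = \lambda^L W, \qquad \tilde{b} = b,
\end{equation*}
and let $\tilde{Z}_1 = Z_1 = P(x)$, $\tilde{Z}_{l+1} = \ReLU(\tilde{\cW}^{(l)} * \tilde{Z}_l + \tilde{\cB}^{(l)})$.

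A short induction on $l$ verifies $\tilde{Z}_{l+1} = \lambda^{-l} Z_{l+1}$: the inductive step reads
\begin{equation*}
\tilde{Z}_{l+1} = \ReLU\bigl(\lambda^{-1}\cW^{(l)} * \lambda^{-(l-1)} Z_l + \lambda^{-l}\cB^{(l)}\bigr) = \ReLU\bigl(\lambda^{-l}(\cW^{(l)} * Z_l + \cB^{(l)})\bigr) = \lambda^{-l}Z_{l+1},
\end{equation*}
where convolution is linear in its filter argument and ReLU homogeneity is applied in the last equality (valid since $\lambda^{-l} > 0$). Consequently $f(x) = \tilde{W}\cdot\tilde{Z}_{L+1} + \tilde{b} = \lambda^L W \cdot \lambda^{-L} Z_{L+1} + b = g(x)$ pointwise, so $f \equiv g$.

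It remains to check the parameter bounds defining $\cF^{\mathrm{SCNN}}(L,J,I,\lambda^{-1}R_1,\lambda^L R_2)$. The depth $L$, maximal channel count $J$ and filter-size bound $I$ are unchanged by the reparameterization. For the convolutional parameters, $\|\tilde{\cW}^{(l)}\|_\infty = \lambda^{-1}\|\cW^{(l)}\|_\infty \leq \lambda^{-1}R_1$, and since $\lambda \geq 1$ and $l \geq 1$, $\|\tilde{\cB}^{(l)}\|_\infty = \lambda^{-l}\|\cB^{(l)}\|_\infty \leq \lambda^{-l}R_1 \leq \lambda^{-1}R_1$, so every convolutional parameter stays within the new bound $\lambda^{-1}R_1$. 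For the fully connected layer, $\|\tilde{W}\|_\infty = \lambda^L\|W\|_\infty \leq \lambda^L R_2$ and $|\tilde{b}| = |b| \leq R_2 \leq \lambda^L R_2$. Hence $f \in \cF^{\mathrm{SCNN}}(L,J,I,\lambda^{-1}R_1,\lambda^L R_2)$, completing the argument. There is no real obstacle in this proof; the only point to watch is the $l$-dependent bias rescaling $\tilde{\cB}^{(l)} = \lambda^{-l}\cB^{(l)}$, which is exactly what makes the pre-activation at layer $l$ factor cleanly as $\lambda^{-l}(\cW^{(l)}*Z_l + \cB^{(l)})$ so that ReLU homogeneity can be invoked.
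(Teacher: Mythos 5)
Your proof is correct, and it is the standard weight-rescaling argument based on the positive homogeneity of ReLU; the paper itself offers no proof of this lemma, citing it directly from Liu et al. (2021), whose Lemma 16 is established by exactly this reparameterization. One cosmetic note: the single-block class \eqref{eq:nn_class_single_no_mag} in this paper has no bias term in the final fully connected layer (it is $f(x)=W\cdot\Conv_{\cW,\cB}(P(x))$), so your handling of $\tilde{b}=b$ is superfluous but harmless; the $l$-dependent bias rescaling $\tilde{\cB}^{(l)}=\lambda^{-l}\cB^{(l)}$ and the bound check $\lambda^{-l}R_1\leq\lambda^{-1}R_1$ for $\lambda\geq 1$ are exactly the points that need care, and you handle both correctly.
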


Finally, we prove \cref{lem:SCNN-sum-CNN}, which states that the sum of single-block CNNs can be realized by a CNN of the form \eqref{eq:nn_class_no_mag}.
\begin{lemma}\label{lem:SCNN-sum-CNN}
    Let $\cF^{\mathrm{SCNN}}(L,J,I,R_1,R_2)$ be any CNN architecture from $\RR^D\to\RR$. Assume the weight matrix in the fully connected layer of $\cF^{\mathrm{SCNN}}(L,J,I,R_1,R_2)$ has nonzero entries only in the first row. For any positive integer $M$, there exists a CNN architecture $\cF(M,L,J+4,I,R_1,R_2(1\vee R_1^{-1}))$ such that for any $\{\hat{f}_i(x)\}_{i=1}^M\subset\cF^{\mathrm{SCNN}}(L,J,I,R_1,R_2)$, there exists $\hat{f}\in\cF(M,L,J+4,I,R_1,R_2(1\vee R_1^{-1}))$ with 
    \[
        \hat{f}(x)=\sum_{m=1}^M\hat{f}_m(x).
    \]
\end{lemma}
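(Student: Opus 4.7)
The plan is to build the target CNN block by block, where the $m$-th of the $M$ blocks simulates the $m$-th single-block CNN $\hat{f}_m$ and accumulates its contribution toward $\sum_{m=1}^M \hat{f}_m(x)$ in designated channels that the final fully connected layer reads out. Of the $J+4$ channels, I would reserve $J$ as ``compute channels'' devoted to running the $m$-th SCNN's convolutional stack $\Conv_{\cW_m,\cB_m}$ within block $m$; 2 as ``carrier channels'' storing the positive and negative parts $(x)_+,(x)_-$, so that $P(x)=(x)_+-(x)_-$ can be reconstructed at the start of each successive block (necessary because the ReLU activations destroy sign information as we pass through blocks); and 2 as ``accumulator channels'' which, after block $m$, encode the information required to recover the running sum $S_m=\sum_{\ell\le m}\hat{f}_\ell(x)$ via the final FC layer.

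Inside the $m$-th block, I would arrange the $L$ convolutional layers to perform three tasks in parallel. First, the compute channels reconstruct $P(x)$ from the carrier channels in the first layer and then apply the SCNN's convolutional block $\Conv_{\cW_m,\cB_m}$ over the remaining layers, so that at the block's output these channels hold $\Conv_m(P(x))$. Second, the four carrier and accumulator channels are propagated unchanged through the block via identity-like convolutional filters (a single nonzero entry paired with zero bias), which is valid since these channels carry non-negative values and therefore pass through ReLU unaffected. Third, the SCNN's fully connected operation $W_m \otimes$ is absorbed into the last layer of the block by routing its contribution into the accumulator channels via the ReLU identity $a=(a)_+-(a)_-$, so that after block $m$ the accumulator channels encode the update needed to go from $S_{m-1}$ to $S_m$. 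After block $M$, the final FC layer of the full CNN reads out $S_M$ by combining the accumulator channel entries with coefficients $\pm 1$ and taking $b=0$.

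For the weight bounds, I would apply a rescaling argument analogous to \cref{lem:SCNN-norm-mod}: if the routing and identity filters in the construction require weights outside $[-R_1,R_1]$, I would rescale each block's convolutional filters by a factor $\lambda\le 1\vee R_1^{-1}$ and absorb the compensating factor into the weights of the final FC layer. This yields the stated bound $R_2(1\vee R_1^{-1})$ on $W$ and $b$ while keeping all convolutional parameters within $R_1$. Note also that the assumption that the SCNN's FC weight matrix is nonzero only in its first row is precisely what enables the routing into a single designated position of the accumulator channels.

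The main obstacle is the detailed execution of the third task, since the inner product $W_m\otimes \Conv_m(P(x))$ aggregates $D\cdot C^{(L+1)}_m$ entries into a scalar while each convolutional layer of the block has filter size at most $I\le D$. The key observation is that the cross-position aggregation need not be performed entirely inside block $m$: the final FC layer of the full CNN can sum across all $D$ positions and $J+4$ channels. The strategy would be to use the last convolutional layer of block $m$ to collapse only the channel dimension into a single accumulator channel (a position-invariant operation achievable by a filter of size one) and to leave the cross-position aggregation to the final FC layer, while maintaining consistent accumulator entries across blocks $1$ through $M$. Verifying that the four carrier/accumulator propagations and the $J$-channel SCNN simulation fit simultaneously within the $L$-layer block budget, and that the position-dependent weights in $W_m$ can be matched by the position-dependent biases available in the last convolutional layer, is the delicate bookkeeping part, but it follows standard techniques in the CNN approximation literature such as those developed in \cite{liu2021besov,liu2022benefits}.
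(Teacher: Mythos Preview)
Your construction matches the paper's: four auxiliary channels (two carrying $(x)_+,(x)_-$ so that $x$ can be reconstructed after ReLUs, two accumulating the positive and negative parts of the running sum), block $m$ simulating $\Conv_{\cW_m,\cB_m}$ in the $J$ compute channels while the auxiliary channels are passed through by size-one identity filters, and a rescaling of the SCNN's fully connected weights $(W_m)_{1,:}$ by the factor $R_1/R_2$ inside the last convolutional layer of each block so that all filter entries stay within $R_1$, compensated by weights $\pm R_2/R_1$ in the final fully connected layer of $\hat f$.

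The one point that needs correction is your last paragraph. You frame the obstacle as having to aggregate $D\cdot C_m^{(L+1)}$ entries and propose to defer a cross-position sum to the final FC layer. But the first-row-only hypothesis on $W_m$ already gives $\hat f_m(x)=\sum_j (W_m)_{1,j}[\Conv_m(P(x))]_{1,j}$: only position $1$ of the compute output enters, so there is \emph{no} cross-position aggregation to do. The paper's last layer in block $m$ is exactly a size-one filter that adds this channel sum (split into $\pm$ parts for the ReLU) into the accumulator entries at position $1$; the accumulator entries at positions $2,\dots,D$ are garbage ($\star$ in the paper's notation), and the final FC reads only position $1$. If you instead summed the accumulator channel over all $D$ positions in the final FC, the garbage would contaminate the output. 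So the obstacle you flag does not actually arise, and the resolution you propose would not work as written; the first-row assumption already reduces the FC absorption to a single size-one filter per block, which is precisely the observation you make one sentence later.
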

\begin{proof}
    Denote the architecture of $\hat{f}_m$ with 
    \[
        \hat{f}_m(x)=W_m\cdot\Conv_{\cW_m,\cB_m}(x),
    \]
    where $\cW_m=\{\cW_m^{(l)}\}_{l=1}^L$, $\cB_m=\{B_m^{(l)}\}_{l=1}^L$. Furthermore, denote the weight matrix and bias in the fully connected layer of $\hat{f}$ with $\hat{W}$, $\hat{b}$ and the set of filters and biases in the $m$-th block of $\hat{f}$ with $\hat{\cW}_m$ and $\hat{\cB}_m$ respectively. The padding layer $\hat{P}$ in $\hat{f}$ pads the input $x$ from $\RR^D$ to $\RR^{D\times 4}$ with zeros. Each column denotes a channel.

    Let us first show that for each $m$, there exists some $\Conv_{\hat{\cW}_m,\hat{\cB}_m}\colon\RR^{D\times 4}\to\RR^{D\times 4}$ such that for any $Z\in\RR^{D\times 4}$ with the form
    \begin{align}
        Z=\begin{bmatrix}
            (x)_+ & (x)_- & \star & \star
        \end{bmatrix},
        \label{eq.Zpad}
    \end{align}
    where $(x)_+$ means applying $(\cdot \vee 0)$ to every entry of $x$ and $(x)_-$ means applying $-(\cdot \wedge 0)$ to every entry of $x$, so all entries in $Z$ are non-negative. We have
    \begin{align}
        \Conv_{\widehat{\cW}_m,\widehat{\cB}_m}(Z)=\begin{bmatrix}
            & & \frac{R_1}{R_2}(f_m(\bx) \vee 0) & -\frac{R_1}{R_2}(f_m(\bx) \wedge 0)\\
            \mathbf{0} & \mathbf{0} & \star & \star\\
            & & \vdots & \vdots\\
            & & \star & \star
        \end{bmatrix} + Z
        \label{eq:padZ}
    \end{align}
    where $\star$'s denotes entries that do not affect this result and may take any different value.
	
    For any $m$, the first layer of $f_m$ takes input in $\RR^D$. Thus, the filters in $\cW_m^{(1)}$ are in $\RR^D$. Again, we pad these filters with zeros to get filters in $\RR^{D\times 4}$ and construct $\widehat{\cW}_m^{(1)}$ such that
    \begin{align*}
        &(\widehat{\cW}_m^{(1)})_{1,:,:} =\begin{bmatrix}
            \mathbf{e}_1 & \mathbf{0} & \mathbf{0} & \mathbf{0}
        \end{bmatrix}, \\
        &(\widehat{\cW}_m^{(1)})_{2,:,:} =\begin{bmatrix}
            \mathbf{0} & \mathbf{e}_1 & \mathbf{0} & \mathbf{0}
        \end{bmatrix}, \\
        &(\widehat{\cW}_m^{(1)})_{3,:,:} =\begin{bmatrix}
            \mathbf{0} & \mathbf{0} & \mathbf{e}_1 & \mathbf{0}
        \end{bmatrix}, \\
        &(\widehat{\cW}_m^{(1)})_{4,:,:} =\begin{bmatrix}
            \mathbf{0} & \mathbf{0} & \mathbf{0} & \mathbf{e}_1 
        \end{bmatrix}, \\
        &(\widehat{\cW}_m^{(1)})_{4+j,:,:} =\begin{bmatrix}
            (\cW_m^{(1)})_{j,:,:} & (-\cW_m^{(1)})_{j,:,:} & \mathbf{0} & \mathbf{0}
        \end{bmatrix},
    \end{align*}
    where we use the fact that $\cW_m^{(1)}*(x)_+ - \cW_m^{(1)}*(x)_-=\cW_m^{(1)}*x$. The first four output channels at the end of this first layer are copies of $Z$. For the filters in later layers of $\widehat{f}_m$ and all biases, we simply set
    \begin{align*}
        &(\widehat{\cW}_m^{(l)})_{1,:,:} =\begin{bmatrix}
            \mathbf{e}_1 & \mathbf{0} & \mathbf{0} & \mathbf{0} & \cdots & \mathbf{0}
        \end{bmatrix} &\mbox{ for } l=2,\dots,L,\\
        &(\widehat{\cW}_m^{(l)})_{2,:,:} =\begin{bmatrix}
            \mathbf{0} & \mathbf{e}_1 & \mathbf{0} & \mathbf{0} & \cdots & \mathbf{0}
        \end{bmatrix} &\mbox{ for } l=2,\dots,L,\\
        &(\widehat{\cW}_m^{(l)})_{3,:,:} =\begin{bmatrix}
            \mathbf{0} & \mathbf{0} & \mathbf{e}_1 & \mathbf{0} & \cdots & \mathbf{0}
        \end{bmatrix} &\mbox{ for } l=2,\dots,L-1,\\
        &(\widehat{\cW}_m^{(l)})_{4,:,:} =\begin{bmatrix}
            \mathbf{0} & \mathbf{0} & \mathbf{0} & \mathbf{e}_1 & \cdots & \mathbf{0}
        \end{bmatrix} &\mbox{ for } l=2,\dots,L-1,\\
        &(\widehat{\cW}_m^{(l)})_{4+j,:,:}= \begin{bmatrix}
            \mathbf{0} & \mathbf{0} & \mathbf{0} & \mathbf{0} & (\cW_m^{(l)})_{j,:,:}
        \end{bmatrix} & \mbox{ for } l=2,\dots,L-1,\\
        &(\widehat{\cB}_m^{(l)})_{j,:,:} =\begin{bmatrix}
            \mathbf{0} & \mathbf{0} & \mathbf{0} & \mathbf{0} & (\cB_m^{(l)})_{j,:,:}
        \end{bmatrix} &\mbox{ for } l=1,\dots,L-1.
    \end{align*}
    In $\Conv_{\widehat{\cW}_m,\widehat{\cB}_m}$, an additional convolutional layer is constructed to realize the fully connected layer in $\widehat{f_m}$. By our assumption, only the first row of $W_m$ is nonzero. Furthermore, we set $\widehat{\cB}_m^{(L)}=\mathbf{0}$ and $\widehat{\cW}_m^{L}$ as size-one filters with three output channels in the form of
    \begin{align*}
        &(\widehat{\cW}_m^{(L)})_{3,:,:} =\begin{bmatrix}
            \mathbf{0} & \mathbf{0} & \mathbf{e}_1 & \mathbf{0} & \frac{R_1}{R_2}(W_m)_{1,:}
        \end{bmatrix}, \\
        &(\widehat{\cW}_m^{(L)})_{4,:,:} =\begin{bmatrix}
            \mathbf{0} & \mathbf{0} & \mathbf{0} & \mathbf{e}_1 & -\frac{R_1}{R_2}(W_m)_{1,:}
        \end{bmatrix}.
    \end{align*}
    Under such choices, \eqref{eq:padZ} is proved, and all parameters in $\widehat{\cW}_m,\widehat{\cB}_m$ are bounded by $R_1$.
	
    By composing all convolutional blocks, we have
    \begin{align*}
        &(\Conv_{\widehat{\cW}_M,\widehat{\cB}_M})\circ \cdots \circ (\Conv_{\widehat{\cW}_1,\widehat{\cB}_1})\circ P(x)\\
        = &\begin{bmatrix}
            & & \frac{R_1}{R_2}\sum_{m=1}^M (\widehat{f}_m \vee 0) &  -\frac{R_1}{R_2}\sum_{m=1}^M (\widehat{f}_m \wedge 0)\\
            (x)_+ & (x)_- & \star & \star\\
            & & \vdots & \vdots\\
            & & \star & \star
        \end{bmatrix}.
    \end{align*}
    Lastly, the fully connected layer can be set as
    \[
        \widetilde{W}=\begin{bmatrix}
            0 & 0 & \frac{R_2}{R_1} & -\frac{R_2}{R_1}\\
            \mathbf{0} & \mathbf{0} & \mathbf{0} & \mathbf{0}
        \end{bmatrix}, \ \widetilde{b}=0.
    \]
    Note that the weights in the fully connected layer are bounded by $R_2(1\vee R_1^{-1})$.
    
    The above construction gives
    \begin{align*}
        \widehat{f}(x)=\sum_{m=1}^M (\widehat{f}_m(x) \vee 0)+\sum_{m=1}^M (\widehat{f}_m(x) \wedge 0) =\sum_{m=1}^M \widehat{f}_m(x).
    \end{align*}
\end{proof}

\section{CNN Class Covering Number}\label{sec:appendix-covering}
\newcommand*{\trho}{\Tilde{\rho}}

In this section, we prove a bound on the covering number of the convolutional neural network class used in \cref{alg:NPMD}. The supporting lemmas and their proofs are provided in \cref{sec:appendix-covering-lemmas}

\begin{lemma}\label{lem:covering}
    Given $\delta > 0$, the $\delta$-covering number of the CNN class $\cF(M,L,J,I,R_1,R_2)$ satisfies
    \begin{align*}
        \cN(\delta, \cF(M,L,J,I,R_1,R_2), \norm{\cdot}_\infty) \leq \left(2(R_1 \vee R_2)\Lambda_1\delta^{-1}\right)^{\Lambda_2},
    \end{align*}
    where 
    \begin{align*}
        &\Lambda_1=(M+3)JD(1\vee R_2)(1\vee R_1)\trho\trho^+,\ \Lambda_2=ML(J^2I+J)+JD+1
    \end{align*}
    with $\trho= \rho^M,\trho^+=1+ML\rho^+, \rho=(JIR_1)^L$ and $\rho^+=(1\vee JIR_1)^L$.

    With a network architecture as stated in \cref{thm:approx-error-Q,thm:approx-error-approxlip}, we have
    \begin{align*}
        \log \cN(\delta, \cF(M,L,J,I,R_1,R_2), \norm{\cdot}_\infty) = O\left(\widetilde{M}\widetilde{J}^2D^3\log^5(\widetilde{M}\widetilde{J})\log\frac{1}{\delta}\right),
    \end{align*}
    where $O(\cdot)$ hides a constant depending on $d$, $\alpha$, $\omega$, $B$, and the surface area $\Area(\cS)$.
\end{lemma}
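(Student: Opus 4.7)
The plan is the standard parameter-covering argument: bound the Lipschitz dependence of the network output on its weights, and then take a uniform grid on the parameter space.

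First, parameterize a CNN in $\cF(M,L,J,I,R_1,R_2)$ by the concatenation $\theta=\bigl((\cW_m^{(l)},\cB_m^{(l)})_{m\in[M],l\in[L]},W,b\bigr)$. Adopting the convention that biases are shared across spatial positions within a channel (so each $\cB_m^{(l)}$ contributes $J$ free scalars), each convolutional layer contributes at most $J^2I+J$ parameters, the final fully connected layer contributes $JD+1$ parameters, and so the total parameter count equals $\Lambda_2=ML(J^2I+J)+JD+1$. All coordinates of $\theta$ lie in $[-(R_1\vee R_2),(R_1\vee R_2)]$.

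Second, I would prove a uniform Lipschitz estimate of the form
\begin{equation*}
    \norm{f_{\theta}-f_{\theta'}}_\infty \leq \Lambda_1\cdot\norm{\theta-\theta'}_\infty
\end{equation*}
for any $\theta,\theta'$ in the parameter space, where $\Lambda_1$ is as stated. The proof is by forward-propagation accounting, organized in three sub-steps. (i) Bound the activations layer by layer: from $Z_{m,l+1}=\ReLU(\cW_m^{(l)}*Z_{m,l}+\cB_m^{(l)})$ one obtains $\norm{Z_{m,l+1}}_\infty\leq JIR_1\norm{Z_{m,l}}_\infty + R_1$; iterating across the $L$ layers of a block and the $M$ blocks, together with $\norm{x}_\infty\leq B$, produces the bound $\norm{Z_{M+1}}_\infty\leq \tilde{\rho}^+$ up to the universal factors already absorbed in the definitions of $\rho^+$ and $\tilde\rho^+$. (ii) Bound the sensitivity to a single-coordinate perturbation: a change of size $\eta$ in any entry of $\cW_m^{(l)}$ or $\cB_m^{(l)}$ introduces an additive error of order $JI\cdot\eta\cdot\norm{Z_{m,l}}_\infty$ after that convolution, which is then multiplied by at most $(JIR_1)^{L-l}$ through the remaining layers of block $m$ and by at most $(JIR_1)^{L}$ through each subsequent block, yielding a factor $\tilde\rho$; the ReLU is $1$-Lipschitz, so it does not contribute. (iii) The final fully connected layer contributes an additional factor $JD(1\vee R_2)$ when perturbing $W$ and a factor $1$ when perturbing $b$. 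A telescoping triangle inequality over all coordinates of $\theta$, together with the factor $(M+3)$ accounting for the $M$ blocks, the $L$ per-layer contributions aggregated into $\tilde\rho^+$, and the fully connected layer, gives the stated $\Lambda_1$.

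Third, cover the parameter cube $[-(R_1\vee R_2),(R_1\vee R_2)]^{\Lambda_2}$ by an $\ell_\infty$-grid of mesh $\delta/\Lambda_1$. This grid has cardinality at most $\bigl(2(R_1\vee R_2)\Lambda_1/\delta\bigr)^{\Lambda_2}$, and by the Lipschitz bound in step two the induced functions form a $\delta$-cover of $\cF$ in $\norm{\cdot}_\infty$. This yields the first conclusion. For the second conclusion, substitute the architectural scalings $M=O(\tilde M)$, $L=O(\log(\tilde M\tilde J)+D+\log D)$, $J=O(D\tilde J)$, $R_1=(8ID)^{-1}\tilde M^{-1/L}$, $\log R_2=O(\log^2(\tilde M\tilde J)+D\log(\tilde M\tilde J))$ from \cref{thm:approx-error-Q,thm:approx-error-approxlip} into $\log\Lambda_1+\log\Lambda_2$. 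The key cancellation is that our choice of $R_1$ makes $(JIR_1)^L=O(\tilde J^L\tilde M^{-1})$, so the $\log$ of this factor is controlled by $L\log\tilde J$, which is polylogarithmic in $\tilde M\tilde J$; combined with $\log R_2$, this absorbs into $\log^2(\tilde M\tilde J)$ up to a factor of $D$. The dominant term in $\Lambda_2$ is $MLJ^2I=O(\tilde M\tilde J^2 D^3\log(\tilde M\tilde J))$, and multiplying by the remaining polylog factor produces the claimed $O(\tilde M\tilde J^2 D^3\log^5(\tilde M\tilde J)\log(1/\delta))$ bound.

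The main obstacle is step two, specifically the bookkeeping that produces the compact closed-form expression $\Lambda_1=(M+3)JD(1\vee R_2)(1\vee R_1)\tilde\rho\tilde\rho^+$. The individual estimates are routine, but one must be careful to separate the contribution of perturbing weights versus biases, and to use $\rho^+=(1\vee JIR_1)^L$ rather than $\rho=(JIR_1)^L$ when accumulating the affine bias contributions to $\norm{Z_m}_\infty$, so that the bound is valid in both the regimes $JIR_1\geq 1$ and $JIR_1<1$. Once $\Lambda_1$ is in hand, the grid-covering step and the plug-in estimates are straightforward.
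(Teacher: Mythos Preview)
Your proposal is correct and follows essentially the same route as the paper: establish a parameter-Lipschitz bound $\norm{f_\theta-f_{\theta'}}_\infty\le\Lambda_1\norm{\theta-\theta'}_\infty$ via block-by-block telescoping and forward activation bounds, then grid the parameter cube at mesh $\delta/\Lambda_1$. The paper packages your step two as a standalone lemma (\cref{lem:F-uniform-bound}), whose proof telescopes over the $M$ blocks (not over individual coordinates) and invokes propositions from \citet{oono2019approximation} for the per-block sensitivity and activation bounds; your sub-step (i) is exactly their \cref{prop:covering1}. Your explicit remark that biases must be treated as shared across spatial positions to recover the stated $\Lambda_2=ML(J^2I+J)+JD+1$ is a correct observation, and your handling of the second conclusion (in particular that the choice $R_1=(8ID)^{-1}\tilde M^{-1/L}$ is what keeps $\tilde\rho$ under control) matches the paper's computation.
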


To show \cref{lem:covering}, we first prove a supporting lemma (\cref{lem:F-uniform-bound}) that relates the distance in the function space of CNNs (in the $L^\infty$ sense) to the distance in the parameter space. In this way, we transform the covering of the CNN class into the covering of the parameter space, which is simpler to deal with. We then give a proof of \cref{lem:covering} in \cref{proof:covering}.

\subsection{Supporting Lemmas for Lemma {\ref{lem:covering}}}\label{sec:appendix-covering-lemmas}

\cref{prop:covering1} below provides an upper bound on the $L^\infty$-norm of a series of convolutional neural network blocks in terms of its architecture parameters, e.g. number of layers, number of channels, etc.

Let $J^{(i)}_m$ be the number of channels in $i$-th layer of the $m$-th block, and let $I^{(i)}_m$ be the filter size of $i$-th layer in the $m$-th block. We define $Q_{[i,j]}$ as 
\begin{align*}
    Q_{[i,j]}(x)=&\left(\Conv_{\cW_j,\cB_j}\right)\circ\cdots\circ \left(\Conv_{\cW_i,\cB_i}\right)(x).
\end{align*}

\begin{proposition}\label{prop:covering1}
    {For $m = 1, 2, \cdots, M$ and $x\in[-1,1]^{D}$, we have 
    \begin{align*}
        \norm{Q_{[1,m]}(x)}_\infty \le (1 \vee R_1)\left(\prod_{j=1}^{m}\prod_{i=1}^{L_j}J^{(i-1)}_jI^{(i)}_jR_1 \right)\left(1 + \sum_{k=1}^{m}L_k\prod_{i=1}^{L_k}(1 \vee J^{(i-1)}_kI^{(i)}_kR_1)\right).
    \end{align*}
    }
\end{proposition}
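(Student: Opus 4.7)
The plan is to prove the bound by induction on $m$, with the key inductive engine being a single-layer estimate within one convolutional block. For a single ReLU convolutional layer mapping $Z^{(l)}\in\RR^{D\times C}$ to $Z^{(l+1)}=\ReLU(\cW_m^{(l)}*Z^{(l)}+\cB_m^{(l)})$, I would first observe that each output entry is a sum of at most $J_m^{(l-1)}I_m^{(l)}$ terms, each of magnitude at most $R_1\|Z^{(l)}\|_\infty$, plus a bias of magnitude at most $R_1$; since ReLU is $1$-Lipschitz and $\ReLU(0)=0$, this gives the recursion
\begin{equation*}
    \|Z^{(l+1)}\|_\infty \;\le\; a_l^{(m)}\,\|Z^{(l)}\|_\infty + R_1, \qquad a_l^{(m)}\coloneqq J_m^{(l-1)}I_m^{(l)}R_1.
\end{equation*}

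Unrolling this recursion across the $L_m$ layers of the $m$-th block produces
\begin{equation*}
    \|Q_{[m,m]}(Z)\|_\infty \;\le\; \Bigl(\prod_{i=1}^{L_m}a_i^{(m)}\Bigr)\|Z\|_\infty \;+\; R_1\sum_{l=0}^{L_m-1}\prod_{i=l+2}^{L_m}a_i^{(m)},
\end{equation*}
and the additive tail is dominated by $R_1 L_m\prod_{i=1}^{L_m}(1\vee a_i^{(m)})$, which matches the factor appearing inside the sum in the target inequality. This is the block-level version of the claimed bound, and it specializes to the $m=1$ base case once I use the padding observation $\|P(x)\|_\infty\le 1$ for $x\in[-1,1]^D$ together with the factor $(1\vee R_1)$ in front.

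For the inductive step, I would assume the bound for $Q_{[1,m-1]}$, apply the block-level estimate above with $Z=Q_{[1,m-1]}(x)$, and collect terms. Multiplying the multiplicative product $\prod_{i=1}^{L_m}a_i^{(m)}$ into the existing product over $j\le m-1$ produces the full double product in the target bound, while the additive term $R_1 L_m\prod_{i=1}^{L_m}(1\vee a_i^{(m)})$ tacks on as the new $k=m$ summand once it is absorbed into the outer product via the factor $(1\vee R_1)$ and the fact that each existing additive summand can be multiplied by $\prod_{i=1}^{L_m}a_i^{(m)}\le \prod_{i=1}^{L_m}(1\vee a_i^{(m)})$ without decreasing.

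The actual mathematics here is elementary; the main obstacle is purely bookkeeping, namely keeping the distinction between the multiplicative product $\prod a_i^{(m)}$ and its inflated version $\prod(1\vee a_i^{(m)})$ straight so that the sum-of-products in the claimed bound telescopes correctly when a new block is appended. I expect no analytical difficulty beyond careful index management, and no extra assumption is needed beyond the boundedness $\|x\|_\infty\le 1$ guaranteed by the input domain $[-1,1]^D$.
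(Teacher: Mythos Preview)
Your proposal is correct and follows essentially the same approach as the paper: a single-layer recursion yielding a block-level bound of the form $\prod a_i\cdot\|Z\|_\infty+R_1L_m\prod(1\vee a_i)$, followed by induction over the blocks $m$ with the padding bound $\|P(x)\|_\infty\le 1$ and the factor $(1\vee R_1)$ absorbing the additive terms. The only cosmetic difference is that the paper outsources the block-level estimate to Proposition~9 of \citet{oono2019approximation} rather than deriving it from the single-layer recursion as you do.
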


\begin{proof}
    \begin{align*}
        &\quad \norm{Q_{[1,m]}(x)}_\infty\\
        &= \norm{\Conv_{\cW_m,\cB_m}(Q_{[1,m-1]}(x))}_\infty\\
        &\le \prod_{i=1}^{L_m}J^{(i-1)}_mI^{(i)}_mR_1\norm{Q_{[1,m-1]}(x)}_\infty + R_1L_m\prod_{i=1}^{L_m}(1 \vee J^{(i-1)}_mI^{(i)}_mR_1)\\
        &\le \norm{P(x)}_\infty\prod_{j=1}^{m}\prod_{i=1}^{L_j}J^{(i-1)}_jI^{(i)}_jR_1 + R_1\sum_{k=1}^{m}L_k\prod_{i=1}^{L_k}(1 \vee J^{(i-1)}_kI^{(i)}_kR_1)\prod_{l=j+1}^{m}\prod_{i=1}^{L_l}J^{(i-1)}_lI^{(i)}_lR_1\\
        &\le \norm{x}_\infty\prod_{j=1}^{m}\prod_{i=1}^{L_j}J^{(i-1)}_jI^{(i)}_jR_1 + R_1\sum_{k=1}^{m}L_k\prod_{i=1}^{L_k}(1 \vee J^{(i-1)}_kI^{(i)}_kR_1)\prod_{l=j+1}^{m}\prod_{i=1}^{L_l}J^{(i-1)}_lI^{(i)}_lR_1\\
        &\le(1 \vee R_1)\left(\prod_{j=1}^{m}\prod_{i=1}^{L_j}J^{(i-1)}_jI^{(i)}_jR_1 \right)\left(1 + \sum_{k=1}^{m}L_k\prod_{i=1}^{L_k}(1 \vee J^{(i-1)}_kI^{(i)}_kR_1)\right),
    \end{align*}
    where the first two inequalities are obtained by applying Proposition 9 from \citet{oono2019approximation} recursively.
\end{proof}

\cref{lem:F-uniform-bound} quantifies the sensitivity of a CNN with respect to small changes in its weight parameters. This will be used to create a discrete covering for the CNN class.

\begin{lemma}\label{lem:F-uniform-bound}
    Let $\epsilon>0$. For any $f, f^\prime\in \cF(M,L,J,I,R_1,R_2)$ such that $\norm{W-W^\prime}_\infty \le \epsilon$, $\norm{b-b^\prime}_\infty \le \epsilon$, $\norm{\cW_m^{(l)}-{\cW_m^{(l)}}^\prime}_\infty \le \epsilon$ and $\norm{\cB_m^{(l)}-{\cB_m^{(l)}}^\prime}_\infty \le \epsilon$ for all $m$ and $l$, where $(W,b,\{\{(\cW_m^{(l)},\cB_m^{(l)})\}_{l=1}^{L_m}\}_{m=1}^M)$ and $(W^\prime,b^\prime,\{\{({\cW_m^{(l)}}^\prime,{\cB_m^{(l)}}^\prime)\}_{l=1}^{L_m}\}_{m=1}^M)$ are the parameters of $f$ and $f^\prime$ respectively, we have
    \begin{align*}
        \norm{f - f^\prime}_\infty \le \Lambda_1\epsilon,
    \end{align*}
    where $\Lambda_1$ is defined in \cref{lem:covering}.
\end{lemma}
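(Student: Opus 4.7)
The plan is to run a hybrid (telescoping) argument that swaps one sublayer of parameters at a time. Write $f=f^{(0)}, f^{(1)}, \dots, f^{(T)}=f'$, where $f^{(t)}$ and $f^{(t-1)}$ differ only in the parameters of one of the following: the fully connected weight $W$, the fully connected bias $b$, one convolutional filter $\cW_m^{(l)}$, or one convolutional bias $\cB_m^{(l)}$. Then by the triangle inequality $\|f-f'\|_\infty \le \sum_{t=1}^T \|f^{(t)}-f^{(t-1)}\|_\infty$, and it suffices to bound each summand by a term whose total sum matches $\Lambda_1\epsilon$.

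Two ingredients drive each single-sublayer bound. First, I would control the magnitude of every intermediate activation along the way: by Proposition~29 applied with $J^{(i)}_m\le J$ and $I^{(i)}_m\le I$, every intermediate feature map on $x\in[-1,1]^D$ is uniformly bounded by $(1\vee R_1)\rho^+\prod_{\text{earlier blocks}}\rho$, which is at most the quantity $\tilde\rho^+$ that appears in $\Lambda_1$. Second, I would control the ``downstream Lipschitz constant'' from a given intermediate layer to the output: since ReLU is $1$-Lipschitz and a convolution $Z\mapsto \cW*Z$ with filter of size $\le I$ and $\le J$ channels with entries bounded by $R_1$ is $(JIR_1)$-Lipschitz in the $\|\cdot\|_\infty$ norm on feature maps, the composition of all subsequent convolutional sublayers has $\|\cdot\|_\infty\to\|\cdot\|_\infty$ Lipschitz constant at most $\rho=(JIR_1)^L$ per remaining block, hence at most $\tilde\rho=\rho^M$ in total. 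The fully connected layer $Z\mapsto W\otimes Z + b$ contributes an additional factor $JD(1\vee R_2)$ because $W\otimes Z = \sum_{i,j}W_{ij}Z_{ij}$ is $JD\|W\|_\infty$-Lipschitz in $\|Z\|_\infty$ and the output scale is $R_2$.

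Putting these together: perturbing a single convolutional filter $\cW_m^{(l)}$ by $\epsilon$ changes the output of layer $(m,l)$ by at most $JI\epsilon\cdot\tilde\rho^+$, which then gets amplified by the downstream convolutional Lipschitz factor $\tilde\rho$ and the fully connected factor $JD(1\vee R_2)$; the same scheme (with a slightly simpler bound) handles a bias perturbation $\cB_m^{(l)}\to {\cB_m^{(l)}}'$, a weight perturbation $W\to W'$ (bounded by $JD\epsilon \tilde\rho^+$), and the scalar bias $b\to b'$ (bounded by $\epsilon$). Summing the $M$-blocks $\times L$-layers contribution for the filters and biases and adding the $W$ and $b$ contributions yields an overall bound of the form $(M+3)\cdot JD(1\vee R_2)(1\vee R_1)\tilde\rho\,\tilde\rho^+\,\epsilon = \Lambda_1\epsilon$, where the ``$+3$'' absorbs the three non-convolutional contributions ($W$, $b$, and the outermost $(1\vee R_1)$ slack from $\|x\|_\infty\le 1$).

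The main obstacle will be the bookkeeping in the hybrid step for convolutional filters: one must verify that after the swap, the subsequent ReLU-convolution cascade really has the advertised $\|\cdot\|_\infty$-to-$\|\cdot\|_\infty$ Lipschitz constant $\rho^{M-m+1}$, and that the intermediate activations on \emph{both} sides of the swap are bounded by $\tilde\rho^+$ uniformly (so that the constants used in Lipschitz-style telescoping are independent of whether we evaluate on $f^{(t)}$ or $f^{(t-1)}$). This is where Proposition~29 is applied on either side; everything else is a straightforward triangle-inequality count.
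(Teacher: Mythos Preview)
Your hybrid/telescoping strategy is the right idea and is essentially what the paper does: write $f-f'$ as a sum of one-parameter-group swaps, bound each swap using an upstream activation magnitude estimate (the paper's Proposition~\ref{prop:covering1}) and a downstream $\|\cdot\|_\infty$-Lipschitz constant, and add up. The difference is granularity. The paper telescopes \emph{block by block}: it writes
\[
f(x)-f'(x)=(W-W')\otimes Q(x)+(b-b')+\sum_{m=1}^{M} W'\otimes Q_{[m+1,M]}\circ\bigl(\Conv_{\cW_m,\cB_m}-\Conv_{\cW_m',\cB_m'}\bigr)\circ Q'_{[1,m-1]}(x),
\]
and then bounds each block-level difference $\Conv_{\cW_m,\cB_m}-\Conv_{\cW_m',\cB_m'}$ in one shot via \citet[Propositions~2 and~5]{oono2019approximation}, obtaining the factor $\prod_{i=1}^{L_m}J_m^{(i-1)}I_m^{(i)}R_1=\rho$ rather than an $L$-term sum. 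This is what produces exactly $M$ convolutional summands and hence the prefactor $(M+3)$ in $\Lambda_1$.

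Your layer-by-layer telescope is equally valid as a proof technique, but the final accounting you sketch does not yield $(M+3)$. A single layer-$(m,l)$ filter swap contributes roughly $JD(1\vee R_2)(1\vee R_1)\,\tilde\rho\,\tilde\rho^+\,\epsilon/R_1$ (the $JI$ from the swap cancels one $JIR_1$ in the product), and there are $ML$ such layers, so summing gives a prefactor of order $ML/R_1$, not $M$. With the paper's choice $R_1=O(1)$ and $L=O(\log N+D)$, this is a genuine factor-$L$ discrepancy from $\Lambda_1$ as stated. Your explanation that ``the $+3$ absorbs the three non-convolutional contributions'' accounts for $W$, $b$, but not for why $ML$ layer terms collapse to $M$. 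To match $\Lambda_1$ exactly you would either have to telescope at the block level (as the paper does) and invoke a block-perturbation lemma \`a la Oono--Sugiyama, or accept the slightly larger constant $\Lambda_1'=\Theta(ML)\cdot JD(1\vee R_2)(1\vee R_1)\tilde\rho\tilde\rho^+$. For the downstream use in Lemma~\ref{lem:covering} the difference is harmless (it changes $\log\cN$ only by an additive $\log L$), but as a proof of the lemma as stated your sketch has this bookkeeping gap.
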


\begin{proof}
    For any $x\in [-1,1]^D$, 
    \begin{align*}
        &\quad \left|f(x) - f^\prime(x)\right|\\
        &= \left|W\otimes Q(x) + b - W^\prime\otimes Q^\prime(x) - b^\prime\right|\\
        &= \left|(W - W^\prime)\otimes Q(x) + b - b^\prime + W^\prime\otimes \left(Q(x) - Q^\prime(x)\right)\right|\\
        &= \left|(W - W^\prime)\otimes Q(x) + b - b^\prime + W^\prime\otimes \left(Q(x) - \Conv_{\cW_M,\cB_M}(Q^\prime(x)) + \Conv_{\cW_M,\cB_M}(Q^\prime(x)) - Q^\prime(x)\right)\right|\\
        &= \left|(W - W^\prime)\otimes Q(x) + b - b^\prime + \sum_{m=1}^{M}W^\prime\otimes Q_{[m+1,M]} \circ \left(\Conv_{\cW_m,\cB_m} - \Conv_{\cW_m^\prime,\cB_m^\prime}\right)\circ Q_{[0,m-1]}^\prime\right|\\
        &\le  \left|(W - W^\prime)\otimes Q(x;\theta) + b - b^\prime\right| + \sum_{m=1}^{M}\left|W^\prime\otimes Q_{[m+1,M]} \circ \left(\Conv_{\cW_m,\cB_m} - \Conv_{\cW_m^\prime,\cB_m^\prime}\right)\circ Q_{[0,m-1]}^\prime\right|\\
        &\stackrel{(a)}{\le} (3+M)JD(1 \vee R_1)(1 \vee R_2)\left(\prod_{j=1}^{M}\prod_{i=1}^{L_j}J^{(i-1)}_jI^{(i)}_jR_1 \right)\left(1 + \sum_{k=1}^{M}L_k\prod_{i=1}^{L_k}(1 \vee J^{(i-1)}_kI^{(i)}_kR_1)\right)\epsilon,
    \end{align*}
    where (a) is obtained through the following reasoning.
    
    The first term in (a) can be bounded as
    \begin{align*}
        &\quad \left|(W - W^\prime)\otimes Q(x) + b - b^\prime\right|\\
        &\le \left(\norm{W}_0 + \norm{W^\prime}_0\right)\norm{W-W^\prime}_\infty\norm{Q(x)}_{\infty} + \norm{b - b^\prime}_{\infty}\\
        &\le 2JD\epsilon\norm{Q(x)}_{\infty}  + \epsilon\\
        &\le 3JD\epsilon\norm{Q(x)}_{\infty}\\
        &\le 3JD\max\{1,R_1\}\left(\prod_{j=1}^{M}\prod_{i=1}^{L_j}J^{(i-1)}_jI^{(i)}_jR_1 \right)\left(1 + \sum_{k=1}^{M}L_k\prod_{i=1}^{L_k}(1 \vee J^{(i-1)}_kI^{(i)}_kR_1)\right)\epsilon,
    \end{align*}
    where the first inequality uses Proposition 8 from \citet{oono2019approximation} and the last inequality is obtained by invoking \cref{prop:covering1}.

    For the second term in (a), it is true that for any $m = 1, \cdots, M$, we have
    \begin{align*}
        &\quad \left|W^\prime\otimes Q_{[m+1,M]} \circ \left(\Conv_{\cW_m,\cB_m} - \Conv_{\cW_m^\prime,\cB_m^\prime}\right)\circ Q_{[1,m-1]}^\prime\right|\\
        &\stackrel{(b)}{\le} \norm{W^\prime}_0 R_2 \norm{Q_{[m+1,M]} \circ \left(\Conv_{\cW_m,\cB_m} - \Conv_{\cW_m^\prime,\cB_m^\prime}\right)\circ Q_{[1,m-1]}^\prime}_\infty\\
        &\stackrel{(c)}{\le} JDR_2 \left(\prod_{j=m+1}^{M}\prod_{i=1}^{L_j}J^{(i-1)}_jI^{(i)}_jR_1\right)\norm{\left(\Conv_{\cW_m,\cB_m} - \Conv_{\cW_m^\prime,\cB_m^\prime}\right)\circ Q_{[1,m-1]}^\prime}_\infty\\
        &\stackrel{(d)}{\le} JDR_2 \left(\prod_{j=m+1}^{M}\prod_{i=1}^{L_j}J^{(i-1)}_jI^{(i)}_jR_1\right)\left(\prod_{i=1}^{L_m}J^{(i-1)}_mI^{(i)}_mR_1 \norm{Q_{[1,m-1]}^\prime}_\infty\epsilon\right)\\
        &\stackrel{(e)}{\le} JDR_2\left(\prod_{j=m+1}^{M}\prod_{i=1}^{L_j}J^{(i-1)}_jI^{(i)}_jR_1\right)\left(\prod_{i=1}^{L_m}J^{(i-1)}_mI^{(i)}_mR_1\right)\\ & \qquad (1 \vee R_1)\left(\prod_{j=1}^{m}\prod_{i=1}^{L_j}J^{(i-1)}_jI^{(i)}_jR_1 \right)\left(1 + \sum_{k=1}^{m}L_k\prod_{i=1}^{L_k}(1 \vee J^{(i-1)}_kI^{(i)}_kR_1)\right)\epsilon\\
        &\le JDR_2\left(\prod_{j=1}^{M}\prod_{i=1}^{L_j}J^{(i-1)}_jI^{(i)}_jR_1\right)(1 \vee R_1)\left(1 + \sum_{k=1}^{M}L_k\prod_{i=1}^{L_k}(1 \vee J^{(i-1)}_kI^{(i)}_kR_1)\right)\epsilon,
    \end{align*}
    where (b) is by Proposition 7 from \citet{oono2019approximation}, (c) is by Proposition 2 and 4 from \citet{oono2019approximation}, (d) is by Proposition 2 and 5 from \citet{oono2019approximation}, and (e) is obtained by invoking \cref{prop:covering1}.
\end{proof}

\subsection{Proof of Lemma {\ref{lem:covering}}}\label{proof:covering}
\begin{proof}
    We grid the range of each parameter into subsets with width $\Lambda^{-1}_1\delta$, so there are at most $2(R_1 \vee R_2)\Lambda_1\delta^{-1}$ different subsets for each parameter. In total, there are $\left(2(R_1 \vee R_2)\Lambda_1\delta^{-1}\right)^{\Lambda_2}$ bins in the grid. For any $f,f^\prime\in \cF(M,L,J,I,R_1,R_2)$ within the same grid, by \cref{lem:F-uniform-bound}, we have $\norm{f-f^\prime}_\infty \le \delta$. We can construct an $\epsilon$-covering with cardinality $\left(2(R_1 \vee R_2)\Lambda_1\delta^{-1}\right)^{\Lambda_2}$ by selecting one neural network from each bin in the grid.

    Taking log and plugging in the network architecture parameters in \cref{thm:approx-error-Q,thm:approx-error-approxlip}, we have
    \begin{align*}
        \log \cN(\delta, \cF(M,L,J,I,R_1,R_2), \norm{\cdot}_\infty) &= O\left(\Lambda_2\log \left(\left(R_1 \vee R_2\right)\Lambda_1\delta^{-1}\right)\right)\\
        &\le O\left(\widetilde{M}DD^2\widetilde{J}^2\log{(\widetilde{M}\widetilde{J})}\log^2(\widetilde{M}\widetilde{J})\log^2(\widetilde{M}\widetilde{J})\log\frac{1}{\delta}\right)\\
        &= O\left(\widetilde{M}\widetilde{J}^2D^3\log^5(\widetilde{M}\widetilde{J})\log\frac{1}{\delta}\right),
    \end{align*}
    where the inequality is due to $\Lambda_2 = O(\widetilde{M}DD^2\widetilde{J}^2\log{(\widetilde{M}\widetilde{J})})$. By plugging in the choice of $R_1$ with sufficiently small integer $\tilde{J}$, we have $\rho = (1/2)^LM^{-1} \le M^{-1}$ and thus $\trho \leq (1+M^{-1})^M \le e$. Moreover, we have $\trho^+ = 1+ML$. 
\end{proof}

\section{Statistical Result of CNN Approximation}\label{sec:appendix-nonparametric}
In this section, we derive the statistical estimation error for using a CNN empirical risk minimizer to estimate an approximately Lipschitz ground truth function over an i.i.d. dataset. We need to choose the appropriate CNN architecture and size in order to balance the approximation error from \cref{thm:approx-error-approxlip} and the variance. This statistical estimation error can be decomposed into the error of using CNN to approximate the target function (\cref{thm:approx-error-approxlip}), terms that grow with the covering number of our CNN class, and the error of using the discrete covering to approximate our CNN class.

\begin{lemma}\label{lem:estimation-error-approxlip}
    Suppose \cref{asm:manifold} holds, $f_0\colon\cS\to\RR$ is a bounded $(L_f,\alpha,\epsilon_f)$-approximately Lipschitz function. We are given samples $\Xi_N=\{x_i,y_i\}_{i=1}^N$ where $x_i$'s are i.i.d. sampled from a distribution $\cD_x$ on $\cS$ and $y_i=f_0(x_i)+\zeta_i$. $\zeta_i$'s are i.i.d. sub-Gaussian random noise with variance proxy $\sigma^2$ and are uncorrelated with $x_i$'s. If $\epsilon_f=\mu D^{\frac{3\alpha}{2\alpha + d}}N^{-\frac{\alpha}{2\alpha+d}}$ for some constant $\mu=O(L_f+\norm{f_0}_\infty)$ and the estimator \[
        \hat{f}_N=\argmin_{f\in\cF_0}\frac{1}{N}\sum_{i=1}^N(f(x_i)-y_i)^2
    \]
    is computed with neural network function class $\cF_0=\cF_{\mathrm{Lip}}(A,L_0,\alpha, \epsilon_0)$ such that
    \begin{align*}
        &M=O(N^\frac{d}{d+2\alpha}), ~ L=O(\log N+D+\log D), J=O(D), ~ I\in[2,D], ~ A=\norm{f_0}_\infty, \\
        &R_1=O(1), ~ \log R_2=O(\log^2 N + D\log N), ~L_0=L_f, ~ \epsilon_0=D^{\frac{3\alpha}{2\alpha + d}}N^{-\frac{\alpha}{2\alpha+d}},
    \end{align*}
    then we have \[
        \EE_{\Xi_N}\left[\int_\cS \left(\hat{f}_N(x)-f_0(x)\right)^2\ud \cD_x(x)\right]\leq c_0 ((L_f + A)^2 + \sigma^2)N^{-\frac{2\alpha}{2\alpha+d}}\log^6N,
    \]
    where $c_0$ is some constant depending on $D^{\frac{6\alpha}{2\alpha+d}}$, $\log L_f$, $\log\norm{f_0}_\infty$, $d$, $\alpha$, $\omega$, $B$, and the surface area $\Area(\cS)$. $O(\cdot)$ hides some constant depending on $\log L_f$, $\log\norm{f_0}_\infty$, $d$, $\alpha$, $\omega$, $B$, and the surface area $\Area(\cS)$.
\end{lemma}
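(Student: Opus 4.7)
The plan is to bound the expected $L^2$ estimation error via a standard bias--variance oracle inequality for least squares over the bounded hypothesis class $\cF_0$, then to instantiate the bias term with the CNN approximation result (Theorem~\ref{thm:approx-error-approxlip}) and the variance term with the covering number bound (Lemma~\ref{lem:covering}), and finally to verify that the architectural choices balance the two pieces at the claimed rate $N^{-2\alpha/(2\alpha+d)}$.

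First I would decompose. Let $f^\star \in \cF_0$ be the approximator produced by Theorem~\ref{thm:approx-error-approxlip}, so that by the restriction $\epsilon_0 = D^{3\alpha/(2\alpha+d)} N^{-\alpha/(2\alpha+d)}$ and our assumption $\epsilon_f = \mu \cdot \epsilon_0$, we obtain
\begin{align*}
\norm{f^\star - f_0}_\infty \leq (L_f + \norm{f_0}_\infty)(\tilde M \tilde J)^{-\alpha/d} + 2\epsilon_f
= O\!\bigl((L_f + \norm{f_0}_\infty) D^{\frac{3\alpha}{2\alpha+d}} N^{-\frac{\alpha}{2\alpha+d}}\bigr),
\end{align*}
where we have taken $\tilde M \tilde J = \Theta(N^{d/(d+2\alpha)})$ so the first term matches the second. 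This controls the bias. Writing $\hat R_N(f) = \frac{1}{N}\sum_i (f(x_i) - y_i)^2$ and $R(f) = \EE[(f(x) - y)^2]$, the ERM condition $\hat R_N(\hat f_N) \leq \hat R_N(f^\star)$ combined with $\EE[\zeta \mid x]=0$ yields the basic inequality
\begin{align*}
\norm{\hat f_N - f_0}_{L^2(\cD_x)}^2
\leq 2\norm{f^\star - f_0}_{L^2(\cD_x)}^2
+ \underbrace{2\bigl(R(\hat f_N) - R(f^\star) - \hat R_N(\hat f_N) + \hat R_N(f^\star)\bigr)}_{\text{empirical process term}}.
\end{align*}

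Next I would control the empirical process term by a uniform concentration argument on the (truncated) class $\cF_0$, which is bounded by $A = \norm{f_0}_\infty$. Expanding the square gives two pieces: a quadratic piece $(f - f_0)^2$ and a cross piece involving the sub-Gaussian noise $\zeta(f - f_0)$. For the quadratic piece, since the integrand is bounded by $4A^2$, a Bernstein inequality combined with a chaining bound over a minimal $\delta$-covering of $\cF_0$ gives, with high probability and in expectation,
\begin{align*}
\sup_{f \in \cF_0}\Bigl|\norm{f-f_0}_{L^2(\cD_x)}^2 - \tfrac{1}{N}\sum_i (f(x_i)-f_0(x_i))^2\Bigr|
\lesssim \tfrac{1}{2}\norm{f-f_0}_{L^2(\cD_x)}^2 + \tfrac{A^2 \log \cN(\delta,\cF_0,\norm{\cdot}_\infty)}{N} + A\delta.
\end{align*}
For the noise cross piece, sub-Gaussianity and a union bound over the covering yield
$\sup_{f}\frac{1}{N}\sum_i \zeta_i(f(x_i)-f_0(x_i)) \lesssim \sigma\sqrt{\frac{\log\cN(\delta,\cF_0,\norm{\cdot}_\infty)}{N}} \norm{f - f_0}_{L^2(\cD_x)} + \sigma\delta$. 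Plugging Lemma~\ref{lem:covering} with $\tilde M = N^{d/(d+2\alpha)}$, $\tilde J = O(1)$ gives $\log \cN(\delta,\cF_0,\norm{\cdot}_\infty) = \tilde O(\tilde M D^3 \log^5 N \cdot \log(1/\delta))$, and choosing $\delta = 1/N$ absorbs the discretization residual.

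Finally I would collect the estimates and apply the standard peeling/rearrangement step (moving the $\frac{1}{2}\norm{\hat f_N - f_0}_{L^2}^2$ self-bounding term to the left), obtaining
\begin{align*}
\EE\norm{\hat f_N - f_0}_{L^2(\cD_x)}^2
\lesssim \norm{f^\star - f_0}_\infty^2 + \frac{(A^2 + \sigma^2)\tilde M D^3 \log^6 N}{N},
\end{align*}
and both terms equal $((L_f+A)^2 + \sigma^2)\, D^{6\alpha/(2\alpha+d)} N^{-2\alpha/(2\alpha+d)} \log^6 N$ up to the hidden geometric constants, matching the claim. The main technical obstacle I anticipate is the chaining/peeling step under the approximately Lipschitz constraint: Lemma~\ref{lem:covering} only covers the unrestricted class $\cF$, so I must verify that the restriction to $\cF_\mathrm{Lip}$ only shrinks the covering number (trivially true since $\cF_\mathrm{Lip} \subset \cF$) and that $f^\star$ indeed lies in $\cF_\mathrm{Lip}$ with the prescribed Lipschitz constant $L_0 = L_f$ and proximity $\epsilon_0$; the last part follows from the ``moreover'' clause of Theorem~\ref{thm:approx-error-approxlip}, provided the CNN's proximity constant $\hat\epsilon_f = (L_f + \norm{f_0}_\infty)(\tilde M \tilde J)^{-\alpha/d}$ is dominated by $\epsilon_0$, which our choice of $\tilde M \tilde J$ ensures up to the constant hidden by $\mu$.
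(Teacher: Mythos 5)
Your proposal is correct and follows essentially the same route as the paper's proof: a bias--variance oracle inequality for the ERM over the bounded class, with the bias instantiated by \cref{thm:approx-error-approxlip} and the variance controlled through the covering number of \cref{lem:covering}, then balanced by the choice $\tilde M\tilde J=\Theta(N^{d/(d+2\alpha)})$ and $\delta=1/N$ (the paper simply outsources your Bernstein/chaining steps to the cited Lemmas 5 and 6 of \citet{chen2022nonparametric}, and handles the $\cF_{\mathrm{Lip}}\subset\cF$ covering comparison via packing numbers rather than your direct subset remark). Your closing check that the approximant's proximity constant $\hat\epsilon_f$ is compatible with the class restriction $\epsilon_0$ is exactly the right point to verify and matches the paper's treatment.
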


First, note that the nonparametric regression error can be decomposed into two terms:  
\begin{align*}
    &\EE_{\Xi_N}\left[\int_\cS \left(\hat{f}_N(x)-f_0(x)\right)^2\ud \cD_x(x)\right]\\
    =&\underbrace{2\EE_{\Xi_N}\left[\frac{1}{N}\sum_{i=1}^N (\hat{f}_N(x_i)-f_0(x_i))^2\right]}_{T_1}\\
    &\quad +\underbrace{\EE_{\Xi_N}\left[\int_\cS \left(\hat{f}_N(x)-f_0(x)\right)^2\ud \cD_x(x)\right]-2\EE_{\Xi_N}\left[\frac{1}{N}\sum_{i=1}^N (\hat{f}_N(x_i)-f_0(x_i))^2\right]}_{T_2},
\end{align*}
where $T_1$ reflects the squared bias of using CNN to approximate $f_0$, and $T_2$ is the variance term.

\subsection{Supporting Lemmas for Lemma {\ref{lem:estimation-error-approxlip}}}
We introduce two supporting lemmas from \citet{chen2022nonparametric} that show upper bounds for $T_1$ and $T_2$ in terms of the approximation error and covering number.
\begin{lemma}[{\citet[Lemma 5]{chen2022nonparametric}}]\label{lem:T1}
    Fix the neural network class $\cF_{\mathrm{Lip}}(A,L_0,\alpha,\epsilon_0)$. For any constant $\delta\in (0,2A)$, we have 
    \begin{align*}
        {T_1} \leq & 4 \inf_{f\in\cF_{\mathrm{Lip}}(A,L_0,\alpha,\epsilon_0)}\int_\cS (f(x)-f_0(x))^2 \ud \cD_x(x) \\ 
        &\quad + 48\sigma^2\frac{\log\cN(\delta, \cF_{\mathrm{Lip}}(A,L_0,\alpha,\epsilon_0), \norm{\cdot}_\infty) + 2}{N} \\ 
        &\quad + \left(8\sqrt{6}\sqrt{\frac{\log\cN(\delta, \cF_{\mathrm{Lip}}(A,L_0,\alpha,\epsilon_0), \norm{\cdot}_\infty) + 2}{N}} + 8\right)\sigma\delta,
    \end{align*}
    where $\cN(\delta, \cF_{\mathrm{Lip}}(A,L_0,\alpha,\epsilon_0), \norm{\cdot}_\infty)$ denotes the $\delta$-covering number of $\cF_{\mathrm{Lip}}(A,L_0,\alpha,\epsilon_0)$ with respect to the $L^\infty$ norm, that is, there exists a discretization of the class $\cF_{\mathrm{Lip}}(A,L_0,\alpha,\epsilon_0)$ with $\cN(\delta, \cF_{\mathrm{Lip}}(A,L_0,\alpha,\epsilon_0), \norm{\cdot}_\infty)$ distinct elements such that for any $f\in\cF$, there is a $\overbar{f}$ in the discretization satisfying $\norm{\overbar{f}-f}_\infty\leq\delta$.
\end{lemma}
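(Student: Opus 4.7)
The plan is to invoke the standard empirical-process machinery for least-squares ERM over a function class with controlled covering number; as the lemma is quoted from \citet{chen2022nonparametric}, the original source carries the full derivation, but I would reproduce the classical argument along the following lines. I begin from the ERM basic inequality: because $\hat{f}_N$ minimizes $\frac{1}{N}\sum_i (f(x_i)-y_i)^2$ over $\cF_{\mathrm{Lip}}(A,L_0,\alpha,\epsilon_0)$, for any fixed $f^\ast$ in that class,
\begin{align*}
    \frac{1}{N}\sum_{i=1}^N \bigl(\hat{f}_N(x_i) - f_0(x_i)\bigr)^2 - \frac{1}{N}\sum_{i=1}^N \bigl(f^\ast(x_i) - f_0(x_i)\bigr)^2 \leq \frac{2}{N}\sum_{i=1}^N \zeta_i\bigl(\hat{f}_N(x_i) - f^\ast(x_i)\bigr).
\end{align*}
Taking expectations and then the infimum over $f^\ast$ converts the left-hand side into the quantity of interest minus twice the approximation error, so it remains to control the right-hand empirical-process term uniformly in the data-dependent choice $\hat{f}_N$.

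To do this, I would build a $\delta$-cover $\{f^{(j)}\}_{j=1}^{\cN}$ of $\cF_{\mathrm{Lip}}(A,L_0,\alpha,\epsilon_0)$ of size $\cN = \cN(\delta,\cF_{\mathrm{Lip}},\|\cdot\|_\infty)$ and decompose $\hat{f}_N = f^{(j^\star)} + r$ with $\|r\|_\infty \leq \delta$. The remainder $r$ contributes a deterministic discretization residual of order $\sigma\delta$, which is exactly the source of the additive $\bigl(8\sqrt{6}\sqrt{(\log\cN+2)/N} + 8\bigr)\sigma\delta$ term. For each fixed net element $f^{(j)}$, the random variable $\tfrac{1}{N}\sum_i \zeta_i\bigl(f^{(j)}(x_i)-f^\ast(x_i)\bigr)$ is conditionally subgaussian with variance proxy controlled by the empirical $L^2$ distance, so a union bound over the at most $\cN$ candidates combined with a standard subgaussian tail inequality yields a deviation estimate of order $\sigma\sqrt{(\log\cN+2)/N}$ times that empirical distance. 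A Young-type splitting $2ab \leq \tfrac{1}{4}a^2 + 4b^2$ then absorbs a fraction of the empirical squared error $\tfrac{1}{N}\sum_i (\hat{f}_N(x_i)-f_0(x_i))^2$ back into the left-hand side, producing the leading factor $4$ in front of $\inf_{f\in\cF_{\mathrm{Lip}}}\int(f-f_0)^2\,\ud\cD_x$ and leaving only an $O\bigl(\sigma^2(\log\cN+2)/N\bigr)$ tail term, which accounts for the $48\sigma^2(\log\cN+2)/N$ summand.

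The main obstacle is the localization step: one must orchestrate the peeling so that the constant multiplying the approximation infimum stays at the tight value $4$ while the stochastic residue retains the sharp $(\log\cN)/N$ rate and the $\sigma\delta$ discretization bias remains additive rather than multiplicative. Loose applications of Cauchy--Schwarz would inflate these constants, and since $\log\cN$ eventually scales polynomially with the network size (via \cref{lem:covering}), any loss here would propagate into a worse exponent in the final sample-complexity bound. I note that $\cF_{\mathrm{Lip}}(A,L_0,\alpha,\epsilon_0)$ enters this argument only as an abstract class equipped with a $\|\cdot\|_\infty$-covering number; the Lipschitz constant $L_0$ and proximity parameter $\epsilon_0$ therefore do not appear in the bound itself but only shape $\cN$ externally, while the approximation infimum on the right is subsequently controlled via \cref{thm:approx-error-approxlip}.
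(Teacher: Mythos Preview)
The paper does not prove this lemma at all: it is stated as a direct citation of \citet[Lemma 5]{chen2022nonparametric} and used as a black box, so there is no ``paper's own proof'' to compare against. Your sketch---ERM basic inequality, pass to a $\delta$-net, subgaussian union bound over the net, Young-type splitting to absorb the empirical error and isolate the $\sigma^2(\log\cN+2)/N$ and $\sigma\delta$ residuals---is the standard argument behind such results and is correct in outline; your observation that the Lipschitz parameters enter only through $\cN$ and not explicitly in the bound is also accurate.
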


\begin{lemma}[{\citet[Lemma 6]{chen2022nonparametric}}]\label{lem:T2}
    Fix the neural network class $\cF_{\mathrm{Lip}}(A,L_0,\alpha,\epsilon_0)$. For any constant $\delta\in (0,2A)$, we have 
    \begin{align*}
        T_2\leq& \frac{104A^2}{3N}\log\cN(\delta/4A, \cF_{\mathrm{Lip}}(A,L_0,\alpha,\epsilon_0), \norm{\cdot}_\infty) + \left(4+\frac{1}{2A}\right)\delta.
    \end{align*}
\end{lemma}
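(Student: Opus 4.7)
The plan is to control $T_2$ by a uniform-convergence argument over the class $\cF_{\mathrm{Lip}}(A,L_0,\alpha,\epsilon_0)$, since $\hat{f}_N$ depends on the samples and pointwise concentration is not enough. I would work with the random variable $g_f(x)=(f(x)-f_0(x))^2$ for each $f\in\cF_{\mathrm{Lip}}$. Because $\|f\|_\infty\leq A$ and (by hypothesis) $\|f_0\|_\infty\leq A$, we have $0\leq g_f\leq 4A^2$, and moreover $\mathrm{Var}(g_f)\leq 4A^2\,\EE[g_f]$. This variance-to-mean inequality is what enables a Bernstein-type ``fast rate'' bound, which is exactly what the factor $2$ in the definition of $T_2$ is set up to exploit.

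First I would build a $\delta/(4A)$-covering of $\cF_{\mathrm{Lip}}$ in $\|\cdot\|_\infty$, say $\{f^{(j)}\}_{j=1}^{\cN}$ with $\cN=\cN(\delta/(4A),\cF_{\mathrm{Lip}},\|\cdot\|_\infty)$. For each fixed $j$, Bernstein's inequality applied to $g_{f^{(j)}}$ yields, with probability at least $1-e^{-t}$,
\begin{align*}
\EE[g_{f^{(j)}}]-\hat\EE[g_{f^{(j)}}]\;\leq\;\sqrt{\tfrac{8A^2\,\EE[g_{f^{(j)}}]\,t}{N}}+\tfrac{8A^2 t}{3N}.
\end{align*}
Applying $2\sqrt{ab}\leq a+b$ to the square-root term absorbs $\EE[g_{f^{(j)}}]$ on the right, giving the self-bounded form
\begin{align*}
\EE[g_{f^{(j)}}]-2\hat\EE[g_{f^{(j)}}]\;\leq\;\tfrac{26A^2 t}{3N},
\end{align*}
which is where a constant close to $104/3$ (once two symmetrizations are combined) arises. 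A union bound over $j=1,\dots,\cN$ with $t=\log\cN+s$ then gives simultaneous control over every cover element.

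Next I would transfer this to $\hat{f}_N$ by the standard ``nearest cover element'' trick: let $f^{(j^\star)}$ satisfy $\|\hat f_N-f^{(j^\star)}\|_\infty\leq\delta/(4A)$. Then $|g_{\hat f_N}(x)-g_{f^{(j^\star)}}(x)|\leq 2\cdot 2A\cdot\delta/(4A)=\delta$, so both the empirical and population averages of $g_{\hat f_N}$ differ from those of $g_{f^{(j^\star)}}$ by at most $\delta$; combined with the uniform bound over the cover this produces an overhead of $(1+2)\delta=3\delta$ plus a $\delta/(2A)$ contribution coming from the variance/mean coupling step when passing the approximation error through the Bernstein bound. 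Integrating the resulting high-probability bound over $s\geq 0$ converts it into the stated expectation inequality with the leading $\tfrac{104A^2}{3N}\log\cN(\delta/(4A),\cF_{\mathrm{Lip}},\|\cdot\|_\infty)$ and the additive $(4+1/(2A))\delta$.

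The conceptual pieces (Bernstein + covering + nearest-element approximation) are routine, so the real obstacle is bookkeeping. Getting the exact numerical constants $104/3$ and $4+1/(2A)$ demands careful balancing in the AM--GM step and honest tracking of how the $\delta$-approximation propagates through both the Bernstein variance term and the final integration over tail levels; this is where a clean derivation can easily pick up spurious factors unless the cover radius is chosen as $\delta/(4A)$ (so that $\|g_{\hat f_N}-g_{f^{(j^\star)}}\|_\infty\leq\delta$ exactly) and the union-bound deviation is expressed in units that match the Bernstein tail. Because the statement already appears in \citet{chen2022nonparametric}, I would ultimately just invoke that result rather than redo the constant-chasing.
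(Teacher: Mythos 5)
The paper does not prove this lemma at all: it is imported verbatim from \citet[Lemma~6]{chen2022nonparametric}, so the ``paper's own proof'' is just the citation, which is also where you land. Your sketch correctly reproduces the standard argument behind the cited result — the variance-to-mean inequality $\mathrm{Var}(g_f)\leq 4A^2\,\EE[g_f]$ that the factor $2$ in $T_2$ is designed to exploit, Bernstein plus a union bound over a $\delta/(4A)$-cover (radius chosen so that $\|g_{\hat f_N}-g_{f^{(j^\star)}}\|_\infty\leq\delta$), and integration of the tail — and you rightly note the implicit requirement $\|f_0\|_\infty\leq A$, which holds in the paper's application since $A=\|f_0\|_\infty$. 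The only looseness is in the numerical constants ($104/3$ and $4+1/(2A)$), which you acknowledge and defer to the citation; that is consistent with what the paper does.
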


With \cref{lem:T1,lem:T2}, we can immediately prove \cref{lem:estimation-error-approxlip}.

\subsection{Proof of Lemma {\ref{lem:estimation-error-approxlip}}}
\begin{proof}
    Applying \cref{lem:T1,lem:T2} to the bias and variance decomposition, we derive 
    \begin{align}
        \EE_{\Xi_N}\left[\int_\cS \left(\hat{f}_N(x)-f_0(x)\right)^2\ud \cD_x(x)\right]
        &\leq 4 \inf_{f\in\cF_{\mathrm{Lip}}(A,L_0,\alpha,\epsilon_0)}\int_\cS (f(x)-f_0(x))^2 \ud \cD_x(x) \nonumber\\ 
        &\quad + 48\sigma^2\frac{\log\cN(\delta, \cF_{\mathrm{Lip}}(A,L_0,\alpha,\epsilon_0), \norm{\cdot}_\infty) + 2}{N} \nonumber\\ 
        &\quad + 8\sqrt{6}\sqrt{\frac{\log\cN(\delta, \cF_{\mathrm{Lip}}(A,L_0,\alpha,\epsilon_0), \norm{\cdot}_\infty) + 2}{N}}\sigma\delta \nonumber\\ 
        &\quad + \frac{104A^2}{3N}\log\cN(\delta/4A, \cF_{\mathrm{Lip}}(A,L_0,\alpha,\epsilon_0), \norm{\cdot}_\infty) \nonumber\\ 
        &\quad + \left(4+\frac{1}{2A}+8\right)\delta.\label{eq:nonparam-error}
    \end{align}
    
    By \cref{thm:approx-error-approxlip}, if we set $\Tilde{M}\Tilde{J}=\epsilon^{-\frac{d}{\alpha}}$ and choose $M,L,J,I,R_1,R_2,A$ such that  
    \begin{align*}
        &M=O(\epsilon^{-\frac{d}{\alpha}}), ~ L=O(\log{} (\epsilon^{-\frac{d}{\alpha}})+D+\log D), J=O(D), ~ I\in[2,D], \\
        &R_1=O(1), ~ \log R_2=O(\log^2 (\epsilon^{-\frac{d}{\alpha}}) + D\log{(\epsilon^{-\frac{d}{\alpha}})}), ~ A=\norm{f_0}_\infty, 
    \end{align*}
    for some $\epsilon\in(0,1)$, then there exists an $f\in\cF_{\mathrm{Lip}}(A,L_f,\alpha,\epsilon)$ such that 
    \begin{align*}
        \norm{f-f_0}_\infty\leq (L_f+A)\epsilon+2\epsilon_f. 
    \end{align*}

    Since $\cF_{\mathrm{Lip}}(A,L_f,\alpha,\epsilon)\subseteq\cF(M,L,J,I,R_1,R_2)$, we have \[
        \cM\left(\delta,\cF_{\mathrm{Lip}}(A,L_f,\alpha,\epsilon), \norm{\cdot}_\infty\right)
        \leq 
        \cM\left(\delta,\cF(M,L,J,I,R_1,R_2), \norm{\cdot}_\infty\right),
    \] where $\cM$ denotes the packing number. Combining the relation between covering and packing numbers that $\cN(\delta,\cF)\leq\cM(\delta,\cF)\leq\cN(\delta/2,\cF)$, we have \[
        \cN\left(\delta,\cF_{\mathrm{Lip}}(A,L_f,\alpha,\epsilon), \norm{\cdot}_\infty\right)
        \leq 
        \cN\left(\delta/2,\cF(M,L,J,I,R_1,R_2), \norm{\cdot}_\infty\right).
    \]
    By \cref{lem:covering}, we have 
    \begin{align*}
        \log \cN(\delta^\prime, \cF(M,L,J,I,R_1,R_2) 
        &= O\left(\widetilde{M}\widetilde{J}^2D^3\log^5(\widetilde{M}\widetilde{J})\log\frac{1}{\delta^\prime}\right) \\ 
        &= O\left(\epsilon^{-\frac{d}{\alpha}}D^3\log^5(\epsilon^{-\frac{d}{\alpha}})\log\frac{1}{\delta^\prime}\right). 
    \end{align*}

    Plugging the results back in \eqref{eq:nonparam-error}, we get 
    \begin{multline}
        \EE_{\Xi_N}\left[\int_\cS \left(\hat{f}_N(x)-f_0(x)\right)^2\ud \cD_x(x)\right]\\
        \leq \Tilde{O}\biggl(\left((L_f+A)\epsilon + 2\epsilon_f\right)^2 + \frac{A^2 + \sigma^2}{N}\epsilon^{-\frac{d}{\alpha}}D^3\log^5(\epsilon^{-\frac{d}{\alpha}})\log\frac{1}{\delta} \\ 
        \quad + \sqrt{\frac{\epsilon^{-\frac{d}{\alpha}}D^3\log^5(\epsilon^{-\frac{d}{\alpha}})\log\frac{1}{\delta}}{N}}\sigma\delta + \sigma\delta + \frac{\sigma^2}{N}\biggr).\label{eq:nonparam-error-final}
    \end{multline}

    Finally we choose $\epsilon$ to satisfy $\epsilon^2=D^3N^{-1}\epsilon^{-\frac{d}{\alpha}}$, which gives $\epsilon=D^{\frac{3\alpha}{2\alpha + d}}N^{-\frac{\alpha}{2\alpha+d}}$. It suffices to pick $\delta=\frac{1}{N}$. Since $\epsilon_f=\mu D^{\frac{3\alpha}{2\alpha + d}}N^{-\frac{\alpha}{2\alpha+d}}$ with $\mu=O(L_f+\norm{f_0}_\infty)$, we have $T_1=O(T_2)$, that is, the bias term is dominated by the variance term. Therefore, by substituting both $\epsilon$ and $\delta$ in \eqref{eq:nonparam-error-final}, we get the estimation error bound 
    \begin{align*}
        \EE_{\Xi_N}\left[\int_\cS \left(\hat{f}_N(x)-f_0(x)\right)^2\ud \cD_x(x)\right]
        \leq c_0 ((L_f + A)^2+\sigma^2)N^{-\frac{2\alpha}{2\alpha+d}}\log^6 N,
    \end{align*}
    where $c_0$ is a constant depending on $D^{\frac{6\alpha}{2\alpha+d}}$, $\log L_f$, $\log \norm{f_0}_\infty$, $d$, $\alpha$, $\omega$, $B$, and the surface area $\Area(\cS)$.
\end{proof}









\bibliography{ref}

\end{document}